\tikzstyle{vertex}=[circle, draw, fill=gray!80!white,thick,scale=1.2]
\tikzstyle{edge}=[draw=black, thick,-]
\newcommand{\CC}[1][]{$\text{C\hspace{-.25ex}}^{_{_{_{++}}}}
\ifthenelse{\equal{#1}{}}{}{\text{\hspace{-.625ex}#1}}$}
\let\originalleft\left
\let\originalright\right
\renewcommand{\left}{\mathopen{}\mathclose\bgroup\originalleft}
\renewcommand{\right}{\aftergroup\egroup\originalright}
\setlist[enumerate]{itemsep=0.2ex, topsep=0.5\topsep}
\setlist[description]{itemsep=0.2ex, topsep=0.5\topsep}
\setlist[itemize]{itemsep=0.2ex, topsep=0.5\topsep}
\def\thmt@refnamewithcomma #1#2#3,#4,#5\@nil{%
\@xa\def\csname\thmt@envname #1utorefname\endcsname{#3}%
\ifcsname #2refname\endcsname
\csname #2refname\expandafter\endcsname\expandafter{\thmt@envname}{#3}{#4}%
\fi
}
\theoremstyle{definition}
\newtheorem{theorem}{Theorem}
\newtheorem{proposition}[theorem]{Proposition}
\newtheorem{lemma}[theorem]{Lemma}
\newtheorem{corollary}[theorem]{Corollary}
\newtheorem{remark}[theorem]{Remark}
\newcommand{\mX}{\mathbold{X}}
\newcommand{\mW}{\mathbold{W}}
\newcommand{\cE}{\mathcal{E}}
\newcommand{\cF}{\mathcal{F}}
\newcommand{\cG}{\mathcal{G}}
\newcommand{\cH}{\mathcal{H}}
\newcommand{\cO}{\mathcal{O}}
\newcommand{\cR}{\mathcal{R}}
\newcommand{\cX}{\mathcal{X}}
\newcommand{\Hb}{\mathbb{H}}
\newcommand{\Nb}{\mathbb{N}}
\newcommand{\Rb}{\mathbb{R}}
\newcommand{\Sb}{\mathbb{S}}
\newcommand{\MPNN}{\textsf{MPNN}}
\newcommand{\vcdim}{\ensuremath{\mathsf{VC}\text{-}\mathsf{dim}}}
\newcommand{\wlone}{$1$\textrm{-}\textsf{WL}}
\newcommand{\wlonef}{$1$\textrm{-}\textsf{WL}$_{\cF}$}
\newcommand{\MPNNF}{\text{MPNN}$_{\cF}$}
\newcommand{\tMPNNF}{\text{MPNN}$_{\cF}$}
\newcommand{\wloa}{$1$\textrm{-}\textsf{WLOA}}
\newcommand{\wloaf}{$1$\textrm{-}\textsf{WLOA}$_{\cF}$}
\newcommand{\kwl}{$k$\textrm{-}\textsf{WL}}
\newcommand{\vc}[1]{\textsf{VC}(#1)}
\newcommand{\conv}[1]{\textsf{conv}(#1)}
\newcommand{\fb}{\mathbold{f}}
\newcommand{\hb}{\mathbold{h}}
\newcommand{\Ibb}{\mathbb{I}}
\newcommand{\Nbb}{\mathbb{N}}
\newcommand{\UPD}{\mathsf{UPD}}
\newcommand{\AGG}{\mathsf{AGG}}
\newcommand{\RO}{\mathsf{READOUT}}
\newcommand{\REL}{\mathsf{RELABEL}}
\newcommand{\relu}{\mathsf{reLU}}
\newcommand{\sign}{\mathsf{sign}}
\newcommand{\new}[1]{\emph{#1}}
\newcommand*{\tran}{\intercal}
\newcommand{\smmpnn}{\mathsf{mpnn}}
\renewcommand{\vec}[1]{\mathbold{#1}}
\newcommand{\oms}{\{\!\!\{}
\newcommand{\cms}{\}\!\!\}}
\newcommand{\tup}[1]{{(#1)}}
\DeclarePairedDelimiterX{\norm}[1]{\lVert}{\rVert}{#1}
\DeclareFontFamily{U}{mathx}{\hyphenchar\font45}
\DeclareFontShape{U}{mathx}{m}{n}{<-> mathx10}{}
\DeclareSymbolFont{mathx}{U}{mathx}{m}{n}
\DeclareMathAccent{\widebar}{0}{mathx}{"73}
\newcommand{\wprod}{\vec{W}_\mathrm{prod}}
\DeclareMathOperator*{\argmax}{arg\,max}
\newcommand{\wj}{\vec{W}^{(j)}}
\newcommand{\wjp}{\vec{W}^{(j+1)}}
\newcommand{\wmat}{\vec{W}}
\title{\huge \normalfont\textbf{Weisfeiler--Leman at the margin: When more expressivity matters}}
\author[1]{Billy J.\,Franks$^*$}
\author[2]{Christopher Morris$^*$}
\author[3]{Ameya Velingker}
\author[4]{Floris Geerts}
\affil[1]{University of Kaiserslautern-Landau}
\affil[2]{RWTH Aachen University}
\affil[3]{Google Research}
\affil[4]{University of Antwerp}
\date{\vspace{-30pt}}
\begin{document}

\maketitle
\def\thefootnote{*}\footnotetext{First authors with equal contributions.}
\def\thefootnote{\arabic{footnote}}

\begin{abstract}
	The Weisfeiler--Leman algorithm (\wlone) is a well-studied heuristic for the graph isomorphism problem. Recently, the algorithm has played a prominent role in understanding the expressive power of message-passing graph neural networks (MPNNs) and being effective as a graph kernel. Despite its success, \wlone{} faces challenges in distinguishing non-isomorphic graphs, leading to the development of more expressive MPNN and kernel architectures. However, the relationship between enhanced expressivity and improved generalization performance remains unclear. Here, we show that an architecture's expressivity offers limited insights into its generalization performance when viewed through graph isomorphism. Moreover, we focus on augmenting \wlone{} and MPNNs with subgraph information and employ classical margin theory to investigate the conditions under which an architecture's increased expressivity aligns with improved generalization performance. In addition, we show that gradient flow pushes the MPNN's weights toward the maximum margin solution. Further, we introduce variations of expressive \wlone-based kernel and MPNN architectures with provable generalization properties. Our empirical study confirms the validity of our theoretical findings.
\end{abstract}

\section{Introduction}
Graph-structured data are prevalent in application domains ranging from chemo- and bioinformatics~\citep{Jum+2021,Sto+2020,Won+2023}, combinatorial optimization~\citep{Cap+2021}, to image~\citep{Sim+2017} and social-network analysis~\citep{Eas+2010}, underlying the importance of machine learning methods for graphs. Nowadays, there are numerous approaches for machine learning on graphs, most notably those based on \new{graph kernels}~\citep{Borg+2020,Kri+2019} or \new{message-passing graph neural networks} (MPNNs)~\citep{Gil+2017,Sca+2009}. Here, graph kernels~\citep{She+2011} based on the \new{$1$-dimensional Weisfeiler--Leman algorithm} (\wlone)~\citep{Wei+1968}, a well-studied heuristic for the graph isomorphism problem, and corresponding MPNNs~\citep{Mor+2019,Xu+2018b}, have recently advanced the state-of-the-art in supervised vertex- and graph-level learning~\citep{Mor+2022}.

However, due to \wlone{}'s limitations in distinguishing non-isomorphic graphs~\citep{Arv+2015,Cai+1992}, numerous recent works proposed more expressive extensions of the \wlone{} and corresponding MPNNs~\citep{Mor+2022}. For example,~\citet{Bou+2020} introduced an approach to enhance the \wlone{} and MPNNs by incorporating subgraph information, achieved by labeling vertices based on their structural roles regarding a set of predefined (sub)graphs. Through the careful selection of such graphs, \citet{Bou+2020} demonstrated that these enhanced \wlone{} and MPNNs variants can effectively discriminate between pairs of non-isomorphic graphs, which the \wlone{} and \kwl~\citep{Cai+1992}, \wlone's more expressive generalization, cannot. \emph{Furthermore, empirical results~\citep{Bou+2020} indicate that this added expressive power often results in improved predictive performance. However, the exact mechanisms underlying this performance improvement remain unclear.}

Although recent work~\citep{Mor+2023} has used \wlone{} to establish upper and lower bounds on the Vapnik--Chervonenkis (VC) dimension of MPNNs, these findings do not explain the above empirical observations. Specifically, they do not explain the empirical trend that increased expressive power corresponds to enhanced generalization performance while keeping the size of the training set fixed. Concretely,~\cite{Mor+2023} demonstrated a strong correlation between the VC dimension of MPNNs and the number of non-isomorphic graphs that \wlone{} can differentiate. Consequently, increasing \wlone's expressive capabilities increases the VC dimension, worsening generalization performance. A parallel argument can be made regarding \wlone-based kernels.

\paragraph{Present work}  Here, we investigate to what extent the \wlone{} and more expressive extensions can be used as a proxy for an architecture's predictive performance. First, we show that data distributions exist such that the \wlone{} and corresponding MPNNs distinguish every pair of non-isomorphic graphs with different class labels while no classifier can do better than random outside of the training set. Hence, we show that the graph isomorphism perspective is too limited to understand MPNNs' generalization properties. Secondly, based on~\citet{Alo+2021}'s theory of partial concepts, we derive \emph{tight} upper and lower bounds for the VC dimension of the \wlone-based kernels and corresponding MPNNs parameterized by the \new{margin} separating the data. In addition, building on~\citet{JiT19}, we show that gradient flow pushes the MPNN's weights toward the maximum margin solution. Thirdly, we show when \wlone{} variants using subgraph information can make the data linearly separable, leading to a positive margin. Building on this, we derive conditions under which more expressive \wlone{} variants lead to better generalization performance and derive \wlone{} variants with favorable generalization properties. Our empirical study confirms the validity of our theoretical findings.

Our theory establishes the first link between increased expressive power and improved generalization performance. Moreover, our results provide the first margin-based lower bounds for MPNNs' VC dimension. \emph{Overall, our results provide new insights into when more expressive power translates into better generalization performance, leading to a more fine-grained understanding of designing expressive MPNNs.}

\subsection{Related work}

In the following, we discuss relevant related work.

\paragraph{Graph kernels based on the \wlone} \citet{She+2011} were the first to utilize the \wlone{} as a graph kernel. Later,~\citet{Mor+2017,Morris2020b,Mor+2022b} generalized this to variants of the \kwl{}. Moreover, \citet{Kri+2016} derived the \new{Weisfeiler-Leman optimal assignment kernel}, using the \wlone{} to compute optimal assignments between vertices of two given graphs. \citet{Yan+2015} successfully employed Weisfeiler--Leman kernels within frameworks for smoothed~\citep{Yan+2015} and deep graph kernels~\citep{Yan+2015a}. For a theoretical investigation of graph kernels based on the \wlone, see~\cite{Kri+2018}. See also~\cite {Mor+2022} for an overview of the Weisfeiler--Leman algorithm in machine learning and \citet{Borg+2020,Kri+2019} for a detailed review of graph kernels.

\paragraph{MPNNs} Recently, MPNNs~\citep{Gil+2017,Sca+2009} emerged as the most prominent graph maschine learning architecture. Notable instances of this architecture include, e.g.,~\citet{Duv+2015,Ham+2017}, and~\citet{Vel+2018}, which can be subsumed under the message-passing framework introduced in~\citet{Gil+2017}. In parallel, approaches based on spectral information were introduced in, e.g.,~\citet{Bru+2014,Defferrard2016,Gam+2019,Kip+2017,Lev+2019}, and~\citet{Mon+2017}---all of which descend from early work in~\citet{bas+1997,Gol+1996,Kir+1995,Mer+2005,mic+2005,mic+2009,Sca+2009}, and~\citet{Spe+1997}.

\paragraph{Expressive power of MPNNs} Recently, connections between MPNNs and Weisfeiler--Leman-type algorithms have been shown~\citep{Bar+2020,Gee+2020a,Mor+2019,Xu+2018b}. Specifically,~\citet{Mor+2019} and~\citet{Xu+2018b} showed that the \wlone{} limits the expressive power of any possible MPNN architecture in distinguishing non-isomorphic graphs. In turn, these results have been generalized to the \kwl, e.g.,~\citet{Mar+2019,Mor+2019,Morris2020b,Mor+2022b}, and connected to the permutation-equivariant function approximation over graphs, see, e.g.,~\citet{Che+2019,geerts2022,Mae+2019,Azi+2020}. Furthermore,~\citet{Aam+2022,Ami+2023} devised an improved analysis using randomization and moments of neural networks, respectively. Recent works have extended the expressive power of MPNNs, e.g., by encoding vertex identifiers~\citep{Mur+2019b, Vig+2020}, using random features~\citep{Abb+2020,Das+2020,Sat+2020} or individualization-refinement algorithms~\citep{Fra+2023}, affinity measures~\citep{Vel+2022}, equivariant graph polynomials~\citep{Pun+2023}, homomorphism and subgraph counts~\citep{Bar+2021,Bou+2020,Hoa+2020}, spectral information~\citep{Bal+2021}, simplicial~\citep{Bod+2021} and cellular complexes~\citep{Bod+2021b}, persistent homology~\citep{Hor+2021}, random walks~\citep{Toe+2021,Mar+2022}, graph decompositions~\citep{Tal+2021}, relational~\citep{Bar+2022}, distance~\citep{li2020distance} and directional information~\citep{beaini2020directional}, graph rewiring~\citep{Qia+2023} and adaptive message passing~\citep{Fin+2023}, subgraph information~\citep{Bev+2021,Cot+2021,Fen+2022,Fra+2022,Hua+2022,Mor+2022,Pap+2021,Pap+2022,Qia+2022,Thi+2021,Wij+2022,You+2020,Zha+2021,Zha+2022,Zha+2023b}, and biconnectivity~\citep{Zha+2023}. See~\citet{Mor+2022} for an in-depth survey on this topic. \citet{geerts2022} devised a general approach to bound the expressive power of a large variety of MPNNs using \wlone{} or \kwl{}.

Recently,~\citet{Kim+2022} showed that transformer architectures~\citep{Mue+2023} can simulate the $2$-$\mathsf{WL}$. \citet{Gro+2023} showed tight connections between MPNNs' expressivity and circuit complexity. Moreover,~\citet{Ros+2023} investigated the expressive power of different aggregation functions beyond sum aggregation. Finally, \citet{Boe+2023} defined a continuous variant of the \wlone, deriving a more fine-grained topological characterization of the expressive power of MPNNs.

\paragraph{Generalization abilities of graph kernels and MPNNs}
\citet{Sca+2018} used classical techniques from learning theory~\citep{Kar+1997} to show that MPNNs' VC dimension~\citep{Vap+95} with piece-wise polynomial activation functions on a \emph{fixed} graph, under various assumptions, is in $\cO(P^2n\log n)$, where $P$ is the number of parameters and $n$ is the order of the input graph; see also~\citet{Ham+2001}. We note here that~\citet{Sca+2018} analyzed a different type of MPNN not aligned with modern MPNN architectures~\citep{Gil+2017}. \citet{Gar+2020} showed that the empirical Rademacher complexity (see, e.g.,~\citet{Moh+2012}) of a specific, simple MPNN architecture, using sum aggregation, is bounded in the maximum degree, the number of layers, Lipschitz constants of activation functions, and parameter matrices' norms. We note here that their analysis assumes weight sharing across layers. \citet{Lia+2021} refined these results via a PAC-Bayesian approach, further refined in~\citet{Ju+2023}. \citet{Mas+2022} used random graphs models to show that MPNNs' generalization ability depends on the (average) number of vertices in the resulting graphs. In addition,~\citet{Lev+2023} defined a measure of a natural graph-signal similarity notion, resulting in a generalization bound for MPNNs depending on the covering number and the number of vertices. \citet{Ver+2019} studied the generalization abilities of $1$-layer MPNNs in a transductive setting based on algorithmic stability. Similarly,~\citet{Ess+2021} used stochastic block models to study the transductive Rademacher complexity~\citep{Yan+2009,Tol+2016} of standard MPNNs. For semi-supervised node classification,~\citet{Bar+2021a} studied the classification of a mixture of Gaussians, where the data corresponds
to the node features of a stochastic block model, under which conditions the mixture model is linearly separable using the GCN layer~\citep{Kip+2017}. Recently,~\cite{Mor+2023} made progress connecting MPNNs' expressive power and generalization ability via the Weisfeiler--Leman hierarchy. They studied the influence of graph structure and the parameters' encoding lengths on MPNNs' generalization by tightly connecting \wlone's expressivity and MPNNs' Vapnik--Chervonenkis (VC) dimension. They derived that MPNNs' VC dimension depends tightly on the number of equivalence classes computed by the \wlone{} over a given set of graphs. Moreover, they showed that MPNNs' VC dimension depends logarithmically on the number of colors computed by the \wlone{} and polynomially on the number of parameters. \citet{Kri+2018} leveraged results from graph property testing~\citep{Gol2010} to study the sample complexity of learning to distinguish various graph properties, e.g., planarity or triangle freeness, using graph kernels~\citep{Borg+2020,Kri+2019}. Finally,~\cite{Yeh+2021} showed negative results for MPNNs' ability to generalize to larger graphs.

\paragraph{Margin theory and VC dimension} Using the margin as a regularization mechanism dates back to~\cite{Vap+1964}. Later, the concept of margin was successfully applied to \new{support vector machines} (SVMs)~\citep{Cor+1995,Vap+1998} and connected to VC dimension theory; see~\cite{Moh+2012} for an overview. \citet{Grn+2020} derived the so far tightest generalization bounds for SVMs.

\citet{Alo+2021} introduced the theory of VC dimension of \new{partial concepts}, i.e., the hypothesis set allows partial functions and showed, analogous to the standard case, that finite VC dimension implies learnability and vice versa.

\section{Background}\label{sec:background}

Let $\Nb \coloneqq \{ 1,2,3, \dots \}$. For $n \geq 1$, let $[n] \coloneqq \{ 1, \dotsc, n \} \subset \Nb$. We use $\oms \dotsc \cms$ to denote multisets, i.e., the generalization of sets allowing for multiple instances for each of its elements. For two sets $X$ and $Y$, let $X^Y$ denote the set of functions mapping from $Y$ to $X$. Let $S \subset \Rb^d$, then the \new{convex hull} \conv{S} is the minimal convex set containing the set $S$. For $\vec{p} \in \Rb^d, d > 0$, and $\varepsilon > 0$, the \new{ball} $B(\vec{p}, \varepsilon, d) \coloneqq \{ \vec{s} \in \Rb^d \mid \lVert \vec{p} - \vec{s} \rVert \leq \varepsilon  \}$. Here, and in the remainder of the paper, $\|\cdot\|$ refers to the \new{$2$-norm} $\|\vec{x}\|\coloneqq\sqrt{x_1^2+\cdots+x_d^2}$ for $\vec{x}\in\Rb^d$.

\paragraph{Graphs} An \new{(undirected) graph} $G$ is a pair $(V(G),E(G))$ with \emph{finite} sets of
\new{vertices} or \new{nodes} $V(G)$ and \new{edges} $E(G) \subseteq \{ \{u,v\} \subseteq V(G) \mid u \neq v \}$. For ease of notation, we denote an edge $\{u,v\}$ in $E(G)$ by $(u,v)$ or $(v,u)$. The \new{order} of a graph $G$ is its number $|V(G)|$ of
vertices. If not stated otherwise, we set $n \coloneqq |V(G)|$ and call $G$ an $n$-order graph. We denote the set of all $n$-order graphs by $\cG_n$. For a graph $G\in\cG_n$, we denote its \new{adjacency matrix} by $\vec{A}(G) \in \{ 0,1 \}^{n \times n}$, where $A(G)_{vw} = 1$ if, and only, if $(v,w) \in E(G)$. The \new{neighborhood} of $v \in V(G)$ is denoted by $N(v) \coloneqq  \{ u \in V(G) \mid (v, u) \in E(G) \}$ and the \new{degree} of a vertex $v$ is $|N(v)|$. A \new{(vertex-)labeled graph} $G$ is a triple $(V(G),E(G),\ell)$ with a (vertex-)label function $\ell \colon V(G) \to \Nb$. Then $\ell(v)$ is a \new{label} of $v$, for $v \in V(G)$. For $X \subseteq V(G)$, the graph $G[X] \coloneqq  (X,E_X)$ is the \new{subgraph induced by $X$}, where $E_X \coloneqq \{ (u,v) \in E(G) \mid u,v \in X \}$.  Two graphs $G$ and $H$ are \new{isomorphic}, and we write $G \simeq H$ if there exists a bijection $\varphi \colon V(G) \to V(H)$ preserving the adjacency relation, i.e., $(u,v)$ is in $E(G)$ if, and only, if $(\varphi(u),\varphi(v))$ is in $E(H)$. Then $\varphi$ is an \new{isomorphism} between $G$ and $H$. In the case of labeled graphs, we additionally require that $l(v) = l(\varphi(v))$ for all $v$ in $V(G)$. We denote the \new{complete graph} on $n$ vertices by $K_n$ and a \new{cycle} on $n$ vertices by $C_n$. for $r \geq 0$, a graph is \new{$r$-regular} if all of its vertices have degree $r$. Given two graphs $G$ and $H$ with disjoint vertex sets, we denote their disjoint union by $G \,\dot\cup\, H$.

\paragraph{Kernels} A \emph{kernel} on a non-empty set $\mathcal{X}$ is a symmetric, positive semidefinite function $k \colon \mathcal{X} \times \mathcal{X} \to \mathbb{R}$. Equivalently, a function $k \colon \mathcal{X} \times \mathcal{X} \to \mathbb{R}$ is a kernel if there is a \emph{feature map} $\phi \colon \mathcal{X} \to \mathcal{H}$ to a Hilbert space $\mathcal{H}$ with inner product $\langle \cdot, \cdot \rangle$ such that $k(x,y) = \langle \phi(x),\phi(y) \rangle$ for all $x$ and $y \in \mathcal{X}$. We also call $\phi(x) \in \cH$ a \new{feature vector}. A \emph{graph kernel} is a kernel on the set $\mathcal{G}$ of all graphs. In the context of graph kernels, we also refer to a feature vector as a \new{graph embedding}.

\paragraph{VC Dimension of partial concepts} Let $\cX$ be a non-empty set. As outlined in~\citet{Alo+2021}, we consider \new{partial concepts} $\Hb \subseteq \{0,1,\star\}^{\cX}$, where each concept $c \in \Hb$ is a \emph{partial} function. That is, if $x \in \cX$ such that $c(x) = \star$, then $c$ is \new{undefined} at $x$. The \new{support} of a partial concept $h \in \Hb$ is the set
\begin{equation*}
	\mathsf{supp}(h) \coloneqq \{ x \in \cX \mid h(x) \neq \star \}.
\end{equation*}
The VC dimension of (total) concepts~\citep{Vap+95} straightforwardly generalizes to partial concepts. That is, the \new{VC dimension} of a partial concept class $\Hb$, denoted $\vc{\Hb}$, is the maximum cardinality of a shattered set $U \coloneqq \{ x_1, \dots, x_m \} \subseteq \cX$. Here, the set $U$ is \new{shattered} if for any $\pmb\tau \in \{0,1\}^m$ there exists $c \in \cH$ such that for all $i\in[m]$
\begin{equation*}
	c(x_i) = \tau_i.
\end{equation*}
In essence,~\citet{Alo+2021} showed that the standard definition of PAC learnability extends to partial concepts, recovering the equivalence of finite VC dimension and PAC learnability.

A  prime example of such a partial concept is defined in terms of separability by hyperplanes. That is,  let $\vec w\in \Rb^d$ with $\|\vec w\|=1$ and let $\lambda\in\Rb$ with $\lambda>0$. Consider the partial concept
\begin{equation*}
	h_{\vec{w},\lambda} \colon \Rb^d\to\{0,1,\star\} \colon \vec x\mapsto\begin{cases}
		1     & \text{if  $\vec w^{\tran} \vec{x}\geq\lambda$}  \\
		0     & \text{if  $\vec w^{\tran} \vec{x}\leq-\lambda$} \\
		\star & \text{otherwise,}
	\end{cases}
\end{equation*}
where $\lambda$ corresponds to the so-called \new{geometric margin} of the classifier.

\paragraph{Geometric margin classifiers}
As outlined above, classifiers with a geometric margin, e.g., \new{support vector machines}~\citep{Cor+1995}, are a cornerstone of machine learning. A \new{sample} $(\vec{x}_1, y_1), \dotsc, (\vec{x}_s, y_s) \in \Rb^{d} \times \{ 0,1 \}$, for $d > 0$, is \emph{$(r,\lambda)$-separable} if (1) there exists $\vec{p} \in \Rb^d$, $r > 0$, and a ball $B(\vec{p}, r, d)$ such that $\vec{x}_1, \dotsc, \vec{x}_s \in B(\vec{p}, r, d)$ and (2) the Euclidean distance between $\conv{\{ \vec{x}_i \mid y_i = 0  \}}$ and $\conv{\{ \vec{x}_i \mid y_i = 1 \}}$ is at least $2\lambda$. Then, the sample $S$ is \new{linearly separable} with \new{margin} $\lambda$. We define the set of concepts
\begin{align*}
	\Hb_{r,\lambda}(\Rb^d) \coloneqq \Big\{ h \in \{0,1,\star\}^{\mathbb{R}^d} \mathrel{\Big|}\, \forall\, & \vec{x}_1, \dots, \vec{x}_s \in \mathsf{supp}(h) \colon                                                \\
	(                                                                                                      & \vec{x}_1, h(\vec{x}_1)), \dotsc, (\vec{x}_s, h(\vec{x}_s)) \text{ is $(r,\lambda)$-separable} \Big\}.
\end{align*}
\citet{Alo+2021} showed that the VC dimension of the concept class $\Hb_{r,\lambda}(\Rb^d)$ is asymptotically lower- and upper-bounded by $\nicefrac{r^2}{\lambda^2}$. Importantly, the above bounds are independent of the dimension $d$, while standard VC dimension bounds scale linearly with $d$~\citep{Ant+2002}.

\subsection{The \texorpdfstring{$1$}{1}-dimensional Weisfeiler--Leman algorithm}\label{subsec:1WL} The \wlone{} or \new{color refinement} is a well-studied heuristic for the graph isomorphism problem, originally proposed by~\citet{Wei+1968}.\footnote{Strictly speaking, the \wlone{} and color refinement are two different algorithms. That is, the \wlone{} considers neighbors and non-neighbors to update the coloring, resulting in a slightly higher expressive power when distinguishing vertices in a given graph; see~\cite {Gro+2021} for details. Following customs in the machine learning literature, we consider both algorithms to be equivalent.} Intuitively, the algorithm determines if two graphs are non-isomorphic by iteratively coloring or labeling vertices. Given an initial coloring or labeling of the vertices of both graphs, e.g., their degree or application-specific information, in each iteration, two vertices with the same label get different labels if the number of identically labeled neighbors is unequal. These labels induce a vertex partition, and the algorithm terminates when, after some iteration, the algorithm does not refine the current partition, i.e., when a \new{stable coloring} or \new{stable partition} is obtained. Then, if the number of vertices annotated with a specific label is different in both graphs, we can conclude that the two graphs are not isomorphic. It is easy to see that the algorithm cannot distinguish all non-isomorphic graphs~\citep{Cai+1992}. However, it is a powerful heuristic that can successfully decide isomorphism for a broad class of graphs~\citep{Arv+2015,Bab+1979}.

Formally, let $G = (V(G),E(G),\ell)$ be a labeled graph. In each iteration, $t > 0$, the \wlone{} computes a vertex coloring $C^1_t \colon V(G) \to \Nb$, depending on the coloring of the neighbors. That is, in iteration $t>0$, we set
\begin{equation*}
	C^1_t(v) \coloneqq \REL\Big(\!\big(C^1_{t-1}(v),\oms C^1_{t-1}(u) \mid u \in N(v)  \cms \big)\! \Big),
\end{equation*}
for all vertices $v \in V(G)$, where $\REL$ injectively maps the above pair to a unique natural number, which has not been used in previous iterations. In iteration $0$, the coloring $C^1_{0}\coloneqq \ell$ is used. To test whether two graphs $G$ and $H$ are non-isomorphic, we run the above algorithm in ``parallel'' on both graphs. If the two graphs have a different number of vertices colored $c \in \Nb$ at some iteration, the \wlone{} \new{distinguishes} the graphs as non-isomorphic. Moreover, if the number of colors between two iterations, $t$ and $(t+1)$, does not change, i.e., the cardinalities of the images of $C^1_{t}$ and $C^1_{i+t}$ are equal, or, equivalently,
\begin{equation*}
	C^1_{t}(v) = C^1_{t}(w) \iff C^1_{t+1}(v) = C^1_{t+1}(w),
\end{equation*}
for all vertices $v$ and $w$ in $V(G\,\dot\cup H)$, then the algorithm terminates. For such $t$, we define the \new{stable coloring} $C^1_{\infty}(v) = C^1_t(v)$, for $v \in V(G\,\dot\cup H)$. The stable coloring is reached after at most $\max \{ |V(G)|,|V(H)| \}$ iterations~\citep{Gro2017}.

\paragraph{Graph kernels based on the $\boldsymbol{1}$-\textsf{WL}} Let $G$ be a graph. Following~\cite{She+2009}, the idea for a kernel based on the \wlone{} is to run the \wlone{} for $T \geq 0$ iterations, resulting in a coloring function $C^1_t \colon V(G) \to \Nb$ for each iteration $t \leq T$. Let $\Sigma_t$ denote the \emph{range} of $C^1_t$, i.e., $\Sigma_t\coloneqq\{ c\mid \exists\, v\in V(G) \colon C^1_t(v)=c\}$. We assume $\Sigma_t$ to be ordered by the natural order of $\Nb$, i.e., we assume that $\Sigma_t$ consists of $c_1< \cdots < c_{|\Sigma_t|}$. After each iteration, we compute a \new{feature vector} $\phi_t(G) \in \Rb^{|\Sigma_t|}$ for each graph $G$. Each component $\phi_t(G)_{i}$ counts the number of occurrences of vertices of $G$ labeled by $c_i \in \Sigma_t$. The overall feature vector $\phi_{\textsf{WL}}(G)$ is defined as the concatenation of the feature vectors of all $T$ iterations, i.e.,
\begin{equation*}
	\phi^{(T)}_{\textsf{WL}}(G) \coloneqq \big[\phi_0(G), \dotsc, \phi_T(G) \big],
\end{equation*}
where $[\dotsc]$ denote column-wise vector concatenation. This results in the kernel $k^{(T)}_{\textsf{WL}}(G,H) \coloneqq \langle \phi^{(T)}_{\textsf{WL}}(G), \phi^{(T)}_{\textsf{WL}}(H) \rangle$, where $\langle \cdot, \cdot \rangle$ denotes the standard inner product. We further define the \new{normalized \wlone{} feature vector},
\begin{equation*}\label{kernel}
	\widebar{\phi^{(T)}_{\textsf{WL}}}(G) \coloneqq \phi^{(T)}_{\textsf{WL}}(G) / \norm{\phi^{(T)}_{\textsf{WL}}(G) },
\end{equation*}
obtained by normalizing the \wlone{} feature vector to unit length.

\paragraph{Weisfeiler--Leman optimal assignment kernel}
Based on the \wlone, \cite{Kri+2016} defined the \new{Weisfeiler--Leman optimal assignment kernel} (\wloa), which computes an optimal assignment between the colors computed by the \wlone{} for all iterations; see~\citet{Kri+2016} for details. Given two graphs $G$ and $H$ and let $T\geq 0$, the \wloa{} computes
\begin{equation*}
	k_{\textsf{WLOA}}(G,H) \coloneqq \sum_{t \in [T] \cup
		\{ 0 \}} \sum_{c \in \Sigma_t} \min(\phi_t(G)_c,\phi_t(H)_c).
\end{equation*}
For a fixed but arbitrary number of vertices, we can compute a corresponding finite-dimensional feature map $\phi^{(T)}_{\textsf{WLOA}}$ for the set of $n$-order graphs. From the theory developed in~\citet{Kri+2016}, it follows that the \wloa{} kernel has the same expressive power as the \wlone{} in distinguishing non-isomorphic graphs.

\subsection{More expressive variants of the \texorpdfstring{\wlone}{1-WL}}\label{sec:wlf}
It is easy to see that the \wlone{} cannot distinguish all pairs of non-isomorphic graphs~\citep{Arv+2015,Cai+1992}. However, there exists a large set of more expressive extensions of the \wlone, which have been successfully leveraged as kernel or neural architectures~\citep{Mor+2022}. Moreover, empirical results suggest that such added expressive power often translates into increased predictive or generalization performance. Nonetheless, the precise mechanisms underlying this performance boost remain unclear.

In the following, we define a simple modification of the \wlone{} that is more expressive in distinguishing non-isomorphic graphs, the \wlonef. It is a simplified variant of the algorithms defined in~\citet{Bou+2020}, which take into account orbit information. Let $G$ be a graph and $\cF$ be a finite set of graphs. For $F \in \cF$, we define a vertex labeling $\ell_{F} \colon V(G) \to \Nb$ such that $\ell_{F}(v) = \ell_{F}(w)$ if, and only, if there exists $X_v \subseteq V(G)$ with $v \in X_v$ and $X_w \subseteq V(G)$ with  $w \in X_w$  such that $G[X_v] \simeq F$ and $G[X_w] \simeq F$. In other words, $\ell_F$ encodes the presence of subgraphs $G[X_v]$ in $G$, isomorphic to $F$ and containing vertex $v$.
Furthermore, we define the vertex labeling $\ell_{\cF} \colon V(G) \to \Nb$, where $\ell_{\cF}(v) = \ell_{\cF}(w)$ if, and only, if, for all $F \in \cF$, $\ell_{F}(v) = \ell_{F}(w)$. Finally, for $t \geq 0$, we define the vertex coloring  $C^{1,\cF}_t \colon V(G) \to \Nb$, where $C^{1,\cF}_0(v) \coloneqq \ell_{\cF}(v)$, and
\begin{equation*}
	C^{1,\cF}_t(v) \coloneqq \REL\Big(\!\big(C^{1,\cF}_{t-1}(v),\oms C^{1,\cF}_{t-1}(u) \mid u \in N(v)  \cms \big)\! \Big),
\end{equation*}
for $v \in V(G)$. Hence, the \wlonef{} only differs from the \wlone{} at the initialization step. The following result implies that, for all sets of graphs $\cF$, the \wlonef{} is at least as strong as \wlone{} in distinguishing non-isomorphic graphs.
\begin{proposition} \label{prop:WLFfiner}
	Let $G$ be a graph and $\cF$ be a set of graphs. Then, for all rounds, the \wlonef{} distinguishes at least the same vertices as the \wlone.  \qed
\end{proposition}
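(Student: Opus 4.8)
The plan is to reduce the claim to the standard monotonicity property of color refinement: the \wlone{} and the \wlonef{} run the \emph{same} update rule --- the injective map $\REL$ applied to the pair consisting of a vertex's own color and the multiset of its neighbors' colors --- and differ only in their initial coloring ($C^1_0 = \ell$ versus $C^{1,\cF}_0 = \ell_{\cF}$). I will therefore formalize ``distinguishes at least the same vertices'' as the statement that, for every round $t$, the coloring $C^{1,\cF}_t$ is at least as fine as $C^1_t$, i.e.\ $C^{1,\cF}_t(v) = C^{1,\cF}_t(w)$ implies $C^1_t(v) = C^1_t(w)$ for all $v,w \in V(G)$, and prove this by induction on $t$.

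For the base case $t = 0$, I need $\ell_{\cF}$ to refine $\ell$, which holds by construction of the initial \wlonef{} coloring (the subgraph labeling $\ell_{\cF}$ is at least as fine as the input labeling $\ell$; when $G$ carries no nontrivial labels, $C^1_0$ is constant and the refinement is immediate).

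For the inductive step, assume $C^{1,\cF}_{t-1}$ refines $C^1_{t-1}$ and take $v, w$ with $C^{1,\cF}_t(v) = C^{1,\cF}_t(w)$. Since $\REL$ is injective, the arguments passed to it must coincide, giving both $C^{1,\cF}_{t-1}(v) = C^{1,\cF}_{t-1}(w)$ and the multiset identity $\oms C^{1,\cF}_{t-1}(u) \mid u \in N(v) \cms = \oms C^{1,\cF}_{t-1}(u) \mid u \in N(w) \cms$. The inductive hypothesis immediately turns the first equality into $C^1_{t-1}(v) = C^1_{t-1}(w)$. For the multiset, the inductive hypothesis guarantees that the assignment sending each $\cF$-color to the \wlone{} color shared by all vertices carrying it is \emph{well defined}; call this map $f$. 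The \wlone{} neighbor multiset at $v$ (resp.\ $w$) is exactly the image under $f$ of the corresponding \wlonef{} neighbor multiset, and applying a fixed function to two equal multisets yields equal multisets, so $\oms C^1_{t-1}(u) \mid u \in N(v) \cms = \oms C^1_{t-1}(u) \mid u \in N(w) \cms$. Both coordinates of the \wlone{} update thus agree at $v$ and $w$, and since $\REL$ is a function it returns the same value, i.e.\ $C^1_t(v) = C^1_t(w)$.

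The only genuinely technical point is this multiset step: passing from equality of the finer \wlonef{} neighbor multisets to equality of the coarser \wlone{} ones. The clean way to handle it is via the induced color map $f$, whose well-definedness is precisely the inductive hypothesis; everything else is bookkeeping around the injectivity of $\REL$ and the definition of refinement. A minor caveat worth stating explicitly is the base case, which presupposes that $\ell_{\cF}$ refines the input labeling $\ell$.
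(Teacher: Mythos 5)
Your proof is correct and takes essentially the same route as the paper's: induction on the round $t$, showing that $C^{1,\cF}_t$ refines $C^1_t$, where injectivity of $\REL$ gives equality of both arguments of the update and the inductive hypothesis transfers equality of the neighbor multisets from the finer coloring to the coarser one. The only cosmetic difference is that the paper performs this multiset step via a color-preserving bijection $\varphi \colon N(v) \to N(w)$ where you use the induced color map $f$; your explicit caveat about the base case also matches the paper, whose proof likewise treats $C^1_0$ as inducing a single color class.
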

In addition, by choosing the set of graphs $\cF$ appropriately, the \wlonef{} becomes strictly more expressive than the \wlone{} in distinguishing non-isomorphic graphs.
\begin{proposition}\label{thm:wlonef_strong}
	For every $n \geq 6$, there exists at least one pair of non-isomorphic graphs and a set of graphs $\cF$ containing a single constant-order graph, such that, for all rounds, \wlone{} does not distinguish them, while \wlonef{} distinguishes them after a single round. \qed
\end{proposition}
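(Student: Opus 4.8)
The plan is to exhibit, for each $n \geq 6$, a single explicit pair of non-isomorphic $n$-order graphs together with a one-element set $\cF$ whose only member has constant order independent of $n$, namely the triangle. Concretely, I would take
\[
G \coloneqq C_n \quad\text{and}\quad H \coloneqq C_3 \,\dot\cup\, C_{n-3}, \qquad \cF \coloneqq \{ C_3 \}.
\]
Since $n \geq 6$ we have $n-3 \geq 3$, so $H$ is a well-defined disjoint union of two cycles; moreover $G$ is connected while $H$ is not, so $G \not\simeq H$. Both graphs are $2$-regular on $n$ vertices, and $C_3$ has constant order $3$.

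First I would argue that the \wlone{} never distinguishes $G$ from $H$. Starting from the trivial (uniform) initial coloring, every vertex of each graph has exactly two neighbors, all of the current common color, so the relabeling step $\REL$ assigns one and the same color to all vertices; by induction the coloring stays uniform in both graphs at every round. Hence the two color histograms coincide at every iteration (all $n$ vertices share a single color), and the \wlone{} fails to distinguish $G$ and $H$ for all rounds. This is just the standard observation that color refinement cannot separate two regular graphs of equal order and degree. Next I would analyze the initialization of the \wlonef{} with $F = C_3$. Here $\ell_{C_3}$ partitions the vertices according to whether they lie in an induced triangle. In $G = C_n$ with $n \geq 4$ no three vertices are mutually adjacent, so no vertex lies in an induced copy of $C_3$; in $H$ the three vertices of the $C_3$ component do form an induced triangle. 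Therefore the number of vertices flagged as lying in a triangle is $0$ in $G$ but at least $3$ in $H$, so the initial colorings $C^{1,\cF}_0$ already induce different color histograms. Since refinement only splits colors, this difference persists, and the \wlonef{} distinguishes $G$ and $H$ within a single round (indeed already at initialization).

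The only real obstacle is arranging a \emph{uniform} construction that works for every $n \geq 6$ simultaneously, rather than for one fixed order: the set $\cF$ must keep $F$ of constant order as the graphs grow, while the pair must simultaneously be regular (to defeat the \wlone) and differ in the presence versus absence of an induced copy of $F$. Padding the triangle-free side by lengthening the single cycle $C_n$—instead of adding further triangles—resolves this, and $F = C_3$ is fixed once and for all. The boundary case $n = 6$, where $C_{n-3} = C_3$ so that all six vertices of $H$ lie in triangles, is handled identically, since the triangle-vertex count ($6$ versus $0$) still differs. I would finally remark that the conclusion is robust to the exact labeling convention for vertices not lying in any copy of $F$, as the two histograms differ purely through the triangle-membership counts.
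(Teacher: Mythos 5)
Your proof is correct and follows essentially the same approach as the paper: for $n = 6$ the paper uses exactly your pair ($C_6$ versus $C_3 \,\dot\cup\, C_3$) with $\cF = \{C_3\}$, relying on $2$-regularity to make the graphs indistinguishable by the \wlone{} and on triangle membership to separate the initial \wlonef{} colorings. The only difference is the extension to $n > 6$: the paper pads both graphs with $n-6$ isolated vertices, whereas you lengthen the cycles to $C_n$ versus $C_3 \,\dot\cup\, C_{n-3}$, which is arguably slightly cleaner since both graphs remain $2$-regular and your uniform-coloring argument applies verbatim for every $n$.
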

We can also define a \wloa{} variant of the \wlonef{}, which we denote by \wloaf.

\paragraph{Graph kernels based on the \wlonef} Similar to the \wlone, we can also define a graph kernel based on the \wlonef{}. Let $G$ be a graph, we run the \wlonef{} for $T \geq 0$ iterations, resulting in a coloring function $C^{1,\cF}_{t} \to \Sigma_t$  for each iteration $t \leq T$. Let $\Sigma_t$ denote the \emph{range} of $C^{1,\cF}_{t}$, i.e., $\Sigma_t\coloneqq\{ c\mid \exists\, v\in V(G) \colon C^{1,\cF}_{t}(v)=c\}$. Again, we assume $\Sigma_t$ to be ordered by the natural order of $\Nb$, i.e., we assume that $\Sigma_t$ consists of $c_1<\cdots < c_{|\Sigma_t|}$. After each iteration, we compute a feature vector  $\phi_{\cF,t}(G) \in \Rb^{|\Sigma_t|}$ for each graph $G$. Each component $\phi_{\cF,t}(G)_{i}$ counts the number of occurrences of vertices of $G$ labeled by $c_i \in \Sigma_t$. The overall feature vector $\phi_{\textsf{WL}_{\cF}}(G)$ is defined as the concatenation of the feature vectors of all $T$ iterations, i.e.,
\begin{equation*}
	\phi^{(T)}_{\textsf{WL}_{\cF}}(G) \coloneqq \big[\phi_{\cF,0}(G), \dotsc, \phi_{\cF,T}(G) \big],
\end{equation*}
where $[\dots]$ denote column-wise vector concatenation. We then define the kernel and its normalized counterpart in the same way as with the \wlone.

\subsection{Message-passing graph neural networks}\label{sec:MPNN}
Intuitively, MPNNs learn a vectorial representation, i.e.,  $d$-dimensional real-valued vector, representing each vertex in a graph by aggregating information from neighboring vertices.
Formally, let $G = (V(G),E(G),\ell)$ be a labeled graph with initial vertex features $\hb_{v}^\tup{0} \in \Rb^{d}$ that are \emph{consistent} with $\ell$. That is, each vertex $v$ is annotated with a feature  $\hb_{v}^\tup{0} \in \Rb^{d}$ such that $\hb_{v}^\tup{0} = \hb_{u}^\tup{0}$ if, and only, if $\ell(v) = \ell(u)$. An example is an one-hot encoding of the labels $\ell(u)$ and $\ell(v)$. Alternatively,  $\hb_{v}^\tup{0}$ can be an attribute or a feature of the vertex $v$, e.g., physical measurements in the case of chemical molecules. An MPNN architecture consists of a stack of neural network layers, i.e., a composition of permutation-equivariant parameterized functions. Similarly to the \wlone, each layer aggregates local neighborhood information, i.e., the neighbors' features around each vertex, and then passes this aggregated information on to the next layer. Following,~\citet{Sca+2009} and~\citet{Gil+2017}, in each layer, $t > 0$,  we compute vertex features
\begin{equation}\label{def:MPNN}
	\hb_{v}^\tup{t} \coloneqq
	\UPD^\tup{t}\Bigl(\hb_{v}^\tup{t-1},\AGG^\tup{t} \bigl(\oms \hb_{u}^\tup{t-1}
	\mid u\in N(v) \cms \bigr)\Bigr) \in \Rb^{d},
\end{equation}
for each $v\in V(G)$, where  $\UPD^\tup{t}$ and $\AGG^\tup{t}$ may be differentiable parameterized functions, e.g., neural networks.\footnote{Strictly speaking, \citet{Gil+2017} consider a slightly more general setting in which vertex features are computed by $\hb_{v}^\tup{t} \coloneqq
		\UPD^\tup{t}\Bigl(\hb_{v}^\tup{t-1},\AGG^\tup{t} \bigl(\oms (\hb_v^\tup{t-1},\hb_{u}^\tup{t-1},\ell(v,u))
		\mid u\in N(v) \cms \bigr)\Bigr)$, where $\ell(v,u)$ denotes the edge label of the edge $(v,u)$.}
In the case of graph-level tasks, e.g., graph classification, one uses
\begin{equation}\label{def:readout}
	\hb_G \coloneqq \RO\bigl( \oms \hb_{v}^{\tup{L}}\mid v\in V(G) \cms \bigr) \in \Rb^d,
\end{equation}
to compute a single vectorial representation based on learned vertex features after iteration $L$. Again, $\RO$  may be a differentiable parameterized function. To adapt the parameters of the above three functions, they are optimized end-to-end, usually through a variant of stochastic gradient descent, e.g.,~\citet{Kin+2015}, together with the parameters of a neural network used for classification or regression.

\paragraph{More expressive  MPNNs} Since the expressive power of MPNNs is strictly limited by the \wlone{} in distinguishing non-isomorphic graphs~\citep{Mor+2019,Xu+2018b}, a large set of more expressive extensions of MPNNs~\citep{Mor+2022} exists. Here, we introduce the \new{\tMPNNF{} architecture}, an MPNN variant of the \wlonef{}; see~\cref{sec:wlf}. In essence, an \tMPNNF{} is a standard MPNN, where we set the initial features consistent with the initial vertex-labeling of the \wlonef, e.g., one-hot encodings of $\ell_{\mathcal{F}}$. Following~\citet{Mor+2019}, it is straightforward to derive an \tMPNNF{} architecture that has the same expressive power as the \wlonef{} in distinguishing non-isomorphic graphs.

\paragraph{Notation} In the subsequent sections, we use the following notation for MPNNs. We denote the class of all (labeled) graphs by $\cG$. For $d,l>0$, we denote the class of MPNNs using summation for aggregation, and such that update and readout functions are \new{multilayer perceptrons} (MLPs), all of a width of at most $d$, by $\MPNN_{\mathsf{mlp}}(d,L)$. We refer to elements in $\MPNN_{\mathsf{mlp}}(d,L)$ as \textit{simple MPNNs}; see~\cref{sec:simpleMPNNs} for details. We stress that simple MPNNs are already expressive enough to be equivalent to the \wlone{} in distinguishing non-isomorphic graphs~\citep{Mor+2019}. The class $\MPNN_{\mathsf{mlp},\cF}(d,L)$ is defined similarly, based on \tMPNNF s.

\section{When more expressivity matters---and when it does not}
We start by investigating under which conditions using more expressive power leads to better generalization performance and, when not, using the \emph{data's margin}. To this aim, we first prove lower and upper bounds on the VC dimension of \wlone-based kernels, MPNNs, and their more expressive generalizations.

\subsection{Margin-based bounds on the VC dimension of Weisfeiler--Leman kernels and MPNN architectures}\label{subsec:VCdim}

We first derive a general condition to prove margin-based lower and upper bounds. For a subset $\Sb\subseteq\Rb^d, d>0$,  we consider the following set of partial concepts from $\Sb$ to $\{0,1,\star\}$,
\begin{align*}
	\Hb_{r,\lambda}(\Sb) \coloneqq \Big\{ h \in \{0,1,\star \}^{\Sb}  \mathrel{\Big|}  \forall\, & \vec{x}_1, \dots, \vec{x}_s \in \mathsf{supp}(h) \colon                                                \\
	(                                                                                            & \vec{x}_1, h(\vec{x}_1)), \dotsc, (\vec{x}_s, h(\vec{x}_s)) \text{ is $(r,\lambda)$-separable} \Big\}.
\end{align*}
For the upper bound, since $\Sb\subseteq\Rb^d$, the VC dimension of $\Hb_{r,\lambda}(\Sb)$ is upper-bounded by the VC dimension of $\Hb_{r,\lambda}(\Rb^d)$. As already mentioned, the latter is known to be bounded by $\nicefrac{r^2}{\lambda^2}$ \citep{bartlett1999generalization, Alo+2021}. For the lower bound, the following lemma, implicit in~\citet{Alo+2021}, states sufficient conditions for $\Sb$ such that the VC dimension of $\Hb_{r,\lambda}(\Sb)$ is also lower-bounded by $\nicefrac{r^2}{\lambda^2}$.

\begin{lemma}\label{lem:VC_lower_F}
	Let  $\Sb\subseteq\Rb^d$. If $\Sb$ contains $m\coloneqq\lfloor\nicefrac{r^2}{\lambda^2}\rfloor$
	vectors $\vec b_1,\dotsc,\vec b_m \in \Rb^d$ with $\vec b_i:=(\vec b_i^{(1)}, \vec b_i^{(2)})$ and $\vec b_1^{(2)},\dotsc,\vec b_m^{(2)}$ being pairwise orthogonal, $\|\vec b_i\|=r'$, and $\|\vec b_i^{(2)}\|=r$,
	then
	\begin{equation*}
		\vcdim(\Hb_{r',\lambda}(\Sb)) \in \Theta(\nicefrac{r^2}{\lambda^2}).
	\end{equation*}
\end{lemma}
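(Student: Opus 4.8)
The upper bound is essentially already in place: as noted just before the statement, $\Sb\subseteq\Rb^d$ gives $\vcdim(\Hb_{r',\lambda}(\Sb)) \le \vcdim(\Hb_{r',\lambda}(\Rb^d)) = O(\nicefrac{{r'}^2}{\lambda^2})$ via~\citet{Alo+2021}, since any $h$ defined on $\Sb$ extends to $\Rb^d$ by setting it to $\star$ outside $\Sb$ without enlarging its support. As the auxiliary block $\vec b_i^{(1)}$ has squared norm ${r'}^2-r^2$ and is of the same order as $\vec b_i^{(2)}$ in the regime of interest (so that $r'=\Theta(r)$), this delivers the $O(\nicefrac{r^2}{\lambda^2})$ half, and the real work is the matching lower bound.

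For the lower bound I would exhibit the explicit candidate $U\coloneqq\{\vec b_1,\dots,\vec b_m\}$ with $m=\lfloor\nicefrac{r^2}{\lambda^2}\rfloor$ and show it is shattered. Fix an arbitrary target $\pmb\tau\in\{0,1\}^m$ and write $\sigma_i\coloneqq 2\tau_i-1\in\{-1,+1\}$. The heart of the argument is the choice of unit normal
\[
	\vec w \coloneqq \frac{1}{\sqrt{m}}\sum_{i=1}^m \sigma_i\,\frac{\vec b_i^{(2)}}{r},
\]
where each $\vec b_i^{(2)}$ is regarded as a vector of $\Rb^d$ by zero-padding its complementary block. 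Using only pairwise orthogonality of the $\vec b_i^{(2)}$ together with $\|\vec b_i^{(2)}\|=r$, a direct computation gives $\|\vec w\|=1$ and, for every $j\in[m]$,
\[
	\vec w^{\tran}\vec b_j = \vec w^{\tran}\vec b_j^{(2)} = \frac{\sigma_j}{\sqrt{m}}\cdot\frac{\|\vec b_j^{(2)}\|^2}{r} = \frac{\sigma_j\, r}{\sqrt{m}}.
\]
Since $m\le\nicefrac{r^2}{\lambda^2}$ forces $\nicefrac{r}{\sqrt{m}}\ge\lambda$, we obtain $\vec w^{\tran}\vec b_j\ge\lambda$ whenever $\tau_j=1$ and $\vec w^{\tran}\vec b_j\le-\lambda$ whenever $\tau_j=0$.

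It then remains to wrap this separator into a legitimate partial concept. I would set $h_{\pmb\tau}\in\{0,1,\star\}^{\Sb}$ to be $h_{\pmb\tau}(\vec b_j)\coloneqq\tau_j$ for $j\in[m]$ and $\star$ on all of $\Sb\setminus U$, so that $\mathsf{supp}(h_{\pmb\tau})=U$ and $h_{\pmb\tau}$ realizes $\pmb\tau$ on $U$. To certify $h_{\pmb\tau}\in\Hb_{r',\lambda}(\Sb)$ I must check that every finite subset of $\mathsf{supp}(h_{\pmb\tau})=U$ is $(r',\lambda)$-separable. Ball containment is immediate from $\|\vec b_i\|=r'$, placing all of $U$ inside $B(\vec 0,r',d)$. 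For the margin I would use $\vec w$ as separating direction: any convex combination $\vec u$ of the positively labeled points satisfies $\vec w^{\tran}\vec u\ge\lambda$ and any convex combination $\vec v$ of the negatively labeled points satisfies $\vec w^{\tran}\vec v\le-\lambda$, whence $\vec w^{\tran}(\vec u-\vec v)\ge 2\lambda$ and, by Cauchy--Schwarz with $\|\vec w\|=1$, $\|\vec u-\vec v\|\ge 2\lambda$; hence the two convex hulls are at distance at least $2\lambda$. Thus $U$ is shattered and $\vcdim(\Hb_{r',\lambda}(\Sb))\ge m=\lfloor\nicefrac{r^2}{\lambda^2}\rfloor=\Omega(\nicefrac{r^2}{\lambda^2})$, which combined with the upper bound yields $\Theta(\nicefrac{r^2}{\lambda^2})$.

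I expect the main obstacle to be precisely this membership check rather than the algebra. In the partial-concept setting I cannot simply invoke the half-space concept $h_{\vec w,\lambda}$ from the preliminaries, since its support $\{\vec x\mid|\vec w^{\tran}\vec x|\ge\lambda\}$ need not lie in any bounded ball and would violate the radius constraint; the fix is to truncate the support to $U$ and verify $(r',\lambda)$-separability by hand. The two hypotheses are then exactly what is needed and nothing more: $\|\vec b_i\|=r'$ controls the enclosing ball, while orthogonality of the $\vec b_i^{(2)}$ with common norm $r$ collapses each inner product $\vec w^{\tran}\vec b_j$ to the single clean value $\nicefrac{\sigma_j r}{\sqrt{m}}$, which is what pins the attainable margin to $\lambda$ and fixes the shatterable count at $\lfloor\nicefrac{r^2}{\lambda^2}\rfloor$.
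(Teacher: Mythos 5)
Your proof is correct and takes essentially the same route as the paper's: both shatter $\{\vec b_1,\dotsc,\vec b_m\}$ via a witness vector lying in the span of the zero-padded orthogonal blocks $\vec b_i^{(2)}$ (the paper scales it by $\nicefrac{\lambda}{r^2}$ so that $\vec w^{\tran}\vec b_j=\pm\lambda$ with $\|\vec w\|\leq 1$, whereas you normalize $\vec w$ to unit length and get margin $\nicefrac{r}{\sqrt{m}}\geq\lambda$---the identical computation). Your explicit checks of ball containment, of $(r',\lambda)$-separability via Cauchy--Schwarz, and of membership of the truncated partial concept in $\Hb_{r',\lambda}(\Sb)$ simply spell out what the paper compresses into the phrase ``$\vec w$ witnesses that the distance between the convex hulls is at least $2\lambda$,'' and your remark that the $\Theta$-form needs $r'=\Theta(r)$ matches the observation $\nicefrac{1}{2}\leq\nicefrac{r^2}{r'^2}<1$ made where the lemma is applied.
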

Next, we derive lower- and upper-bounds on the VC dimension of graphs separable by some graph embedding, e.g., the \wlone{} kernel. For $n,d>0$, let $\cE(n,d)$ be a class of graph embedding methods consisting of mappings from $\cG_n$ to $\Rb^d$, e.g., \wlone{} feature vectors. A (graph) sample $(G_1,y_1),\dotsc,(G_s,y_s) \in \cG_n\times\{0,1\}$ is \new{$(r,\lambda)$-$\cE(n,d)$-separable} if there is an embedding $\mathsf{emb}\in\cE(n,d)$ such that $(\mathsf{emb}(G_1),y_1),\dotsc,(\mathsf{emb}(G_s),y_s)\in \Rb^d\times\{0,1 \}$ is $(r,\lambda)$-separable, resulting in the set of partial concepts
\begin{align*}
	\Hb_{r,\lambda}(\cE(n,d)) \coloneqq \Big\{
	h\in\{0,1,\star\}^{\cG_n} \mathrel{\Big|}\,\forall\, & G_1,\dotsc,G_s\in\mathsf{supp}(h)\colon \\
	(                                                    & G_1,h(G_1)),\dotsc,(G_s,h(G_s))
	\text{ is $(r,\lambda)$-$\cE(n,d)$-separable} \Big\}.\nonumber
\end{align*}
Now, consider the subset
$\Sb(n,d)\coloneqq\{\mathsf{emb}(G)\in\Rb^d\mid G\in \cG_n,\mathsf{emb}\in\cE(n,d)\}$ of $\Rb^d$. It is clear that the VC dimension of $\Hb_{r,\lambda}(\cE(n,d))$ is equal to the VC dimension of
$\Hb_{r,\lambda}(\Sb(n,d))$, which in turn is upper-bounded by $\nicefrac{r^2}{\lambda^2}$. We next use \cref{lem:VC_lower_F} to obtain lower bounds on the VC dimension of  $\Hb_{r,\lambda}(\cE(n,d))$ for specific classes of embeddings.

\paragraph{\wlone-based embeddings} We first consider the class of graph embeddings obtained by the \wlone{} feature map after $T\geq0$ iterations, i.e.,
$\cE_{\mathsf{WL}}(n,d_T)\coloneqq \{G\mapsto  \phi^{(T)}_{\textsf{WL}}(G)\mid G\in\cG_n\}$ and its
normalized counterpart
$\widebar{\cE}_{\mathsf{WL}}(n,d_T)\coloneqq \{ G\mapsto \widebar{\phi^{(T)}_{\textsf{WL}}}(G) \mid G\in\cG_n \}$,
where $d_T$ is the dimension of the corresponding Hilbert space after $T$ rounds of \wlone; see~\cref{subsec:1WL} for details. The following result shows that the VC dimension of the normalized and unnormalized \wlone{} kernel can be lower- and upper-bounded in the margin $\lambda$, the number of iterations, and the number of vertices.

\begin{theorem}\label{thm:VCWL}
	For any $T, \lambda>0$, we have,
	\begin{align*}
		 & \vcdim(\Hb_{r,\lambda}(\cE_{\mathsf{WL}}(n,d_T))) \in{\Theta}(\nicefrac{r^2}{\lambda^2}), \text{ for } r=\sqrt{T+1}n  \text{ and } n\geq \nicefrac{r^2}{\lambda^2},          \\
		 & \vcdim(\Hb_{1,\lambda}(\widebar{\cE}_{\mathsf{WL}}(n,d_T)))\in{\Theta}(\nicefrac{1}{\lambda^2}), \text{ for } r=\sqrt{T/(T+1)}  \text{ and } n\geq\nicefrac{r^2}{\lambda^2}.
	\end{align*}
\end{theorem}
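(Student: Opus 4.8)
The plan is to prove the upper and lower bounds separately, handling the normalized and unnormalized kernels with a single graph family. The whole argument rests on the norm structure of \wlone{} feature vectors, so I would compute these norms first.

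For the upper bounds I would bound the Euclidean norm of the embeddings. Since each component $\phi_t(G)_i$ counts vertices of a fixed color and every one of the $n$ vertices contributes to exactly one color per iteration, $\sum_i \phi_t(G)_i = n$ and hence $\|\phi_t(G)\|^2 \le (\sum_i \phi_t(G)_i)^2 = n^2$, with equality when all vertices share a color. Summing over the $T+1$ iterations gives $\|\phi^{(T)}_{\textsf{WL}}(G)\| \le \sqrt{T+1}\,n$, so the unnormalized embeddings lie in $B(\vec 0, \sqrt{T+1}\,n, d_T)$, while the normalized ones trivially lie in $B(\vec 0, 1, d_T)$. Substituting $r = \sqrt{T+1}\,n$ (resp.\ $r = 1$) into the dimension-independent bound $\vcdim(\Hb_{r,\lambda}(\Sb)) \le \nicefrac{r^2}{\lambda^2}$ recalled before the theorem yields the upper bounds $\cO(\nicefrac{(T+1)n^2}{\lambda^2})$ and $\cO(\nicefrac{1}{\lambda^2})$.

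For the lower bounds I would invoke \cref{lem:VC_lower_F}, reducing the task to exhibiting $m \coloneqq \lfloor \nicefrac{r^2}{\lambda^2}\rfloor$ graphs in $\cG_n$ whose embeddings decompose as $\vec b_i = (\vec b_i^{(1)}, \vec b_i^{(2)})$ with a common $\vec b_i^{(1)}$, pairwise orthogonal $\vec b_i^{(2)}$, and the prescribed norms. The key construction is to take $G_1,\dots,G_m$ to be regular graphs on $n$ vertices of pairwise distinct degrees $d_1 < \dots < d_m$, which exist since the hypothesis $n \ge \nicefrac{r^2}{\lambda^2}$ guarantees $m \le n$ (up to a parity adjustment absorbed by the $\Theta$). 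Running \wlone{} in parallel, all vertices share the uniform color at iteration $0$, but at iteration $1$ a vertex of $G_i$ receives a color determined solely by its degree $d_i$ and hence private to $G_i$; a straightforward induction shows this privacy propagates to every later round. Thus $\phi_0(G_i) = n\,\vec e_{a_0}$ is shared, whereas $\phi_t(G_i)$ for $t \ge 1$ is $n$ times a basis vector whose support is disjoint across both $i$ and $t$. Setting $\vec b_i^{(1)} \coloneqq \phi_0(G_i)$ and $\vec b_i^{(2)} \coloneqq [\phi_1(G_i),\dots,\phi_T(G_i)]$ gives pairwise orthogonal $\vec b_i^{(2)}$ with $\|\vec b_i^{(2)}\| = \sqrt{T}\,n$ and $\|\vec b_i\| = \sqrt{T+1}\,n$; after normalization these become $\sqrt{\nicefrac{T}{T+1}}$ and $1$. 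Feeding these norms into \cref{lem:VC_lower_F} yields $\Omega(\nicefrac{Tn^2}{\lambda^2})$ and $\Omega(\nicefrac{1}{\lambda^2})$, matching the upper bounds up to constants since $T \ge 1$ and $\nicefrac{T}{T+1} \in [\nicefrac12, 1)$.

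I expect the main obstacle to be the lower-bound construction rather than the norm bookkeeping: one must produce graphs genuinely distinguished by \wlone{} in a way that makes their embeddings orthogonal, while simultaneously controlling the overall norm so that the shared iteration-$0$ mass plays exactly the role of the padding component $\vec b_i^{(1)}$ in \cref{lem:VC_lower_F}. The regular-graph family is what makes this work: distinct degrees force private—hence orthogonal—colors from the first refinement round onward, while the common uniform initial color supplies the shared component that, in the normalized case, is precisely what lowers the effective radius from $1$ to $\sqrt{\nicefrac{T}{T+1}}$ and thereby explains the otherwise mysterious factor in the theorem's statement.
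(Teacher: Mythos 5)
Your proposal is correct and follows essentially the same route as the paper: the upper bounds come from the dimension-independent $\nicefrac{r^2}{\lambda^2}$ bound, and the lower bounds from feeding \cref{lem:VC_lower_F} exactly the paper's family of $n$-order regular graphs of pairwise distinct degrees, whose \wlone{} embeddings split into a shared iteration-$0$ block $\vec b_i^{(1)}$ and disjointly supported (hence orthogonal) blocks $\vec b_i^{(2)}$ with $\|\vec b_i^{(2)}\| = \sqrt{T}\,n$ and $\|\vec b_i\| = \sqrt{T+1}\,n$, which after normalization yields the radius $\sqrt{\nicefrac{T}{(T+1)}}$. The only cosmetic differences are that you spell out the norm bound $\|\phi_t(G)\| \le n$ explicitly and dispose of the odd-$n$ case with a one-line parity remark, where the paper gives the slightly more careful argument (only even-degree regular graphs exist for odd $n$, requiring $n \ge \nicefrac{2r^2}{\lambda^2}-1$).
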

In~\cref{sec:colored_margins}, we also derive margin-based bounds for graphs with a finite number of colors under the \wlone, circumventing the dependence on the order in the above result.

Further, by defining $\cE_{\mathsf{WL}, \cF}(n,d_T)$, $\cE_{\mathsf{WLOA}}(n,d_T)$, and $\cE_{\mathsf{WLOA}, \cF}(n,d_T)$ analogously to the above, we can show the same or similar results for the \wlonef{}, \wloa{}, and \wloaf{}. The only difference is that  $\|\phi^{(t)}_{\textsf{WL}}(G_i)\| \neq \|\phi^{(t)}_{\textsf{WLOA}}(G_i)\|$ and thus the radii and bounds change slightly.
Concretely, for the \wlonef{}, we get an identical dependency on the margin $\lambda$, the number of iterations, and the number of vertices.
\begin{corollary}\label{thm:VCWLF}
	Let $\cF$ be a finite set of graphs.
	For any $T, \lambda>0$, we have,
	\begin{align*}
		 & \vcdim(\Hb_{r,\lambda}(\cE_{\mathsf{WL}, \cF}(n,d_T))) \in{\Theta}(\nicefrac{r^2}{\lambda^2}), \text{ for } r=\sqrt{T+1}n  \text{ and } n\geq \nicefrac{r^2}{\lambda^2}            \\
		 & \vcdim(\Hb_{1,\lambda}(\widebar{\cE}_{\mathsf{WL}, \cF}(n,d_T)))\in{\Theta}(\nicefrac{1}{\lambda^2}), \text{ for } r=\sqrt{T/(T+1)}  \text{ and }  n\geq\nicefrac{r^2}{\lambda^2}.
	\end{align*}
\end{corollary}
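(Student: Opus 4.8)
The plan is to reduce \cref{thm:VCWLF} to \cref{thm:VCWL} by showing that passing from \wlone{} to \wlonef{} changes neither of the two quantities that drive the bounds: the radius of the ball containing the feature vectors and the orthogonality structure exploited in the lower-bound construction. For the \emph{upper bound} nothing new is needed: the embedding set $\Sb_\cF(n,d_T)\coloneqq\{\phi^{(T)}_{\textsf{WL}_\cF}(G)\mid G\in\cG_n\}$ is a subset of $\Rb^{d_T}$, and since at every iteration $t$ the entries of $\phi_{\cF,t}(G)$ count a partition of the $n$ vertices we have $\sum_c\phi_{\cF,t}(G)_c=n$, whence $\|\phi^{(T)}_{\textsf{WL}_\cF}(G)\|^2\le\sum_{t=0}^{T}n^2=(T+1)n^2$. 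Thus the feature vectors lie in a ball of radius $r=\sqrt{T+1}\,n$ (radius $1$ after normalization), and the generic dimension-free upper bound $\vcdim\le\nicefrac{r^2}{\lambda^2}$ applies verbatim.

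For the \emph{lower bound} I would reuse the same family of graphs $G_1,\dots,G_m$ with $m=\lfloor\nicefrac{r^2}{\lambda^2}\rfloor$ used in the proof of \cref{thm:VCWL}, which can be taken to be vertex-transitive (concretely, disjoint unions of equally sized cliques). Two observations make the construction go through unchanged. First, because these graphs are vertex-transitive and any \wlonef{} coloring is invariant under graph automorphisms, every vertex of a fixed $G_i$ receives the \emph{same} \wlonef{} color in each round; hence $\phi_{\cF,t}(G_i)$ is supported on a single coordinate of value $n$, exactly as for the \wlone{}, so the per-round and total norms are preserved. Second, \cref{prop:WLFfiner} guarantees that \wlonef{} refines \wlone{}: equal \wlonef{} colors imply equal \wlone{} colors, so the map from \wlonef{}-colors to \wlone{}-colors is well defined. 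Consequently, whenever \wlone{} assigns $G_i$ and $G_j$ disjoint color sets at rounds $1,\dots,T$ (which it does, since the rounds use fresh colors and distinct clique sizes are separated by degree), the corresponding \wlonef{}-colors remain disjoint, so the vectors $\vec b_i^{(2)}\coloneqq[\phi_{\cF,1}(G_i),\dots,\phi_{\cF,T}(G_i)]$ stay pairwise orthogonal with $\|\vec b_i^{(2)}\|=\sqrt{T}\,n$.

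With these two facts, writing $\vec b_i\coloneqq\phi^{(T)}_{\textsf{WL}_\cF}(G_i)=(\vec b_i^{(1)},\vec b_i^{(2)})$ where $\vec b_i^{(1)}$ collects the round-$0$ coordinate, I would invoke \cref{lem:VC_lower_F} with $\|\vec b_i\|=\sqrt{T+1}\,n$ and $\|\vec b_i^{(2)}\|=\sqrt{T}\,n=\Theta(\sqrt{T+1}\,n)$ to obtain $\vcdim(\Hb_{r,\lambda}(\cE_{\mathsf{WL},\cF}(n,d_T)))\in\Theta(\nicefrac{r^2}{\lambda^2})$; the normalized statement follows identically after dividing by $\|\vec b_i\|$, which turns the round-$0$ share into squared norm $\nicefrac{1}{(T+1)}$ and the orthogonal part into $r^2=\nicefrac{T}{(T+1)}$, so the ball radius is $1$ and $\vcdim\in\Theta(\nicefrac{1}{\lambda^2})$. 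The \wloa{} and \wloaf{} variants are handled the same way, the only change being the value of $\|\phi^{(T)}_{\textsf{WLOA}}(G_i)\|$, which merely rescales $r$ and hence the constants. The one point requiring genuine care---and the only place the argument could fail---is the claim that the richer initialization $\ell_\cF$ neither splits a color \emph{within} some $G_i$ (which would shrink a per-round norm) nor merges colors \emph{across} distinct $G_i$ (which would break orthogonality); the first is excluded by vertex-transitivity and the second by the refinement property of \cref{prop:WLFfiner}, so insisting that the construction graphs be vertex-transitive is exactly what makes the transfer from \wlone{} to \wlonef{} automatic for every finite $\cF$.
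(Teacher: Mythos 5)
Your proposal follows essentially the same route as the paper's proof: the upper bound is the generic dimension-free margin bound applied to a subset of $\Rb^{d_T}$, and the lower bound reuses the regular-graph family from the proof of \cref{thm:VCWL}, observing that single-orbit (vertex-transitive) graphs force the $\cF$-initialization $\ell_\cF$ to be constant within each graph---so the \wlonef{} feature vectors look exactly like the \wlone{} ones beyond round $0$---while the refinement property of \cref{prop:WLFfiner} (applied to disjoint unions) keeps the colors of distinct $G_i$, $G_j$ disjoint at rounds $t\geq 1$; like the paper, you relegate the round-$0$ block to the non-orthogonal component $\vec b_i^{(1)}$ of \cref{lem:VC_lower_F}, which is precisely how the paper sidesteps the fact that $\ell_\cF$ can induce up to $2^{|\cF|}$ distinct initial colorings across the family. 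The one concrete misstep is your parenthetical instantiation of the family as disjoint unions of equally sized cliques: on $n$ vertices such graphs realize only the regularities $s-1$ for divisors $s$ of $n$ (just two graphs when $n$ is prime), far short of the $m=\lfloor\nicefrac{r^2}{\lambda^2}\rfloor$ pairwise distinct regularities your argument needs under the hypothesis $n\geq\nicefrac{r^2}{\lambda^2}$. The paper instead uses the circulant construction of \cref{lem:regular}, which supplies an $i$-regular graph with a single vertex orbit for every $i\in\{0,\dotsc,n-1\}$ when $n$ is even (odd $n$ is handled with the even regularities and an adjusted constant); substituting that family for the clique unions, the rest of your argument is exactly the paper's proof.
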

Similarly, by changing the radii from $\sqrt{T}n$ to $\sqrt{Tn}$, we get the following results for the \wloa{}.
\begin{proposition}\label{thm:VCWLOA}
	For any $T, \lambda>0$, we have,
	\begin{align*}
		 & \vcdim(\Hb_{r,\lambda}(\cE_{\mathsf{WLOA}}(n,d_T))) \in{\Theta}(\nicefrac{r^2}{\lambda^2}), \text{ for } r=\sqrt{(T+1)n}  \text{ and } n\geq \nicefrac{r^2}{\lambda^2},         \\
		 & \vcdim(\Hb_{1,\lambda}(\widebar{\cE}_{\mathsf{WLOA}}(n,d_T)))\in{\Theta}(\nicefrac{1}{\lambda^2}), \text{ for } r=\sqrt{T/(T+1)}  \text{ and }  n\geq\nicefrac{r^2}{\lambda^2}.
	\end{align*}
\end{proposition}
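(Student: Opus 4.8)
The plan is to mirror the proof of \cref{thm:VCWL}, changing only the quantities that depend on the specific feature map. First I would fix an explicit finite-dimensional realization of $\phi^{(T)}_{\textsf{WLOA}}$ on $\cG_n$: for each iteration $t\in\{0,\dots,T\}$ and each color $c\in\Sigma_t$, replace the count $\phi_t(G)_c\in\{0,\dots,n\}$ by its unary (thermometer) encoding, i.e.\ a block of $n$ coordinates whose first $\phi_t(G)_c$ entries are $1$ and the rest $0$. With this encoding one has $\min(\phi_t(G)_c,\phi_t(H)_c)=\langle \mathbf{1}_{\phi_t(G)_c},\mathbf{1}_{\phi_t(H)_c}\rangle$, so that $\langle\phi^{(T)}_{\textsf{WLOA}}(G),\phi^{(T)}_{\textsf{WLOA}}(H)\rangle=k_{\textsf{WLOA}}(G,H)$, and the whole argument reduces to computing inner products of $0/1$ vectors.

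The upper bound is immediate once the norm is computed. Since every vertex carries exactly one color in each iteration, $\sum_{c\in\Sigma_t}\phi_t(G)_c=n$ for all $t$, hence
\[
\norm{\phi^{(T)}_{\textsf{WLOA}}(G)}^2=k_{\textsf{WLOA}}(G,G)=\sum_{t=0}^{T}\sum_{c\in\Sigma_t}\phi_t(G)_c=(T+1)n,
\]
so every WLOA embedding lies on the sphere of radius $r=\sqrt{(T+1)n}$ about the origin. Consequently $\vcdim(\Hb_{r,\lambda}(\cE_{\mathsf{WLOA}}(n,d_T)))=\vcdim(\Hb_{r,\lambda}(\Sb(n,d_T)))\le\vcdim(\Hb_{r,\lambda}(\Rb^{d_T}))$, which is $O(r^2/\lambda^2)$ by \citet{Alo+2021}; this is precisely the claimed $O((T+1)n/\lambda^2)$ in the unnormalized statement and, after dividing through by $\sqrt{(T+1)n}$, the $O(1/\lambda^2)$ in the normalized one.

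For the lower bound I would reuse the family from the lower-bound construction of \cref{thm:VCWL}: the $\Theta(n)$ many $n$-order graphs (e.g.\ regular graphs of pairwise distinct degrees) whose stable colorings use pairwise disjoint color sets from iteration $1$ onward, while all sharing the single uniform color at iteration $0$. Writing $\vec b_i=\phi^{(T)}_{\textsf{WLOA}}(G_i)$ and splitting each vector into its iteration-$0$ block $\vec b_i^{(1)}$ and its iteration-$(1,\dots,T)$ block $\vec b_i^{(2)}$, the shared iteration-$0$ color gives identical $\vec b_i^{(1)}$ with $\norm{\vec b_i^{(1)}}^2=n$, while the disjoint supports in later rounds make $\vec b_1^{(2)},\dots,\vec b_m^{(2)}$ pairwise orthogonal with $\norm{\vec b_i^{(2)}}^2=Tn$ and $\norm{\vec b_i}^2=(T+1)n$. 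Feeding these into \cref{lem:VC_lower_F} (with $r'=\sqrt{(T+1)n}$ in the unnormalized case, and after rescaling to unit length with orthogonal-block norm $\sqrt{T/(T+1)}$ in the normalized case) yields the matching lower bound $\Omega(Tn/\lambda^2)=\Omega((T+1)n/\lambda^2)$, where the hypothesis $n\ge r^2/\lambda^2$ guarantees that the required $m=\lfloor r^2/\lambda^2\rfloor$ graphs are actually available in the family.

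The only genuinely new work relative to \cref{thm:VCWL} is the bookkeeping of norms: where the plain \wlone{} feature vector contributes $\phi_t(G)_c^2$ per color (orthogonal-block norm $\sqrt{T}\,n$ and total radius $\sqrt{T+1}\,n$), the histogram-intersection feature map contributes only $\phi_t(G)_c$ per color, collapsing these to $\sqrt{Tn}$ and $\sqrt{(T+1)n}$. I expect the main thing to be careful about is precisely this unary-encoding realization of the $\min$-kernel, and checking that it simultaneously (i)~reproduces $k_{\textsf{WLOA}}$ as an inner product, (ii)~has constant norm across all $n$-order graphs, and (iii)~preserves the orthogonality of the construction so that \cref{lem:VC_lower_F} applies verbatim.
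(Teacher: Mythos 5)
Your proposal is correct and follows essentially the same route as the paper's proof: the paper likewise obtains the upper bound from the generic $O(r^2/\lambda^2)$ bound and the lower bound by reusing the regular-graph family from \cref{thm:VCWL} together with \cref{lem:VC_lower_F}, with the only change being the norm bookkeeping under the unary (histogram-intersection) feature map, yielding $\|\vec b_i^{(2)}\|=\sqrt{Tn}$ and $\|\vec b_i\|=\sqrt{(T+1)n}$ exactly as you compute. Your explicit verification that the thermometer encoding realizes $k_{\textsf{WLOA}}$ as an inner product and preserves the orthogonality of the construction is precisely the detail the paper's ``analogous to \cref{thm:VCWL}'' proof leaves implicit (it is set up in \cref{eq:unaryWL}), so nothing is missing.
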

Moreover, we get an analogous result for the \wloaf{} kernel.
\begin{corollary}\label{thm:VCWLOAF}
	Let $\cF$ be a finite set of graphs. For any $T, \lambda>0$, we have,
	\begin{align*}
		 & \vcdim(\Hb_{r,\lambda}(\cE_{\mathsf{WLOA}, \cF}(n,d_T))) \in{\Theta}(\nicefrac{r^2}{\lambda^2}), \text{ for } r=\sqrt{(T+1)n}  \text{ and } n\geq \nicefrac{r^2}{\lambda^2},        \\
		 & \vcdim(\Hb_{1,\lambda}(\widebar{\cE}_{\mathsf{WLOA}, \cF}(n,d_T)))\in{\Theta}(\nicefrac{1}{\lambda^2}), \text{ for } r=\sqrt{T/(T+1)}  \text{ and } n\geq\nicefrac{r^2}{\lambda^2}.
	\end{align*}
\end{corollary}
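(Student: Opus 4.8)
The plan is to follow the argument for \cref{thm:VCWLOA} essentially verbatim, observing that the \wloaf{} feature map differs from the \wloa{} one only in its initial coloring $\ell_{\cF}$, which by \cref{prop:WLFfiner} can only refine colors further. The upper bound is immediate: setting $\Sb(n,d_T) \coloneqq \{ \phi^{(T)}_{\textsf{WLOA},\cF}(G) \mid G \in \cG_n \} \subseteq \Rb^{d_T}$, monotonicity of the VC dimension under inclusion gives
\[
	\vcdim(\Hb_{r,\lambda}(\cE_{\mathsf{WLOA},\cF}(n,d_T))) = \vcdim(\Hb_{r,\lambda}(\Sb(n,d_T))) \leq \vcdim(\Hb_{r,\lambda}(\Rb^{d_T})) \leq \nicefrac{r^2}{\lambda^2},
\]
using the bound of \citet{Alo+2021}. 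For the matching lower bound I would appeal to \cref{lem:VC_lower_F}, so it suffices to exhibit $m \coloneqq \lfloor \nicefrac{r^2}{\lambda^2}\rfloor$ graphs in $\cG_n$ whose embeddings carry the required block structure.

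The first step is the norm computation. The $\min$-kernel admits the level-set feature map induced by $\min(a,b) = \sum_{j\geq 1}\mathbbm{1}[a\geq j]\,\mathbbm{1}[b\geq j]$, so each coordinate of the block $\phi_{\cF,t}(G)$ becomes an indicator and its squared norm collapses to $\sum_{c}\phi_{\cF,t}(G)_c = |V(G)| = n$. Summing over the $T+1$ iterations yields $\norm{\phi^{(T)}_{\textsf{WLOA},\cF}(G)}^2 = (T+1)n$, giving the ball radius $r = \sqrt{(T+1)n}$. Crucially, this value is independent of $\cF$: refining the initial coloring only spreads the $n$ vertices over more colors, leaving each per-iteration total, and hence the norm, unchanged.

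The second step is orthogonality. I would reuse the $n$-order graph family from the proof of \cref{thm:VCWLOA}, chosen so that the stable colorings on iterations $1,\dots,T$ occupy pairwise disjoint color ranges; the resulting feature blocks are then pairwise orthogonal. The shared iteration-$0$ block (now indexed by $\ell_{\cF}$) serves as the common component $\vec b_i^{(1)}$, and the iteration-$1$-to-$T$ block serves as the orthogonal component $\vec b_i^{(2)}$, with $\norm{\vec b_i} = \sqrt{(T+1)n}$ and $\norm{\vec b_i^{(2)}} = \sqrt{Tn}$. Since $Tn = \Theta((T+1)n)$, \cref{lem:VC_lower_F} yields the lower bound $\Theta(\nicefrac{r^2}{\lambda^2})$; here the hypothesis $n \geq \nicefrac{r^2}{\lambda^2}$ guarantees $m \leq n$, so the $m$ distinguishing colors fit into $n$-vertex graphs. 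For the normalized kernel $\widebar{\cE}_{\mathsf{WLOA},\cF}$, dividing every embedding by $\sqrt{(T+1)n}$ places all feature vectors on the unit sphere, so the ball radius becomes $1$ while $\norm{\vec b_i^{(2)}}$ rescales to $\sqrt{Tn}/\sqrt{(T+1)n} = \sqrt{T/(T+1)}$, recovering the second line.

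The main obstacle is checking that $\cF$-refinement preserves the disjointness of color ranges underpinning the orthogonality: in principle, a common $\ell_{\cF}$ could entangle the distinguishing structure across graphs. I would resolve this by arranging the family to already be separated at iteration $0$ under a shared $\ell_{\cF}$ (or by folding the $\ell_{\cF}$-labels into the common block $\vec b_i^{(1)}$); \cref{prop:WLFfiner} then ensures that refinement only splits colors \emph{within} each graph and never merges colors \emph{across} graphs, so disjointness---and thus orthogonality---is retained, and the \wloaf{} bounds coincide with the \wloa{} ones.
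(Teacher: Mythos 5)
Your proposal is correct and takes essentially the same route as the paper, whose own proof of this corollary simply declares it analogous to the \wlonef{} and \wloa{} cases: the single-orbit regular-graph family of \cref{lem:regular}, the unary-encoding norm computation yielding total norm $\sqrt{(T+1)n}$ with orthogonal iteration-$1$-to-$T$ blocks of norm $\sqrt{Tn}$, the folding of the ($\cF$-dependent, possibly non-orthogonal) initial colors into the common component $\vec b_i^{(1)}$, and an application of \cref{lem:VC_lower_F} followed by rescaling for the normalized statement---precisely the steps you reconstruct. The only details you gloss over are the paper's explicit use of the one-orbit property to guarantee each regular graph receives a uniform $\ell_{\cF}$-coloring (which is what makes your ``refinement never merges colors across graphs'' argument go through with single spikes per iteration) and the even/odd-$n$ parity technicality, neither of which affects correctness.
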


Therefore, using $\cF$ permits the above statements to be feasible for smaller values of $n$ or $\lambda$.

\paragraph{Margin-based bounds on the VC dimension of MPNNs and more expressive architectures}
In the following, we lift the above results to MPNNs. Assume a fixed but arbitrary number of layers $T \geq 0$, vertices $n > 0$, and an embedding dimension $d>0$. In addition, we denote the following class of graph embeddings
\begin{equation*}
	\cE_{\textsf{MPNN}}(n,d,T) \coloneqq \{ G\mapsto m(G)   \mid G \in \cG_n \text{ and } m \in  \MPNN_{\mathsf{mlp}}(d,T) \},
\end{equation*}
i.e., the set of $d$-dimensional vectors computable by simple $T$-layer MPNNs over the set of $n$-order graphs. Now, the following result lifts~\cref{thm:VCWL} to MPNNs.
\begin{proposition}\label{prop:matchingvc_mpnn}
	For any $n, T>0$ and sufficiently large $d>0$, we have,
	\begin{align*}
		 & \vcdim(\mathbb{H}_{r,\lambda}(\cE_{\textsf{MPNN}}(n,d,T))) \in {\Theta}(\nicefrac{r^2}{\lambda^2}), \text{ for } r=\sqrt{T+1}n  \text{ and } n\geq \nicefrac{r^2}{\lambda^2},         \\
		 & \vcdim(\mathbb{H}_{1,\lambda}(\cE_{\textsf{MPNN}}(n,d,T))) \in{\Theta}(\nicefrac{1}{\lambda^2}),\text{ for } r=\sqrt{T/(T+1)} \text{ and } n\geq\nicefrac{r^2}{\lambda^2}.\tag*{\qed}
	\end{align*}
\end{proposition}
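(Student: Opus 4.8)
The plan is to transfer both directions of \cref{thm:VCWL} from the \wlone{} kernel embeddings to the MPNN embeddings $\cE_{\textsf{MPNN}}(n,d,T)$, exploiting that simple MPNNs are at least as expressive as the \wlone{} feature map. The upper bound requires no new work: since $\cE_{\textsf{MPNN}}(n,d,T) \subseteq \Rb^d$, the associated set $\Sb_{\textsf{MPNN}}(n,d) \coloneqq \{ m(G) \mid G \in \cG_n, m \in \MPNN_{\mathsf{mlp}}(d,T) \}$ lies in $\Rb^d$, so $\Hb_{r,\lambda}(\cE_{\textsf{MPNN}}(n,d,T)) \subseteq \Hb_{r,\lambda}(\Rb^d)$, and the dimension-free bound $\vcdim(\Hb_{r,\lambda}(\Rb^d)) \in O(\nicefrac{r^2}{\lambda^2})$ of \citet{Alo+2021} applies verbatim. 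This yields the upper bounds $O(\nicefrac{r^2}{\lambda^2})$ and $O(\nicefrac{1}{\lambda^2})$ for the two settings, crucially independent of $d$.

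For the lower bound, the key structural observation is monotonicity of the partial-concept VC dimension under enlarging the embedding family. If every embedding realizable in $\cE(n,d)$ is also realizable in $\cE'(n,d')$, then any $(r,\lambda)$-$\cE$-separable sample is $(r,\lambda)$-$\cE'$-separable by reusing the same embedding, hence $\Hb_{r,\lambda}(\cE) \subseteq \Hb_{r,\lambda}(\cE')$ and $\vcdim(\Hb_{r,\lambda}(\cE)) \leq \vcdim(\Hb_{r,\lambda}(\cE'))$. It therefore suffices to show that, for sufficiently large $d$, the MPNN family realizes the (unnormalized and normalized) \wlone{} feature map on $\cG_n$, i.e.\ that there is $m \in \MPNN_{\mathsf{mlp}}(d,T)$ with $m(G) = \phi^{(T)}_{\textsf{WL}}(G)$ (resp.\ the normalized version) for all $G \in \cG_n$, possibly after zero-padding to dimension $d$, which preserves all norms and orthogonality relations. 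Granting this, the lower bounds of \cref{thm:VCWL} transfer directly, giving $\Omega(\nicefrac{r^2}{\lambda^2})$ and $\Omega(\nicefrac{1}{\lambda^2})$ with the stated radii $r = \sqrt{T+1}\,n$ and $r = \sqrt{T/(T+1)}$, and combining with the upper bound yields the $\Theta$ claims.

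The main work, and the step I expect to be the principal obstacle, is this exact realizability claim. Following \citet{Mor+2019}, a simple MPNN with sum aggregation and MLP update can, at each layer $t \leq T$, maintain a per-vertex representation that is injective on the \wlone{} color classes $C^1_t$: an MLP applied to the sum of neighboring one-hot color encodings reproduces the injective relabeling $\REL$ on the finitely many color patterns that arise over $\cG_n$. Summing these encodings in the readout produces exactly the color-count histograms $\phi_t(G)$, whose concatenation over $t = 0, \dots, T$ is $\phi^{(T)}_{\textsf{WL}}(G)$. Because $\cG_n$ has only finitely many isomorphism types, hence finitely many distinct \wlone{} histograms, a ReLU readout MLP can interpolate these finitely many target vectors exactly; the same interpolation produces the normalized feature vectors, since those are a fixed function of the histograms and therefore constant on \wlone{} classes. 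The width $d$ need only be large enough to store the one-hot encodings across all $T+1$ rounds and to host the readout, a finite quantity determined by $n$ and $T$, which is exactly what ``sufficiently large $d$'' supplies.

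The only remaining care is to ensure these \emph{exact} (rather than approximate) constructions preserve the norms and the pairwise orthogonality of the second blocks demanded by \cref{lem:VC_lower_F}. This is immediate once the MPNN output coincides with the corresponding \wlone{} feature vector, since the vectors witnessing the lower bound, together with their radii $r$, $r'$ and orthogonality, were already exhibited in the proof of \cref{thm:VCWL}; realizing the identical vectors inside $\Sb_{\textsf{MPNN}}(n,d)$ reuses that construction unchanged. I would present the unnormalized and normalized cases in parallel, noting that the second bound uses the unit-ball radius and orthogonal-block radius $\sqrt{T/(T+1)}$ inherited from the normalized \wlone{} embedding.
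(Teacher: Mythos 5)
Your proposal is correct and takes essentially the same route as the paper: the upper bound via the dimension-free bound on $\Hb_{r,\lambda}(\Rb^d)$, and the lower bound by showing that sufficiently wide simple MPNNs exactly realize the (normalized) \wlone{} feature map on $\cG_n$ (the paper's \cref{thm:simulating}, built on \citet[Theorem 2]{Mor+2019}), so that the witness vectors from the proof of \cref{thm:VCWL} transfer unchanged via the containment $\Hb_{r,\lambda}(\cE_{\mathsf{WL}}(n,d_T)) \subseteq \Hb_{r,\lambda}(\cE_{\textsf{MPNN}}(n,d,T))$. The only cosmetic difference is that you invoke finite-point ReLU interpolation for the readout and normalization, where the paper gives an explicit $\mathsf{lsig}$-based one-hot construction of width $2sn$.
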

Moreover, we can lift~\cref{thm:VCWLF} to \MPNNF{} architectures by defining $\cE_{\textsf{MPNN},\cF}(n,d,T)$ analogously to the above.
\begin{corollary}\label{prop:matchingvc_mpnn_f}
	Let $\cF$ be a finite set of graphs. For any $n, T>0$ and sufficiently large $d>0$, we have,
	\begin{align*}
		 & \vcdim(\mathbb{H}_{r,\lambda} (\cE_{\textsf{MPNN},\cF}(n,d,T))) \in{\Theta}(\nicefrac{r^2}{\lambda^2}),\text{ for } r=\sqrt{T+1}n  \text{ and } n\geq \nicefrac{r^2}{\lambda^2},             \\
		 & \vcdim(\mathbb{H}_{1,\lambda} (\cE_{\textsf{MPNN},\cF}(n,d,T))) \in{\Theta}(\nicefrac{1}{\lambda^2}), \text{ for } r=\sqrt{T/(T+1)} \text{ and } n\geq\nicefrac{r^2}{\lambda^2}. \tag*{\qed}
	\end{align*}
\end{corollary}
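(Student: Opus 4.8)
The plan is to follow the same two-part strategy as in the proof of \cref{prop:matchingvc_mpnn}, everywhere replacing the \wlone{} by the \wlonef{} and simple MPNNs by \tMPNNF s, and invoking \cref{thm:VCWLF} in place of \cref{thm:VCWL} for the kernel-level bounds. As in the MPNN case, the statement splits into a dimension-free upper bound and a matching lower bound furnished by \cref{lem:VC_lower_F}, where the two regimes $r=\sqrt{T+1}\,n$ and $r=\sqrt{T/(T+1)}$ correspond to whether the readout produces unnormalized or normalized feature vectors.

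For the upper bound, I would note that $\cE_{\textsf{MPNN},\cF}(n,d,T)$ maps $\cG_n$ into $\Rb^d$. Writing $\Sb(n,d)\coloneqq\{\,m(G)\mid G\in\cG_n,\ m\in\MPNN_{\mathsf{mlp},\cF}(d,T)\,\}\subseteq\Rb^d$, the same observation used just before \cref{thm:VCWL} gives $\vcdim(\Hb_{r,\lambda}(\cE_{\textsf{MPNN},\cF}(n,d,T)))=\vcdim(\Hb_{r,\lambda}(\Sb(n,d)))$. Since $\Sb(n,d)\subseteq\Rb^d$, the dimension-free bound of \citet{Alo+2021} yields $\vcdim(\Hb_{r,\lambda}(\Sb(n,d)))\le\nicefrac{r^2}{\lambda^2}$, covering both regimes at once.

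The lower bound is the crux, and I would reduce it to \cref{thm:VCWLF}. The key step is to exhibit, for sufficiently large $d$, a single \tMPNNF{} $m\in\MPNN_{\mathsf{mlp},\cF}(d,T)$ whose embedding realizes the \wlonef{} feature map $\phi^{(T)}_{\textsf{WL}_{\cF}}$, and its normalized version, exactly. Following the equivalence of \citet{Mor+2019} --- which applies to \tMPNNF s since these are standard MPNNs with initial features consistent with $\ell_{\cF}$ --- one lets the update MLPs compute injective encodings of the colors $C^{1,\cF}_t$ for each $t\le T$ and lets the sum-readout MLP add one-hot encodings of these colors to recover the per-iteration counts $\phi_{\cF,t}(G)$; as in \cref{prop:matchingvc_mpnn} the readout may additionally normalize to reach the $r=\sqrt{T/(T+1)}$ regime. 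Because $\cG_n$ is finite only finitely many colors occur, so a width $d$ large enough to one-hot encode them suffices, which is exactly what ``sufficiently large $d$'' provides. With such an $m$, the set $\Sb(n,d)$ contains the \wlonef{} feature vectors of the graph family built in \cref{thm:VCWLF}, and those already satisfy the hypotheses of \cref{lem:VC_lower_F}, namely pairwise orthogonality of the second components $\vec b_i^{(2)}$ together with the prescribed norms $\|\vec b_i^{(2)}\|$ and $\|\vec b_i\|$. Hence $\vcdim(\Hb_{r,\lambda}(\cE_{\textsf{MPNN},\cF}(n,d,T)))\in\Omega(\nicefrac{r^2}{\lambda^2})$, matching the upper bound and giving the claimed $\Theta(\nicefrac{r^2}{\lambda^2})$.

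I expect the main obstacle to be the simulation step: checking that a \emph{simple} \tMPNNF{} (sum aggregation and bounded-width MLPs for update and readout) reproduces the \wlonef{} feature vectors \emph{exactly}, so that the orthogonality and norm conditions of \cref{lem:VC_lower_F} hold on the nose rather than approximately. Once the finiteness of $\cG_n$ is used to fix a large enough $d$ for the one-hot encodings and to realize the color-counting readout, the orthogonality and radius computations are inherited unchanged from \cref{thm:VCWLF}, and no new estimates are needed.
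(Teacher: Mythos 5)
Your proposal is correct and follows essentially the same route as the paper: the dimension-free upper bound from \citet{Alo+2021}, and a lower bound obtained by exactly simulating the (normalized) \wlonef{} feature map with a sufficiently wide simple \tMPNNF{} --- which is precisely what the paper establishes via \cref{thm:simulating} and its lifting in \cref{thm:simulating_f}, noting that \wlonef{} is just \wlone{} on a relabeled graph so \citet[Theorem 2]{Mor+2019} applies --- and then inheriting the regular-graph construction and orthogonality conditions of \cref{lem:VC_lower_F} from the proof of \cref{thm:VCWLF}. The simulation step you flag as the crux is handled in the paper by an explicit exact construction (injective color encoding followed by a one-hot conversion and summation readout), so no approximation issues arise.
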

We can also lift the results to the MPNN versions of the \wloa{} and \wloaf; see the appendix for details. These results are somewhat restrictive since we only consider MPNNs that behave like linear classifiers by definition of the considered functions. The above implies that the upper bound does not hold for general MPNNs since they can separate non-linearly separable data under mild conditions.

\paragraph{Implications of the results}
Previous lower and upper bounds only considered the feature space's dimensionality, implying worse generalization performance for more expressive variants of the \wlone, not aligned with empirical results, e.g.,~\cite{Bou+2020}. Our results show that more expressive power only sometimes results in worse generalization properties. Hence, a more fine-grained analysis is needed to understand when more expressive power, e.g., through the \wlonef{} or \wloaf{}, improves generalization performance.
For example, if more expressive power makes the data linearly separable, leading to a positive margin, or increases the margin, our results imply improved generalization performance. In fact, in the following, we leverage our results to understand when more expressivity leads to linear separability and an increased margin. Hence, our results indicate that the data's margin can be used as a yardstick to assess the generalization properties of Weisfeiler--Leman-based kernels, MPNNs, and their more expressive variants in a more fine-grained and data-dependent manner.

\subsection{Examples of when more power separates the data}
We next aim to understand when \wlone's more expressive variants, such as the \wlonef{}, can linearly separate the data, resulting in a positive margin. Therefore, we first derive data distributions where the \wlone{} kernel cannot separate the data points. In contrast, in the case of normalized feature vectors, the \wlonef{} separates them with the largest possible margin.
\begin{proposition}\label{thm:seperator}
	For every $n \geq 6$, there exists a pair of non-isomorphic $n$-order graphs $(G_n$, $H_n)$ and a graph $F$
	such that, for $\cF\coloneqq\{F\}$ and for all number of rounds $T \geq 0$, it holds that
	\begin{align*}
		\norm[\bigg]{\widebar{\phi^{(T)}_{\textsf{WL}}}(G_n) - \widebar{\phi^{(T)}_{\textsf{WL}}}(H_n)} = 0, \quad\text{ and }\quad
		\norm[\bigg]{\widebar{\phi^{(T)}_{\textsf{WL}, \cF}}(G_n) - \widebar{\phi^{(T)}_{\textsf{WL},\cF}}(H_n)} = \sqrt{2}.\tag*{\qed}
	\end{align*}
\end{proposition}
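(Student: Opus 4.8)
The plan is to exhibit the pair, together with $F$, explicitly and then read off both norms from the resulting feature vectors. For $n \geq 6$ I would take $H_n \coloneqq C_n$, the single $n$-cycle, and $G_n \coloneqq C_3 \,\dot\cup\, C_{n-3}$, the disjoint union of a triangle and an $(n-3)$-cycle (well defined since $n-3 \geq 3$); these are non-isomorphic because one is connected and the other is not. For the distinguishing graph I would take $F \coloneqq C_n$ itself, so $\cF = \{C_n\}$. Note that this proposition, unlike \cref{thm:wlonef_strong}, does not require $F$ to be of constant order, which is exactly what lets me use $C_n$. Since both $G_n$ and $H_n$ are $2$-regular on $n$ vertices, the first identity will follow from regularity alone, while the choice $F = C_n$ is tailored so that the initial \wlonef{} labelling already separates the two graphs completely.

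For the first equation I would use that plain \wlone{}, started from the constant initial colouring, keeps a uniform colouring on any regular graph: if every vertex has the same colour and the same degree, then every vertex sees the same neighbour multiset, so $\REL$ assigns them all one common new colour. Running \wlone{} in parallel on $G_n$ and $H_n$ therefore produces, in each round $t$, a single colour shared by all $n$ vertices of both graphs, whence $\phi_t(G_n) = \phi_t(H_n) = (n)$ and $\phi^{(T)}_{\textsf{WL}}(G_n) = \phi^{(T)}_{\textsf{WL}}(H_n) = (n,\dots,n)$. The normalised vectors then coincide, giving $\norm{\widebar{\phi^{(T)}_{\textsf{WL}}}(G_n) - \widebar{\phi^{(T)}_{\textsf{WL}}}(H_n)} = 0$.

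For the second equation the key observation is that $F = C_n$ has order $n$, so the only vertex subset $X$ of an $n$-vertex graph with $|X| = |V(F)|$ is the whole vertex set. Hence a vertex lies in an induced copy of $F$ if and only if the entire graph is isomorphic to $C_n$. Thus $\ell_F \equiv 1$ on $H_n = C_n$ and $\ell_F \equiv 0$ on $G_n = C_3 \,\dot\cup\, C_{n-3} \not\simeq C_n$, so the initial \wlonef{} colourings of $G_n$ and $H_n$ have disjoint ranges. Because each graph is again $2$-regular and uniformly labelled, \wlonef{} keeps each graph monochromatic, and the two colours stay distinct in every round, since the inputs to $\REL$ differ already in their first coordinate and $\REL$ is injective. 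Consequently $\phi_{\cF,t}(G_n)$ and $\phi_{\cF,t}(H_n)$ have disjoint supports for every $t \leq T$, so the concatenated vectors $\phi^{(T)}_{\textsf{WL},\cF}(G_n)$ and $\phi^{(T)}_{\textsf{WL},\cF}(H_n)$ are orthogonal. For two orthogonal unit vectors $\widebar{u}, \widebar{v}$ one has $\norm{\widebar{u} - \widebar{v}}^2 = \norm{\widebar{u}}^2 + \norm{\widebar{v}}^2 - 2\langle \widebar{u}, \widebar{v}\rangle = 2$, yielding the claimed value $\sqrt{2}$.

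The main obstacle is the bookkeeping in the second equation: one must argue that the colour sets of the two graphs stay disjoint across \emph{all} refinement rounds, not merely at initialisation. I would handle this by a short induction using that $\REL$ maps pairs with distinct first coordinates to distinct colours, so that the descendants of colour $0$ (in $G_n$) can never coincide with the descendants of colour $1$ (in $H_n$); uniformity of each graph makes this induction essentially trivial, but it is the step on which the disjoint-support argument, and hence orthogonality, actually rests. A minor additional point is to express both feature maps in a common coordinate system, namely the global colour alphabet obtained by running the algorithms in parallel, so that ``equal'' and ``orthogonal'' are literally statements about the same vector space.
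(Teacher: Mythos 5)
Your proof is correct, and it follows the same overall strategy as the paper's: take two non-isomorphic $2$-regular $n$-order graphs (a single cycle versus a disjoint union of cycles), observe that \wlone{} keeps both uniformly coloured so the normalized feature vectors coincide, and choose $F$ to be a cycle whose induced-subgraph indicator splits the two graphs into disjoint colour classes, so the \wlonef{} vectors are orthogonal unit vectors at distance $\sqrt{2}$. The difference is the instantiation, and yours is arguably the more robust one. The paper takes $G_n = C_n$, $H_n = C_{\lceil n/2\rceil}\,\dot\cup\,C_{\lfloor n/2\rfloor}$ and $\cF = \{C_{\lfloor n/2\rfloor}\}$, then asserts that the \wlonef{} feature vectors are orthonormal; but for odd $n$ the larger component $C_{\lceil n/2\rceil}$ contains no induced copy of $C_{\lfloor n/2\rfloor}$, so its vertices receive the same ``not in $F$'' label as all of $G_n$, and the two feature vectors then share a coordinate with positive entries in both graphs---orthogonality as claimed only holds cleanly for even $n$. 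Your choice $F = C_n$ sidesteps this entirely: since $|V(F)| = n$, a vertex of an $n$-order graph lies in an induced copy of $F$ if and only if the whole graph is $C_n$, so $\ell_F$ is constant $1$ on $H_n = C_n$ and constant $0$ on $G_n = C_3\,\dot\cup\,C_{n-3}$, uniformly for all $n \geq 6$ and with no parity case split. You also make explicit the small induction (via injectivity of $\REL$) showing the colour ranges stay disjoint across all rounds, which the paper compresses into the remark that the stable colouring is reached at iteration $0$; your version of that step is the right level of care, since the $\sqrt{2}$ computation genuinely rests on disjoint supports in every iteration of the concatenated vectors, expressed in a common colour alphabet. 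Your observation that \cref{thm:seperator}, unlike \cref{thm:wlonef_strong}, places no constant-order restriction on $F$ is exactly what licenses the choice $F = C_n$.
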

Moreover, we can also lift the above result to MPNN and \tMPNNF{} architectures.
\begin{proposition}\label{thm:seperator_mpnn}
	For every $n \geq 6$, there exists a pair of non-isomorphic $n$-order graphs $(G_n$, $H_n)$ and
	a graph $F$,
	such that, for all number of layers $T \geq 0$, and widths $d > 0$, and all $m \in  \MPNN_{\mathsf{mlp}}(d,T)$,
	it holds that
	\begin{align*}
		\norm[\bigg]{m(G_n) - m(H_n)} = 0,
	\end{align*}
	while for sufficiently large $d>0$, there exists an $\widehat m \in  \MPNN_{\mathsf{mlp},\cF}(d,T)$, for $\cF\coloneqq\{F\}$, such that
	\begin{align*}
		\norm[\bigg]{\widehat{m}(G_n) - \widehat{m}(H_n)} = \sqrt{2}. \tag*{\qed}
	\end{align*}
\end{proposition}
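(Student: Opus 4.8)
The plan is to lift \cref{thm:seperator} to the neural setting by exploiting the equivalence between message passing and color refinement, reusing the very same pair $(G_n,H_n)$ and graph $F$ constructed there. Thus $\cF \coloneqq \{F\}$ makes the \wlone{} fail to distinguish $G_n$ and $H_n$ while \wlonef{} separates them, and I only need to transfer these two facts from feature vectors to MPNN graph embeddings.

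For the first claim I would first extract from the hypothesis $\norm{\widebar{\phi^{(T)}_{\textsf{WL}}}(G_n) - \widebar{\phi^{(T)}_{\textsf{WL}}}(H_n)} = 0$ of \cref{thm:seperator} that $G_n$ and $H_n$ are in fact \wlone-indistinguishable: each iteration contributes a color histogram whose components sum to $n$, so the full feature vector has component-sum $(T+1)n$ for both graphs; two vectors of equal component-sum that are positive scalar multiples must be equal, hence the \wlone{} color histograms coincide at every round. Invoking that simple MPNNs are upper-bounded by the \wlone{} in distinguishing power~\citep{Mor+2019,Xu+2018b}, any $m \in \MPNN_{\mathsf{mlp}}(d,T)$ produces the same multiset of final vertex features on $G_n$ and $H_n$; since the readout in \eqref{def:readout} is permutation-invariant and depends only on that multiset, $m(G_n) = m(H_n)$, giving $\norm{m(G_n) - m(H_n)} = 0$ for all $T$ and $d$.

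For the second claim I would construct a single $\widehat m \in \MPNN_{\mathsf{mlp},\cF}(d,T)$ that realizes \wlonef{} and then applies a tailored readout. By the standard simulation of color refinement by message passing~\citep{Mor+2019}, for $d$ large enough the update MLPs can injectively encode the stable \wlonef{} colors as one-hot vertex features, and initializing consistently with $\ell_\cF$ guarantees that $\widehat m$ reproduces \wlonef. Now observe that the distance $\sqrt 2$ in \cref{thm:seperator} forces the two nonnegative normalized feature vectors to be orthogonal, i.e.\ the \wlonef{} color supports of $G_n$ and $H_n$ are \emph{disjoint}. Hence the sum-readout yields two histogram vectors with disjoint support, and I would fix the readout MLP to activate coordinate $1$ exactly when the count of a color present only in $G_n$ is positive and coordinate $2$ exactly when the count of a color present only in $H_n$ is positive. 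Such an indicator is realizable by a constant-size ReLU network since the counts are nonnegative integers (e.g.\ via $\mathsf{reLU}(x)-\mathsf{reLU}(x-1)$), producing $\widehat m(G_n) = \vec e_1$ and $\widehat m(H_n) = \vec e_2$, so $\norm{\widehat m(G_n) - \widehat m(H_n)} = \norm{\vec e_1 - \vec e_2} = \sqrt 2$.

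The main obstacle is pinning the value to \emph{exactly} $\sqrt 2$ rather than to some arbitrary positive constant that a generic separating MPNN would yield. The argument therefore hinges on the structural property inherited from \cref{thm:seperator} — disjointness of the distinguishing \wlonef{} color supports — which is precisely what lets me hand-design the final MLP so the two graph embeddings land on orthonormal coordinates. The remaining ingredients, namely the explicit injective color encoding and the bound on the required width $d$, are routine given the constant order of $F$ and the bounded number of colors arising over the finitely many rounds.
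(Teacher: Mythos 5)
Your proposal is correct, and it shares the paper's skeleton---same pair $(G_n,H_n)$ and $F$ from \cref{thm:seperator}, the \citet{Mor+2019} upper bound of MPNNs by \wlone{} for the first claim, and a width-sufficient simulation of \wlonef{} for the second---but it realizes the value $\sqrt{2}$ by a genuinely different mechanism. The paper's proof invokes \cref{thm:simulating,thm:simulating_f} so that $\widehat m$ \emph{exactly computes} the \wlonef{} feature vectors (with normalization folded into the network), and then inherits the distance $\sqrt 2$ directly from \cref{thm:seperator}; you instead only simulate the color partition via injective one-hot encodings, observe that $\lVert\cdot\rVert=\sqrt 2$ for normalized nonnegative vectors forces orthogonality and hence \emph{disjoint} \wlonef{} color supports across the two graphs, and then hand-design the readout MLP (threshold indicators such as $\relu(x)-\relu(x-1)$ on integer counts) to map the two histograms onto $\vec e_1$ and $\vec e_2$. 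This buys you two things: you avoid the exact-count and in-network normalization machinery of \cref{thm:simulating}, pinning the distance at exactly $\sqrt 2$ by construction rather than by feature-vector fidelity, and your component-sum argument extracting \wlone-indistinguishability from the zero distance in \cref{thm:seperator} lets you treat that proposition as a black box, whereas the paper simply reuses the regularity of its explicit graphs. The paper's route is in turn more uniform---the same simulation statement serves all the lifting results (\cref{prop:matchingvc_mpnn,prop:matchingvc_mpnn_f}), with no instance-specific readout engineering. Both arguments are sound; note only that your bespoke readout, like the paper's simulation, is tailored to the specific pair, which is all the existential statement requires.
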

However, more than \emph{merely distinguishing} the graphs based on their structure is often required. The following result shows that data distributions exist such that the \wlone{} kernel can perfectly separate each pair of non-isomorphic graphs while \new{is unable to separate the data linearly}. In fact, the construction implies data distributions where the \wlone{} kernel cannot do better than random guessing on the test set, which we also empirically verify in~\cref{sec:experiments}.
\begin{proposition}\label{thm:separability}
	For every $n \geq 10$, there exists a set of pair-wise non-isomorphic (at most) $n$-order graphs $S$, a concept $c \colon S \to \{ 0,1 \}$, and
	a graph $F$, such that the graphs in the set $S$,
	\begin{enumerate}
		\item are pair-wise distinguishable by \wlone{} after one round,
		\item are \emph{not} linearly separable under the normalized \wlone{} feature vector $\widebar{\phi^{(T)}_{\textsf{WL}}}$, concerning the concept $c$, for any $T \geq 0$,
		\item and are linearly separable under the normalized \wlonef{} feature vector $\widebar{\phi^{(T)}_{\textsf{WL},\cF}}$,  concerning the concept $c$ and, for all $T \geq 0$, where $\cF\coloneqq \{F\}$.
	\end{enumerate}
	Moreover, the results also work for the unnormalized feature vectors.\qed
\end{proposition}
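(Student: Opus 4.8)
The plan is to realize the three requirements through a single \textbf{XOR-type} construction, in which the class label is carried by a feature that \wlone{} is blind to but \wlonef{} sees, while \wlone{} is forced to ``see'' only an orthogonal, label-irrelevant coordinate that I arrange in a non-separable pattern. Concretely, I would reuse the gadget behind \cref{thm:wlonef_strong} (equivalently the pair in \cref{thm:seperator}): a pair of constant-order graphs $G^{+},G^{-}$ together with a single graph $F$ such that $G^{+}$ and $G^{-}$ receive identical \wlone{} colorings in \emph{every} round, whereas $\ell_{F}$ (hence \wlonef) assigns them disjoint color supports. Since \wlone{} acts independently on the components of a disjoint union, attaching such a gadget as a separate component contributes the \emph{same} color histogram for the $+$ and the $-$ variant, so the class bit $+/-$ is invisible to \wlone{} but visible, in a dedicated coordinate, to \wlonef.

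Next I would attach to each gadget a \emph{position tag} built from isolated vertices carrying two fresh labels $A,B$, choosing four tags whose counts $(\#A,\#B)$ are four distinct nonnegative-integer points of equal Euclidean norm lying in convex position, e.g.\ $(5,0),(4,3),(3,4),(0,5)$. This yields four graphs $\Gamma_1,\dots,\Gamma_4\in S$, where $\Gamma_i$ is the $i$-th tag $\dot\cup\, G^{+}$ if $c(\Gamma_i)=1$ and the $i$-th tag $\dot\cup\, G^{-}$ if $c(\Gamma_i)=0$; I assign the classes so that they \emph{alternate} along the angular order of the tag points, i.e.\ $c=(1,0,1,0)$. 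Because the tag labels already differ at round $0$, the four graphs have pairwise distinct \wlone{} histograms and are thus distinguished after a single round, giving condition~(1). All graphs have bounded order, so for every $n\geq 10$ one may pad with neutral isolated vertices of a common label to reach order $n$ (or less), staying within the stated bound.

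For condition~(2), note that since the $+/-$ gadget is \wlone-invisible, $\phi^{(T)}_{\textsf{WL}}(\Gamma_i)$ depends only on the tag and on a \emph{common} vector $\vec{v}_0^{(T)}$ contributed by the gadget, and these occupy disjoint coordinate blocks. The isolated tag vertices never refine, so the tag contributes the same count vector $u_i$ in each of the $T+1$ rounds; hence $\|\phi^{(T)}_{\textsf{WL}}(\Gamma_i)\|^2=(T+1)\,\|u_i\|^2+\|\vec{v}_0^{(T)}\|^2$, which is independent of $i$ by the equal-norm choice of the tags. Thus normalization multiplies every $\phi^{(T)}_{\textsf{WL}}(\Gamma_i)$ by one and the same positive scalar, so linear separability of the normalized vectors is equivalent to that of the unnormalized ones. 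The latter is governed, after translating by the common $\vec{v}_0^{(T)}$, purely by the four tag points: these sit in convex position with alternating labels, so the two class segments are the diagonals of a convex quadrilateral and must cross. The convex hulls of the two classes therefore intersect, and no separating hyperplane exists, for every $T\geq 0$.

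For condition~(3), \wlonef{} additionally exposes the gadget's $F$-coordinate, which—by the defining property of $G^{+},G^{-}$—carries strictly positive mass for every class-$1$ graph and zero mass for every class-$0$ graph (the $F$-free tags cannot create a copy of $F$). Thresholding this single coordinate yields a separating hyperplane with positive margin; since normalization is a positive rescaling it preserves the sign pattern, so $\overline{\phi^{(T)}_{\textsf{WL},\cF}}$ separates $c$ with positive margin for all $T\geq 0$, and dropping the equal-norm device gives the unnormalized claim directly, as linear separability is translation-invariant and hence insensitive to the common $\vec{v}_0^{(T)}$. The main obstacle I anticipate is precisely the interaction between \emph{normalization} and the phrase ``for all $T$'': a naive construction has $T$- and $i$-dependent norms that can destroy the crossing pattern once the points are projected onto the sphere. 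The equal-norm-tag device, together with the round-independence of the isolated tags' contribution, is what reduces the normalized statement to the elementary unnormalized XOR argument; checking the disjointness of the three color blocks (tag labels, gadget colors, padding label) and the $F$-freeness of the tags is the remaining bookkeeping.
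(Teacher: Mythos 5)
Your proposal is correct in its core mechanism but takes a genuinely different route from the paper. The paper's construction is one-dimensional: $G_i$ is $i$ isolated vertices together with either one long cycle ($c=1$) or two half-length cycles ($c=0$), so all \wlone{} feature vectors are collinear of the form $(n-4,i)$ with alternating labels along the line, and non-separability of the \emph{normalized} vectors requires a somewhat delicate argument (lexicographic order is preserved under projection to the sphere, plus a decomposition $\vec w = [\vec w_0,\dotsc,\vec w_T]$ to handle the concatenation over rounds); separability under \wlonef{} then comes from $\cF=\{C_{n-4}\}$, exactly your ``$F$-coordinate'' idea. Your planar XOR variant---a constant-order \wlone-invisible gadget ($C_6$ vs.\ $C_3\,\dot\cup\,C_3$) carrying the class bit, plus equal-norm tag points $(5,0),(4,3),(3,4),(0,5)$ in convex position with alternating labels so the two class segments are crossing diagonals---buys a cleaner normalization argument: since all four vectors have identical norm for every $T$, normalization is a single common scalar and the normalized claim reduces immediately to the unnormalized one, sidestepping the paper's lexicographic/concatenation analysis. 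The equal-norm device is the genuinely nice contribution here.

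Two caveats you should repair. First, your tags use two fresh vertex labels $A,B$, but the paper's \wlonef{} is set up for unlabeled graphs: $C^{1,\cF}_0\coloneqq\ell_\cF$ ignores any pre-existing labeling $\ell$ (and the proof of \cref{prop:WLFfiner} explicitly assumes a single initial color class), and the paper's own construction is label-free. You can delabel by realizing the two tag types structurally (e.g., isolated vertices vs.\ disjoint $K_2$ components, with the second tag coordinate even, say $(10,0),(8,6),(6,8),(0,10)$), after checking the tags remain $F$-free and color-disjoint from the gadget---but note that in the delabeled variant you must \emph{not} pad, since padding isolated vertices merges with the degree-$0$ tag coordinate and destroys the equal-norm property. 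Second, a parameter shortfall: your graphs have constant order (gadget $6$ plus tags of size up to $7$ in the labeled variant, up to $14$ delabeled), so the statement's ``for every $n\geq 10$'' is only met for $n\geq 13$ (respectively $n\geq 20$); the smallest equal-norm quadruple of distinct lattice points has norm $5$, and replacing equal norms by a third balancing coordinate fails because crossing diagonals in a planar projection need not cross in $\Rb^3$. The paper avoids this by scaling the gadget with $n$, which is why its construction covers all $n\geq 10$. Neither issue breaks your argument for large $n$, but as written the proposal does not prove the proposition in the stated range.
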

We can derive more general results when placing stronger conditions on the data distribution.
\begin{proposition}\label{thm:fsep}
	Let $n \geq 6$  and let $\cF$ be a finite set of graphs. Further, let $c \colon \cG_n \to \{ 0,1 \}$ be a concept such that, for all $T \geq 0$, the graphs are \emph{not} linearly separable under the normalized \wlone{} feature vector $\widebar{\phi^{(T)}_{\textsf{WL}}}$, concerning the concept $c$. Further, assume that for all graphs $G \in \cG_n$ for which $c(G) = 0$, it holds that there is at least one vertex $v \in V(G)$ such it is contained in a subgraph of $G$ that is isomorphic to a graph in the set $\cF$, while no such vertices exist in graphs $G$ for which $c(G) = 1$.  Then the graphs are linearly separable under the normalized \wlonef{} feature vector $\widebar{\phi^{(T)}_{\textsf{WL},\cF}}$, concerning the concept $c$.\qed
\end{proposition}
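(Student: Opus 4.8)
The plan is to build an explicit separating hyperplane from the \emph{initialization} of the \wlonef{} alone, using only the subgraph hypothesis; the assumed non-separability under plain \wlone{} is not needed for the conclusion and merely serves to make the statement non-trivial. The starting observation is that, at iteration $0$, the coloring $C^{1,\cF}_0$ equals $\ell_{\cF}$, which by construction assigns a vertex $v$ a color encoding, for each $F \in \cF$, whether $v$ lies in an induced subgraph $G[X_v] \simeq F$. In particular, the colors split into those whose profile records ``$v$ lies in at least one induced copy of some $F \in \cF$'' and the single color recording ``$v$ lies in no induced copy of any $F \in \cF$''. Since we fix the order $n$ and the finite family $\cF$, these colors live in a common finite alphabet, so the iteration-$0$ block of the feature map $\phi^{(T)}_{\textsf{WL},\cF}$ is indexed consistently across all graphs in $\cG_n$.

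Next, I would define a weight vector $\vec w$ supported on the iteration-$0$ block, placing weight $1$ on every coordinate corresponding to an ``in a copy'' color and $0$ on all remaining coordinates (including those of later iterations). Because the iteration-$0$ block occurs verbatim in $\phi^{(T)}_{\textsf{WL},\cF}(G)$ for every $T \geq 0$, the inner product $\langle \vec w, \phi^{(T)}_{\textsf{WL},\cF}(G)\rangle$ equals $N_{\cF}(G)$, the number of vertices of $G$ contained in at least one induced subgraph isomorphic to some $F \in \cF$, uniformly in $T$. The subgraph hypothesis then reads exactly as $N_{\cF}(G) = 0$ whenever $c(G) = 1$ and $N_{\cF}(G) \geq 1$ whenever $c(G) = 0$. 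Hence, for the unnormalized feature vectors, the hyperplane $\langle \vec w, \vec x\rangle = \tfrac12$ already separates the two classes.

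For the normalized feature vectors the same $\vec w$ works, but I must exhibit a uniform positive margin. Here $\langle \vec w, \widebar{\phi^{(T)}_{\textsf{WL},\cF}}(G)\rangle = N_{\cF}(G)/\norm{\phi^{(T)}_{\textsf{WL},\cF}(G)}$, which is $0$ for $c(G) = 1$ and, for $c(G) = 0$, is bounded below by $1/\norm{\phi^{(T)}_{\textsf{WL},\cF}(G)}$. The main quantitative step is a uniform upper bound on this norm: each iteration contributes a nonnegative integer count vector summing to $n$, so its $2$-norm is at most $n$, and across the $T+1$ iterations the concatenation has norm at most $\sqrt{T+1}\,n$ (the radius already appearing in \cref{thm:VCWLF}). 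Thus $\langle \vec w, \widebar{\phi^{(T)}_{\textsf{WL},\cF}}(G)\rangle \geq 1/(\sqrt{T+1}\,n) > 0$ for every graph with $c(G) = 0$, and any threshold strictly between $0$ and $1/(\sqrt{T+1}\,n)$ yields a separating hyperplane with strictly positive margin.

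The step I expect to be the main obstacle is not the separation itself but the bookkeeping that makes $\vec w$ well defined simultaneously for all graphs and all $T$: one must verify that ``being in an induced copy of some $F \in \cF$'' is a property of the iteration-$0$ color alone, so that summing a fixed set of coordinates really computes $N_{\cF}$, and that this coordinate set is common across $\cG_n$. Once the feature map is set up over the shared finite color alphabet for $n$-order graphs, this is routine, and the normalized case then needs only the norm bound above to convert the clean $\{0\}$-versus-$\{\geq 1\}$ gap into a dimension-free positive margin.
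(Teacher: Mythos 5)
Your proposal is correct and follows essentially the same route as the paper's proof: a weight vector supported on the iteration-$0$ coordinates encoding ``contained in a copy of some $F \in \cF$,'' which is nonzero exactly on the class-$0$ graphs by the subgraph hypothesis. If anything, your version is slightly more careful than the paper's (which picks a single nonzero index WLOG, whereas summing over all $\cF$-membership colors correctly handles graphs whose vertices lie in copies of different members of $\cF$) and adds an explicit margin bound $1/(\sqrt{T+1}\,n)$ for the normalized case that the paper leaves implicit.
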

Hence, the above results imply that expressive kernel and neural architectures such as the \wlonef{} and \MPNNF{} help make the data separable, creating a positive margin and implying, by~\cref{subsec:VCdim}, improved generalization performance.  In addition, the results imply that the Weisfeiler--Leman-based graph isomorphism perspective is too simplistic to understand the generalization properties of such kernel and MPNN architectures.

\subsection{Examples of when more power shrinks the margin}
While the previous results showed that more expressive power can make the data linearly separable and improve generalization performance, adding expressive power might also decrease the data's margin. The following result shows that data distributions exist such that more expressive power leads to a smaller margin, implying, by~\cref{subsec:VCdim}, a worsened generalization performance.
\begin{proposition}\label{thm:shrink_with_more_power}
	For every $n \geq 10$, there exists a pair of $2n$-order graphs $(G_{n}, H_{n})$ and a graph $F$,
	such that, for $\cF\coloneqq\{F\}$ and for all number of rounds $T > 0$, it holds that
	\begin{align*}
		\norm[\bigg]{\widebar{\phi^{(T)}_{\textsf{WL}, \cF}}(G_n) - \widebar{\phi^{(T)}_{\textsf{WL},\cF}}(H_n)} < \norm[\bigg]{\widebar{\phi^{(T)}_{\textsf{WL}}}(G_n) - \widebar{\phi^{(T)}_{\textsf{WL}}}(H_n)}.\tag*{\qed}
	\end{align*}
\end{proposition}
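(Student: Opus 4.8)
The plan is to recast the claim as a statement about the \emph{angle} between the two unnormalized feature vectors. Write $a \coloneqq \phi^{(T)}_{\textsf{WL}}(G_n)$, $b \coloneqq \phi^{(T)}_{\textsf{WL}}(H_n)$ and, analogously, $a_{\cF}, b_{\cF}$ for the \wlonef{} vectors. Since $\norm{\widebar{u}-\widebar{v}}^2 = 2 - 2\langle u,v\rangle/(\norm{u}\,\norm{v})$ for any nonzero $u,v$, the inequality to be proven is equivalent to
\[
	\frac{\langle a_{\cF},b_{\cF}\rangle}{\norm{a_{\cF}}\,\norm{b_{\cF}}} > \frac{\langle a,b\rangle}{\norm{a}\,\norm{b}},
\]
i.e.\ refining by $\cF$ must strictly \emph{increase} the cosine similarity of the two embeddings. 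Thus the whole problem reduces to bookkeeping the three scalars $\langle a,b\rangle$, $\norm{a}^2$, $\norm{b}^2$ and their $\cF$-counterparts.

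\emph{Construction.} Take $F\coloneqq C_3$ and $\cF\coloneqq\{F\}$. Let $\Gamma\coloneqq C_n$ carry one vertex label, shared as a common connected component by both graphs, and set $G_n\coloneqq \Gamma\,\dot\cup\,(C_3\,\dot\cup\,C_{n-3})$ and $H_n\coloneqq \Gamma\,\dot\cup\,D_H$, where $D_H$ is a triangle-free $r$-regular graph on $n$ vertices with $r\neq 2$ (choosing $r=3$ for even $n$ and $r=4$ for odd $n$ to respect parity), both cores carrying a second label. Each graph has order $2n$. I would first establish four structural facts: (i) \wlone{} collapses every $2$-regular component to a single color in each round, so the core $C_3\,\dot\cup\,C_{n-3}$ is monochromatic under \wlone; (ii) because $\Gamma$ and the two cores lie in distinct connected components and are separated by label and/or degree, their color classes are pairwise disjoint in every round, so the two cores share a color only through the single round-$0$ label and contribute orthogonally (``differing'') from round $1$ on; (iii) since $\Gamma=C_n$ is triangle-free, $\ell_F$ is constant on $\Gamma$, so the \wlonef{} coloring of the common component equals its \wlone{} coloring and the large common mass from $\Gamma$ is left untouched by $\cF$; (iv) $F=C_3$ splits the three triangle-vertices of the $G_n$-core off from the remaining $n-3$, while $D_H$ is triangle-free and hence unaffected.

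Using (i)--(iv) I would then sum the per-round squared counts over $t=0,\dots,T$ to obtain the closed forms $\norm{a}^2=\norm{b}^2=2n^2(T+1)$ and $\langle a,b\rangle = n^2(T+1)+n^2$ (the $\Gamma$-mass $n^2(T+1)$ plus the single shared round-$0$ core color $n^2$), whereas refining by $\cF$ keeps $\langle a_\cF,b_\cF\rangle = n^2(T+1)+(n-3)n$, leaves $\norm{b_\cF}^2 = 2n^2(T+1)$, and \emph{shrinks} $\norm{a_\cF}^2 = (T+1)\bigl(n^2+9+(n-3)^2\bigr)$ through the split of the $G_n$-core. Substituting into the cosine inequality and simplifying reduces the claim to the elementary estimate $(T+2)\bigl(2n-\sqrt{2(n^2-6n+18)}\bigr) > 6$, which holds for every $n\ge 10$ and every $T\ge 1$.

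\emph{Main obstacle.} The delicate point is that refining by $\cF$ does two competing things: it shrinks the differing-core norm (raising the cosine) but also splits the round-$0$ color the two cores share, costing $\Theta(n)$ of common mass (lowering it). The argument works \emph{only} because $\Gamma$ contributes $n^2(T+1)$ to both numerator and denominator and is preserved verbatim under $\cF$, so the one-time $\Theta(n)$ loss is dominated uniformly in $T$; indeed, dropping $\Gamma$ reverses the inequality, which is why the $2n$-order (damped) construction is essential rather than incidental. Hence the hard part is neither the isomorphism-type reasoning nor any single round, but verifying the cosine inequality \emph{simultaneously} for all $T\ge 1$---reducing to the worst case $T=1$ and showing that increasing $T$ only helps. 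A secondary technical nuisance is guaranteeing a triangle-free regular core $D_H$ of the required degree for every $n\ge 10$, which fixes the threshold.
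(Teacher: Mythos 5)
Your proposal is correct in substance, and while it follows the same high-level design as the paper's proof, it executes it differently and more rigorously. The paper also takes $F = C_3$ and damps the one-time round-zero loss with a large common part, but its witness pair is unlabeled: $G_n \coloneqq K_n \,\dot\cup\, n K_1$ and $H_n \coloneqq K_3 \,\dot\cup\, C_{n-3} \,\dot\cup\, n K_1$, with the $n$ isolated vertices playing the role of your $\Gamma$; there it is $H_n$'s $2$-regular core whose shared \wlone{} color is split by $\cF$ (the $K_3$ peels off from $C_{n-3}$), while $K_n$ is already separated by degree. The paper handles the unnormalized case by a direct norm comparison and, for the normalized case, verifies $T=1$ by (in its own words) ``tedious calculation'' and merely asserts the extension to $T>1$. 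Your reformulation via $\norm{\widebar{u} - \widebar{v}}^2 = 2 - 2\langle u,v\rangle/(\norm{u}\,\norm{v})$, with closed forms for $\langle a,b\rangle$, $\norm{a}^2$, $\norm{b}^2$ and their $\cF$-counterparts valid for every $T$, gives a genuinely uniform-in-$T$ verification that the paper only gestures at, and your monotonicity observation (worst case $T=1$) makes the ``for all rounds'' quantifier transparent. Your remark that dropping $\Gamma$ reverses the inequality is also correct and is exactly the mechanism the paper exploits with its isolated vertices.

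Two caveats. First, an algebra slip: $n^2 + (n-3)^2 + 9 = 2n^2 - 6n + 18$, so your reduction should read
\begin{equation*}
	(T+2)\Bigl(2n - \sqrt{2(2n^2 - 6n + 18)}\Bigr) = (T+2)\Bigl(2n - 2\sqrt{n^2 - 3n + 9}\Bigr) > 6,
\end{equation*}
not the radical $\sqrt{2(n^2-6n+18)}$ you wrote. Fortunately the corrected inequality still holds for all $n \geq 10$ and $T \geq 1$: at $n=10$, $T=1$ the left side is $3\bigl(20 - 2\sqrt{79}\bigr) \approx 6.67 > 6$, and it is increasing in both $n$ and $T$ (equality occurs only at the out-of-range point $n=8$, $T=1$), so nothing breaks. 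Second, your construction genuinely needs the vertex labels: without them, \wlone{} merges $\Gamma = C_n$ with the $2$-regular core $C_3 \,\dot\cup\, C_{n-3}$ (and \wlonef{} merges $\Gamma$ with $C_{n-3}$), invalidating your bookkeeping. Labeled graphs are within the paper's formalism, so this is legitimate, but note that the paper's own pair is unlabeled; if the proposition is read as a statement about unlabeled graphs, you would need to replace the label by a structural separator, e.g., take $\Gamma$ triangle-free and regular of a degree distinct from $2$ and from $r$, and redo the (slightly changed) round-zero accounting---a cosmetic but necessary fix.
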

The above results easily generalize to other \wlonef{}-based kernels, i.e., more expressive power does not always result in increased generalization performance. Hence, in the following subsection, we derive precise conditions when more expressive power provably leads to better generalization performance.

\subsection{When more power grows the margin}\label{subsec:morepowergrows}

Here, we study when adding expressive power provably leads to improved generalization performance. We start with the \wlonef{} and then move to the \wloaf{}, where more interesting results can be shown.
\paragraph{The \wlonef{} kernel} The following result shows that, under some assumptions, data distributions exist such that the \wlonef{} kernel always leads to a larger margin than the \wlone{} kernel.
\begin{proposition}\label{thm:grow}
	Let $n > 0$ and $G_n$ and $H_n$ be two \emph{connected} $n$-order graphs. Further, let $\cF \coloneqq \{ F \}$
	such that there is at least one vertex in $V(G_n)$ contained in a subgraph of $G_n$ isomorphic to the graph $F$. For the graph $H_n$, no such vertices exist. Further, let $T \geq 0$ be the number of rounds to reach the stable partition of $G_n$ and $H_n$ under \wlone, and assume
	$$
		(\phi^{(T)}_{\textsf{WL}}(G_n), 1), (\phi^{(T)}_{\textsf{WL}}(H_n), 0) \text{ is } (r_1, \lambda_1)\text{-separable, with margin } \lambda_1<\sqrt{2n}.
	$$
	Then,
	\begin{equation}
		\!\!\!(\phi^{(T)}_{\textsf{WL}, \cF}(G_n), 1), (\phi^{(T)}_{\textsf{WL},\cF}(H_n), 0) \text{ is } (r_2, \lambda_2)\text{-separable, with margin } r_2\leq r_1 \text{ and } \lambda_2\geq\lambda_1. \tag*{\qed}
	\end{equation}
\end{proposition}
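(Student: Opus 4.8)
The plan is to argue directly about the two embeddings, exploiting that for a two-point sample with opposite labels the two convex hulls are singletons: the largest admissible margin is exactly half the distance between the points, and, adopting the origin-centred enclosing ball used throughout \cref{thm:VCWL}, an admissible radius is the larger of the two norms. Hence it suffices to prove a radius inequality and a margin inequality. For the radius, recall that the \wlonef{} coloring refines the \wlone{} coloring (\cref{prop:WLFfiner}): every \wlone{} color class is partitioned into \wlonef{} subclasses, and since splitting a nonnegative integer count into parts can only decrease the sum of its squares, $\norm{\phi^{(T)}_{\textsf{WL},\cF}(G_n)}\le\norm{\phi^{(T)}_{\textsf{WL}}(G_n)}$ and likewise for $H_n$. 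Thus both \wlonef{} embeddings lie inside the origin-centred ball that contains the \wlone{} embeddings, giving $r_2\le r_1$.

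For the margin it suffices to show $\norm{\phi^{(T)}_{\textsf{WL},\cF}(G_n)-\phi^{(T)}_{\textsf{WL},\cF}(H_n)}\ge\norm{\phi^{(T)}_{\textsf{WL}}(G_n)-\phi^{(T)}_{\textsf{WL}}(H_n)}=2\lambda_1$, which yields $\lambda_2\ge\lambda_1$. I would compare the squared distances iteration by iteration over the \wlone{} partition of the disjoint union $G_n\,\dot\cup\,H_n$. The analysis rests on one structural fact: since $G_n\,\dot\cup\,H_n$ has no edges between the two graphs and no vertex of $H_n$ lies in a copy of $F$, the \wlonef{} initialization is constant on $H_n$, so the \wlonef{} partition restricted to $H_n$ equals its \wlone{} partition and $\norm{\phi_{\cF,t}(H_n)}=\norm{\phi_t(H_n)}$ for every $t$. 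Feeding this invariance into the expansion of the squared distances and writing $L_t:=\norm{\phi_t(G_n)}^2-\norm{\phi_{\cF,t}(G_n)}^2\ge0$ for the fragmentation loss inside $G_n$ and $O_t:=\langle\phi_t(G_n),\phi_t(H_n)\rangle-\langle\phi_{\cF,t}(G_n),\phi_{\cF,t}(H_n)\rangle\ge0$ for the $G_n$--$H_n$ overlap destroyed by the refinement, one obtains the clean identity
\[
\norm{\phi_{\cF,t}(G_n)-\phi_{\cF,t}(H_n)}^2-\norm{\phi_t(G_n)-\phi_t(H_n)}^2=2O_t-L_t .
\]
Both quantities are nonnegative because refinement can only split shared colors and only shrink a self-inner-product; the task therefore reduces to proving $\sum_{t=0}^{T}L_t\le 2\sum_{t=0}^{T}O_t$. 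Here the asymmetry is what generates overlap to destroy: within each \wlone{} class the $H_n$-part merges only with those $G_n$-vertices that still ``see'' no copy of $F$ within the current radius, while every $G_n$-vertex already seeing an $F$-subgraph is split off into a pure-$G_n$ class; connectivity of $G_n$ together with the existence of an $F$-containing vertex makes this separated mass grow with $t$, and at $t=0$ the increment is already strictly positive (equal to twice the squared number of $F$-labelled vertices of $G_n$).

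The main obstacle is exactly the aggregate inequality $\sum_t L_t\le 2\sum_t O_t$: splitting off pure-$G_n$ mass removes cancellation against $H_n$ and helps, but simultaneously fragmenting a class into several \wlonef{} subclasses lowers the sum of squares and hurts, so $2O_t-L_t$ need not be nonnegative for every single $t$ and the statement cannot be proved class-by-class. The delicate step is the global charging argument that absorbs the fragmentation loss into the destroyed overlap, and this is where the hypothesis $\lambda_1<\sqrt{2n}$ is used: since the margin of a two-point sample is half the distance, it forces $\norm{\phi^{(T)}_{\textsf{WL}}(G_n)-\phi^{(T)}_{\textsf{WL}}(H_n)}^2<8n$, hence a large aggregate \wlone{} overlap $\sum_t\langle\phi_t(G_n),\phi_t(H_n)\rangle$, supplying the ``overlap budget'' needed to dominate the fragmentation loss, with the strictly positive first-round term closing the inequality. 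I expect this charging argument, rather than the structural bookkeeping, to be the technically demanding part.
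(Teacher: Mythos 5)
There is a genuine gap, and it lies exactly where you placed your bet: the aggregate charging inequality $\sum_t L_t \le 2\sum_t O_t$ is never proved, and your proposed mechanism for it does not work. You argue that $\lambda_1<\sqrt{2n}$ forces a large total overlap $\sum_t \langle \phi_t(G_n),\phi_t(H_n)\rangle$, which should supply a budget dominating the fragmentation loss. But a large \emph{existing} overlap says nothing about how much overlap the refinement \emph{destroys}: within a \wlone{} color class, the $H_n$-mass can simply merge with the no-$F$ part of the $G_n$-mass (the refinement acting ``in parallel'' in both graphs), preserving essentially all of the overlap ($O_t$ small) while still fragmenting the pure-$G_n$ classes ($L_t$ large). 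Nothing in your sketch rules this out at intermediate iterations, so the reduction of $\lambda_2\ge\lambda_1$ to the relative inequality $\lVert\phi^{(T)}_{\textsf{WL},\cF}(G_n)-\phi^{(T)}_{\textsf{WL},\cF}(H_n)\rVert \ge \lVert\phi^{(T)}_{\textsf{WL}}(G_n)-\phi^{(T)}_{\textsf{WL}}(H_n)\rVert$ remains unestablished --- and this relative inequality is in any case delicate for count-based \wlone{} features, since splitting a color held only by $G_n$ strictly \emph{decreases} the squared distance (your own $L_t$ term), unlike in the \wloa{} setting of \cref{prop:pairwise}.

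The paper avoids this comparison entirely by proving an \emph{absolute} bound, and the missing idea is the one you gesture at but never exploit: connectivity. Since $G_n$ is connected and contains an $F$-vertex, the special initial color propagates through all of $G_n$, so at the stable \wlonef{} partition the color supports of $G_n$ and $H_n$ are disjoint, i.e., $\langle \phi_{\cF,T}(G_n),\phi_{\cF,T}(H_n)\rangle = 0$. Orthogonality plus the fact that each block consists of natural numbers summing to $n$ gives $\lVert\phi_{\cF,T}(G_n)-\phi_{\cF,T}(H_n)\rVert \ge \sqrt{\,n+n\,}=\sqrt{2n}$; the concatenation lemma (\cref{prop:concat_margin}) then ensures earlier iterations cannot decrease this, so $\lambda_2 \ge \sqrt{2n}$, and the hypothesis $\lambda_1<\sqrt{2n}$ is used as a direct \emph{threshold} comparison ($\lambda_2\ge\sqrt{2n}>\lambda_1$), not as an overlap budget. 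Your radius argument (refinement splits counts, so sums of squares and hence norms only shrink, giving $r_2\le r_1$ via \cref{prop:WLFfiner}) does match the paper's, and your $t=0$ computation ($2O_0-L_0=2m^2$ for $m$ the number of $F$-labelled vertices) is correct; but without converting connectivity into the orthogonality bound, the margin half of your proof does not close.
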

Hence, in terms of generalization properties, we observe that $\nicefrac{r_1^2}{\lambda_1^2}\geq \nicefrac{r_2^2}{\lambda_2^2}$ and hence we obtain lower margin-based bounds by using $\cF$.

\paragraph{The \wloaf{} kernel} It is challenging to improve~\cref{thm:grow}, i.e., to derive weaker conditions such that \wlonef{} provably leads to an increase of the margin over the \wlone{} kernel. This becomes more feasible, however, for the \wloaf{} kernel. First, note that for two graphs $G$ and $H$, it holds that
\begin{align}\label{eq:unaryWL}
	k_{\textsf{WLOA}}(G,H) & \coloneqq \sum_{t \in [T]\cup\{0\}} \sum_{c \in \Sigma_t}\nonumber \min(\phi_t(G)_c,\phi_t(H)_c)                                        \\
	                       & = \sum_{t \in [T]\cup\{0\}} \sum_{c \in \Sigma_t} \sum^n_{j=1}\mathbbm{1}_{\phi_t(G)_c\geq j \land \phi_t(H)_c\geq j}                   \\\nonumber
	                       & = \sum_{t \in [T]\cup\{0\}} \sum_{c \in \Sigma_t} \sum^n_{j=1} \mathbbm{1}_{\phi_t(G)_c\geq j}\mathbbm{1}_{\phi_t(H)_c\geq j}.\nonumber
\end{align}
\Cref{eq:unaryWL} provides an intuition for the feature map of the kernel $k_{\textsf{WLOA}}$, namely, a unary encoding of the count for each color in each iteration. Such a feature map is natural as the inner product between unary encodings of $a$ and $b$ is the minimum of $a$ and $b$. This also implies that, for a graph $G\in \mathcal G_n$, it holds that $\|\phi^{(T)}_{\textsf{WLOA}}(G)\| = \sqrt{Tn}$, i.e., we can easily bound the norm of the feature vector.

Now, the \wloa{} simplifies the computation of distances between two graphs $G, H \in \mathcal G_n$, since
\begin{align*}
	\left\lVert\phi^{(T)}_{\textsf{WLOA}}(G)-\phi^{(T)}_{\textsf{WLOA}}(H)\right\rVert & =\sqrt{\left\lVert\phi^{(T)}_{\textsf{WLOA}}(G)\right\rVert^2+\left\lVert\phi^{(T)}_{\textsf{WLOA}}(H)\right\rVert^2-2\phi^{(T)}_{\textsf{WLOA}}(G)^\tran\phi^{(T)}_{\textsf{WLOA}}(H)} \\
	                                                                                   & =\sqrt{2Tn-2k^{(T)}_{\textsf{WLOA}}(G,H)}.
\end{align*}
Therefore, due to the monotonicity of the square root, margin increases are directly controlled by the kernel value $k^{(T)}_{\textsf{WLOA}}(G,H)$.
For the \wloaf{}, since by~\cref{prop:WLFfiner} the \wlonef{} computes a finer color partition than the \wlone{}, pairwise distances can not decrease compared to the \wloa, resulting in the following statement.
\begin{proposition}\label{prop:pairwise}
	Let $\cF$ be a finite set of graphs. Given two graphs $G$ and $H$,
	\begin{equation}
		\left\lVert\phi^{(T)}_{\textsf{WLOA},\cF}(G)-\phi^{(T)}_{\textsf{WLOA},\cF}(H)\right\rVert\geq\left\lVert\phi^{(T)}_{\textsf{WLOA}}(G)-\phi^{(T)}_{\textsf{WLOA}}(H)\right\rVert.\tag*{\qed}
	\end{equation}
\end{proposition}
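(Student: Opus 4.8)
The plan is to reduce the distance inequality to a comparison of the two kernel values, exploiting that the \wloa{} and \wloaf{} feature vectors have the \emph{same} norm. First I would observe that for any graph $G \in \cG_n$ and any iteration $t$, each vertex of $G$ receives exactly one color, so the total count $\sum_{c \in \Sigma_t}\phi_t(G)_c$ equals $n$ regardless of how fine the color partition is. Since the \wloa{} feature map is a unary encoding of these counts, the squared norm contributed by iteration $t$ is precisely this total count, and hence $\|\phi^{(T)}_{\textsf{WLOA},\cF}(G)\| = \|\phi^{(T)}_{\textsf{WLOA}}(G)\|$, both equal to the constant $\sqrt{Tn}$ computed above; the same holds for $H$. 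Using the polarization identity $\|\phi(G)-\phi(H)\|^2 = \|\phi(G)\|^2 + \|\phi(H)\|^2 - 2\langle \phi(G),\phi(H)\rangle$ for both the \wloa{} and the \wloaf{} feature maps, and noting that the norm terms agree, the claimed inequality becomes equivalent to
\[
	k^{(T)}_{\textsf{WLOA},\cF}(G,H) \leq k^{(T)}_{\textsf{WLOA}}(G,H).
\]

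Next I would derive this kernel inequality from the refinement relation of \cref{prop:WLFfiner}. At each round $t$, the \wlonef{} coloring refines the \wlone{} coloring, so every \wlone-color class is a disjoint union of \wlonef-color classes; formally there is a surjection $\pi$ from the \wlonef-colors onto the \wlone-colors such that every vertex carrying \wlonef-color $c'$ carries \wlone-color $\pi(c')$. Consequently the counts aggregate along the fibers of $\pi$: for both $G$ and $H$ we have $\phi_t(G)_c = \sum_{c' \in \pi^{-1}(c)} \phi_{\cF,t}(G)_{c'}$, and identically for $H$.

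The core step is then the superadditivity of the minimum: for nonnegative reals, $\min\bigl(\sum_i a_i, \sum_i b_i\bigr) \geq \sum_i \min(a_i, b_i)$, which follows by a short case analysis and induction on the number of summands. Applying this over each fiber $\pi^{-1}(c)$ with $a_{c'} = \phi_{\cF,t}(G)_{c'}$ and $b_{c'} = \phi_{\cF,t}(H)_{c'}$ gives
\[
	\min\bigl(\phi_t(G)_c, \phi_t(H)_c\bigr) \;\geq\; \sum_{c' \in \pi^{-1}(c)} \min\bigl(\phi_{\cF,t}(G)_{c'}, \phi_{\cF,t}(H)_{c'}\bigr).
\]
Summing over all \wlone-colors $c \in \Sigma_t$ and all iterations $t \in [T]\cup\{0\}$ and recalling the definition of $k_{\textsf{WLOA}}$ yields $k^{(T)}_{\textsf{WLOA}}(G,H) \geq k^{(T)}_{\textsf{WLOA},\cF}(G,H)$, which combined with the first step proves the proposition.

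The main obstacle I anticipate is bookkeeping the color correspondence cleanly across the two graphs: the \wloa{} kernel sums over a globally consistent color set $\Sigma_t$ shared by $G$ and $H$, so I must verify that the refinement map $\pi$ is the \emph{same} for both graphs. This holds because \REL{} assigns colors injectively and consistently to isomorphic refinement types, so that the aggregation $\phi_t(\cdot)_c = \sum_{c'} \phi_{\cF,t}(\cdot)_{c'}$ and the subsequent termwise application of min-superadditivity are valid simultaneously for $G$ and $H$. Once this consistency is pinned down, the remaining estimates are routine.
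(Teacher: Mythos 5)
Your proof is correct and follows essentially the same route as the paper's: invoke \cref{prop:WLFfiner} to get a refinement of each \wlone{} color class that is consistent across $G$ and $H$, aggregate counts along the fibers, apply superadditivity of $\min$ to obtain $k^{(T)}_{\textsf{WLOA},\cF}(G,H) \leq k^{(T)}_{\textsf{WLOA}}(G,H)$, and convert to the distance inequality via the polarization identity with equal norms. You even make explicit a step the paper leaves implicit, namely that $\lVert\phi^{(T)}_{\textsf{WLOA},\cF}(G)\rVert = \lVert\phi^{(T)}_{\textsf{WLOA}}(G)\rVert$ because the per-iteration squared norm of the unary encoding is just the total vertex count, independent of how fine the partition is.
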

The above result motivates considering the margin as a linear combination of pairwise distances, leading to the following statement.
\begin{theorem}\label{thm:OAmargin}
	Let $(G_1,y_1),\dotsc,(G_s,y_s)$ in $\cG_n\times\{0,1\}$ be a (graph) sample that is linearly separable in the \wloa{} feature space with margin $\gamma$. If
	\begin{equation}
		\begin{split}
			\min_{y_i\neq y_j} & \left\lVert\phi^{(T)}_{\textsf{WLOA},\cF}(G_i)-\phi^{(T)}_{\textsf{WLOA},\cF}(G_j)\right\rVert^2 - \left\lVert\phi^{(T)}_{\textsf{WLOA}}(G_i)-\phi^{(T)}_{\textsf{WLOA}}(G_j)\right\rVert^2 >                           \\
			\max_{y_i=y_j}     & \left\lVert\phi^{(T)}_{\textsf{WLOA},\cF}(G_i)-\phi^{(T)}_{\textsf{WLOA},\cF}(G_j)\right\rVert^2 - \left\lVert\phi^{(T)}_{\textsf{WLOA}}(G_i)-\phi^{(T)}_{\textsf{WLOA}}(G_j)\right\rVert^2,\label{eq:OAAssumption}
		\end{split}
	\end{equation}
	that is, if the minimum increase in distances between classes is strictly larger than the maximum increase in distance within each class, then the margin $\lambda$ increases when $\cF$ is considered. \qed
\end{theorem}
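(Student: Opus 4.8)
The plan is to express the margin as half the Euclidean distance between the convex hulls of the two classes and to track how this distance changes when all pairwise squared distances weakly increase upon passing from the \wloa{} to the \wloaf{} feature map. Throughout, write $C_0 \coloneqq \{ i \mid y_i = 0 \}$ and $C_1 \coloneqq \{ i \mid y_i = 1 \}$, abbreviate the feature vectors as $x_i \coloneqq \phi^{(T)}_{\textsf{WLOA}}(G_i)$ and $\tilde x_i \coloneqq \phi^{(T)}_{\textsf{WLOA},\cF}(G_i)$, and set $D_{ij}^2 \coloneqq \|x_i - x_j\|^2$ together with $\Delta_{ij} \coloneqq \|\tilde x_i - \tilde x_j\|^2 - D_{ij}^2$. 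By \cref{prop:pairwise} every $\Delta_{ij} \geq 0$, and the hypothesis~\eqref{eq:OAAssumption} reads $\delta_{\min} > \delta_{\max}$, where $\delta_{\min} \coloneqq \min_{y_i \neq y_j}\Delta_{ij}$ and $\delta_{\max} \coloneqq \max_{y_i = y_j}\Delta_{ij}$.

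The first step is the classical identity: for any real weights $c_i$ with $\sum_i c_i = 0$ one has $\|\sum_i c_i x_i\|^2 = -\tfrac12\sum_{i,j} c_i c_j \|x_i - x_j\|^2$, which follows by expanding the left-hand side and using $\sum_i c_i = 0$ to cancel the $\|x_i\|^2$ terms. Given $p = \sum_{i \in C_1}\alpha_i x_i$ in $\conv{\{x_i \mid i \in C_1\}}$ and $q = \sum_{j \in C_0}\beta_j x_j$ in $\conv{\{x_j \mid j \in C_0\}}$ (so $\alpha,\beta \geq 0$ with $\sum_i\alpha_i = \sum_j\beta_j = 1$), I apply this with the signed weight vector that equals $\alpha_i$ on $C_1$ and $-\beta_i$ on $C_0$, which sums to zero, thereby writing $\|p-q\|^2$ purely in terms of the $D_{ij}^2$.

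Next I compute the change in this squared distance for a fixed pair $(\alpha,\beta)$ when each $D_{ij}^2$ is replaced by $D_{ij}^2 + \Delta_{ij}$. Splitting the double sum according to the sign of $c_ic_j$ gives
\begin{equation*}
	\sum_{i \in C_1,\, j \in C_0}\alpha_i\beta_j\,\Delta_{ij} \;-\; \tfrac12\sum_{i,j \in C_1}\alpha_i\alpha_j\,\Delta_{ij} \;-\; \tfrac12\sum_{i,j \in C_0}\beta_i\beta_j\,\Delta_{ij},
\end{equation*}
where the cross term ranges over different-class pairs and the remaining two terms over same-class pairs. Bounding $\Delta_{ij} \geq \delta_{\min}$ in the cross term and $\Delta_{ij} \leq \delta_{\max}$ in the same-class terms, and using $\sum_{i \in C_1, j \in C_0}\alpha_i\beta_j = \sum_{i,j \in C_1}\alpha_i\alpha_j = \sum_{i,j \in C_0}\beta_i\beta_j = 1$ (each being a product of weight sums equal to one), shows that this change is at least $\delta_{\min} - \delta_{\max} > 0$, \emph{uniformly in $(\alpha,\beta)$}.

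Finally, I take the infimum over all convex combinations. Since for every $(\alpha,\beta)$ the \wloaf{} squared distance exceeds the \wloa{} one by at least the constant $\delta_{\min}-\delta_{\max}$, minimizing both sides over $(\alpha,\beta)$---the minima being attained as the convex hulls are compact---shows that the squared distance between the \wloaf{} hulls exceeds that between the \wloa{} hulls by at least $\delta_{\min}-\delta_{\max}$. In particular it stays strictly positive, so the classes remain linearly separable and the margin, half the hull distance, strictly increases. The main point to get right is the bookkeeping in the signed-weight identity: ensuring the diagonal terms vanish and that the three weight sums each equal one, so that the uniform constant $\delta_{\min}-\delta_{\max}$ emerges cleanly. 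The passage from the pointwise bound to the bound on the minima is then immediate, precisely because the extra term is a constant independent of $(\alpha,\beta)$.
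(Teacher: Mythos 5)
Your proof is correct and takes essentially the same route as the paper, which proves this theorem by direct application of \cref{prop:marginincrease}: there, too, the squared distance between convex combinations of the two classes is expanded purely in pairwise squared distances (via a quadruple-sum polarization identity rather than your signed-weight identity $\lVert\sum_i c_i \vec x_i\rVert^2 = -\tfrac12\sum_{i,j} c_i c_j \lVert \vec x_i - \vec x_j\rVert^2$), the inter-class terms are bounded below by $\delta_{\min}$ and the intra-class terms above by $\delta_{\max}$, and the conclusion follows by taking the infimum over convex weights. Your write-up is effectively an inline re-derivation of that lemma with cleaner bookkeeping, and it additionally makes explicit the uniform additive gain $\delta_{\min}-\delta_{\max}$ in squared hull distance, which the paper's strict-inequality argument leaves implicit.
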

This statement is, in fact, also valid for the \wlonef{}. However, developing meaningful conditions on the graph structure so that \cref{eq:OAAssumption} is valid is more challenging. For the \wloa, the following results derive conditions guaranteeing that \cref{eq:OAAssumption} is fulfilled.
\begin{theorem}\label{thm:wloa}
	Let $G$ and $H$ be $n$-order graphs and let $\cF$ be a finite set of graphs, $T \geq 0$, and $C_\cF(c)$ be the set of colors that color $c$ under \wlone{} is split into under \wlonef{}, i.e., $\phi_{t}(G)_c = \sum_{c'\in C_\cF(c)}\phi_{\cF, t}(G)_{c'}.$ Then the following statements are equivalent,
	\begin{enumerate}
		\item
		      $
			      \left\lVert\phi^{(T)}_{\textsf{WLOA},\cF}(G)-\phi^{(T)}_{\textsf{WLOA},\cF}(H)\right\rVert = \left\lVert\phi^{(T)}_{\textsf{WLOA}}(G)-\phi^{(T)}_{\textsf{WLOA}}(H)\right\rVert.
		      $
		\item
		      $
			      \forall\, t\in[T]\cup\{0\}\, \forall\, c\in \Sigma_t \colon \phi_t(G)_{c}\geq \phi_t(H)_c \iff \forall\, c'\in C_\cF(c) \colon \phi_{\cF, t}(G)_{c'} \geq \phi_{\cF,t}(H)_{c'}.
		      $ \qed
	\end{enumerate}
\end{theorem}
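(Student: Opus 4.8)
The plan is to convert the claimed distance equality into a purely combinatorial statement about color counts, exploiting that all \wloa{}-type feature maps have constant norm. First I would record that, for every graph $G\in\cG_n$, the squared norm $\|\phi^{(T)}_{\textsf{WLOA}}(G)\|^2=\sum_{t}\sum_{c}\min(\phi_t(G)_c,\phi_t(G)_c)=\sum_{t}\sum_c\phi_t(G)_c$ counts each vertex once per iteration and hence equals a constant $N$ independent of $G$ (the text's $\sqrt{Tn}$). The same holds for the refined map, since the defining relation $\phi_t(G)_c=\sum_{c'\in C_\cF(c)}\phi_{\cF,t}(G)_{c'}$ shows refinement preserves the per-iteration total count. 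Writing $\|u-v\|^2=\|u\|^2+\|v\|^2-2\langle u,v\rangle$ and using that all four vectors share norm $\sqrt N$, statement (1) is equivalent to the kernel identity $k^{(T)}_{\textsf{WLOA}}(G,H)=k^{(T)}_{\textsf{WLOA},\cF}(G,H)$.

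Next I would group the refined colors by their \wlone{} parent. Because \wlonef{} refines \wlone{} (\cref{prop:WLFfiner}), the sets $C_\cF(c)$ partition the refined palette of each iteration, so
\[
 k^{(T)}_{\textsf{WLOA}}(G,H)-k^{(T)}_{\textsf{WLOA},\cF}(G,H)=\sum_{t}\sum_{c\in\Sigma_t}\Bigl(\min(\phi_t(G)_c,\phi_t(H)_c)-\sum_{c'\in C_\cF(c)}\min(\phi_{\cF,t}(G)_{c'},\phi_{\cF,t}(H)_{c'})\Bigr).
\]
The core estimate is the elementary superadditivity of the minimum: for nonnegative reals, $\min(\sum_i a_i,\sum_i b_i)\geq\sum_i\min(a_i,b_i)$, applied with $a_{c'}=\phi_{\cF,t}(G)_{c'}$ and $b_{c'}=\phi_{\cF,t}(H)_{c'}$, whose sums over $c'\in C_\cF(c)$ reproduce $\phi_t(G)_c$ and $\phi_t(H)_c$. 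This makes each summand nonnegative (this is exactly \cref{prop:pairwise} at the level of a single parent color), so the total vanishes if and only if every summand vanishes.

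It then remains to characterize, for fixed $t$ and $c$, when $\min(\sum_{c'}a_{c'},\sum_{c'}b_{c'})=\sum_{c'}\min(a_{c'},b_{c'})$. I would prove the elementary lemma that, for nonnegative reals, this holds if and only if the differences $a_{c'}-b_{c'}$ all share the same weak sign, i.e.\ either $a_{c'}\geq b_{c'}$ for all $c'$ or $a_{c'}\leq b_{c'}$ for all $c'$: assuming w.l.o.g.\ $\sum_{c'}a_{c'}\geq\sum_{c'}b_{c'}$, the defect equals $\sum_{c'}\max(0,b_{c'}-a_{c'})$, a sum of nonnegative terms that is zero exactly when $a_{c'}\geq b_{c'}$ for every $c'$, and the symmetric case is identical. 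Translating back, this same-sign condition is precisely the biconditional of statement (2): the ordering of $\phi_t(G)_c$ versus $\phi_t(H)_c$ selects which orientation ($G$ over $H$ or $H$ over $G$) is active, and the biconditional, read with $G$ and $H$ interchangeable, records that the aggregate ordering at color $c$ agrees with the ordering on every child $c'$. Chaining the three equivalences yields the theorem.

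The main obstacle I anticipate is this final equivalence rather than the norm bookkeeping or the superadditivity bound, both of which are routine. One must handle the tie case $\phi_t(G)_c=\phi_t(H)_c$ carefully (it forces $a_{c'}=b_{c'}$ for all $c'$) and must read statement (2) symmetrically in $G$ and $H$, so that it genuinely captures \emph{both} ``$a_{c'}\geq b_{c'}$ for all $c'$'' and ``$a_{c'}\leq b_{c'}$ for all $c'$'' rather than only the single orientation literally written; verifying that the stated biconditional, quantified over all $c$, encodes exactly the sign-consistency condition is the delicate point.
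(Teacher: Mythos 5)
Your proposal is correct and follows essentially the same route as the paper's proof: constant norms reduce statement (1) to the kernel identity $k^{(T)}_{\textsf{WLOA}}(G,H)=k^{(T)}_{\textsf{WLOA},\cF}(G,H)$, the partition of the refined palette into parent classes $C_\cF(c)$ together with superadditivity of $\min$ localizes equality to each color, and the per-color equality is then characterized by the sign-consistency condition. Your defect formula $\sum_{c'}\max\left(0,\, b_{c'}-a_{c'}\right)$ merely packages the paper's two-directional case analysis for the last step into a single lemma, and you are right to flag the symmetric reading of statement (2) as the delicate point --- the paper handles this tacitly via its ``without loss of generality'' and by negating the symmetrized biconditional.
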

Hence, we can easily derive conditions under which the \wloaf{} leads to a strict margin increase. That is, we get a strict increase in distances if, and only, if the \wlonef{} splits up a color $c$ under \wlone{} such that the occurrences of this color $c$ are larger or equal in one graph over the other while for at least one of the resulting colors under \wlonef{}, refining the color $c$, the relation is strictly reversed.
\begin{corollary}\label{cor:wloa}
	Let $G$ and $H$ be $n$-order graphs and let $\cF$ be a finite set of graphs and let $T \geq 0$, and $C_\cF(c)$ be the set of colors that color $c$ under \wlone{} is split into under \wlonef{}, i.e., $\phi_{t}(G)_c = \sum_{c'\in C_\cF(c)}\phi_{\cF, t}(G)_{c'}.$ The following statements are equivalent
	\begin{enumerate}
		\item
		      $
			      \left\lVert\phi^{(T)}_{\textsf{WLOA},\cF}(G)-\phi^{(T)}_{\textsf{WLOA},\cF}(H)\right\rVert > \left\lVert\phi^{(T)}_{\textsf{WLOA}}(G)-\phi^{(T)}_{\textsf{WLOA}}(H)\right\rVert.
		      $
		\item
		      $
			      \exists\, t\in[T]\cup\{0\}\, \exists\, c\in \Sigma_t\ \colon \neg\left(\phi_t(G)_{c}\geq \phi_t(H)_c \iff \forall\, c'\in C_\cF(c) \colon \phi_{\cF, t}(G)_{c'} \geq \phi_{\cF,t}(H)_{c'}\right).
		      $ \qed
	\end{enumerate}
\end{corollary}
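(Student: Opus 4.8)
The plan is to obtain \cref{cor:wloa} as a purely logical consequence of \cref{thm:wloa}, using \cref{prop:pairwise} only to supply the dichotomy that converts ``not equal'' into ``strictly greater.'' First I would invoke \cref{prop:pairwise}, which guarantees
\begin{equation*}
	\left\lVert\phi^{(T)}_{\textsf{WLOA},\cF}(G)-\phi^{(T)}_{\textsf{WLOA},\cF}(H)\right\rVert\geq\left\lVert\phi^{(T)}_{\textsf{WLOA}}(G)-\phi^{(T)}_{\textsf{WLOA}}(H)\right\rVert
\end{equation*}
for every pair of $n$-order graphs $G$ and $H$. Because the left-hand side never drops below the right-hand side, exactly one of two mutually exclusive and exhaustive cases occurs: the two norms coincide, or the left strictly exceeds the right. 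Consequently, statement~1 of the corollary (strict inequality) is precisely the logical negation of statement~1 of \cref{thm:wloa} (equality of the norms).

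Next I would apply \cref{thm:wloa}, which already establishes that equality of the norms is equivalent to the universally quantified biconditional
\begin{equation*}
	\forall\, t\in[T]\cup\{0\}\, \forall\, c\in \Sigma_t \colon \phi_t(G)_{c}\geq \phi_t(H)_c \iff \forall\, c'\in C_\cF(c) \colon \phi_{\cF, t}(G)_{c'} \geq \phi_{\cF,t}(H)_{c'}.
\end{equation*}
Negating both sides of an equivalence preserves it, so the failure of equality---equivalently, by the dichotomy above, the strict inequality of statement~1---is equivalent to the failure of this universal biconditional. De~Morgan's law pushes the outer negation through the two universal quantifiers, converting them into existential quantifiers over $t$ and $c$ and negating the inner biconditional. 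The resulting assertion is exactly statement~2 of the corollary. Chaining the two equivalences (strict inequality $\iff$ failure of equality $\iff$ failure of the universal biconditional $\iff$ statement~2) closes the argument.

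I do not expect a genuine obstacle here, since the corollary is the contrapositive repackaging of an already-proven equivalence; the heavy lifting was done in \cref{thm:wloa}. The only two points that require care are, first, confirming that \cref{prop:pairwise} yields a genuinely exhaustive dichotomy with no third alternative---this is immediate from the ``$\geq$'' relation---and, second, transcribing the negation accurately, so that the inner clause of statement~2 is the negation of the biconditional rather than, say, a negation of only one implication. Once these bookkeeping details are verified, the proof is a one-line composition of \cref{prop:pairwise} and \cref{thm:wloa}.
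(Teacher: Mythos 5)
Your proposal is correct and is essentially the paper's own route: the paper presents \cref{cor:wloa} as an immediate consequence of \cref{thm:wloa} (proved in the appendix as an equivalence chain through the kernel values), with \cref{prop:pairwise} supplying exactly the dichotomy you invoke so that negating the equality statement yields the strict inequality, and De Morgan turns the negated universal biconditional into statement~2. Your two points of care---exhaustiveness of the dichotomy and negating the full biconditional rather than one implication---are precisely the right bookkeeping, and nothing further is needed.
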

Specifically, this implies that using \cref{thm:wloa} and \cref{cor:wloa} as assumptions on the distances between graphs within one class and between two classes, respectively, implies a margin increase via \cref{thm:OAmargin}. Then \cref{thm:VCWLOA} and \cref{thm:VCWLOAF} imply a decrease in VC-dimension and consequently an increase in generalization performance when using $\cF$. See \cref{APP:cor:example} in the appendix for an example where the above conditions are met.

\section{Large margins and gradient flow}\label{sec:sgd}

\cref{prop:matchingvc_mpnn,prop:matchingvc_mpnn_f} ensure the existence of parameter assignment such that MPNN and \tMPNNF{} architectures generalize. However, it remains unclear how to find them. Hence, building on the results in~\citet{JiT19}, we now show that, under some assumptions, MPNNs exhibit an ``alignment'' property whereby gradient flow pushes the network's weights toward the maximum margin solution.

\paragraph{Formal setup}
We consider MPNNs following \cref{def:MPNN} and consider graph classification tasks using a readout layer. We make some simplifying assumptions and consider \emph{linear} MPNNs. That is, we set the aggregation function $\AGG$ to summation, and $\UPD$ at layer $i$ is summation followed by a dense layer with trainable weight matrix $\vec{W}^{(i)} \in \Rb^{d_{i}\times d_{i-1}}$. Let $G$ be an $n$-order graph, if we pack the node embeddings $\mathbf{h}_v^{(i)}$ into an $d_i \times n$ matrix $\vec X^{(i)}$ whose $v^\text{th}$ column is $\mathbf{h}_v^{(i)}$, then
\[
	\vec{X}^{(i+1)} = \vec{W}^{(i+1)} \vec{X}^{(i)} \vec{A}'(G),
\]
where $\vec{A}'(G) \coloneqq \vec{A}(G)+\vec{I}_n$, $\vec{I}_n\in\Rb^{n\times n}$ is the $n$-dimensional identity matrix, and $\vec{X} = \vec{X}^{(0)}$ is the $d_0 \times n$ matrix whose columns correspond to vertices' initial features; we also write $d = d_0$. For the permutation-invariant readout layer, we use simple summation of the final node embeddings and assume that $\vec{X}^{(L)}$ is transformed into a prediction $\hat{y}$ as follows,
\[
	\hat{y} = \RO\bigl( \vec X^{(L)} \bigr) = \vec{X}^{(L)} \cdot \vec{1}_n,
\]
where $ \vec{1}_n$ is the $n$-element all-one vector. Note that since we desire a scalar output, we will have $d_L = 1$.

Suppose our training dataset is $\{(G_i, \vec X_i, y_i)\}_{i=1}^k$, where $\vec X_i \in \Rb^{d \times n_i}$ is a set of $d$-dimensional node features over an $n_i$-order graph $G_i$ with $|V(G_i)| = n_i$, and $y_i \in \{-1, +1\}$ for all $i$. We use a loss function $\ell$ with the following assumption.

\begin{restatable}{ass}{lossassumption} \label{assumption:loss}
	The loss function $\ell \colon \Rb \to \Rb^+$ has a continuous derivative $\ell'$ such that $\ell'(x) < 0$ for all $x$, $\lim_{x\to-\infty} \ell(x) = \infty$, and $\lim_{x\to\infty} \ell(x) = 0$. \qed
\end{restatable}

The empirical risk induced by the MPNN is
\begin{align*}
	\cR(\vec W^{(L)}, \dotsc, \vec W^{(1)}) & = \frac{1}{k} \sum_{i=1}^k \ell(y_i, \hat{y}_i)                               \\
	                                        & = \frac{1}{k} \sum_{i=1}^k \ell(\wprod \vec Z_i \vec{A}'(G)^L \vec{1}_{n_i}),
\end{align*}
where $\wprod = \vec W^{(L)} \vec W^{(L-1)} \cdots \vec W^{(1)}$, and $\vec Z_i = y_i \vec X_i$.

We consider gradient flow. In gradient flow, the evolution of $\vec W = (\wmat^{(L)}, \wmat^{(L-1)}, \dots, \wmat^{(1)})$ is given by $\{\vec W(t) \colon t\geq 0\}$, where there is an initial state $\vec W(0)$ at $t=0$, and
\[
	\frac{d\vec W(t)}{dt} = -\nabla \cR(\vec W(t)).
\]
We make one additional assumption on the initialization of the network.
\begin{restatable}{ass}{initassumption} \label{assumption:risk}
	The initialization of $\wmat$ at $t=0$ satisfies $\nabla\cR(\wmat(0)) \neq \cR(0) = \ell(0)$. \qed
\end{restatable}

\paragraph{Alignment Theorems}
We now assume the data is MPNN-separable, i.e., there is a set of weights that correctly classifies every data point. More specifically, assume there is a vector $\bar{\vec{u}} \in \Rb^d$ such that $y_i \cdot \bar{\vec{u}}^\tran \vec X_i \vec{A}'(G_i)^L \vec{1}_{n_i} > 0$ for all $i$. Furthermore, the maximum margin is given by
\[
	\gamma = \max_{\|\bar{\vec{u}} \| = 1} \min_{1\leq i\leq k} y_i \cdot \bar{\vec u}^\tran \vec  X_i \vec{A}'(G_i)^L \vec{1}_{n_i} > 0,
\]
while the corresponding solution $\bar{\vec u} \in \Rb^d$ is given by
\[
	\argmax_{\|\bar{\vec u}\| = 1} \min_{1\leq i\leq k} y_i \cdot \bar{\vec u}^\tran \vec X_i \vec{A}'(G_i)^L \vec{1}_{n_i}.
\]
Furthermore, those $\vec{v}_i = \vec{Z}_i \vec{A}'(G_i)^L \vec{1}_{n_i}$ for which $\langle \bar{\vec{u}}, \vec{v}_i \rangle = \gamma$ are called \emph{support vectors}.

Our first main result shows that under gradient flow, the trainable weight vectors of our MPNN architecture get ``aligned.''
\begin{restatable}{theorem}{mainalignmentgradflow} \label{thm:main-alignment-gradflow}
	Suppose \cref{assumption:loss} and \cref{assumption:risk} hold.
	Let $\vec u_i(t)\in \Rb^{d_i}$ and $\vec v_i(t)\in \Rb^{d_{i-1}}$ denote the left and right singular vectors, respectively, of $\vec W^{(i)}(t) \in \Rb^{d_i \times d_{i-1}}$. Then, we have the following using the Frobenius norm $\|\!\cdot\!\|_F$:
	\begin{itemize}
		\item For $j = 1, 2, \dots, L$, we have
		      \[
			      \lim_{t\to\infty} \left\| \frac{\wj(t)}{\|\wj(t)\|_F} - \vec u_j(t) \vec v_j(t)^\tran \right\|_F = 0.
		      \]
		\item Also,
		      \[
			      \lim_{t\to\infty} \left| \left\langle \frac{(\vec W^{(L)}(t) \cdots \vec W^{(1)}(t))^\tran}{\prod_{j=1}^L \|\wj(t)\|_F} , \vec{v}_1 \right\rangle \right| = 1.
		      \]
	\end{itemize}
\end{restatable}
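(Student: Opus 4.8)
The plan is to follow the gradient-flow analysis of \citet{JiT19} and adapt their ``alignment'' argument from the fully-connected linear-network setting to our linear MPNN setting. The crucial observation is that, because the readout and the aggregation matrices $\vec{A}'(G_i)^L \vec{1}_{n_i}$ are \emph{fixed} (not trainable), the trainable part of the network is exactly the product of weight matrices $\wprod = \vec W^{(L)} \cdots \vec W^{(1)}$ acting on the fixed vectors $\vec v_i = \vec Z_i \vec A'(G_i)^L \vec 1_{n_i} \in \Rb^d$. Hence the empirical risk $\cR$ depends on the weights only through $\wprod$ applied to a finite set of fixed data vectors $\{\vec v_i\}_{i=1}^k$, and the problem reduces \emph{verbatim} to a deep linear network trained on the dataset $\{(\vec v_i, y_i)\}$ with the scalar output $d_L = 1$. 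This reduction is the key conceptual step that lets us import the machinery of \citet{JiT19} wholesale.

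First I would record the basic conserved quantities of gradient flow for linear networks: for adjacent layers, the differential equation $\dot{\vec W} = -\nabla\cR$ forces $\frac{d}{dt}\bigl(\wjp{}^\tran \wjp - \wj \wj{}^\tran\bigr) = 0$, so the ``layer imbalance'' is preserved along the flow. Second, using \cref{assumption:loss} (the loss is strictly decreasing, blows up at $-\infty$, and vanishes at $+\infty$) together with the separability hypothesis $y_i\bar{\vec u}^\tran \vec X_i \vec A'(G_i)^L \vec 1_{n_i} > 0$, I would argue that $\cR(\vec W(t)) \to 0$, which forces $\|\wprod(t)\| \to \infty$ and in fact $\|\wj(t)\|_F \to \infty$ for every layer $j$ (the conserved imbalances guarantee no single layer stays bounded while the product diverges). \cref{assumption:risk} rules out the degenerate stationary initialization, ensuring the flow actually makes progress. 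Third, I would invoke the core result of \citet{JiT19}: along the flow the normalized weight matrices converge to rank-one matrices aligned with their top singular directions, i.e. $\wj(t)/\|\wj(t)\|_F \to \vec u_j(t)\vec v_j(t)^\tran$ in Frobenius norm, which is exactly the first bullet. For the second bullet, I would use that once all layers are asymptotically rank-one and aligned across adjacent layers (the right singular vector of layer $j{+}1$ matches the left singular vector of layer $j$), the normalized product $(\wprod)^\tran/\prod_j\|\wj\|_F$ converges in direction to the unit vector that maximizes the margin over $\{\vec v_i\}$; since $\cR\to 0$ drives this direction to the max-margin separator $\bar{\vec u}$, its inner product with the (normalized) support-vector combination tends to $1$ in absolute value.

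The main obstacle I expect is \emph{not} the abstract alignment statement, which is inherited from \citet{JiT19}, but verifying that their hypotheses genuinely transfer under the reduction. Concretely, the delicate point is establishing that $\|\wj(t)\|_F\to\infty$ for \emph{each} $j$ simultaneously and that the singular-value gaps needed for rank-one collapse actually open up; this requires combining the conserved imbalance identities with the divergence of $\|\wprod\|$ and a careful argument that the loss gradient asymptotically points along a single fixed direction determined by the support vectors $\vec v_i$. A secondary subtlety is that our $\vec v_i$ live in a potentially high-dimensional feature space and there is no guarantee the margin is attained uniquely; I would handle this by working with the max-margin direction $\bar{\vec u}$ as defined in the statement and showing the flow's direction aligns with it, using strict convexity of the exponential-type tail of $\ell$ near $+\infty$ to pin down the limiting direction. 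Throughout, the separability assumption and \cref{assumption:loss} do the heavy lifting, and I would defer the routine singular-vector continuity arguments to the cited reference rather than reprove them.
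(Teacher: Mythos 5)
Your treatment of the first bullet follows essentially the same route as the paper: reduce to a deep linear network on the fixed vectors $\vec Z_i \vec A'(G_i)^L \vec 1_{n_i}$ (this reduction is exactly how the paper imports \citet{JiT19}), use the conserved layer-imbalance identity, get $\cR\to 0$ and $\|\wj(t)\|_F\to\infty$ for every $j$ from separability and \cref{assumption:risk}, and conclude rank-one collapse. One remark on the point you flag as delicate: no singular-value ``gap'' argument is needed. The conserved imbalance gives a \emph{constant} bound $\|\wj(t)\|_F^2 - \|\wj(t)\|_2^2 \leq D$ (the paper's \cref{lem:dbd}), so once $\|\wj(t)\|_F\to\infty$ the tail singular values stay uniformly bounded while the top one diverges, and $\wj(t)/\|\wj(t)\|_F \to \vec u_j(t)\vec v_j(t)^\tran$ follows immediately.

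For the second bullet, however, your plan has a genuine gap. You propose to prove it by arguing that the normalized product converges in direction to the maximum-margin separator $\bar{\vec u}$ and then pairing it with a ``support-vector combination.'' This misreads the statement and overreaches the available assumptions. First, $\vec v_1$ in the second bullet is the top \emph{right-singular vector of $\vec W^{(1)}(t)$} (as declared in the theorem's preamble), not a data or support vector; the claim is a purely internal alignment statement about the network's own singular directions. Second, directional convergence to $\bar{\vec u}$ is \emph{not} available under \cref{assumption:loss} alone: in this paper that is a separate, strictly stronger result (\cref{thm:main-maxmargin}) which additionally requires the exponential loss and the span condition \cref{assumption:supportspan}, neither of which is assumed here. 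The correct (and much shorter) argument is the paper's: from the imbalance bound, $\sigma_{j+1}(t)\to\infty$ together with \cref{lem:succ-singularvalues} forces $|\langle \vec u_j(t), \vec v_{j+1}(t)\rangle|\to 1$ for each adjacent pair, so substituting the rank-one limits into the product telescopes as
\begin{equation*}
	\left|\frac{\vec W^{(L)}\cdots\vec W^{(1)}}{\prod_j \|\wj\|_F}\,\vec v_1\right| \to \left|\vec u_L\,(\vec v_L^\tran \vec u_{L-1})\cdots(\vec v_2^\tran \vec u_1)(\vec v_1^\tran \vec v_1)\right| \to 1,
\end{equation*}
with no reference to margins at all (\cref{lem:alignment} with $j=1$). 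As written, your route would either fail for general losses satisfying \cref{assumption:loss} or silently smuggle in the hypotheses of \cref{thm:main-maxmargin}.
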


Furthermore, we can show that under mild assumptions, the trainable weights converge to the maximum margin solution $\bar{\vec u}$.

\begin{restatable}{ass}{supportassumption} \label{assumption:supportspan}
	The support vectors $\vec{v}_i = \vec{Z}_i \vec{A}'(G_i)^L \vec{1}_{n_i}$ span $\Rb^d$.
\end{restatable}
Note that, for unlabeled graphs, due to separability, the above assumption is trivially fulfilled.
\begin{restatable}[Convergence to the maximum margin solution]{theorem}{mainmaxmargin} \label{thm:main-maxmargin}
	Suppose \cref{assumption:loss} and \cref{assumption:supportspan} hold. Then, for the exponential loss function $\ell(x) = e^{-x}$, under gradient flow, we have that the learned weights of the MPNN converge to the maximum margin solution, i.e.,
	\[
		\lim_{t\to\infty} \frac{\vec W^{(L)}(t) \vec W^{(L-1)}(t) \cdots \vec W^{(1)}(t)}{\|\vec W^{(L)}(t)\|_F \|\vec W^{(L-1)}(t)\|_F \cdots \|\vec W^{(1)}(t)\|_F} =  \bar{\vec u}.
	\]
\end{restatable}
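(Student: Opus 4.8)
The plan is to exploit the fact that, with aggregation and readout fixed, the empirical risk $\cR$ depends on the weights only through the scalar predictions $\wprod\vec{v}_i$, where $\vec{v}_i\coloneqq\vec{Z}_i\vec{A}'(G_i)^L\vec{1}_{n_i}$. Thus the MPNN collapses to an ordinary deep linear network acting on the \emph{fixed} effective dataset $\{\vec{v}_i\}_{i=1}^k$, all carrying label $+1$ once $y_i$ is folded into $\vec{Z}_i$; separability of the sample is exactly the existence of $\bar{\vec{u}}$ with $\min_i\langle\bar{\vec{u}},\vec{v}_i\rangle=\gamma>0$. This reduction lets me import the gradient-flow analysis for deep linear networks and use \cref{thm:main-alignment-gradflow} as the already-established alignment step. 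First I would replace the Frobenius normalization in the statement by the Euclidean norm of the predictor $\vec{w}(t)\coloneqq\wprod(t)^\tran\in\Rb^d$: because $\wprod$ is itself rank one we have $\|\wprod\|=\|\wprod\|_F$, and by submultiplicativity of the operator norm $\|\wprod(t)\|\leq\prod_j\|\wj(t)\|_F$ always, while the second bullet of \cref{thm:main-alignment-gradflow} forces the normalized product to have inner product tending to $1$ (in absolute value) against a unit vector; Cauchy--Schwarz then gives $\prod_j\|\wj(t)\|_F/\|\wprod(t)\|\to 1$. Hence it suffices to show $\vec{w}(t)/\|\vec{w}(t)\|\to\bar{\vec{u}}$.

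Next I would control the large-time behavior of $\vec{w}(t)$. Under \cref{assumption:loss} and \cref{assumption:risk} gradient flow strictly decreases $\cR$, and separability together with $\lim_{x\to\infty}\ell(x)=0$ forces $\cR(\vec{W}(t))\to 0$; since $\ell$ is decreasing and blows up at $-\infty$, this is only possible if every prediction diverges, $\min_i\wprod(t)\vec{v}_i\to\infty$, so $\|\vec{w}(t)\|\to\infty$ and in particular the limiting direction has the \emph{correct} (positive) sign, resolving the absolute value left open in \cref{thm:main-alignment-gradflow}. For the exponential loss $\ell(x)=e^{-x}$, the velocity of $\vec{w}$ is, up to the chain-rule factors of the deep parametrization, the nonnegative combination $\sum_i e^{-\wprod\vec{v}_i}\vec{v}_i$ of the data; as $t\to\infty$ the exponential weights concentrate on the points of minimal margin, i.e.\ on the support vectors $\vec{v}_i$ with $\langle\bar{\vec{u}},\vec{v}_i\rangle=\gamma$.

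The heart of the argument is then identifying the limiting direction as the maximum-margin separator. Here I would argue that, after alignment, the evolution of $\vec{w}(t)/\|\vec{w}(t)\|$ coincides asymptotically with the normalized single-layer gradient flow on $\{\vec{v}_i\}$, for which the direction is known to converge to the hard-margin solution; equivalently, any subsequential limit of $\vec{w}(t)/\|\vec{w}(t)\|$ must, by the concentration on support vectors, be a nonnegative combination of support vectors satisfying the SVM stationarity (KKT) conditions, whose unique unit-norm solution is $\bar{\vec{u}}$. \cref{assumption:supportspan} is what makes this identification rigid: because the support vectors span $\Rb^d$, the velocity of $\vec{w}$ eventually explores all of $\Rb^d$, so no component of the initialization survives in a direction transverse to the data, and the KKT system pins down $\bar{\vec{u}}$ uniquely rather than only up to the span of the active set. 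Combining the three ingredients---reduction plus norm equivalence, divergence with correct sign, and KKT identification---yields $\wprod(t)/\prod_j\|\wj(t)\|_F\to\bar{\vec{u}}$.

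I expect the main obstacle to be this last step: transferring the well-understood single-layer implicit-bias result to the deep product $\wprod$. The difficulty is that $\wprod$ does not itself follow a clean gradient flow---the deep parametrization reweights the gradient direction-wise---so one must use the alignment of \cref{thm:main-alignment-gradflow} \emph{quantitatively} to show that these reweightings degenerate to a single positive, time-varying scalar that leaves the limiting direction unchanged, and then carefully interchange the $t\to\infty$ limit with the concentration of the exponential weights onto the support vectors. Making the ``asymptotically a nonnegative combination of support vectors'' statement precise enough to invoke uniqueness of the KKT solution under \cref{assumption:supportspan} is where the real work lies.
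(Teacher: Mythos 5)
Your reduction to a deep linear network acting on the effective features $\vec{v}_i=\vec{Z}_i\vec{A}'(G_i)^L\vec{1}_{n_i}$, your Cauchy--Schwarz argument that the second bullet of \cref{thm:main-alignment-gradflow} forces $\prod_j\|\wj(t)\|_F/\|\wprod(t)\|\to 1$ (so Frobenius normalization may be replaced by the Euclidean normalization of the predictor $\vec{w}(t)=\wprod(t)^\tran$), and your sign resolution via $\cR\to 0$ and divergence of the predictions (cf.\ \cref{thm:risktozero}) all match the paper's setup, which likewise ports Theorem~2.8 of Ji and Telgarsky under the substitution $x_i\mapsto \vec{X}_i\vec{A}'(G_i)^L\vec{1}_{n_i}$. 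Where you diverge---and where a genuine gap remains---is the identification of the limiting direction. Your plan is to extract subsequential limits of $\vec{w}(t)/\|\vec{w}(t)\|$, argue that the exponential weights $e^{-\langle\vec{w},\vec{v}_i\rangle}$ concentrate on the support vectors of $\bar{\vec{u}}$, and invoke KKT uniqueness. But which data points the exponential weights concentrate on is governed by the \emph{current} direction $\vec{w}(t)/\|\vec{w}(t)\|$, not by $\bar{\vec{u}}$; asserting concentration on the support vectors of $\bar{\vec{u}}$ presupposes the very convergence you are trying to prove. You flag this yourself as ``where the real work lies,'' but the proposal contains no mechanism to break the circularity, and the deep-parametrization reweighting you mention compounds it, since $\vec{w}$ does not follow the flow of $\cR_1$.

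The paper closes exactly this hole with a different, more direct device: \cref{lem:wperp-innerprod}. Using \cref{assumption:supportspan} only to guarantee that the minimax quantity $\alpha$ in \cref{eq:minmax-innerprod} is strictly positive (a quantitative ``surrounding'' of $\bar{\vec{u}}$ by support vectors), it shows that whenever the component $\vec{w}^\perp$ orthogonal to $\bar{\vec{u}}$ satisfies $\|\vec{w}^\perp\|\geq(1+\ln k)/\alpha$, one has $\langle\vec{w}^\perp,\nabla\cR(\vec{w})\rangle\geq 0$, so the flow cannot grow $\|\vec{w}^\perp\|$ past a fixed radius. Combined with $\|\vec{w}(t)\|\to\infty$, boundedness of $\vec{w}^\perp$ gives $\vec{w}/\|\vec{w}\|\to\bar{\vec{u}}$ with no subsequential-limit or KKT apparatus at all. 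Note also that your reading of \cref{assumption:supportspan} is slightly off: uniqueness of the hard-margin direction is automatic from convexity of the SVM problem and does not need the spanning hypothesis; what the hypothesis buys is $\alpha>0$, i.e., the restoring force on $\vec{w}^\perp$. A KKT-based route in your style can be made rigorous (it is the Lyu--Li/Gunasekar-type characterization), but as written your proposal omits the ingredient---some analogue of \cref{lem:wperp-innerprod}---that actually pins the trajectory to $\bar{\vec{u}}$.
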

The results can be straightforwardly adjusted to \tMPNNF{} architectures.

\section{Limitations, possible road maps, and future work}

While our findings represent the first explicit link between a dataset's margin and the expressive power of an architecture, several key questions remain unanswered. While the empirical results of~\Cref{sec:experiments} suggest that our VC dimension bounds are practically applicable, it is mostly unclear under which conditions variants of stochastic gradient descent converge to a large margin solution for over-parameterized MPNN architectures. While we show convergence to the maximum solution for linear MPNNs, it is still being determined for which kind of non-linear activations similar results hold. Additionally, the role of an architecture's expressive power in this convergence process is poorly understood. Secondly, our bounds lack explicit information about graph structure. As a result, future research should explore incorporating graph-specific parameters or developing refined results tailored to relevant graph classes, such as tree, planar, or bipartite graphs.

\section{Experimental evaluation}
\label{sec:experiments}

In the following, we investigate to what extent our theoretical results translate into practice. Specifically, we answer the following questions.
\begin{description}
	\item[Q1] Does adding expressive power make datasets more linearly separable?
	\item[Q2] Can the increased generalization performance of a more expressive variant of the \wlone{} algorithm be explained by an increased margin?
	\item[Q3] Does the \wloaf{} lead to increased predictive performance?
	\item[Q4] Do the results lift to MPNNs?
\end{description}
The source code of all methods and evaluation procedures is available at \url{https://www.github.com/chrsmrrs/wl\_vc_expressivity}.

\begin{table}[t]
	\caption{Experimental validation of~\cref{thm:separability} for different numbers of vertices ($n$), reporting mean test accuracies and margins. \textsc{Nls}---Not linearly separable. \textsc{Dnc}---Did not compute due to implicit kernel.
	}
	\label{fig:separability}
	\centering

	\resizebox{.70\textwidth}{!}{ 	\renewcommand{\arraystretch}{1.05}
		\begin{tabular}{@{}lcccc@{}} \toprule
			\multirow{3}{*}{\vspace*{4pt}\textbf{Algorithm}} & \multicolumn{4}{c}{\textbf{Number of vertices} ($n$).}                                                                                                                                                                                                                                                                            \\\cmidrule{2-5}
			                                                 & 16                                                                             & 32                                                                            & 64                                                                             & 128                                                                             \\ \toprule
			\wlone                                           & 46.6 {\scriptsize $\pm 1.1$} \textsc{Nls}                                      & 47.3 {\scriptsize $\pm 1.7$} \textsc{Nls}                                     & 47.1  {\scriptsize $\pm 1.1$} \textsc{Nls}                                     & 46.5 {\scriptsize $\pm 0.7$} \textsc{Nls}                                       \\
			\wloa                                            & 36.8   {\scriptsize $\pm 1.3$} \textsc{Dnc}                                    & 37.4  {\scriptsize $\pm 1.7$} \textsc{Dnc}                                    & 37.3  {\scriptsize $\pm 0.9$} \textsc{Dnc}                                     & 37.8 {\scriptsize $\pm 1.2$} \textsc{Dnc}                                       \\

			\textsf{MPNN}                                    & 47.9  {\scriptsize $\pm 0.7$} \textsc{Dnc}                                     & 49.0  {\scriptsize $\pm 1.6$} \textsc{Dnc}                                    & 48.3  {\scriptsize $\pm 1.3$} \textsc{Dnc}                                     & 47.9  {\scriptsize $\pm 1.6$} \textsc{Dnc}                                      \\
			\cmidrule{1-5}
			\wlonef                                          & \textbf{100.0} {\scriptsize $\pm 0.0$} \textbf{0.006}  {\scriptsize $<0.0001$} & \textbf{100.0} {\scriptsize $\pm 0.0$} \textbf{0.014} {\scriptsize $<0.0001$} & \textbf{100.0} {\scriptsize $\pm 0.0$} \textbf{0.030}  {\scriptsize $<0.0001$} & \textbf{100.0} { \scriptsize $\pm 0.0$}  \textbf{0.062} {\scriptsize $<0.0001$} \\
			\wloaf                                           & 100.0 {\scriptsize $\pm 0.0$} \textsc{Dnc}                                     & 100.0 {\scriptsize $\pm 0.0$} \textsc{Dnc}                                    & 100.0  {\scriptsize $\pm 0.0$} \textsc{Dnc}                                    & 100.0 {\scriptsize $\pm 0.0$} \textsc{Dnc}                                      \\
			$\textsf{MPNN}_{\mathcal{F}}$                    & 100.0  {\scriptsize $\pm $0.1} \textsc{Dnc}                                    & 100.0  {\scriptsize $\pm 0.1$} \textsc{Dnc}                                   & 100.0 {\scriptsize $<0.1$} \textsc{Dnc}                                        & 100.0  {\scriptsize $\pm 0.1$} \textsc{Dnc}                                     \\
			\bottomrule
		\end{tabular}
	}
\end{table}
\begin{table}
	\caption{Mean train, test accuracies, and margins of the kernel architectures on \textsc{TUDatasets} datasets for different subgraphs. \textsc{Dnc}---Did not compute due to implicit kernel.}
	\label{fig:tud}
	\centering
	\resizebox{1.0\textwidth}{!}{ 	\renewcommand{\arraystretch}{1.05}
		\begin{tabular}{@{}l <{\enspace}@{}lccccc@{}} \toprule
			\multirow{3}{*}{\vspace*{4pt} $\mathcal{F}$ }                                 & \multirow{3}{*}{\vspace*{4pt}\textbf{Algorithm}}                         & \multicolumn{5}{c}{\textbf{Dataset}}                                                                                                                                                                                                                                                                                                                                                                                                                                                                                                       \\\cmidrule{3-7}
			                                                                              &                                                                          & {\textsc{Enzymes}}                                                                                        & {\textsc{Mutag}}                                                                                             & {\textsc{Proteins}}                                                                                   &
			{\textsc{PTC\_FM}}                                                            & {\textsc{PTC\_MR}}                                                                                                                                                                                                                                                                                                                                                                                                                                                                                                                                                                                                    \\	\toprule
			---                                                                           & \wlone                                                                   & 90.4  {\scriptsize $\pm 4.4$} 34.1   {\scriptsize $\pm 1.7$} 0.023  {\scriptsize $\pm 0.002$}             & 88.9  {\scriptsize $\pm 1.4$}  83.7  {\scriptsize $\pm 2.1 $}  0.073 {\scriptsize $\pm 0.044$}               & 79.9      {\scriptsize $\pm 1.8$} 68.1  {\scriptsize $\pm 1.1$}  0.090  {\scriptsize $\pm  0.0189$}   & 70.1    {\scriptsize $\pm 3.4$} 55.7  {\scriptsize $\pm 2.7$}  0.196 {\scriptsize $\pm  0.104$}   & 67.0 {\scriptsize $\pm 2.2$}  54.2  {\scriptsize $\pm  2.2$}  0.296  {\scriptsize $\pm 0.194 $}     \\
			---                                                                           & \wloa                                                                    & 100.0 0.0 {\scriptsize $\pm 0.0$} 32.3   {\scriptsize $\pm 1.7$}  \textsc{Dnc}                            & 99.3 {\scriptsize $\pm 1.4$} 82.6  {\scriptsize $\pm 2.0$}  \textsc{Dnc}                                     & 96.6
			{\scriptsize $\pm 2.7$}  \textbf{73.9}  {\scriptsize $\pm 0.7$}  \textsc{Dnc} & 91.7 {\scriptsize $\pm 4.2$} 58.2  {\scriptsize $\pm 1.5$}  \textsc{Dnc} & 95.5 {\scriptsize $\pm 4.4$} 55.7 {\scriptsize $\pm 1.5$}  \textsc{Dnc}                                                                                                                                                                                                                                                                                                                                                                                                                                                                    \\
			\cmidrule{1-7}
			$C_3$                                                                         & \wlonef                                                                  & 97.0    {\scriptsize $\pm 1.8$}  37.9   {\scriptsize $\pm 1.8$}  0.021   {\scriptsize $\pm 0.002$}        & 88.1   {\scriptsize $\pm 2.4$}  83.3   {\scriptsize $\pm 2.0$}  0.134   {\scriptsize $\pm  0.082$}           & 89.1   {\scriptsize $\pm 2.3$} 65.3  {\scriptsize $\pm 1.1$}   0.078   {\scriptsize $\pm  0.090$}     & 72.1    {\scriptsize $\pm 3.1$} 57.0  {\scriptsize $\pm 2.2$}   0.167   {\scriptsize $\pm 0.070$} & 70.9    {\scriptsize $\pm 3.6$}  54.7  {\scriptsize $\pm 2.3$} 0.143   {\scriptsize $\pm 0.044 $}   \\
			$C_3$-$C_4$                                                                   & \wlonef                                                                  & 97.5    {\scriptsize $\pm 1.1$} \textbf{40.6}   {\scriptsize $\pm 1.7$} 0.020 {\scriptsize $\pm  0.001 $} & 88.6   {\scriptsize $\pm 1.2$} \textbf{84.7}   {\scriptsize $\pm 1.9$}  0.065    {\scriptsize $\pm 0.044$}   & 90.5    {\scriptsize $\pm 2.1$}  65.2    {\scriptsize $\pm 1.3$}  0.056   {\scriptsize $\pm 0.014$}   & 72.5   {\scriptsize $\pm 4.2$}  56.3  {\scriptsize $\pm 1.4$} 0.188   {\scriptsize $\pm 0.113$}   & 70.1  {\scriptsize $\pm 4.9$} 55.5  {\scriptsize $\pm 3.2$}   0.121  {\scriptsize $\pm  0.037 $}    \\
			$C_3$-$C_5$                                                                   & \wlonef                                                                  & 97.1    {\scriptsize $\pm 1.1$}  38.0 {\scriptsize $\pm 1.5$}   0.022  {\scriptsize $\pm  0.001 $}        & 89.9  {\scriptsize $\pm  1.8$}  83.0  {\scriptsize $\pm  2.1$}  0.072 {\scriptsize $\pm 0.066$}              & 91.6   {\scriptsize $\pm 2.1 $} 63.6   {\scriptsize $\pm 1.1$}  0.052    {\scriptsize $\pm 0.018$}    & 71.4   {\scriptsize $\pm 3.5$} 56.6   {\scriptsize $\pm 1.2$}  0.229    {\scriptsize $\pm 0.187$} & 68.8 {\scriptsize $\pm 2.9$}  55.5   {\scriptsize $\pm 1.7$}  0.138   {\scriptsize $\pm  0.067$}    \\
			$C_3$-$C_6$                                                                   & \wlonef                                                                  & 96.5  {\scriptsize $\pm 1.5$} 38.7   {\scriptsize $\pm 1.4$}   0.021   {\scriptsize $\pm 0.001$}          & 92.2 {\scriptsize $\pm 1.4$}  83.5  {\scriptsize $\pm 2.2$}   0.090   {\scriptsize $\pm  0.039 $}            & 92.1   {\scriptsize $\pm 2.4$}  64.9  {\scriptsize $\pm 0.9$}   0.050   {\scriptsize $\pm   0.019$}   & 74.8  {\scriptsize $\pm 2.7$} 57.2  {\scriptsize $\pm 2.8$}  0.193  {\scriptsize $\pm 0.167$}     & 73.2  {\scriptsize $\pm 4.2$} 56.5     {\scriptsize $\pm 1.9$}  0.171  {\scriptsize $\pm   0.123$}  \\
			\cmidrule{1-7}
			$K_3$                                                                         & \wlonef                                                                  & 96.4    {\scriptsize $\pm 2.5$}  37.6 {\scriptsize $\pm 0.9$} 0.021  {\scriptsize $\pm  0.001$}           & 89.5    {\scriptsize $\pm 1.8$}  84.0  {\scriptsize $\pm 1.8$}    0.086   {\scriptsize $\pm  0.064$}         & 87.2    {\scriptsize $\pm 2.4$} 64.9   {\scriptsize $\pm 1.6$} 0.059   {\scriptsize $\pm 0.021$}      & 71.7   {\scriptsize $\pm 4.1$} 57.0  {\scriptsize $\pm 2.1$}  0.150   {\scriptsize $\pm 0.081 $}  & 67.8  {\scriptsize $\pm 3.2$} 54.7 {\scriptsize $\pm  3.1$}  0.162     {\scriptsize $\pm 0.068$}    \\
			$K_3$-$K_4$                                                                   & \wlonef                                                                  & 96.8  {\scriptsize $\pm 3.0$} 36.8   {\scriptsize $\pm 1.4$}   0.020 {\scriptsize $\pm  0.002 $}          & 88.3    {\scriptsize $\pm 2.0$} \textbf{84.7}   {\scriptsize $\pm 1.8$}   0.100   {\scriptsize $\pm  0.064$} & 88.9    {\scriptsize $\pm 2.6$} 64.7   {\scriptsize $\pm 1.2$} 0.062     {\scriptsize $\pm 0.018$}    & 70.6  {\scriptsize $\pm 3.1$} 56.0   {\scriptsize $\pm 2.3$} 0.151   {\scriptsize $\pm 0.065 $}   & 68.6  {\scriptsize $\pm 4.2$}   55.6  {\scriptsize $\pm  2.2$}  0.135     {\scriptsize $\pm 0.049$} \\

			$K_3$-$K_5$                                                                   & \wlonef                                                                  & 96.5    {\scriptsize $\pm 2.2$} 36.6  {\scriptsize $\pm 1.8$}  0.021  {\scriptsize $\pm 0.001 $}          & 88.2   {\scriptsize $\pm 2.2$} 82.7   {\scriptsize $\pm 2.6$}   0.098   {\scriptsize $\pm  0.072$}           & 89.7   {\scriptsize $\pm 2.8$} 64.5  {\scriptsize $\pm 0.9$}  0.047   {\scriptsize $\pm  0.012$}      & 72.9   {\scriptsize $\pm 5.7$}  57.2  {\scriptsize $\pm 1.2$} 0.182  {\scriptsize $\pm  0.077$}   & 67.6   {\scriptsize $\pm 4.4$} 54.3   {\scriptsize $\pm 1.3$}    0.145   {\scriptsize $\pm  0.055$} \\

			$K_3$-$K_6$                                                                   & \wlonef                                                                  & 95.8  {\scriptsize $\pm 2.0$}   37.7  {\scriptsize $\pm 1.5$} 0.021  {\scriptsize $\pm 0.001$}            & 88.7    {\scriptsize $\pm 1.8$} 84.3  {\scriptsize $\pm 1.1$}  0.078   {\scriptsize $\pm  0.049$}            & 88.9    {\scriptsize $\pm 3.0$}  63.6  {\scriptsize $\pm 1.4$}     0.055    {\scriptsize $\pm 0.022$} & 71.0    {\scriptsize $\pm 2.7$} 56.0  {\scriptsize $\pm 2.4$} 0.147   {\scriptsize $\pm 0.047$}   & 69.6    {\scriptsize $\pm 3.7$} 55.0   {\scriptsize $\pm 2.1$}  0.133   {\scriptsize $\pm  0.038 $} \\

			\cmidrule{1-7}
			$C_3$                                                                         & \wloaf                                                                   & 100.0 {\scriptsize $\pm 0.0$} 36.7  {\scriptsize $\pm 1.8$}  \textsc{Dnc}                                 & 99.3    {\scriptsize $\pm 1.4$} 83.4   {\scriptsize $\pm 2.7$}  \textsc{Dnc}                                 & 100.0   {\scriptsize $\pm 0.0$}  67.6  {\scriptsize $\pm 0.9$}  \textsc{Dnc}                          & 91.7  {\scriptsize $\pm 4.2$}  \textbf{59.6} {\scriptsize $\pm 0.5$}  \textsc{Dnc}                & 93.7  {\scriptsize $\pm 5.7$} 56.1   {\scriptsize $\pm 1.4$}  \textsc{Dnc}                          \\
			$C_3$-$C_4$                                                                   & \wloaf                                                                   & 100.0  {\scriptsize $\pm 0.0$} 36.1  {\scriptsize $\pm 1.8$}  \textsc{Dnc}                                & 99.6 {\scriptsize $\pm 1.1$}  84.3   {\scriptsize $\pm 1.9$}  \textsc{Dnc}                                   & 99.6   {\scriptsize $\pm 1.1$} 66.5  {\scriptsize $\pm 0.4$}  \textsc{Dnc}                            & 88.0  {\scriptsize $\pm 4.7$} 59.5  {\scriptsize $\pm 1.2$}  \textsc{Dnc}                         & 94.6 {\scriptsize $\pm 3.4$}  55.5 {\scriptsize $\pm 1.7$}  \textsc{Dnc}                            \\
			$C_3$-$C_5$                                                                   & \wloaf                                                                   & 100.0 {\scriptsize $\pm 0.0$} 35.0  {\scriptsize $\pm 1.7$}  \textsc{Dnc}                                 & 99.7  {\scriptsize $\pm 1.0$} 82.2  {\scriptsize $\pm 1.3$}  \textsc{Dnc}                                    & 100.0   {\scriptsize $\pm 0.0 $}  65.9 {\scriptsize $\pm 0.5$}  \textsc{Dnc}                          & 89.7 {\scriptsize $\pm 4.6$} 58.6  {\scriptsize $\pm 1.2$}  \textsc{Dnc}                          & 92.8 4.7   {\scriptsize $\pm 4.7$} 55.4  {\scriptsize $\pm 1.0$}  \textsc{Dnc}                      \\
			$C_3$-$C_6$                                                                   & \wloaf                                                                   & 100.0 {\scriptsize $\pm 0.0$} 35.8  {\scriptsize $\pm 1.8$}  \textsc{Dnc}                                 & 98.3  {\scriptsize $\pm 3.2$} 83.1  {\scriptsize $\pm 2.6$}  \textsc{Dnc}                                    & 99.6    {\scriptsize $\pm 1.2$} 66.2 {\scriptsize $\pm 0.6$}  \textsc{Dnc}                            & 91.7  {\scriptsize $\pm 5.3$} 58.7  {\scriptsize $\pm 2.6$}  \textsc{Dnc}                         & 89.2 {\scriptsize $\pm 5.4$}   55.9  {\scriptsize $\pm 1.5 $}  \textsc{Dnc}                         \\
			\cmidrule{1-7}
			$K_3$                                                                         & \wloaf                                                                   & 100.0  {\scriptsize $\pm 0.0$} 37.0   {\scriptsize $\pm 1.6$}  \textsc{Dnc}                               & 100.0  {\scriptsize $\pm 0.0$}  83.2   {\scriptsize $\pm 1.7$}  \textsc{Dnc}                                 & 99.2  {\scriptsize $\pm 1.5$}  67.2  {\scriptsize $\pm 0.5$}  \textsc{Dnc}                            & 92.5  {\scriptsize $\pm 5.9$} 59.0   {\scriptsize $\pm 1.7$}  \textsc{Dnc}                        & 92.7  {\scriptsize $\pm 3.6$} 57.0   {\scriptsize $\pm 1.8$}  \textsc{Dnc}                          \\
			$K_3$-$K_4$                                                                   & \wloaf                                                                   & 100.0   {\scriptsize $\pm 0.0$} 37.7   {\scriptsize $\pm 1.4$}  \textsc{Dnc}                              & 99.3 {\scriptsize $\pm 1.5$} 82.1   {\scriptsize $\pm 1.6$}  \textsc{Dnc}                                    & 100.0   {\scriptsize $\pm 0.0$} 66.7   {\scriptsize $\pm 0.7$}  \textsc{Dnc}                          & 92.5  {\scriptsize $\pm 5.9$} 59.0   {\scriptsize $\pm 1.7$}  \textsc{Dnc}                        & 94.1  {\scriptsize $\pm 5.0$} 57.6  {\scriptsize $\pm 1.7$}  \textsc{Dnc}                           \\

			$K_3$-$K_5$                                                                   & \wloaf                                                                   & 100.0  {\scriptsize $\pm 0.0$} 37.1  {\scriptsize $\pm 1.4$}  \textsc{Dnc}                                & 100.0  {\scriptsize $\pm 0.0$}  82.5   {\scriptsize $\pm 1.8$}  \textsc{Dnc}                                 & 98.9   {\scriptsize $\pm 1.7$}  66.7 {\scriptsize $\pm 1.1$}  \textsc{Dnc}                            & 93.3  {\scriptsize $\pm 4.3$}  \textbf{59.6}   {\scriptsize $\pm 1.0$}  \textsc{Dnc}              & 94.6  {\scriptsize $\pm 2.7$} \textbf{57.1}   {\scriptsize $\pm 1.3 $}  \textsc{Dnc}                \\
			$K_3$-$K_6$                                                                   & \wloaf                                                                   & 100.0  {\scriptsize $\pm 0.0$} 38.3   {\scriptsize $\pm 1.6$}  \textsc{Dnc}                               & 99.6  {\scriptsize $\pm 1.1$} 83.4   {\scriptsize $\pm 1.4$}  \textsc{Dnc}                                   &
			98.  {\scriptsize $\pm 1.8$} 5  66.8 {\scriptsize $\pm 0.6$}  \textsc{Dnc}    & 94.6  {\scriptsize $\pm 4.6$} 58.9 {\scriptsize $\pm 1.1$}  \textsc{Dnc} & 93.2  {\scriptsize $\pm 6.1$}  56.4   {\scriptsize $\pm 2.5$}  \textsc{Dnc}                                                                                                                                                                                                                                                                                                                                                                                                                                                                \\
			\bottomrule
		\end{tabular}
	}
\end{table}

\paragraph{Datasets}  We used the well-known graph classification benchmark datasets from~\citet{Mor+2020}; see~\cref{statistics} for dataset statistics and properties.\footnote{All datasets are publicly available at \url{www.graphlearning.io}.} Specifically, we used the \textsc{Enzymes}~\citep{Sch+2004,Bor+2005}, \textsc{Mutag}~\citep{Deb+1991,Kri+2012}, \textsc{Proteins}~\citep{Dob+2003,Bor+2005}, \textsc{PTC\_FM}, and \textsc{PTC\_MR}~\citep{Hel+2001} datasets. \emph{To concentrate purely on the graph structure, we omitted potential vertex and edge labels.} Moreover, we created two sets of synthetic datasets. First, we created synthetic datasets to verify~\cref{thm:separability}. We followed the construction outlined in the proof of~\cref{thm:separability} to create 1\,000 graphs on 16, 32, 64, and 128 vertices. Secondly, we created 1\,000 Erdős–Rényi graphs with 20 vertices, each, using edge probabilities 0.05, 0.1, 0.2, and 0.3, respectively. Here, we set a graph's class to the number of subgraphs isomorphic to either $C_3$, $C_4$, $C_5$, or $K_4$, resulting in sixteen different datasets.

\paragraph{Graph kernel and MPNNs architectures} We implemented the (normalized) \wlone{}, \wloa{}, \wlonef{}, and the \wloaf{} in Python. For the MPNN experiments, we used the \textsc{GIN} layer~\citep{Xu+2018b} using  $\relu$ activation functions and fixed the feature dimension to 64. We used mean pooling and a two-layer MLP using a dropout of 0.5 after the first layer for all experiments for the final classification. For the \MPNNF{} architectures, we encoded the initial label function $l_{\cF}$ as a one-hot encoding.

\paragraph{Experimental protocol and model configuration} For the graph kernel experiments, for the \wloa{} variants, we computed the (cosine) normalized Gram matrix for each kernel and computed the classification accuracies using the $C$-SVM implementation of \textsc{LibSVM}~\citep{Cha+11}. Here, a large $C$ enforces linear separability with a large margin. We computed the $\ell_2$ normalized feature vectors for the other kernels and computed the classification accuracies using the linear SVM implementation of \textsc{LibLinear}~\citep{Fan+2008}. In both cases, we used 10-fold cross-validation. We repeated each 10-fold cross-validation ten times with different random folds and report average training and testing accuracies and standard deviations. We additionally report the margin on the training splits for the linear SVM experiments. For the kernelized SVM this was not possible. For the experiments on the \textsc{TUDatsets}, following the evaluation method proposed in~\citet{Mor+2020}, the $C$-parameter and numbers of iterations were selected from $\{10^{-3}, 10^{-2}, \dotsc, 10^{2},$ $10^{3}\}$ and $\{1, \dotsc, 5 \}$, respectively, using a validation set sampled uniformly at random from the training fold (using 10\,\% of the training fold). For \wlonef{} and \MPNNF, we used cycles and complete graphs on three to six vertices, respectively.

For the synthetic datasets, we set the $C$-parameter to $10^{10}$ to enforce linear separability and choose the number of iterations as with the \textsc{TUDatsets}. All kernel experiments were conducted on a workstation with 512\,GB of RAM using a single CPU core.

For the MPNN experiments, we also followed the evaluation method proposed in~\citet{Mor+2020}, choosing the number of layers from $\{1, \dotsc, 5 \}$, using a validation set sampled uniformly at random from the training fold (using 10\,\% of the training fold). We used an initial learning rate of 0.01 across all experiments with an exponential learning rate decay with patience of 5, a batch size of 128, and set the maximum number of epochs to 200. All MPNN experiments were conducted on a workstation with 512\,GB of RAM using a single core and one NVIDIA Tesla A100s with 80\,GB of GPU memory.

\begin{figure}[t]
	\centering
	\subcaptionbox{$C_3$}{\includegraphics[scale=0.26]{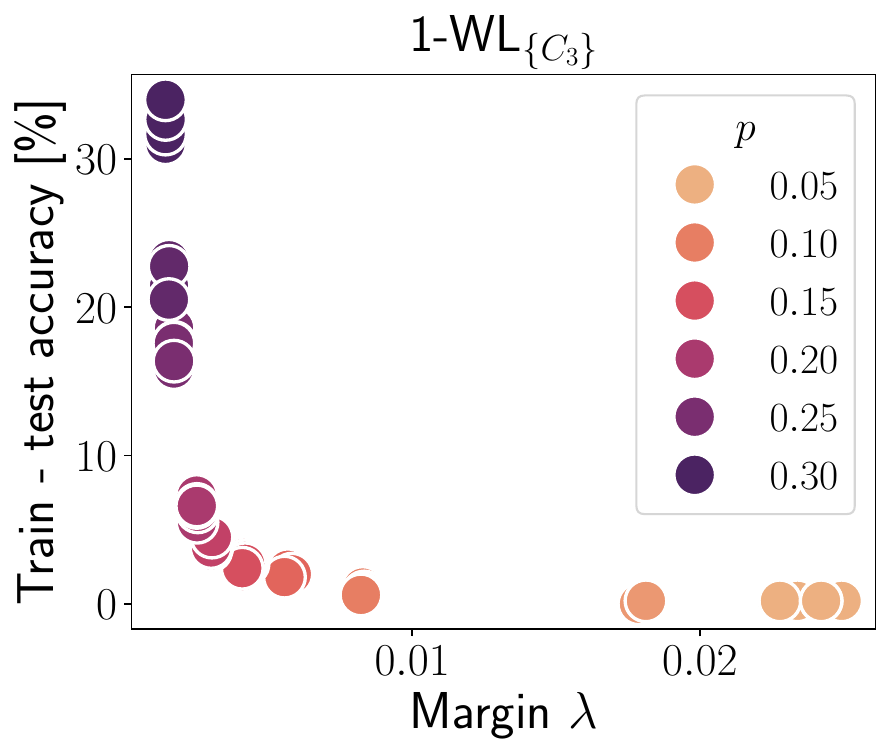}}
	\subcaptionbox{$C_4$}{\includegraphics[scale=0.26]{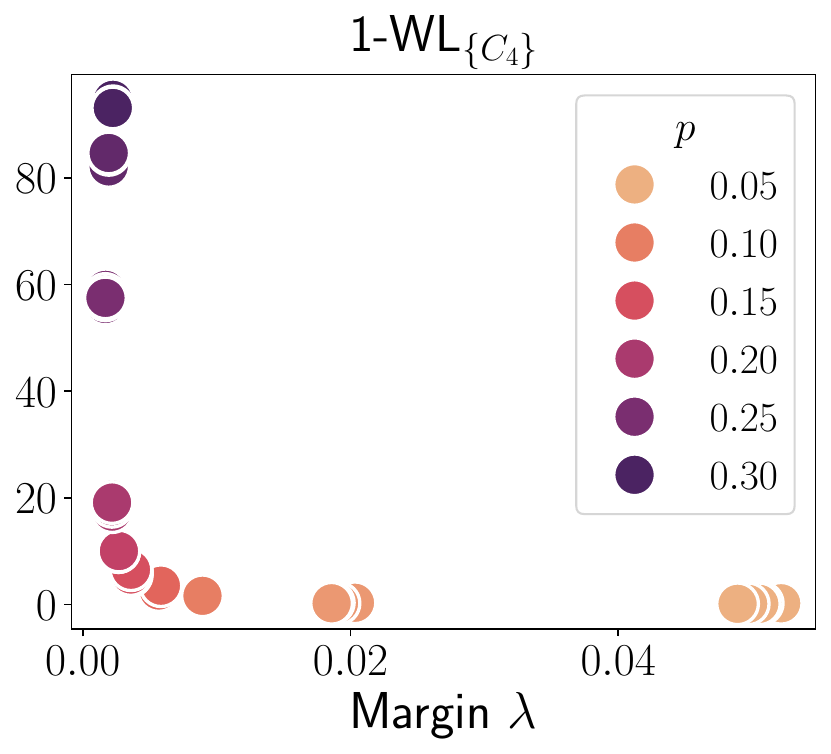}}
	\subcaptionbox{$C_5$}{\includegraphics[scale=0.26]{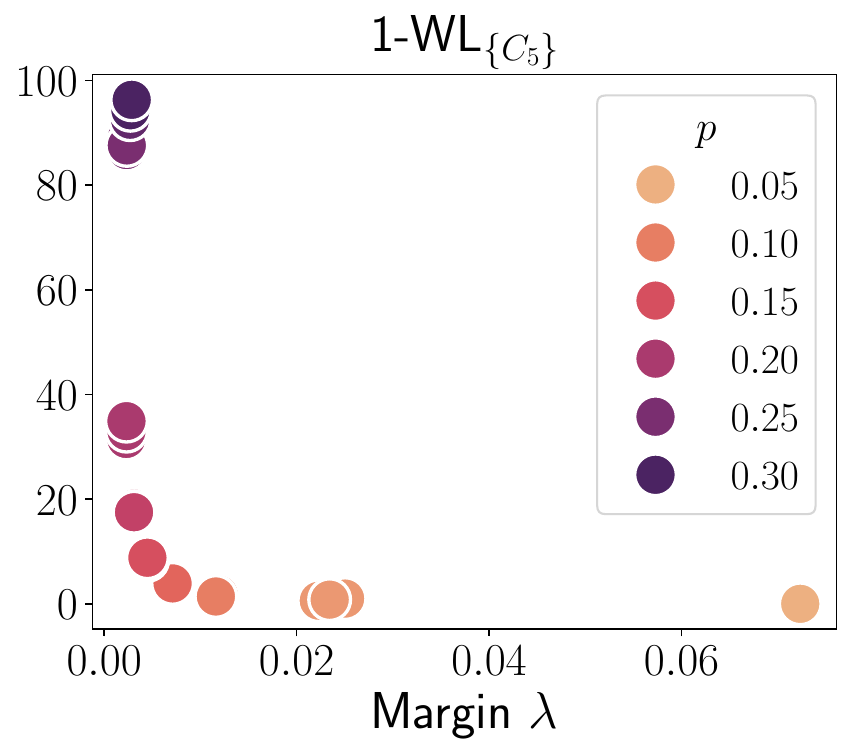}}
	\subcaptionbox{$K_4$}{\includegraphics[scale=0.26]{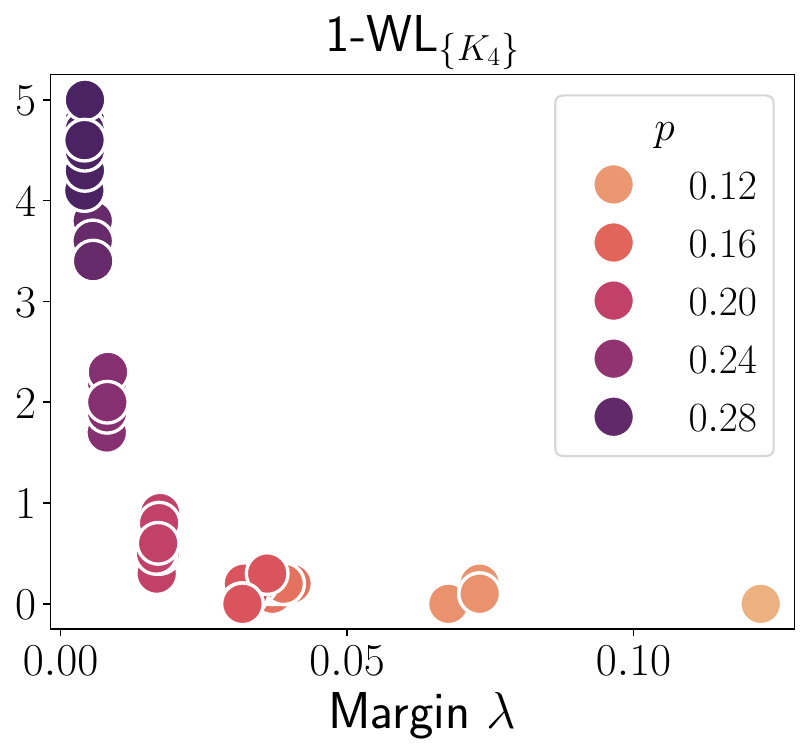}}
	\caption{Plot illustrating the relation between the margin and generalization error for the ER graphs for different choices of $\cF$.}
	\label{fig:plot}
\end{figure}

\subsection{Results and discussion} In the following we answer questions \textbf{Q1} to \textbf{Q4}.

\paragraph{\textbf{Q1} {\normalfont (\textit{``Does adding expressive power make datasets more linearly separable?'')}}} See~\cref{fig:separability,fig:er,fig:er_mpnn} (in the appendix). \cref{fig:separability} confirms~\cref{thm:separability}, i.e., the \wlone{} and \wloa{} kernels do not achieve accuracies better than random and cannot linearly separate the training data. The \wlonef{} and \wloaf{} kernel linearly separate the data while achieving perfect test accuracies. In addition,~\cref{fig:er} also confirms this for the ER graphs, i.e., for all datasets, the \wlone{} and \wloa{} kernels cannot separate the training data while the \wlonef{} and \wloaf{} can. Moreover, the subgraph-based kernels achieve the overall best predictive performance over all datasets, e.g., on the dataset using edge probability 0.2 and $\cF = \{ C_5 \}$ the test accuracies of the \wlonef{} improves over the \wlone{} by more than 57\,\%. Similar effects can be observed for the MPNN architectures; see~\cref{fig:er_mpnn}.

\paragraph{\textbf{Q2} {\normalfont (\textit{``Can the increased generalization performance of a more expressive variant of the \wlone{} algorithm be explained by an increased margin?'')}}} See~\cref{fig:tud,fig:er,fig:plot} (in the appendix).
On the \textsc{TUDatasets}, an increased margin often leads to less difference between train and test accuracy; see~\cref{fig:tud}. For example, on the \textsc{Proteins} datasets, the \wlonef, for all $\cF$, leads to a larger difference, while its margin is always \emph{strictly smaller} than \wlone's margin. Hence, the empirical results align with our theory, i.e., a smaller margin worsens the generalization error. Similar effects can be observed for all other datasets, except \textsc{Mutag}. On the ER dataset, comparing the \wlone{} and \wlonef, for all $\cF$, we can clearly confirm the theoretical results. That is, the \wlone{} cannot separate any dataset with a positive margin, while the \wlonef{} can, and we observe a decreased difference between \wlonef{}'s train and test accuracies compared to the \wlone. Analyzing the \wlonef{} further, for all $\cF$, a decreasing margin always results in an increased difference between test and train accuracies. For example, for $\cF = \{ C_4 \}$ and $p=0.05$, the \wlonef{} achieves a margin of 0.037 with a difference of 0.1\,\%, for $p = 0.1$, it achieves a margin of 0.009 with a difference of 1.8\,\%, for $p = 0.1$, it achieves a margin of 0.002 with a difference of 31.2\,\%, and, for $p = 0.3$, it achieve a margin of 0.003 with difference of 95.8\,\%, confirming the theoretical results. Moreover, see~\cref{fig:plot} of visual illustration of this observation.

\paragraph{\textbf{Q3} {\normalfont (\textit{``Does the \wloaf{} lead to increased predictive performance?'')}}} See~\cref{fig:separability,fig:er,fig:tud} (in the appendix). \cref{fig:tud} shows that the \wloaf{} kernel performs similarly to the \wlonef{}, while sometimes achieving better accuracies, e.g., on the \textsc{PTC\_FM} and \textsc{PTC\_MR} datasets. \cref{fig:separability} confirms this observation for the empirical verification of~\cref{thm:separability}, i.e., the \wloaf{} achieve perfect accuracy scores. The results are less clear for the ER datasets. On some datasets, e.g., edge probability 0.05, the \wloaf{} performs similarly to the \wlonef{} architecture. However, the algorithm leads to significantly worse predictive performance on other datasets. We speculate this is due to numerical problems of the kernelized SVM implementation.

\paragraph{\textbf{Q4} {\normalfont (\textit{``Do the results lift to MPNNs?'')}}} See~\cref{fig:separability,fig:tud_gnn,fig:er_mpnn} (in the appendix). \cref{fig:separability} shows that~\cref{thm:separability} also lifts to MPNNs, i.e., like the \wlonef{} kernel, the \MPNNF{} architecture achieves perfect prediction accuracies while the standard MPNNs does not perform better than random. In addition, on the TUDatasets, the \MPNNF{} architecture clearly outperforms the standard MPNN over all datasets; see~\cref{fig:tud_gnn}. For example, on the \textsc{Enzymes} dataset, the  \MPNNF{} architecture beats the MPNN by at least 7\,\%, for all subgraph choices. This observation holds across all datasets. Moreover, the \MPNNF{} architectures also achieve better predictive performance on all ER datasets, see~\cref{fig:er_mpnn}, compared to MPNNs. For example
on the dataset using edge probability 0.2 and $\cF = \{ C_5 \}$, the test accuracies of the \MPNNF{} architecture improves over the MPNN by more than 52\,\%.

\begin{table}[t]
	\caption{Mean train, test accuracies, and margins of kernel architectures on ER graphs for different levels of sparsity and different subgraphs. \textsc{Nls}---Not linearly separable. \textsc{Nb}---Only one class in the dataset.}
	\label{fig:er}
	\centering

	\resizebox{1.0\textwidth}{!}{ 	\renewcommand{\arraystretch}{1.05}
		\begin{tabular}{@{}c <{\enspace}@{}lcccc@{}} \toprule
			\multirow{3}{*}{\vspace*{4pt}\textbf{Subgraph}} & \multirow{3}{*}{\vspace*{4pt}\textbf{Algorithm}} & \multicolumn{4}{c}{\textbf{Edge probability}}                                                                                                                                                                                                                                                                                                                                                                                                               \\\cmidrule{3-6}
			                                                &                                                  & 0.05                                                                                                             & 0.1                                                                                                           & 0.2                                                                                                          & 0.3                                                                                                       \\	\toprule
			\multirow{4}{*}{ $C_3$ }                        & \wlone                                           & 94.2  {\scriptsize $\pm 1.7$}   88.6     {\scriptsize $\pm 0.7$}  \textsc{Nls}                                   & 96.3    {\scriptsize $\pm 4.6$}  42.12    {\scriptsize $\pm  1.7$}   0.013  {\scriptsize $\pm 0.001$}         & 44.1  {\scriptsize $\pm 12.4 $} 11.2  {\scriptsize $\pm $}  \textsc{Nls}                                     & 38.4  {\scriptsize $\pm  5.0$}  6.99   {\scriptsize $\pm 1.1$}  \textsc{Nls}                              \\
			                                                & \wloa                                            & 100.0  {\scriptsize $\pm 0.0$} 87.9   {\scriptsize $\pm 0.1$}   \textsc{Dnc}                                     & 100.0 {\scriptsize $\pm 0.0$}  44.6   {\scriptsize $\pm 1.1$}   \textsc{Dnc}                                  & 100.0   {\scriptsize $\pm 0.0$} 11.5  {\scriptsize $\pm 0.9$}   \textsc{Dnc}                                 & 100.0   {\scriptsize $\pm 0.0$}  5.2   {\scriptsize $\pm 0.3$}   \textsc{Dnc}                             \\

			                                                & \wlonef                                          & 100.0    {\scriptsize $\pm 0.0$}  \textbf{100.0}   {\scriptsize $\pm 0.0$} 0.037   {\scriptsize $\pm 0.0$}       & 100.0     {\scriptsize $\pm 0.0$}  \textbf{99.7}  {\scriptsize $\pm 0.1$}  0.009   {\scriptsize $<0.001$}     & 100.0   {\scriptsize $\pm 0.0$}  \textbf{100.0}  {\scriptsize $\pm 0.4$}    0.002   {\scriptsize $< 0.001$}  & 99.1    {\scriptsize $\pm 0.2$}  \textbf{64.7} {\scriptsize $\pm 0.8$} 0.001  {\scriptsize $< 0.001$}     \\
			                                                & \wloaf                                           & 100.0   {\scriptsize $\pm 0.0$} 98.6   {\scriptsize $\pm 0.0$}   \textsc{Dnc}                                    & 100.0   {\scriptsize $\pm 0.0$} 93.8   {\scriptsize $\pm 0.3$}   \textsc{Dnc}                                 & 100.0  {\scriptsize $\pm 0.0$}  42.0   {\scriptsize $\pm 1.0$}   \textsc{Dnc}                                & 100.0   {\scriptsize $\pm 0.0$} 7.4   {\scriptsize $\pm 0.2$}   \textsc{Dnc}                              \\
			\cmidrule{1-6}

			\multirow{4}{*}{ $C_4$ }                        & \wlone                                           & 95.1  {\scriptsize $\pm 1.0$} 92.3   {\scriptsize $\pm 0.2$}    \textsc{Nls}                                     & 89.7  {\scriptsize $\pm 3.9$}   46.8   {\scriptsize $\pm 0.8$}  \textsc{Nls}                                  & 48.7  {\scriptsize $\pm 10.1$}  5.4   {\scriptsize $\pm 0.5$}  \textsc{Nls}                                  & 46.6   {\scriptsize $\pm 10.8$}  2.2   {\scriptsize $\pm 0.4$}  \textsc{Nls}                              \\
			                                                & \wloa                                            & 100.0  {\scriptsize $\pm 0.0$}  92.6   {\scriptsize $\pm 0.0$}   \textsc{Dnc}                                    & 100.0 {\scriptsize $\pm 0.0$}  49.6   {\scriptsize $\pm 0.8$}   \textsc{Dnc}                                  & 100.0   {\scriptsize $\pm 0.0$} 5.13   {\scriptsize $\pm 0.6$}   \textsc{Dnc}                                & 100.0   {\scriptsize $\pm 0.0$} 1.7   {\scriptsize $\pm 0.3$}   \textsc{Dnc}                              \\
			                                                & \wlonef                                          & 100.0   {\scriptsize $\pm 0.0$}  \textbf{99.9}  {\scriptsize $\pm 0.1$}    0.037   {\scriptsize $\pm 0.001$}     & 100.0   {\scriptsize $\pm 0.0$}  \textbf{98.2}  {\scriptsize $\pm 0.2$}       0.009   {\scriptsize $< 0.001$} & 100.0   {\scriptsize $0.0$}  \textbf{78.9}  {\scriptsize $\pm 0.6$}    0.002   {\scriptsize $<0.001$}        & 100.0     {\scriptsize $\pm 0.0$}  \textbf{7.3} {\scriptsize $\pm 0.4$}      0.002 {\scriptsize $<0.001$} \\
			                                                & \wloaf                                           & 100.0   {\scriptsize $\pm 0.0$} 99.3   {\scriptsize $<0.1 $}   \textsc{Dnc}                                      & 100.0  {\scriptsize $\pm 0.0$} 93.7   {\scriptsize $\pm 0.2$}   \textsc{Dnc}                                  & 100.0   {\scriptsize $\pm $}  22.4  {\scriptsize $\pm 0.9$}   \textsc{Dnc}                                   & 100.0  {\scriptsize $\pm 0.0$} 2.8   {\scriptsize $\pm 0.6$}   \textsc{Dnc}                               \\
			\cmidrule{1-6}

			\multirow{4}{*}{ $C_5$ }                        & \wlone                                           & 97.2   {\scriptsize $\pm 0.5$}  95.8   {\scriptsize $\pm 0.3$}    \textsc{Nls}                                   & 69.3  {\scriptsize $\pm 6.6$}  53.5  {\scriptsize $\pm 0.6$}  \textsc{Nls}                                    & 65.1   {\scriptsize $\pm 14.9$}  4.3   {\scriptsize $\pm 0.7$}  \textsc{Nls}                                 & 64.8   {\scriptsize $\pm 9.9$} 1.26   {\scriptsize $\pm 0.2$}  \textsc{Nls}                               \\
			                                                & \wloa                                            & 100.0   {\scriptsize $\pm 0.0$} 95.9   {\scriptsize $\pm 0.0$}   \textsc{Dnc}                                    & 100.0   {\scriptsize $\pm 0.0$} 54.2   {\scriptsize $\pm 0.3$}   \textsc{Dnc}                                 & 100.0   {\scriptsize $\pm 0.0$} 4.7   {\scriptsize $\pm 0.5$}   \textsc{Dnc}                                 & 100.0 {\scriptsize $\pm 0.0$}  1.4   {\scriptsize $\pm 0.5$}   \textsc{Dnc}                               \\
			                                                & \wlonef                                          & 100.0    {\scriptsize $\pm 0.0$}    \textbf{99.9}   {\scriptsize $\pm 0.0$}  0.058     {\scriptsize $\pm 0.001$} & 100.0  {\scriptsize $\pm 0.0$}  \textbf{98.4} {\scriptsize $\pm 0.2$} 0.012  {\scriptsize $ <0.001$}          & 100.0 {\scriptsize $\pm 0.0$}  \textbf{68.8}  {\scriptsize $\pm 0.5$}   0.002   {\scriptsize $< 0.001 $}     & 100.0  {\scriptsize $\pm 0.0$}  \textbf{4.2}     {\scriptsize $\pm 0.5$}    0.003 {\scriptsize $< 0.001$} \\
			                                                & \wloaf                                           & 100.0  {\scriptsize $\pm 0.0$}  99.5  {\scriptsize $\pm 0.0$}  \textsc{Dnc}                                      & 100.0   {\scriptsize $\pm 0.0$}  91.8  {\scriptsize $\pm 0.4$}   \textsc{Dnc}                                 & 100.0  {\scriptsize $\pm 0.0$} 20.0   {\scriptsize $\pm 0.0$}   \textsc{Dnc}                                 & 100.0   {\scriptsize $\pm 0.0$} 2.6   {\scriptsize $\pm 0.2$}   \textsc{Dnc}                              \\
			\cmidrule{1-6}
			\multirow{4}{*}{ $K_4$ }                        & \wlone                                           & \textsc{Nb}                                                                                                      & 99.4  {\scriptsize $<0.1 $} 99.4  {\scriptsize $\pm 0.0$}  \textsc{Nls}                                       & 78.0   {\scriptsize $\pm 0.6$}   77.7  {\scriptsize $\pm 0.3$}  \textsc{Nls}                                 & 68.7    {\scriptsize $\pm 9.9$} 17.1   {\scriptsize $\pm 0.9$}  \textsc{Nls}                              \\
			                                                & \wloa                                            & \textsc{Nb}                                                                                                      & 100.0   {\scriptsize $\pm 0.0$} 99.4   {\scriptsize $\pm 0.0$}   \textsc{Dnc}                                 & 100.0 {\scriptsize $\pm 0.0$}  77.8   {\scriptsize $\pm 0.3$}   \textsc{Dnc}                                 & 100.0  {\scriptsize $\pm 0.0$}  20.6   {\scriptsize $\pm 0.9$}   \textsc{Dnc}                             \\
			                                                & \wlonef                                          & \textsc{Nb}                                                                                                      & 100.0  {\scriptsize $\pm 0.0$}  \textbf{100.0}  {\scriptsize $\pm 0.0$}   0.122    {\scriptsize $\pm 0.0$}    & 100.0    {\scriptsize $\pm 0.0$}  \textbf{99.9}   {\scriptsize $\pm 0.1$}  0.022     {\scriptsize $< 0.001$} & 100.0  {\scriptsize $\pm 0.0$}  \textbf{94.8}  {\scriptsize $\pm 0.4$}  0.004   {\scriptsize $<0.001$}    \\
			                                                & \wloaf                                           & \textsc{Nb}                                                                                                      & 100.0 {\scriptsize $\pm 0.0$}  99.4  {\scriptsize $\pm 0.0$}   \textsc{Dnc}                                   & 100.0   {\scriptsize $\pm 0.0$} 97.8   {\scriptsize $\pm 0.1$}   \textsc{Dnc}                                & 100.0  {\scriptsize $\pm 0.0$} 74.6   {\scriptsize $\pm 0.6$}   \textsc{Dnc}                              \\
			\bottomrule
		\end{tabular}
	}
\end{table}

\section{Conclusion}
Here, we focused on determining the precise conditions under which increasing the expressive power of MPNN or kernel architectures leads to a provably increased generalization performance. When viewed through graph isomorphism, we first showed that an architecture's expressivity offers limited insights into its generalization performance. Additionally, we focused on augmenting \wlone{} with subgraph information and derived tight upper and lower bounds for the architectures' VC dimension parameterized by the margin. Based on this, we derived data distributions where increased expressivity either leads to improved generalization performance or not. Finally, we introduced variations of expressive \wlone-based kernels and neural architectures with provable generalization properties. Our empirical study confirmed the validity of our theoretical findings.

Our theoretical results constitute an essential initial step in unraveling the conditions under which more expressive MPNN and kernel architectures yield enhanced generalization performance. Hence, our theory lays a solid foundation for the systematic and principled design of novel expressive MPNN architectures.

\section*{Impact Statement}
This paper presents work whose goal is to advance the field of machine learning. There are many potential societal consequences of our work, none of which we feel must be specifically highlighted here.

\section*{Acknowledgements}
Christopher Morris is partially funded by a DFG Emmy Noether grant (468502433) and RWTH Junior Principal Investigator Fellowship under Germany’s Excellence Strategy.

\setcitestyle{numbers}
\bibliography{bibliography}

\newpage
\appendix

\section{Simple MPNNs}\label{sec:simpleMPNNs}
Here, we provide more details on the simple MPNNs mentioned in~\cref{sec:MPNN}. That is,
for given $d$ and $L \in \Nb$, we define the class $\MPNN_{\mathsf{mlp}}(d,L)$ of simple MPNNs as $L$-layer
MPNNs for which, according to~\cref{def:MPNN}, for each $t \in [L]$, the aggregation function $\AGG^\tup{t}$ is
summation and the update function $\UPD^\tup{t}$ is a multilayer perceptron $\mathsf{mlp}^\tup{t} \colon \Rb^{2d}\to\Rb^d$ of width at most $d$. Similarly, the readout function in \cref{def:readout}
consists of a multilayer perceptron $\mathsf{mlp} \colon \Rb^d\to\Rb^d$ applied on the sum of all vertex features computed in layer $L$.\footnote{For simplicity, we assume that all feature dimensions of the layers are fixed to $d \in \Nb$.} More specifically,
MPNNs in $\MPNN_{\mathsf{mlp}}(d,L)$ compute on a labeled graph $G=(V(G),E(G),\ell)$ with $d$-dimensional initial vertex features
$\hb_v^\tup{0}\in\Rb^d$, consistent with $\ell$,
the following vertex features, for each $v\in V(G)$,
\begin{equation*}
	\hb_{v}^\tup{t} \coloneqq
	\mathsf{mlp}^\tup{t}\Bigl(\hb_{v}^\tup{t-1},\sum_{u\in N(v)}\hb_{u}^\tup{t-1}\Bigr) \in \Rb^{d},
\end{equation*}
for $t \in [L]$, and
\begin{equation*}
	\hb_G \coloneqq \mathsf{mlp}\Bigl(\sum_{v\in V(G)}\hb_{v}^{\tup{L}}\Bigr) \in \Rb^{d}.
\end{equation*}
Note that the class $\MPNN_{\mathsf{mlp}}(d,L)$ encompasses the GNN architecture derived in~\citet{Mor+2019} that has the same expressive power as the \wlone{} in distinguishing non-isomorphic graphs.

\section{Proofs missing from the main paper}

Here, we outline proofs missing in the main paper.

\subsection{Fundamentals}
Here, we prove some fundamental statements for later use.

\paragraph{Margin optimization} Let $(\vec x_1, y_1), \ldots, (\vec x_n, y_n) \in \mathbb R^d\times \{0,1\}$, $d>0$, be a linearly separable sample, and let $I^+ \coloneqq \{i\in[n] \mid y_i=1\}$ and $I^- \coloneqq \{i\in[n] \mid y_i=0\}$. Consider the well-known alternative---to the typical hard-margin SVM formulation---optimization problem for finding the minimum distance between the convex sets induced by the two classes, i.e.,
\begin{equation}
	\begin{aligned}
		2\lambda\coloneqq \min_{ \boldsymbol{\alpha}\in \mathbb R^{|I^+|}, \boldsymbol{\beta}\in \mathbb R^{|I^-|}} \quad & \lVert \vec x^+_{\boldsymbol{\alpha}} - \vec x^-_{\boldsymbol{\beta}}\rVert \\
		\textrm{s.t.} \quad                                                                                               & \vec x^+_{\boldsymbol{\alpha}} =\sum_{i\in I^+} \alpha_i \vec x_i, \quad
		\vec x^-_{\boldsymbol{\beta}} = \sum_{j\in I^-} \beta_j \vec x_j                                                                                                                                \\
		                                                                                                                  & \sum_{i\in I^+} \alpha_i = 1, \quad
		\sum_{j\in I^-} \beta_j = 1,                                                                                                                                                                    \\
		                                                                                                                  & \forall\, i\in I^+, j\in I^- \colon \alpha_i \geq 0, \beta_j \geq 0,
	\end{aligned}\label{eq:convexDistance}
\end{equation}
where $\boldsymbol{\alpha}$ and $\boldsymbol{\beta}$ are the variables determining the convex combinations for both the positive and negative classes.
Moreover, $\lambda$ is exactly the margin that is computed by the typical hard-margin SVM and from the optimal arguments $\boldsymbol{\alpha}^*$ and $\boldsymbol{\beta}^*$, we can compute the usual hard-margin solution $\mathbf{w}$ and $b$ as:
\begin{align*}
	\mathbf{w}:= & \frac{\vec x^+_{\boldsymbol{\alpha}^*} - \vec x^-_{\boldsymbol{\beta}^*}}{\lambda^2}                              \\
	b:=          & \frac{\lVert\vec x^-_{\boldsymbol{\beta}^*}\rVert^2 - \lVert\vec x^+_{\boldsymbol{\alpha}^*}\rVert^2}{2\lambda^2}
\end{align*}.

We can describe $\lVert \vec x^+_{\boldsymbol{\alpha}} - \vec x^-_{\boldsymbol{\beta}}\rVert^2$ by a sum of pairwise distances.
\begin{align}\label{eq:AOProofFinal}
	\norm[\bigg]{ \vec x^+_{\boldsymbol{\alpha}} - \vec x^-_{\boldsymbol{\beta}}}^2 \notag & =  \norm[\bigg]{ \sum_{i\in I^+} \alpha_i \vec x_i - \sum_{j\in I^-} \beta_j \vec x_j}^2                                                                                                                                    \\ \notag
	                                                                                       & =  \norm[\bigg]{ \sum_{i\in I^+} \alpha_i \vec x_i\sum_{j\in I^-} \beta_j - \sum_{j\in I^-} \beta_j \vec x_j\sum_{i\in I^+} \alpha_i}^2                                                                                     \\\notag
	                                                                                       & =  \norm[\bigg]{ \sum_{i\in I^+}\sum_{j\in I^-} \alpha_{i}\beta_{j} \vec x_i - \sum_{i\in I^+}\sum_{j\in I^-} \alpha_{i}\beta_{j} \vec x_j}^2                                                                               \\\notag
	                                                                                       & =  \norm[\bigg]{ \sum_{(i,j)\in I^+\times I^-} \delta_{i,j}(\vec x_i-\vec x_j)}^2\quad (\delta_{i, j} \coloneqq \alpha_i\beta_j)                                                                                            \\ \notag
	                                                                                       & = \sum_{(i,j)\in I^+\times I^-}\sum_{(k,l)\in I^+\times I^-} \delta_{i,j}\delta_{k,l}(\vec x_i-\vec x_j)^\tran (\vec x_k-\vec x_l)                                                                                          \\\notag
	                                                                                       & = \sum_{(i,j), (k,l)\in I^+\times I^-} \delta_{i,j}\delta_{k,l}(-\vec x^\tran_i \vec x_l-\vec x^\tran_j \vec x_k+ \vec x^\tran_i \vec x_k+ \vec x^\tran_j\vec x_l)                                                          \\
	                                                                                       & = \frac{1}{2}\sum_{\mathclap{(i,j), (k,l)\in I^+\times I^-}} \delta_{i,j}\delta_{k,l}(\lVert \vec x_i-\vec x_l\rVert^2+\lVert \vec x_j-\vec x_k\rVert^2-\lVert \vec x_i-\vec x_k\rVert^2-\lVert \vec x_j-\vec x_l\rVert^2).
\end{align}
We remark that the pairwise distances indexed by $(i, l)$ and $(j, k)$ represent inter-class distances, since $y_i = y_k = 1$ and $y_j = y_l = 0$. Along the same line, the pairwise distances
indexed by $(i, k)$ and $(j, l)$ represent intra-class distances.

\begin{proposition}\label{prop:marginincrease}
	Let $(\vec x_1, y_1), \dotsc, (\vec x_n, y_n)$ and $(\tilde{\vec x}_1, y_1), \dots, (\tilde{\vec x}_n, y_n)$ in $\mathbb R^d$ be two linearly-separable samples, with margins $\lambda$ and $\tilde{\lambda}$, respectively, with the same labels $y_i \in \{0, 1\}$. If
	\begin{align}
		\min_{y_i\neq y_j} \lVert\tilde{\vec x}_i-\tilde{\vec x}_j\rVert^2 - \lVert \vec x_i-\vec x_j\rVert^2 >
		\max_{y_i=y_j} \lVert\tilde{\vec x}_i-\tilde{\vec x}_j\rVert^2 - \lVert \vec x_i-\vec x_j\rVert^2,\label{eq:marginAssumption}
	\end{align}
	then $\tilde{\lambda}>\lambda$.
	That is, we get an increase in margin if the minimum increase in distances between classes considering the two samples is strictly larger than the maximum increase in distance within each class.
\end{proposition}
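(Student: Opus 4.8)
The plan is to exploit the convex--distance reformulation of the margin in \eqref{eq:convexDistance} together with the pairwise-distance expansion \eqref{eq:AOProofFinal}, comparing the two problems by transplanting the optimal convex weights of the $\tilde{\vec x}$-problem into the $\vec x$-problem. Recall that \eqref{eq:convexDistance} gives $4\lambda^2 = \min_{\boldsymbol\alpha,\boldsymbol\beta}\lVert \vec x^+_{\boldsymbol\alpha}-\vec x^-_{\boldsymbol\beta}\rVert^2$ and, analogously, $4\tilde\lambda^2 = \min_{\boldsymbol\alpha,\boldsymbol\beta}\lVert \tilde{\vec x}^+_{\boldsymbol\alpha}-\tilde{\vec x}^-_{\boldsymbol\beta}\rVert^2$, where the minima range over convex weights (nonnegative, summing to one) over $I^+$ and $I^-$. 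Let $\boldsymbol\alpha^\star,\boldsymbol\beta^\star$ attain the minimum for the $\tilde{\vec x}$-sample, so $4\tilde\lambda^2 = \lVert \tilde{\vec x}^+_{\boldsymbol\alpha^\star}-\tilde{\vec x}^-_{\boldsymbol\beta^\star}\rVert^2$. Writing $\delta_{i,j}\coloneqq\alpha^\star_i\beta^\star_j$ and $d^{\mathrm{inc}}_{pq}\coloneqq\lVert\tilde{\vec x}_p-\tilde{\vec x}_q\rVert^2-\lVert\vec x_p-\vec x_q\rVert^2$, I would apply \eqref{eq:AOProofFinal} to both samples with these \emph{same} weights and subtract, obtaining
\[
\lVert \tilde{\vec x}^+_{\boldsymbol\alpha^\star}-\tilde{\vec x}^-_{\boldsymbol\beta^\star}\rVert^2 - \lVert \vec x^+_{\boldsymbol\alpha^\star}-\vec x^-_{\boldsymbol\beta^\star}\rVert^2 = \tfrac12\!\!\sum_{(i,j),(k,l)\in I^+\times I^-}\!\!\delta_{i,j}\delta_{k,l}\,\Delta_{ijkl},
\]
where $\Delta_{ijkl}\coloneqq d^{\mathrm{inc}}_{il}+d^{\mathrm{inc}}_{jk}-d^{\mathrm{inc}}_{ik}-d^{\mathrm{inc}}_{jl}$; the first two terms are inter-class increases and the last two intra-class increases, exactly as noted after \eqref{eq:AOProofFinal}.

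Since $\boldsymbol\alpha^\star,\boldsymbol\beta^\star$ are also feasible convex weights for the $\vec x$-problem, the subtracted term satisfies $\lVert \vec x^+_{\boldsymbol\alpha^\star}-\vec x^-_{\boldsymbol\beta^\star}\rVert^2\ge 4\lambda^2$, so
\[
4\tilde\lambda^2 \;\ge\; 4\lambda^2 + \tfrac12\!\!\sum_{(i,j),(k,l)}\!\!\delta_{i,j}\delta_{k,l}\,\Delta_{ijkl}.
\]
It thus suffices to show the double sum is strictly positive. Here I would bound $\Delta_{ijkl}$ termwise: each inter-class increase obeys $d^{\mathrm{inc}}_{il},d^{\mathrm{inc}}_{jk}\ge\min_{y_i\neq y_j}d^{\mathrm{inc}}_{ij}$ and each intra-class increase obeys $d^{\mathrm{inc}}_{ik},d^{\mathrm{inc}}_{jl}\le\max_{y_i=y_j}d^{\mathrm{inc}}_{ij}$, so by \eqref{eq:marginAssumption},
\[
\Delta_{ijkl}\;\ge\;2\Bigl(\min_{y_i\neq y_j}d^{\mathrm{inc}}_{ij}-\max_{y_i=y_j}d^{\mathrm{inc}}_{ij}\Bigr)\;>\;0 .
\]
Because $\sum_{(i,j),(k,l)}\delta_{i,j}\delta_{k,l}=\bigl(\sum_i\alpha^\star_i\bigr)^2\bigl(\sum_j\beta^\star_j\bigr)^2=1$ and all weights are nonnegative, the double sum is at least this same strictly positive constant. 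Hence $4\tilde\lambda^2>4\lambda^2$, and since both margins are positive (both samples are linearly separable), $\tilde\lambda>\lambda$.

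The main obstacle is the degenerate index tuples with $i=k$ or $j=l$, for which the corresponding ``intra-class distance'' is the distance of a point to itself and therefore contributes $d^{\mathrm{inc}}_{ik}=0$ rather than a genuine intra-class increase. This is harmless precisely because $\max_{y_i=y_j}d^{\mathrm{inc}}_{ij}\ge 0$: the trivial self-pairs $i=j$ have zero, unchanging distance, so the bound $d^{\mathrm{inc}}_{ik}\le\max_{y_i=y_j}d^{\mathrm{inc}}_{ij}$ remains valid in the degenerate case, and the termwise estimate for $\Delta_{ijkl}$ goes through uniformly. I would flag this sign convention at the outset (equivalently, note that the maximum over $y_i=y_j$ includes the trivial pairs). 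Everything else is routine bookkeeping: checking that the transplanted weights sum to one, and that the feasibility of $\boldsymbol\alpha^\star,\boldsymbol\beta^\star$ for the original problem supplies the one-sided inequality $\lVert \vec x^+_{\boldsymbol\alpha^\star}-\vec x^-_{\boldsymbol\beta^\star}\rVert^2\ge 4\lambda^2$.
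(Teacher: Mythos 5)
Your proof is correct and takes essentially the same route as the paper's: both rest on the convex-hull reformulation \eqref{eq:convexDistance} and the pairwise-distance expansion \eqref{eq:AOProofFinal}, bound the inter-class terms below by $\Delta_{\text{min}}$ and the intra-class terms above by $\Delta_{\text{max}}$, and transplant convex weights between the two problems---the paper proves the pointwise inequality $\lVert \vec x^+_{\boldsymbol\alpha}-\vec x^-_{\boldsymbol\beta}\rVert^2 < \lVert \tilde{\vec x}^+_{\boldsymbol\alpha}-\tilde{\vec x}^-_{\boldsymbol\beta}\rVert^2$ for \emph{all} feasible weights and then minimizes, while you instantiate at the $\tilde{\vec x}$-optimal weights and invoke their feasibility for the original sample, a mirror image of the same argument. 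Your two additions are minor but genuine improvements: the explicit quantitative gap $4\tilde\lambda^2 \geq 4\lambda^2 + \Delta_{\text{min}} - \Delta_{\text{max}}$, and the observation that the degenerate tuples with $i=k$ or $j=l$ require $\Delta_{\text{max}}\geq 0$ (i.e., reading the maximum in \eqref{eq:marginAssumption} as including self-pairs), a convention on which the paper's own middle inequality also silently relies.
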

\begin{proof}
	Let
	$$\Delta_\text{min} \coloneqq \min_{y_i\neq y_j} \lVert\tilde{\vec x}_i-\tilde{\vec x}_j\rVert^2 - \lVert \vec x_i-\vec x_j\rVert^2,$$
	and
	$$\Delta_\text{max} \coloneqq \max_{y_i=y_j} \lVert\tilde{\vec x}_i-\tilde{\vec x}_j\rVert^2 - \lVert \vec x_i-\vec x_j\rVert^2.$$
	By \cref{eq:marginAssumption}, $\Delta_\text{min}>\Delta_\text{max}$. Starting at \cref{eq:AOProofFinal},
	\begin{align*}
		\lVert \vec x^+_{\boldsymbol{\alpha}} - \vec x^-_{\boldsymbol{\beta}}\rVert^2 & = \frac{1}{2}\sum_{(i,j)}\sum_{(k,l)} \alpha_{i,j}\alpha_{k,l}(\lVert \vec x_i- \vec x_l\rVert^2+\lVert \vec x_j-\vec x_k\rVert^2-\lVert \vec x_i- \vec x_k\rVert^2-\lVert \vec x_j- \vec x_l\rVert^2)                                                                 \\
		\begin{split}
			&<\frac{1}{2}\sum_{(i,j)}\sum_{(k,l)} \alpha_{i,j}\alpha_{k,l}(\lVert \vec x_i- \vec x_l\rVert^2+\Delta_\text{min}+\lVert \vec x_j- \vec x_k\rVert^2+\Delta_\text{min}\\
			&\quad\quad\quad\quad\quad\quad\quad\quad-\lVert \vec x_i-\vec x_k\rVert^2-\Delta_\text{max}-\lVert \vec x_j-\vec x_l\rVert^2-\Delta_\text{max}
		\end{split}                                                                                                                                                                                  \\
		                                                                              & \leq \frac{1}{2}\sum_{(i,j)}\sum_{(k,l)} \alpha_{i,j}\alpha_{k,l}(\lVert \tilde{\vec x}_i-\tilde{\vec x}_l\rVert^2+\lVert \tilde{\vec x}_j-\tilde{\vec x}_k\rVert^2-\lVert \tilde{\vec x}_i-\tilde{\vec x}_k\rVert^2-\lVert \tilde{\vec x}_j-\tilde{\vec x}_l\rVert^2) \\
		                                                                              & =\lVert \tilde{\vec x}^+_{\boldsymbol{\alpha}} - \tilde{\vec x}^-_{\boldsymbol{\alpha}}\rVert^2,
	\end{align*}
	where $\tilde{\vec x}^+_{\boldsymbol{\alpha}}$ and $\tilde{\vec x}^-_{\boldsymbol{\alpha}}$ are derived from applying the optimization (\ref{eq:convexDistance}) to the datapoints $(\tilde{\vec x}_1, y_1), \allowbreak\dots,\allowbreak (\tilde{\vec x}_n, y_n)$.

	In the following, we omit all of the conditions from \cref{eq:convexDistance} for simplicity. Let $\vec x^{+*}$ and $\vec x^{-*}$ be the representatives of the optimal solution to \cref{eq:convexDistance}, then
	\begin{equation*}
		\forall \boldsymbol{\alpha}: \gamma = \lVert \vec x^{+*} - \vec x^{-*}\rVert \leq \lVert \vec x^+_{\boldsymbol{\alpha}} - \vec x^-_{\boldsymbol{\alpha}}\rVert.
	\end{equation*}
	Hence,
	\begin{equation*}
		\forall \boldsymbol{\alpha}: \gamma = \lVert \vec x^{+*} - \vec x^{-*}\rVert \leq \lVert \vec x^+_{\boldsymbol{\alpha}} - \vec x^-_{\boldsymbol{\alpha}}\rVert < \lVert \tilde{\vec x}^+_{\boldsymbol{\alpha}} - \tilde{\vec x}^-_{\boldsymbol{\alpha}}\rVert,
	\end{equation*}
	which implies that
	\begin{equation*}
		\gamma < \min_{\boldsymbol{\alpha}}\lVert \tilde{\vec x}^+_{\boldsymbol{\alpha}} - \tilde{\vec x}^-_{\boldsymbol{\alpha}}\rVert \eqqcolon \tilde{\gamma},
	\end{equation*}
	showing the desired result.
\end{proof}

\paragraph{Concatenating feature vectors} We will consider concatenating two feature vectors and analyze how this affects attained margins. To this end, let $X\coloneqq \{(\vec x_i,y_i)\in\Rb^d\times\{0,1\}\mid i\in [n]\}$. When we split up $\Rb^d$ into $\Rb^{d_1}\times\Rb^{d_2}$, we write
$\vec x_i \coloneqq (\vec x_i^1,\vec x_i^2)$ with $\vec x_i^1\in\Rb^{d_1}$ and $\vec x_i^2\in\Rb^{d_2}$.

\begin{proposition}\label{prop:concat_margin}
	If $X \coloneqq \{(\vec x_{1},y_{1}),\dotsc,(\vec x_{n}, y_{n})\}$ is a sample, such that
	\begin{enumerate}
		\item $(\vec x^1_{1},y_{1}),\dotsc,(\vec x^1_{n}, y_{n})$ is $(r_1,\gamma_1)$-separable and
		\item $(\vec x^2_{1},y_{1}),\dotsc,(\vec x^2_{n}, y_{n})$ is $(r_2,\gamma_2)$-separable,
	\end{enumerate}
	then $(\vec x_1,y_1),\dots,(\vec x_n, y_n)$ is $(\sqrt{r_1^2+r_2^2} ,\sqrt{\gamma_1^2+\gamma_2^2})$-separable.
\end{proposition}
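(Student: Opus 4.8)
The plan is to verify directly the two defining conditions of $(\sqrt{r_1^2+r_2^2},\,\sqrt{\gamma_1^2+\gamma_2^2})$-separability, exploiting that the shared label vector $(y_1,\dotsc,y_n)$ induces the \emph{same} index sets $I^+\coloneqq\{i\mid y_i=1\}$ and $I^-\coloneqq\{i\mid y_i=0\}$ for both coordinate blocks $\vec x_i^1\in\Rb^{d_1}$ and $\vec x_i^2\in\Rb^{d_2}$.

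\textbf{Enclosing ball.} First I would take the witnessing balls $B(\vec p_1,r_1,d_1)$ and $B(\vec p_2,r_2,d_2)$ for the two blocks and set $\vec p\coloneqq(\vec p_1,\vec p_2)$. Since the squared $2$-norm is additive across blocks, $\norm{\vec x_i-\vec p}^2=\norm{\vec x_i^1-\vec p_1}^2+\norm{\vec x_i^2-\vec p_2}^2\le r_1^2+r_2^2$ for every $i$, so all concatenated points lie in $B(\vec p,\sqrt{r_1^2+r_2^2},d_1+d_2)$. This step is routine.

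\textbf{Margin.} Next I would use the convex-distance formulation in \eqref{eq:convexDistance}, which writes the distance between the two class hulls as $2\lambda=\min_{\boldsymbol\alpha,\boldsymbol\beta}\norm{\vec x^+_{\boldsymbol\alpha}-\vec x^-_{\boldsymbol\beta}}$ over convex coefficients. The key observation is that concatenation commutes with forming convex combinations: for any feasible $\boldsymbol\alpha,\boldsymbol\beta$, the representative $\vec x^+_{\boldsymbol\alpha}=\sum_{i\in I^+}\alpha_i\vec x_i$ decomposes blockwise as $\bigl(\sum_{i\in I^+}\alpha_i\vec x_i^1,\ \sum_{i\in I^+}\alpha_i\vec x_i^2\bigr)$, and likewise for $\vec x^-_{\boldsymbol\beta}$. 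Hence $\norm{\vec x^+_{\boldsymbol\alpha}-\vec x^-_{\boldsymbol\beta}}^2$ splits as the sum of the two blockwise squared distances. Each block representative lies in the corresponding class hull of that block, so by the hypotheses the first block distance is at least $2\gamma_1$ and the second at least $2\gamma_2$, giving $\norm{\vec x^+_{\boldsymbol\alpha}-\vec x^-_{\boldsymbol\beta}}^2\ge 4(\gamma_1^2+\gamma_2^2)$ \emph{uniformly} in $\boldsymbol\alpha,\boldsymbol\beta$. Taking the minimum over the convex coefficients yields $2\lambda\ge 2\sqrt{\gamma_1^2+\gamma_2^2}$, i.e.\ the hulls of the concatenated sample are at distance at least $2\sqrt{\gamma_1^2+\gamma_2^2}$, as required.

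\textbf{Main obstacle.} The only genuinely substantive point is that the per-block lower bounds may be summed \emph{before} minimizing over $(\boldsymbol\alpha,\boldsymbol\beta)$: one must note that a single pair $(\boldsymbol\alpha,\boldsymbol\beta)$ simultaneously determines points in both class hulls of \emph{each} block, so each block term is individually bounded below by that block's hull-to-hull distance, and no common minimizer shared across the two blocks is needed. Once this is phrased as ``bound pointwise, then minimize,'' the two bounds combine by the Pythagorean identity and the claim follows.
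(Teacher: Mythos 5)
Your proof is correct and follows essentially the same route as the paper's: both pass to the convex-combination formulation \eqref{eq:convexDistance}, split the squared distance blockwise via the Pythagorean identity (the two blocks live in orthogonal coordinate subspaces), and bound each block term by its own hull-to-hull distance for a common pair $(\boldsymbol{\alpha},\boldsymbol{\beta})$ before minimizing. If anything, your write-up is slightly more careful than the paper's, which asserts the equality $\gamma^2=\gamma_1^2+\gamma_2^2$ and implicitly uses an origin-centered ball, where only your inequality and general centers $(\vec p_1,\vec p_2)$ are needed.
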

\begin{proof}
	Let $I \coloneqq I^+\,\dot\cup\, I^-$ satisfying $y_i=1$ if, and only, if $i\in I^+$ and $y_i=0$ if, and only, if $i\in I^-$, $p\coloneqq |I|$, $p^+\coloneqq|I^+|$ and $p^-\coloneqq |I^-|$.
	Further, let $\vec x_i^+ \coloneqq \vec x_i$, $(\vec x_i^1)^+\coloneqq (\vec x_i^1,0)$, $(\vec x_i^2)^+\coloneqq (0,\vec x_i^2)$ for $i\in I^+$, and $\vec x_i^- \coloneqq \vec x_i$, $(\vec x_i^1)^-\coloneqq (\vec x_i^1,0)$, and $(\vec x_i^2)^-\coloneqq (0,\vec x_i^2)$ for $i\in I^-$.
	We collect $\vec x_i^+$, $\vec x_i^-$, $(\vec x_i^1)^+$, $(\vec x_i^2)^+$, $(\vec x_i^1)^-$, and $(\vec x_i^2)^-$ into matrices
	$\mX^+\in\Rb^{p^+\times d}$, $\mX^-\in\Rb^{p^-\times d}$, $\mX_1^+$, $\mX_2^+\in\Rb^{p^+\times d}$, and $\mX_1^-,\mX_2^-\in\Rb^{p^-\times d}$.

	The margins $\gamma_1$, $\gamma_2$, and $\gamma$ (the margin of $(\vec x_1,y_1),\dots,(\vec x_n, y_n)$) are given by
	\begin{align*}
		\gamma_1\coloneqq\min_{\boldsymbol{\alpha}\in(\Rb^{+,p^+}, \boldsymbol{\beta}\in\Rb^{+,p^-}, \mathbf{1}^\tran\boldsymbol{\alpha}=1=\mathbf{1}^\tran \boldsymbol{\beta}}\|(\mX^+_1)^\tran \boldsymbol{\alpha}-(\mX^-_1)^\tran \boldsymbol{\beta}\| \\
		\gamma_2\coloneqq\min_{\boldsymbol{\alpha}\in\Rb^{+,p^+}, \boldsymbol{\beta}\in\Rb^{+,p^-}, \mathbf{1}^\tran \boldsymbol{\alpha}=1=\mathbf{1}^\tran\boldsymbol{\beta}}\|(\mX^+_2)^\tran\boldsymbol{\alpha}-(\mX^-_2)^\tran \boldsymbol{\beta}\|   \\
		\gamma\coloneqq\min_{\boldsymbol{\alpha}\in\Rb^{+,p^+}, \boldsymbol{\beta}\in\Rb^{+,p^-}, \mathbf{1}^\tran\boldsymbol{\alpha}=1=\mathbf{1}^\tran\boldsymbol{\beta}}\|(\mX^+)^\tran\boldsymbol{\alpha}-(\mX^-)^\tran \boldsymbol{\beta}\|,
	\end{align*}
	where $\Rb^{+}$ is the set of positive real numbers and $\mathbf{1}$ is a vector of ones of appropriate size. We have
	\begin{align*}
		\|(\mX^+)^\tran \boldsymbol{\alpha}-(\mX^-)^\tran\boldsymbol{\beta}\|^2 & =\|(\mX_1^+)^{\tran} {\alpha}_1
		+ (\mX_2^+)^{\tran} {\alpha}_2-  (\mX_1^-)^\tran {\beta}_1 - (\mX_2^-)^\tran{\beta}_2\|^2                                                       \\
		                                                                        & = \|((\mX_1^+)^{\tran} {\alpha}_1 -  (\mX_1^-)^\tran {\beta}_1)+
		((\mX_2^+)^{\tran} {\alpha}_2 - (\mX_2^-)^\tran {\beta}_2)\|^2                                                                                  \\%\tag{\text{O}}                                      \\
		                                                                        & =    \|(\mX_1^+)^{\tran} {\alpha}_1 -  (\mX_1^-)^\tran {\beta}_1\|^2+
		\|(\mX_2^+)^{\tran} {\alpha}_2 - (\mX_2^-)^\tran {\beta}_2\|^2.
	\end{align*}
	The latter terms attain, by assumption, minimal values of $\gamma_1$ and $\gamma_2$, respectively. Thus, $\gamma^2=\gamma_1^2+\gamma_2^2$. Also note that $\|\vec x_i\|^2\leq r_1^2+r_2^2$ for all $i \in I$. This implies that $(\vec x_1,y_1),\dotsc,(\vec x_n, y_n)$ is $(\sqrt{r_1^2+r_2^2} ,\sqrt{\gamma_1^2+\gamma_2^2})$-separable.
\end{proof}

\paragraph{Existence of regular graphs} The following result ensures the existence of enough regular graphs needed for the proof of~\cref{thm:VCWL} and its variants.
\begin{lemma}\label{lem:regular}
	For any even $n$ and all $i\in \{0,\dots,n-1\}$, there exists an $i$-regular graph with one orbit containing all vertices.
\end{lemma}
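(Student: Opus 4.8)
The plan is to exhibit, for each target degree $i$, a \emph{circulant graph} on the vertex set $\Zb_n \coloneqq \{0,1,\dots,n-1\}$ (arithmetic taken mod $n$), since circulant graphs are automatically vertex-transitive and hence have a single orbit under their automorphism group. Recall that for a \emph{connection set} $S \subseteq \{1,\dots,n/2\}$ the circulant graph $\mathrm{Circ}_n(S)$ has an edge $(u,v)$ exactly when $u - v \equiv \pm s \pmod n$ for some $s \in S$. First I would observe that the rotation map $\sigma \colon x \mapsto x+1 \pmod n$ is a graph automorphism, because the adjacency condition depends only on the difference $u-v$; since $\langle \sigma \rangle$ already acts transitively on $\Zb_n$, every vertex lies in a single orbit. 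It therefore remains only to realize each degree $i \in \{0,\dots,n-1\}$ by a suitable choice of $S$.

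Next I would compute the degree of $\mathrm{Circ}_n(S)$. Each $s \in S \cap \{1,\dots,n/2-1\}$ contributes the two distinct neighbors $x \pm s$, whereas the element $n/2$ (available precisely because $n$ is even) contributes the single neighbor $x + n/2 \equiv x - n/2$. Hence the common degree equals $2\,|S \cap \{1,\dots,n/2-1\}| + \mathbbm{1}[\,n/2 \in S\,]$. For an even target $i = 2k$ with $0 \le k \le n/2 - 1$, I would take $S = \{1,\dots,k\}$, giving degree $2k = i$; the case $k=0$ is the empty graph and $k = n/2-1$ gives degree $n-2$, so this covers every even value in $\{0,\dots,n-1\}$.

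For an odd target $i = 2k+1$ with $0 \le k \le n/2 - 1$, I would take $S = \{1,\dots,k\} \cup \{n/2\}$, which by the degree formula yields degree $2k + 1 = i$; here $k=0$ gives the perfect-matching graph (degree $1$) and $k = n/2 - 1$ gives $K_n$ (degree $n-1$). Thus all odd values in $\{0,\dots,n-1\}$ are realized as well, completing the construction.

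The delicate point — and the place where the hypothesis that $n$ is even is essential — is the odd-degree case: by the handshake lemma an $i$-regular graph on $n$ vertices with $i$ odd forces $n$ even, and our construction supplies the missing unit of degree exactly through the self-paired element $n/2$, which belongs to $\{1,\dots,n/2\}$ only when $n$ is even. Everything else is a routine verification of the degree count and of the transitivity of $\langle\sigma\rangle$, so I expect no substantial obstacle beyond correctly accounting for this self-paired element.
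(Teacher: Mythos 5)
Your proposal is correct and takes essentially the same route as the paper: the paper's edge sets $E_c$ are exactly circulant connection classes (skip-length $c$), its $E_\text{odd}$ is your self-paired element $n/2$, and both arguments use the rotation $x \mapsto x+1$ as the automorphism witnessing a single orbit. If anything, your write-up is slightly more careful, since you make explicit the range restriction $S \subseteq \{1,\dots,n/2\}$ and the degree bookkeeping for the antipodal matching, which the paper leaves implicit.
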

\begin{proof}
	Let $n$ be even, and let $c$ be an arbitrary natural number. We define
	$$E_\text{odd}\coloneqq \{(i,i+\nicefrac{n}{2}) \mid i\in[\nicefrac{n}{2}]\},$$ and $$E_c \coloneqq \{(i,i+c \text{ mod } n) \mid i \in[n]\},$$ where $\text{mod}$ is the modulo operator with equivalence classes $[n]$. It is easily verified that for any $C \in \Nb$, $([n], \bigcup_{c\in[C]}E_c)$ is a $2C$-regular graph. Also, $([n], E_\text{odd}\cup\bigcup_{c\in[C]}E_c)$ is a $2C+1$-regular graph. The permutation, in cycle notation, $(1,2,\dots,n)$ is an automorphism for both graphs, implying that all vertices are in the same orbit.
\end{proof}

\begin{remark}
	For any odd $n$, no $i$-regular graph exists with $i$ odd. This is a classical textbook question that can be verified by handshaking. For regular graphs, $$\sum_{i\in [n]} \textsf{deg}(i) = i\cdot n.$$ Summing the degrees for each vertex counts each edge twice. Thus, $i \cdot n$ must be even, and since $n$ is odd, $i$ must be even.
\end{remark}

\subsection{Expressive power of enhanced variants}

We now prove results on the expressive power of the \wlonef.

\begin{proposition}[{\cref{prop:WLFfiner} in the main paper}]
	\label{APP:prop:WLFfiner}
	Let $G$ be a graph and $\cF$ be a set of graphs. Then, for all rounds, the \wlonef{} distinguishes at least the same vertices as the \wlone.
\end{proposition}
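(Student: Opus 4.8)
The plan is to prove the statement by induction on the round number $t$, exploiting the fact that \wlonef{} and \wlone{} share the \emph{same} update map $\REL$ and differ only in their initialization. Throughout, for two colorings $C, C'$ of the vertex set I write $C' \preceq C$ (``$C'$ refines $C$'') to mean that $C'(v) = C'(w)$ implies $C(v) = C(w)$ for all vertices $v,w$; this is precisely the statement that $C'$ distinguishes at least the same vertices as $C$. The goal is thus to establish $C^{1,\cF}_t \preceq C^1_t$ for every $t \geq 0$.

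For the base case $t = 0$, we have $C^1_0 = \ell$ and $C^{1,\cF}_0 = \ell_{\cF}$, and by construction $\ell_{\cF}$ refines $\ell$ (for unlabeled graphs $\ell$ is constant, so this is immediate; in the labeled setting $\ell_\cF$ is taken to incorporate $\ell$). Hence $C^{1,\cF}_0 \preceq C^1_0$.

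For the inductive step, assume $C^{1,\cF}_{t-1} \preceq C^1_{t-1}$ and suppose $C^{1,\cF}_t(v) = C^{1,\cF}_t(w)$ for two vertices $v, w$. Since $\REL$ is injective, this equality forces both components of its argument to coincide: first, $C^{1,\cF}_{t-1}(v) = C^{1,\cF}_{t-1}(w)$, and second, the neighbor multisets $\oms C^{1,\cF}_{t-1}(u) \mid u \in N(v) \cms$ and $\oms C^{1,\cF}_{t-1}(u) \mid u \in N(w) \cms$ are equal. The induction hypothesis immediately yields $C^1_{t-1}(v) = C^1_{t-1}(w)$ from the first fact. For the second, observe that $C^{1,\cF}_{t-1} \preceq C^1_{t-1}$ means the \wlone-color of a vertex is a function of its \wlonef-color: there is a well-defined map $f$ with $C^1_{t-1} = f \circ C^{1,\cF}_{t-1}$. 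Applying $f$ elementwise to the two equal multisets of \wlonef-colors yields equal multisets of \wlone-colors, i.e.\ $\oms C^1_{t-1}(u) \mid u \in N(v) \cms = \oms C^1_{t-1}(u) \mid u \in N(w) \cms$. Both components of the \wlone{} update thus agree for $v$ and $w$, so $C^1_t(v) = C^1_t(w)$, establishing $C^{1,\cF}_t \preceq C^1_t$ and closing the induction.

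The only genuinely delicate point is the ``factoring through'' argument in the inductive step---that refinement at round $t-1$ transports through the multiset aggregation to force the \wlone{} neighbor multisets to agree. This is where one must use that $C^1_{t-1}$ factors as $f \circ C^{1,\cF}_{t-1}$ and that applying a fixed function elementwise preserves multiset equality; everything else (injectivity of $\REL$, the base case) is routine bookkeeping. I would also flag explicitly that the base case presupposes $\ell_\cF \preceq \ell$, which holds trivially in the unlabeled setting on which the paper's experiments focus.
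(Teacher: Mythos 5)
Your proof is correct and takes essentially the same route as the paper's: induction on $t$ showing that $C^{1,\cF}_t$ refines $C^1_t$, with the inductive step transporting refinement through the neighbor multisets---the paper realizes this via a color-preserving bijection $\varphi \colon N(v) \to N(w)$ extracted from the multiset equality, which is interchangeable with your factoring $C^1_{t-1} = f \circ C^{1,\cF}_{t-1}$ applied elementwise. Your explicit caveat on the base case is apt: the paper's definition of $\ell_\cF$ depends only on the subgraph labels $\ell_F$ and not on $\ell$, and its proof handles this by implicitly assuming $C^1_0$ induces a single color class, exactly the unlabeled setting you flag.
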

\begin{proof}
	Using, induction on $t$, we show that, for all vertices $v,w \in V(G)$,
	\begin{equation}\label{wlf}
		C^{1,\cF}_{t}(v) = C^{1,\cF}_{t}(w) \text{ implies } C^{1}_{t}(v) = C^{1}_{t}(w).
	\end{equation}
	The base case, $t = 0$, is clear since \wlonef{} refines the single color class induced by $C^{1}_{0}$. For the induction, assume that \cref{wlf} holds and assume that, $C^{1,\cF}_{t+1}(v) = C^{1,\cF}_{t+1}(w)$ holds. Hence, $C^1_{t}(v) = C^1_{t}(w)$ and
	\begin{equation*}
		\oms  C^{1,\cF}_{t}(a) \mid a \in N(v) \cms = \oms C^{1,\cF}_{t}(b) \mid b \in N(w)  \cms
	\end{equation*}
	holds. Hence, there is a \new{color-preserving bijection} $\varphi \colon N(v) \to N(w)$ between the above two multisets, i.e., $C^{1,\cF}_{t}(a) =  C^{1,\cF}_{t}(\varphi(a))$, for $a \in N(v)$. Hence, by \cref{wlf}, $C^{1}_{t}(a) =  C^{1}_{t}(\varphi(a))$, for $a \in N(v)$. Consequently, it holds that $C^1_{t+1}(v) = C^1_{t+1}(w)$, proving the desired result.
\end{proof}
In addition, by choosing the set of graphs $\cF$ appropriately, \wlonef{} gets strictly more expressive than \wlone{} in distinguishing non-isomorphic graphs.
\begin{proposition}[{\cref{thm:wlonef_strong} in the main paper.}]
	\label{APP:thm:wlonef_strong}
	For every $n \geq 6$, there exists at least one pair of non-isomorphic graphs and a set of graphs $\cF$ containing a single constant-order graph, such that, for all rounds, \wlone{} does not distinguish them while \wlonef{} distinguishes them after a single round.
\end{proposition}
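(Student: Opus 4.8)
The plan is to exhibit, for each $n \geq 6$, a pair of $2$-regular graphs on $n$ vertices together with $\cF \coloneqq \{C_3\}$, where $C_3$ is the triangle (a fixed, constant-order graph independent of $n$). Concretely, I would set $H_n \coloneqq C_n$, the $n$-cycle, and $G_n \coloneqq C_3 \,\dot\cup\, C_{n-3}$, the disjoint union of a triangle and an $(n-3)$-cycle. Note that $n-3 \geq 3$ holds exactly when $n \geq 6$, so both components are genuine cycles; hence $G_n$ is well-defined and $2$-regular, and so is $H_n$. These are the two non-isomorphic graphs whose behavior I would analyze.

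First I would argue that \wlone{} never distinguishes $G_n$ from $H_n$. Since we work with unlabeled graphs, color refinement starts from the uniform coloring, and on any $2$-regular graph each vertex has exactly two neighbors, all carrying the current common color; thus every vertex receives the same refined color in each round, and by induction the coloring stays uniform. Consequently, on both $G_n$ and $H_n$ the coloring at every round consists of a single color class of size $n$, so the two color histograms coincide at every round. Therefore \wlone{} fails to distinguish them, for any number of rounds. The only ingredient here is the standard fact that color refinement is constant on regular graphs, which holds regardless of connectivity.

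Next I would show that \wlonef{} separates the two graphs already at initialization, and hence after a single round, since the partition only refines thereafter. The initial labeling $\ell_\cF$ splits the vertex set according to whether a vertex lies in a subgraph isomorphic to $C_3$. In $H_n = C_n$ no vertex lies in a triangle (as $n \geq 4$), so all $n$ vertices receive the ``not in $C_3$'' label. In $G_n$, the three vertices of the $C_3$ component lie in a triangle, so at least three vertices receive the ``in $C_3$'' label. Thus the round-$0$ color histograms of $G_n$ and $H_n$ differ in the count of the ``in $C_3$'' color, so \wlonef{} distinguishes them. Finally, $G_n$ and $H_n$ are non-isomorphic because $H_n$ is connected while $G_n$ is not.

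The main point to handle carefully is the boundary case analysis inside $G_n$: for $n=6$ the second component $C_{n-3}$ is itself a triangle, so all six vertices are labeled ``in $C_3$'', whereas for $n \geq 7$ the cycle $C_{n-3}$ is triangle-free and exactly three vertices are labeled ``in $C_3$''. In either regime the histogram of $G_n$ has strictly more ``in $C_3$'' vertices than $H_n$ (which has none), so the distinguishing conclusion is uniform over all $n \geq 6$. Beyond this bookkeeping, the proof is routine, the substantive content being the uniform-coloring behavior of \wlone{} on regular graphs that underlies the non-distinguishability claim.
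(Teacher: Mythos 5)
Your proposal is correct and takes essentially the same approach as the paper: both exhibit a pair of $2$-regular graphs, one containing a triangle and one triangle-free, with $\cF = \{C_3\}$, so that \wlone{} stays uniform on both (regularity) while the initial labeling $\ell_\cF$ already yields differing color histograms. The only difference is how you handle general $n \geq 6$ --- the paper pads the fixed pair $C_6$ versus $C_3\,\dot\cup\,C_3$ with $n-6$ isolated vertices, whereas you take $C_n$ versus $C_3\,\dot\cup\,C_{n-3}$, a cosmetic variant that keeps both graphs $2$-regular for every $n$.
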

\begin{proof}
	For $n = 6$, we can choose a pair of a $6$-cycle and the disjoint union of two $3$-cycles. Since both graphs are $2$-regular, the \wlone{} cannot distinguish them. By choosing $\cF = \{ C_3 \}$, the \wlonef{} distinguishes them. For $n > 6$, we can simply pad the graphs with $n-6$ isolated vertices.
\end{proof}

\subsection{Margin-based upper and lower bounds on the VC dimension of Weisfeiler--Leman-based kernels}\label{APP:vcbounds}

We first state the upper bound that we will be using for all the following cases, which is a classical result, for instance based on fat-shattering.

\begin{lemma}[Theorem 1.6 in \citep{bartlett1999generalization}] Let  $\Sb\subseteq\Rb^d$.
	\begin{equation*}
		\vcdim(\Hb_{r,\lambda}(\Sb)) \in O(\nicefrac{r^2}{\lambda^2}).
	\end{equation*}
\end{lemma}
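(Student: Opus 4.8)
The plan is to bound the size of any shattered set directly. Suppose $U=\{\vec x_1,\dots,\vec x_m\}\subseteq\Sb$ is shattered by $\Hb_{r,\lambda}(\Sb)$. By the definition of shattering for partial concepts, for every labeling $\pmb\tau\in\{0,1\}^m$ there is a concept $c\in\Hb_{r,\lambda}(\Sb)$ with $c(\vec x_i)=\tau_i$ for all $i$; since then $\vec x_1,\dots,\vec x_m\in\mathsf{supp}(c)$, the labeled sample $(\vec x_1,\tau_1),\dots,(\vec x_m,\tau_m)$ is $(r,\lambda)$-separable. Thus, for each of the $2^m$ labelings, all points lie in a common ball of radius $r$ and the two classes are separated with margin $\lambda$. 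The goal is to show $m=O(\nicefrac{r^2}{\lambda^2})$ via a second-moment (random-sign) averaging argument; note that the bound must come out independent of $d$, so the ambient dimension plays no role beyond the membership $\Sb\subseteq\Rb^d$.

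First I would translate the margin condition into homogeneous form. Recoding labels into $\{-1,+1\}$, the $(r,\lambda)$-separability of a labeling $\pmb\tau$ with both classes nonempty yields, via the separating-hyperplane/SVM duality relating the distance $\geq 2\lambda$ between the two convex hulls to a margin-$\lambda$ halfspace, a unit vector $\vec w_{\pmb\tau}$ and an offset $b_{\pmb\tau}\in\Rb$ with $\tau_i(\vec w_{\pmb\tau}^\tran\vec x_i-b_{\pmb\tau})\geq\lambda$ for all $i$. To remove the offset I would lift each point to $\tilde{\vec x}_i\coloneqq(\vec x_i,r)\in\Rb^{d+1}$, so that $\|\tilde{\vec x}_i\|^2\leq 2r^2$, and set $\tilde{\vec w}_{\pmb\tau}\coloneqq(\vec w_{\pmb\tau},-b_{\pmb\tau}/r)$, which satisfies $\tilde{\vec w}_{\pmb\tau}^\tran\tilde{\vec x}_i=\vec w_{\pmb\tau}^\tran\vec x_i-b_{\pmb\tau}$. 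Since all points lie in the radius-$r$ ball and both classes are present, $|b_{\pmb\tau}|\leq r$, hence $\|\tilde{\vec w}_{\pmb\tau}\|\leq\sqrt2$; normalizing gives a unit vector with homogeneous margin at least $\nicefrac{\lambda}{\sqrt2}$ over points of norm at most $\sqrt2\,r$.

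The core step is then the averaging argument. For a fixed labeling $\pmb\tau$, summing the margin inequalities and applying Cauchy--Schwarz with the normalized $\|\tilde{\vec w}_{\pmb\tau}\|=1$ gives $\bigl\|\sum_{i} \tau_i\tilde{\vec x}_i\bigr\|\geq \nicefrac{m\lambda}{\sqrt2}$, so $\bigl\|\sum_i\tau_i\tilde{\vec x}_i\bigr\|^2\geq \nicefrac{m^2\lambda^2}{2}$ for every $\pmb\tau$ with both classes nonempty. Taking the expectation over $\pmb\tau$ drawn uniformly from $\{-1,+1\}^m$, independence of the signs kills the cross terms, so $\mathbb E\bigl\|\sum_i\tau_i\tilde{\vec x}_i\bigr\|^2=\sum_i\|\tilde{\vec x}_i\|^2\leq 2mr^2$. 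Combining the per-labeling lower bound (which, discarding only the two constant labelings, lower-bounds the expectation up to a constant factor) with this upper bound yields $m^2\lambda^2\lesssim mr^2$, i.e. $m=O(\nicefrac{r^2}{\lambda^2})$, as claimed.

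The main obstacle is the handling of the offset $b_{\pmb\tau}$: the naive averaging leaves a term $b_{\pmb\tau}\sum_i\tau_i$ that is not controlled by the random signs, so the lifting trick together with the a priori bound $|b_{\pmb\tau}|\leq r$ (using that the points sit in the radius-$r$ ball with both classes present) is what makes the argument go through. A secondary, minor point is that the two all-equal labelings carry no margin guarantee; since they are only $2$ out of $2^m$ labelings, they affect the expectation only by a constant factor and do not alter the asymptotics.
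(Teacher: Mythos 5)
Your proof is correct, but it is worth noting that the paper does not actually prove this lemma at all: it is imported verbatim as Theorem~1.6 of \citet{bartlett1999generalization}, with only the remark that it is a classical result ``for instance based on fat-shattering.'' What you have done is reconstruct, from scratch, essentially the standard argument underlying that citation (the same second-moment/random-signs computation also drives the dimension-free bound for $\Hb_{r,\lambda}(\Rb^d)$ in \citet{Alo+2021}): each of the $2^m$ labelings of a shattered set yields, via hull-separation duality, a margin-$\lambda$ affine classifier; the lifting $\vec x_i \mapsto (\vec x_i, r)$ together with the a priori bound $|b_{\pmb\tau}| \leq r$ homogenizes it at the cost of constant factors ($\sqrt{2}$ in both radius and margin); Cauchy--Schwarz gives $\lVert \sum_i \tau_i \tilde{\vec x}_i \rVert \geq m\lambda/\sqrt{2}$ per non-constant labeling, while independence of uniform signs gives $\mathbb{E}\lVert \sum_i \tau_i \tilde{\vec x}_i \rVert^2 = \sum_i \lVert \tilde{\vec x}_i \rVert^2 \leq 2mr^2$, forcing $m = O(\nicefrac{r^2}{\lambda^2})$ with an explicit constant. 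Discarding the two constant labelings is handled correctly, since the squared norm is nonnegative there and they contribute only a factor $(2^m-2)/2^m \geq \nicefrac{1}{2}$ for $m \geq 2$. The one step you leave implicit is that the ball $B(\vec p, r, d)$ in the definition of $(r,\lambda)$-separability has an \emph{arbitrary} center $\vec p$, whereas your bounds $\lVert \vec x_i \rVert \leq r$ and $|b_{\pmb\tau}| \leq r$ assume it is the origin; since containment in the ball is a property of the fixed point set alone and both hull distances and shattering are translation invariant, a single translation makes this a harmless ``without loss of generality,'' but it should be stated. In sum: the paper's route buys brevity by outsourcing to the fat-shattering literature; yours buys a self-contained, elementary proof with explicit constants, at the price of the homogenization bookkeeping.
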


We now prove the VC dimension theory results from the main paper.
In the following, we will reuse our notation of splitting up $\Rb^d$ into $\Rb^{d_1}\times\Rb^{d_2}$. We write
$\vec x_i=(\vec x_i^{(1)},\vec x_i^{(2)})$ with $\vec x_i^{(1)}\in\Rb^{d_1}$ and $\vec x_i^{(2)}\in\Rb^{d_2}$. Further, let $(\vec x_i^{(1)})^+\coloneqq (\vec x_i^{(1)},0)$, and $(\vec x_i^{(2)})^+\coloneqq (0,\vec x_i^{(2)})$.
\begin{lemma}[{\cref{lem:VC_lower_F} in the main paper}]  \label{APP:lem:VC_lower_F}
	Let  $\Sb\subseteq\Rb^d$. If $\Sb$ contains $m\coloneqq\lfloor\nicefrac{r^2}{\lambda^2}\rfloor$
	vectors $\vec b_1,\dotsc,\vec b_m \in \Rb^d$ with $\vec b_i:=(\vec b_i^{(1)}, \vec b_i^{(2)})$ and $\vec b_1^{(2)},\dotsc,\vec b_m^{(2)}$ being pairwise orthogonal, $\|\vec b_i\|=r'$, and $\|\vec b_i^{(2)}\|=r$,
	then
	\begin{equation*}
		\vcdim(\Hb_{r',\lambda}(\Sb)) \in \Omega(\nicefrac{r^2}{\lambda^2}).
	\end{equation*}
\end{lemma}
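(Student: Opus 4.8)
The plan is to prove the lower bound by exhibiting the set $U \coloneqq \{\vec b_1,\dotsc,\vec b_m\}$ as a shattered set, so that $\vcdim(\Hb_{r',\lambda}(\Sb)) \geq m = \lfloor\nicefrac{r^2}{\lambda^2}\rfloor \in \Omega(\nicefrac{r^2}{\lambda^2})$. The first step is to reduce shattering to a single separability statement. Given an arbitrary labeling $\pmb\tau \in \{0,1\}^m$, I would define the partial concept $h_{\pmb\tau}$ that equals $\tau_i$ on $\vec b_i$ and $\star$ everywhere else on $\Sb$ (the $\vec b_i$ are pairwise distinct, since their orthogonal second components all have positive norm $r$). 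Then $\mathsf{supp}(h_{\pmb\tau}) = U$, and to conclude $h_{\pmb\tau}\in\Hb_{r',\lambda}(\Sb)$ it suffices to show that the \emph{fully} labeled sample $(\vec b_1,\tau_1),\dotsc,(\vec b_m,\tau_m)$ is $(r',\lambda)$-separable. Indeed, every finite subset of $\mathsf{supp}(h_{\pmb\tau})$ is then also $(r',\lambda)$-separable, because a common enclosing ball is inherited by subsets, and deleting points only shrinks the two class-wise convex hulls, hence cannot decrease the distance between them.

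It then remains to verify both separability conditions for each $\pmb\tau$. Condition (1) is immediate: since $\|\vec b_i\| = r'$, every $\vec b_i$ lies in $B(\vec 0, r', d)$, giving a common enclosing ball of radius $r'$. The crux is condition (2), the margin. Writing $I^+ \coloneqq \{i : \tau_i = 1\}$ and $I^- \coloneqq \{i : \tau_i = 0\}$ (if one class is empty the condition is vacuous), a generic pair of points in the two convex hulls is $\sum_{i\in I^+}\alpha_i\vec b_i$ and $\sum_{j\in I^-}\beta_j\vec b_j$ with nonnegative coefficients summing to one in each class. Splitting $\vec b_i=(\vec b_i^{(1)},\vec b_i^{(2)})$, the squared distance separates into a first-component term, which is nonnegative and can be discarded, plus a second-component term. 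Because $\vec b_1^{(2)},\dotsc,\vec b_m^{(2)}$ are pairwise orthogonal with $\|\vec b_i^{(2)}\| = r$ and $I^+,I^-$ are disjoint, the second term collapses to $r^2\bigl(\sum_{i\in I^+}\alpha_i^2 + \sum_{j\in I^-}\beta_j^2\bigr)$. Applying Cauchy--Schwarz, $\sum_{i\in I^+}\alpha_i^2 \geq \nicefrac{1}{|I^+|}$ and likewise for $\beta$, so the squared distance is at least $r^2\bigl(\nicefrac{1}{|I^+|}+\nicefrac{1}{|I^-|}\bigr)$. This expression is bounded below by its value at the balanced split $|I^+|=|I^-|=\nicefrac m2$, namely $\nicefrac{4r^2}{m}$; since $m = \lfloor\nicefrac{r^2}{\lambda^2}\rfloor \leq \nicefrac{r^2}{\lambda^2}$, this is at least $4\lambda^2 = (2\lambda)^2$. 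Hence the two convex hulls are at distance at least $2\lambda$, establishing $(r',\lambda)$-separability for every labeling and thereby the shattering of $U$.

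I expect the main obstacle to be bookkeeping the two radii rather than any deep difficulty: $r$ is what controls the margin, entering only through the orthogonal second components, whereas $r'\geq r$ governs the enclosing ball through the full norm $\|\vec b_i\|$, and one must keep the concept-class parameter $r'$ distinct from the ``effective'' radius $r$ throughout the distance computation. A secondary care point is the subsample-closure of $(r',\lambda)$-separability invoked in the reduction, so that the partial concept $h_{\pmb\tau}$ genuinely lies in $\Hb_{r',\lambda}(\Sb)$, together with the degenerate single-class labelings for which condition (2) is vacuously satisfied.
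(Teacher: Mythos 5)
Your proof is correct, but it takes a genuinely different route from the paper's. The paper follows \citet{Alo+2021} and argues in the \emph{dual}: for each labeling $(A,B)$ it exhibits the explicit witness $\vec w \coloneqq \frac{\lambda}{r^2}\bigl(\sum_{i\in A}(\vec b_i^{(2)})^+ - \sum_{i\in B}(\vec b_i^{(2)})^+\bigr)$, checks via orthogonality that $\vec w^\tran \vec b_j = \pm\lambda$, and concludes that the two convex hulls are at distance at least $2\lambda$; this last step silently uses $\|\vec w\|^2 = \lambda^2 m/r^2 \leq 1$, which is exactly where $m \leq \nicefrac{r^2}{\lambda^2}$ enters, and the paper leaves both this norm computation and the enclosing-ball condition implicit. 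You instead argue in the \emph{primal}: you lower-bound the distance between $\conv{\{\vec b_i \mid \tau_i=1\}}$ and $\conv{\{\vec b_i \mid \tau_i=0\}}$ directly, discarding the first-component term, collapsing the second-component term to $r^2\bigl(\sum_i \alpha_i^2 + \sum_j \beta_j^2\bigr)$ by orthogonality, and applying Cauchy--Schwarz plus the AM--HM bound $\nicefrac{1}{a}+\nicefrac{1}{b}\geq \nicefrac{4}{(a+b)}\geq \nicefrac{4}{m}$ to get squared distance at least $4\lambda^2$. The two arguments are dual faces of the same geometry (max margin equals min hull distance), but yours is more self-contained---no witness needs to be guessed, the role of $m \leq \nicefrac{r^2}{\lambda^2}$ is fully explicit, and you in fact prove the sharper bound $r^2(\nicefrac{1}{|I^+|}+\nicefrac{1}{|I^-|})$ on the squared hull distance---while the paper's is shorter once the witness is known and plugs directly into the $(r,\lambda)$-separability definition via a hyperplane. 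You also spell out details the paper omits: the subsample-closure of $(r',\lambda)$-separability needed so that $h_{\pmb\tau}$ genuinely lies in $\Hb_{r',\lambda}(\Sb)$, the degenerate one-class labelings, the distinctness of the $\vec b_i$, and the clean bookkeeping separating the ball radius $r'$ from the effective margin radius $r$.
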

\begin{proof}
	Following the argument in \citet{Alo+2021}, we show that the vectors
	$\vec b_1,\dotsc,\vec b_m$ can be shattered. Indeed, let $A$ and $B$
	be two arbitary sets partitioning $[m]$. Consider the vector
	\begin{equation*}
		\vec w\coloneqq \frac{\lambda}{r^2}\Bigg(\sum_{i\in A}(\vec b_i^{(2)})^+ - \sum_{i\in B}(\vec b_i^{(2)})^+\Bigg).
	\end{equation*}
	We observe that, because of assumptions underlying the vectors $\vec b_i$, we have
	\begin{align*}
		\vec w^\tran \vec b_j=\begin{cases} \left(\frac{\lambda}{r^2}\right) \cdot (\vec b_j^{(2)})^\tran \vec b_j^{(2)}=\lambda & \text{if $j\in A$}  \\
              -\left(\frac{\lambda}{r^2}\right) \cdot (\vec b_j^{(2)})^\tran \vec b_j^{(2)}=-\lambda
                                                                                                   & \text{if $j\in B$}.
		                      \end{cases}
	\end{align*}
	In other words, $\vec w$ witnesses that the distance between the convex hull
	of $\{\vec b_i\mid i\in A\}$ and $\{\vec b_i\mid i\in B\}$ is at least $2\lambda$, implying the result.
\end{proof}

In the following, we will heavily rely upon \cref{APP:lem:VC_lower_F} and more specifically we can construct $m = \lfloor\nicefrac{r^2}{\lambda^2}\rfloor$ graphs. Since we will be using regular graphs for simplicity where each regular graph has different regularity, we require $n\geq m$, which is true for $n \geq \nicefrac{r^2}{\lambda^2}$. Notice that this requirement can be relaxed, and we could, for instance, consider graphs with nodes of two regularities, which would significantly lower the requirement on $n$. However, for these graphs the construction and proofs would become significantly more complex as we would have to additionally deal with signal propagation within these graphs until we can guarantee orthogonality of the \wlone-feature vectors. For this type of proof, we believe $e^{\nicefrac{n}{e}} \geq \nicefrac{r^2}{\lambda^2}$ would need to hold. However, we leave this to future research.
\begin{theorem}\label{APP:thm:VCWL}
	For any $T, \lambda>0$, we have,
	\begin{align*}
		 & \vcdim(\Hb_{\sqrt{T+1}n,\lambda}(\cE_{\mathsf{WL}}(n,d_T))) \in{\Omega}(\nicefrac{r^2}{\lambda^2}), \text{ for } r=\sqrt{T}n  \text{ and } n\geq \nicefrac{r^2}{\lambda^2}, \\
		 & \vcdim(\Hb_{1,\lambda}(\widebar{\cE}_{\mathsf{WL}}(n,d_T)))\in{\Omega}(\nicefrac{1}{\lambda^2}), \text{ for } r=\sqrt{T/(T+1)} \text{ and } n\geq\nicefrac{r^2}{\lambda^2}.
	\end{align*}
\end{theorem}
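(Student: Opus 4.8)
The plan is to instantiate \cref{APP:lem:VC_lower_F} with an explicit family of graphs whose \wlone{} embeddings realise the orthogonality pattern demanded by that lemma. Only the lower bounds require an argument, the matching upper bounds following from \citet{bartlett1999generalization} since $\cE_{\mathsf{WL}}(n,d_T)$ maps into a finite-dimensional space. Thus it suffices, for each of the two statements, to produce $m \coloneqq \lfloor r^2/\lambda^2\rfloor$ graphs in $\cG_n$ whose embeddings $\vec b_1,\dotsc,\vec b_m$ decompose as $\vec b_j = (\vec b_j^{(1)}, \vec b_j^{(2)})$ with the vectors $\vec b_1^{(2)},\dotsc,\vec b_m^{(2)}$ pairwise orthogonal and with the norms $\|\vec b_j\| = r'$ and $\|\vec b_j^{(2)}\| = r$ prescribed there.

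First I would assume $n$ even, handling odd $n$ by appending a single isolated vertex (which perturbs the bounds by at most a constant), and apply \cref{lem:regular} to obtain $i_j$-regular graphs $G_1,\dotsc,G_m$ on $n$ vertices for $m$ distinct degrees $i_1,\dotsc,i_m \in \{0,\dotsc,n-1\}$; the hypothesis $n \geq r^2/\lambda^2 = m$ ensures enough admissible degrees exist. Ignoring vertex labels, every graph starts monochromatic, and since each $G_j$ is regular this colour is preserved under refinement, so $G_j$ stays monochromatic at every iteration. Because the degrees are pairwise distinct, $\REL$ assigns each $G_j$ a colour at iteration $1$ that no other graph shares, and---running \wlone{} consistently over all graphs as in the kernel---an easy induction shows this private colour persists at every iteration $t \geq 1$, while colours from different iterations never coincide as $\REL$ always draws a previously unused integer.

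Consequently $\phi^{(T)}_{\textsf{WL}}(G_j)$ carries the value $n$ on the single shared iteration-$0$ coordinate and the value $n$ on $T$ coordinates private to $G_j$ (one per iteration in $[T]$), and is zero elsewhere. Writing $\vec b_j^{(1)}$ for the iteration-$0$ block and $\vec b_j^{(2)}$ for the remaining blocks, I read off $\|\vec b_j\| = \sqrt{T+1}\,n \eqqcolon r'$ and $\|\vec b_j^{(2)}\| = \sqrt{T}\,n = r$, while disjointness of the private coordinates of distinct graphs gives $\langle \vec b_j^{(2)}, \vec b_k^{(2)}\rangle = 0$ for $j \neq k$; \cref{APP:lem:VC_lower_F} then delivers the first bound. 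For the normalised embeddings the same $G_j$ have iteration-$0$ block of norm $1/\sqrt{T+1}$ and remaining block $\vec b_j^{(2)}$ of norm $\sqrt{T/(T+1)} = r$, still pairwise orthogonal, while the full vectors are unit ($r'=1$); \cref{APP:lem:VC_lower_F} yields $\vcdim(\Hb_{1,\lambda}(\widebar{\cE}_{\mathsf{WL}}(n,d_T))) \in \Omega(\nicefrac{r^2}{\lambda^2}) = \Omega(\nicefrac{1}{\lambda^2})$, where the final step uses $T/(T+1) \geq \nicefrac12$.

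The principal obstacle is making the colour-propagation claim airtight: I must show by induction on $t$ that the iteration-$1$ uniqueness of each $G_j$'s colour is inherited at every later step (each vertex of $G_j$ then sees only $G_j$'s own colour, with multiplicity $i_j$, so $\REL$ again returns a colour private to $G_j$), and that the iteration index itself separates colours because $\REL$ never recycles integers. Once this structural description of the feature map is secured, the norm and orthogonality computations are routine and the lemma closes both cases; the sole remaining bookkeeping---availability of $m$ distinct admissible degrees---is precisely guaranteed by $n \geq \nicefrac{r^2}{\lambda^2}$ together with \cref{lem:regular}.
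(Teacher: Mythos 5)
Your proposal is correct and follows essentially the same route as the paper's own proof: both reduce to the lower bound (the upper bound coming from \citet{bartlett1999generalization}), instantiate \cref{APP:lem:VC_lower_F} with the regular graphs of pairwise distinct degrees supplied by \cref{lem:regular}, and use the fact that each such graph stays monochromatic with a private colour from iteration $1$ onward, so the shared iteration-$0$ block plus the pairwise orthogonal private blocks give exactly $r' = \sqrt{T+1}\,n$ and $r = \sqrt{T}\,n$ (rescaled by $\nicefrac{1}{r'}$ in the normalised case, with $T/(T+1)\geq\nicefrac{1}{2}$ closing the second bound). The only divergence is the odd-$n$ case, where you pad with an isolated vertex (which mildly breaks monochromaticity, degenerates the $0$-regular graph, and shrinks the private-block norm to $\sqrt{T}(n-1)$ --- all constant-factor effects, as you note) whereas the paper instead restricts to the $\nicefrac{(n+1)}{2}$ available even-degree regular graphs; both patches yield the same asymptotic bound.
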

\begin{proof}
	The upper bounds follow from the general upper bound described earlier. For the lower bound, we show that for even $n\geq r^2/\lambda^2$, there exist
	$m=\lfloor r^2/\lambda^2\rfloor$ graphs $G_1,\dotsc, G_m$ in $\cG_n$ such that the vectors $\vec b_i\coloneqq\phi^{(1)}_{\textsf{WL}}(G_i)$ and
	$\widebar{\vec b}_i\coloneqq\widebar{\phi}^{(1)}_{\textsf{WL}}(G_i)$ satisfy the assumptions of \cref{APP:lem:VC_lower_F}.  Indeed, we can simply consider $G_i$ to be an $(i-1)$-regular graph
	of order $n$; see \cref{lem:regular}. We break up the feature vectors into two parts: a one-dimensional part corresponding to the information related to the initial color and the remaining part containing all other information. We remark that for the \wlone{} and for unlabeled graphs, all vertices have the same initial color. The interesting information is contained in the second part. If we inspect the \wlone{} feature vectors, excluding the initial colors, for $T=1$ of $G_i$, we obtain $(0,\dotsc,\underbrace{n}_{\text{pos }i},\dotsc,0)$ in the unnormalized case, and
	$\frac{1}{\sqrt{1+1}n}(0,\dotsc,\underbrace{n}_{\text{pos }i},\dotsc,0)$
	in the normalized case. It is readily verified that $\vec b_i^{(2)}\coloneqq (0,\dotsc,\underbrace{n}_{\text{pos }i},\dotsc,0)$ and $\vec b_i^{(1)}$ being the remaining initial colors
	are vectors satisfying the assumptions of \cref{APP:lem:VC_lower_F} in the unnormalized case. For larger $T$, $\vec b_i^{(2)}$ is $\phi^{(1)}_{\textsf{WL}}(G_i)$ except for the initial colors. Note that $\|\vec b_i^{(2)}\| = \sqrt{T}n = r$ and $\|\vec b_i\| = \sqrt{T+1}n \colon= r'$.  For the normalized case, one simply needs to rescale with $\nicefrac{1}{r'}$. Note that for $T>0$, $\nicefrac{1}{2}\leq\nicefrac{r^2}{r'^2}<1$. This implies a lower bound of $\Omega(\frac{r^2}{\lambda^2})$ in the unnormalized case, and $\Omega(\frac{r^2}{\lambda^2r'^2})=\Omega(\frac{1}{\lambda^2})$ in the normalized case.

	So far, we assumed $n$ to be even. For odd $n$, there is a slight technicality in that we can construct all $r$-regular graphs where $r$ is even, i.e., we can construct $\nicefrac{n+1}{2}$ regular graphs. Analogously this means for odd $n$ and $\nicefrac{n+1}{2}\geq r^2/\lambda^2$, which is equivalent to $n \geq \nicefrac{2r^2}{\lambda^2}-1$, by a slight variant of \cref{APP:lem:VC_lower_F} this implies a lower bound of $\Omega(\nicefrac{2r^2}{\lambda^2}-1)=\Omega(\nicefrac{r^2}{\lambda^2})$. Analogous to the normalized case can be considered for odd $n$ and results in the same bound, which proves the desired result.
\end{proof}
\begin{theorem}
	\label{APP:thm:VCWLF}
	Let $\cF$ be a finite set of graphs.
	For any $T, \lambda>0$, we have,
	\begin{align*}
		 & \vcdim(\Hb_{\sqrt{T+1}n,\lambda}(\cE_{\mathsf{WL}, \cF}(n,d_T))) \in{\Omega}(\nicefrac{r^2}{\lambda^2}), \text{ for } r=\sqrt{T}n  \text{ and } n\geq \nicefrac{r^2}{\lambda^2}  \\
		 & \vcdim(\Hb_{1,\lambda}(\widebar{\cE}_{\mathsf{WL}, \cF}(n,d_T)))\in{\Omega}(\nicefrac{1}{\lambda^2}), \text{ for } r=\sqrt{T/(T+1)} \text{ and } n\geq\nicefrac{r^2}{\lambda^2}.
	\end{align*}
\end{theorem}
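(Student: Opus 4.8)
The plan is to mirror the proof of \cref{APP:thm:VCWL} almost verbatim, using \cref{prop:WLFfiner} to transport the orthogonality argument from \wlone{} to \wlonef{}. The upper bounds require no new work: since $\cE_{\mathsf{WL},\cF}(n,d_T)$ again maps into some $\Rb^{d_T}$ and every sample point lies in a ball of radius $r'=\sqrt{T+1}\,n$, the general fat-shattering bound (Theorem 1.6 of \citep{bartlett1999generalization}) gives $\vcdim(\Hb_{r',\lambda}(\cE_{\mathsf{WL},\cF}(n,d_T)))\in O(\nicefrac{r'^2}{\lambda^2})$, and analogously $O(\nicefrac{1}{\lambda^2})$ in the normalized case. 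So the real content is the $\Omega$ lower bound.

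For the lower bound I would reuse exactly the same witnesses as in \cref{APP:thm:VCWL}: for even $n\geq r^2/\lambda^2$ take $G_1,\dots,G_m$ with $m=\lfloor r^2/\lambda^2\rfloor$, where $G_i$ is the $(i-1)$-regular graph of order $n$ with a single orbit guaranteed by \cref{lem:regular}. The key structural observation is that on such a graph the \wlonef{} coloring is constant on $V(G_i)$ in every round: because all vertices lie in one orbit, any automorphism-invariant initial labeling---in particular $\ell_{\cF}$---assigns them the same color, and color refinement preserves this. Hence, exactly as in the \wlone{} case, each iteration $t$ contributes a single color occupied by all $n$ vertices, so $\phi_{\cF,t}(G_i)$ is a one-hot vector scaled by $n$.

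It then remains to argue that the round-$\geq 1$ blocks are pairwise orthogonal across the $G_i$. Since the $G_i$ have pairwise distinct degrees, \wlone{} already separates them after the first round, and by \cref{prop:WLFfiner} \wlonef{} is at least as fine, so two vertices lying in different graphs receive distinct \wlonef{} colors in every round $t\geq 1$; the one-hot contributions of $G_i$ and $G_j$ ($i\neq j$) therefore sit in disjoint coordinates. Splitting $\vec b_i\coloneqq\phi^{(T)}_{\textsf{WL},\cF}(G_i)$ into $\vec b_i^{(1)}$ (the round-$0$ block) and $\vec b_i^{(2)}$ (the rounds $1,\dots,T$ blocks), we obtain pairwise-orthogonal $\vec b_i^{(2)}$ with $\|\vec b_i^{(2)}\|=\sqrt{T}\,n=r$ and $\|\vec b_i\|=\sqrt{T+1}\,n=r'$, since the round-$0$ block always has norm $n$ and lives in coordinates disjoint from the later blocks. \cref{APP:lem:VC_lower_F} then yields $\vcdim(\Hb_{r',\lambda}(\cE_{\mathsf{WL},\cF}(n,d_T)))\in\Omega(\nicefrac{r^2}{\lambda^2})$; rescaling every feature vector by $\nicefrac{1}{r'}$ gives the normalized bound $\Omega(\nicefrac{1}{\lambda^2})$, using $\nicefrac{1}{2}\leq\nicefrac{r^2}{r'^2}<1$. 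The odd-$n$ technicality is handled precisely as in \cref{APP:thm:VCWL}, restricting to even regularities and invoking the variant of \cref{APP:lem:VC_lower_F}.

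The only genuinely new point---and hence the place to be careful---is that enriching the initialization with $\ell_{\cF}$ could in principle change the per-block norms or destroy orthogonality across graphs. Both worries are dispelled by combining the single-orbit property of \cref{lem:regular} with \cref{prop:WLFfiner}: the former forces $\ell_{\cF}$ to be constant on each $G_i$, so no round produces more than one color per graph and the per-block norm stays $n$ (note that, unlike for \wlone{}, the round-$0$ blocks $\vec b_i^{(1)}$ may now differ across graphs, but this is harmless since \cref{APP:lem:VC_lower_F} constrains only the $\vec b_i^{(2)}$); the latter guarantees that the round-$\geq 1$ colors remain graph-disjoint, because \wlonef{} can only refine the already-separating \wlone{} coloring. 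Consequently the entire quantitative skeleton of \cref{APP:thm:VCWL}---the radii $r=\sqrt{T}\,n$, $r'=\sqrt{T+1}\,n$ and the ratio $\nicefrac{r^2}{r'^2}$---carries over unchanged.
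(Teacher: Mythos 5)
Your proposal is correct and follows essentially the same route as the paper: the paper's proof of this theorem is exactly ``analogous to \cref{APP:thm:VCWL}, choosing the regular graphs from \cref{lem:regular} with a single orbit so that $\ell_{\cF}$ is constant on each $G_i$ and the feature vectors look as before.'' Your careful handling of the round-$0$ blocks---noting they may now differ across graphs but are irrelevant since \cref{APP:lem:VC_lower_F} only constrains the $\vec b_i^{(2)}$---matches the remark the paper itself makes immediately after the proof about disregarding the initial colors.
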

\begin{proof}
	This proof is analogous to the proof of~\cref{APP:thm:VCWL}. Note that in the proof above, we can choose the regular graphs such that all vertices in one graph are in the same orbit; see \cref{lem:regular}. This implies that if one vertex is colored according to $\cF$, all vertices are colored in the same color, and the feature vectors $\phi^{(1)}_{\textsf{WL}, \cF}(G_i)$ look exactly as described before, implying the result.
\end{proof}

A careful reader might wonder why we did not consider the initial colors in the proofs above. In the \wlone{}-case, the initial colors are the same for all graphs in $\cG_n$, i.e.,  the \wlone{} feature vectors take the form $(n, \dots)$. We could leverage this to reduce the radius of the hypothesis class slightly. However, when considering the \wlonef{}-case, the graphs in $\cF$ change the initial colors. Because of our regular graph construction from \cref{lem:regular}, all nodes within one graph share the same color, determined by a subset $F\subseteq\cF$, where $F$ contains all graphs that are subgraphs of the regular graph in question. Hence, $2^{|\cF|}$ possible initial colorings of graphs in $\cG_n$ exists. Also, in both cases, our regular graphs are not necessarily orthogonal in the dimensions of these initial colors. Therefore, we disregarded them in the constructions of $\vec w$ above.

\begin{theorem}
	\label{APP:thm:VCWLOA}
	For any $T, \lambda>0$, we have,
	\begin{align*}
		 & \vcdim(\Hb_{\sqrt{(T+1)n},\lambda}(\cE_{\mathsf{WLOA}}(n,d_T))) \in{\Omega}(\nicefrac{r^2}{\lambda^2}), \text{ for } r=\sqrt{Tn}  \text{ and } n\geq \nicefrac{r^2}{\lambda^2}, \\
		 & \vcdim(\Hb_{1,\lambda}(\widebar{\cE}_{\mathsf{WLOA}}(n,d_T)))\in{\Omega}(\nicefrac{1}{\lambda^2}), \text{ for } r=\sqrt{T/(T+1)} \text{ and } n\geq\nicefrac{r^2}{\lambda^2}.
	\end{align*}
\end{theorem}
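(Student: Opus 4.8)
The plan is to mirror the proof of \cref{APP:thm:VCWL} almost verbatim, replacing the standard \wlone{} feature map by the unary-encoding feature map underlying the \wloa{} kernel (cf.\ \eqref{eq:unaryWL}); the only substantive change is the norm bookkeeping, which is precisely what turns the radius $\sqrt{T}n$ into $\sqrt{Tn}$. The upper bounds are immediate from the general upper bound $\vcdim(\Hb_{r',\lambda}(\Sb))\in O(\nicefrac{r'^2}{\lambda^2})$ of \citep{bartlett1999generalization}, applied to $\Sb\coloneqq\Sb(n,d_T)\subseteq\Rb^{d_T}$, together with the identity $\vcdim(\Hb_{r',\lambda}(\cE_{\textsf{WLOA}}(n,d_T)))=\vcdim(\Hb_{r',\lambda}(\Sb(n,d_T)))$. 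Hence the work lies entirely in the $\Omega$ lower bound, which I obtain from \cref{APP:lem:VC_lower_F}.

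For the lower bound I would first take $n$ even and use \cref{lem:regular} to build graphs $G_1,\dots,G_m\in\cG_n$, with $m\coloneqq\lfloor\nicefrac{r^2}{\lambda^2}\rfloor$, where $G_i$ is $(i-1)$-regular and all its vertices lie in a single orbit. Since each $G_i$ is regular and unlabeled, every vertex carries a common color in each \wlone{} round, so $\phi_t(G_i)$ is supported on a single color with multiplicity $n$. Moreover, distinct regularities induce distinct colors already after round $1$, and since $\REL$ assigns fresh colors each round, the supporting colors are pairwise distinct across both graphs and rounds $t\ge 1$. I set $\vec b_i\coloneqq\phi^{(T)}_{\textsf{WLOA}}(G_i)$ and split off the round-$0$ block (the initial color, common to all $G_i$) as $\vec b_i^{(1)}$ from the concatenation $\vec b_i^{(2)}$ of the blocks for rounds $1,\dots,T$.

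The crucial computation is that, because the \wloa{} feature map uses unary encodings, the multiplicity $n$ at a single color contributes $n$ ones, i.e.\ squared norm $n$, per round --- rather than the $n^2$ of the \wlone{} map. Hence $\|\vec b_i^{(2)}\|^2=Tn$, so $\|\vec b_i^{(2)}\|=\sqrt{Tn}=r$, and $\|\vec b_i\|^2=(T+1)n$, so $\|\vec b_i\|=\sqrt{(T+1)n}=r'$; and since the supports of $\vec b_i^{(2)}$ and $\vec b_j^{(2)}$ are disjoint for $i\neq j$, the vectors $\vec b_1^{(2)},\dots,\vec b_m^{(2)}$ are pairwise orthogonal. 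These are exactly the hypotheses of \cref{APP:lem:VC_lower_F} with $r'=\sqrt{(T+1)n}$ and $r=\sqrt{Tn}$; the construction needs $m\le n$, which holds by $n\ge\nicefrac{r^2}{\lambda^2}$, and $\vec b_i\in\Sb(n,d_T)$ since $G_i\in\cG_n$. This yields the unnormalized bound. For the normalized case I rescale every feature vector by $\nicefrac{1}{\sqrt{(T+1)n}}$, so that $\|\widebar{\vec b}_i\|=1$ and $\|\widebar{\vec b}_i^{(2)}\|=\sqrt{\nicefrac{T}{T+1}}$; applying \cref{APP:lem:VC_lower_F} with these radii and using $\nicefrac{1}{2}\le\nicefrac{T}{T+1}<1$ for $T>0$ gives $\Omega(\nicefrac{1}{\lambda^2})$. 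Finally, the odd-$n$ technicality is resolved exactly as in \cref{APP:thm:VCWL}: only even regularities are realizable, leaving $\nicefrac{n+1}{2}$ graphs, which still suffices for the same asymptotic bound.

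I do not anticipate a genuine obstacle, since the argument runs parallel to the \wlone{} case. The one point that demands care, and where I would spend most of the write-up, is verifying that the supports of the round-$\ge 1$ blocks are pairwise disjoint across \emph{both} graphs and rounds, so that orthogonality and the clean norm $\sqrt{Tn}$ genuinely hold; this rests on the freshness of the $\REL$ labels and on distinct regularities being separated after a single round.
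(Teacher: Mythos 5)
Your proposal is correct and takes essentially the same route as the paper, whose proof of \cref{APP:thm:VCWLOA} is exactly this one-line reduction to \cref{APP:thm:VCWL}: the regular-graph construction of \cref{lem:regular} and \cref{APP:lem:VC_lower_F} are reused unchanged, with only the norm bookkeeping replaced by $\|\vec b_i^{(2)}\|=\sqrt{Tn}=r$ and $\|\vec b_i\|=\sqrt{(T+1)n}=r'$ owing to the unary encoding, and the normalized case by rescaling with $\nicefrac{1}{r'}$. Your write-up is in fact more explicit than the paper's, correctly verifying the per-round squared norm $n$, the disjointness of supports across graphs and rounds via fresh $\REL$ labels, and the odd-$n$ technicality.
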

\begin{proof}
	This proof is analogous to the proof of~\cref{APP:thm:VCWL} except $\|\phi^{(1)}_{\textsf{WLOA}}(G_i)\| = \sqrt{(T+1)n} \eqqcolon r'$ and $\|e_i\| = \sqrt{Tn} = r$. This implies a lower bound of $\Omega(\frac{r^2}{\lambda^2})$ in the unnormalized case, and $\Omega(\frac{r^2}{\lambda^2r'^2})=\Omega(\frac{1}{\lambda^2})$ in the normalized case, as desired.
\end{proof}

\begin{theorem}
	\label{APP:thm:VCWLOAF}
	Let $\cF$ be a finite set of graphs. For any $T, \lambda>0$, we have,
	\begin{align*}
		 & \vcdim(\Hb_{\sqrt{(T+1)n},\lambda}(\cE_{\mathsf{WLOA}, \cF}(n,d_T))) \in{\Omega}(\nicefrac{r^2}{\lambda^2}), \text{ for } r=\sqrt{Tn}  \text{ and } n\geq \nicefrac{r^2}{\lambda^2}, \\
		 & \vcdim(\Hb_{1,\lambda}(\widebar{\cE}_{\mathsf{WLOA}, \cF}(n,d_T)))\in{\Omega}(\nicefrac{1}{\lambda^2}), \text{ for } r=\sqrt{T/(T+1)} \text{ and } n\geq\nicefrac{r^2}{\lambda^2}.
	\end{align*}
\end{theorem}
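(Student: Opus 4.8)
The plan is to combine the two ingredients already used for the closely related cases: the single-orbit regular-graph construction from the proof of \cref{APP:thm:VCWLF}, which neutralizes the effect of the $\cF$-labeling, together with the norm bookkeeping from the proof of \cref{APP:thm:VCWLOA}, which accounts for the optimal-assignment (unary) feature map. The upper bounds are immediate from the general fat-shattering bound quoted at the start of \cref{APP:vcbounds}, so only the lower bounds require work.

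First I would reuse the family of graphs from the proof of \cref{APP:thm:VCWL}. For even $n \geq \nicefrac{r^2}{\lambda^2}$ and $m \coloneqq \lfloor \nicefrac{r^2}{\lambda^2}\rfloor$, let $G_i$ be an $(i-1)$-regular graph on $n$ vertices whose vertices all lie in a single orbit, which exists by \cref{lem:regular}, for $i \in [m]$. The decisive observation---exactly as in \cref{APP:thm:VCWLF}---is that the single-orbit property forces the labeling $\ell_\cF$ to assign one and the same color to every vertex of a given $G_i$. Consequently the \wlonef{} refinement on $G_i$ differs from the \wlone{} refinement only by a uniform relabeling, so the color-count vectors $\phi_{\cF,t}(G_i)$ have the same multiplicity profile as $\phi_t(G_i)$; in particular the \wloaf{} feature vector $\phi^{(T)}_{\textsf{WLOA},\cF}(G_i)$ has the same nonzero structure as $\phi^{(T)}_{\textsf{WLOA}}(G_i)$.

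Next I would carry out the norm computation as in \cref{APP:thm:VCWLOA}. Using the unary encoding behind \cref{eq:unaryWL}, each of the $T+1$ iterations contributes $n$ to the squared norm, so $\|\phi^{(T)}_{\textsf{WLOA},\cF}(G_i)\| = \sqrt{(T+1)n} \eqqcolon r'$, while the sub-vector $\vec b_i^{(2)}$ obtained by discarding the coordinates of the initial colors has norm $\sqrt{Tn} = r$. Since the graphs $G_i$ have pairwise distinct degrees, after one round they receive pairwise distinct refined colors, and hence the blocks $\vec b_i^{(2)}$ occupy disjoint coordinates and are pairwise orthogonal; the uniform $\cF$-offset cannot create cross-graph collisions in these coordinates. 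Applying \cref{APP:lem:VC_lower_F} with $r' = \sqrt{(T+1)n}$ and $r = \sqrt{Tn}$ yields the unnormalized bound $\Omega(\nicefrac{r^2}{\lambda^2})$. For the normalized feature vectors one rescales by $\nicefrac{1}{r'}$; since $\nicefrac{r^2}{r'^2} = \nicefrac{T}{(T+1)} \in [\nicefrac{1}{2}, 1)$, this gives $\Omega(\nicefrac{r^2}{\lambda^2 r'^2}) = \Omega(\nicefrac{1}{\lambda^2})$. The odd-$n$ case is handled by the same parity technicality as in \cref{APP:thm:VCWL}, with only a constant loss.

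The main obstacle---really the only point requiring care---is verifying that the $\cF$-relabeling preserves the pairwise orthogonality of the blocks $\vec b_i^{(2)}$ required by \cref{APP:lem:VC_lower_F}. This reduces to checking that $\ell_\cF$ never merges the dominant color of $G_i$ with that of $G_j$ for $i \neq j$: because the degrees differ, the neighbor multisets differ, so the refined colors remain distinct regardless of the $\cF$-offset, and the single-orbit property guarantees the offset is uniform within each graph. Once this is in place, the argument is a verbatim transcription of the two parent proofs.
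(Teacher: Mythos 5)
Your proposal is correct and is essentially the paper's own proof: the paper proves this statement by declaring it analogous to the proofs of \cref{APP:thm:VCWLF} and \cref{APP:thm:VCWLOA}, which is precisely the combination you carry out---single-orbit regular graphs from \cref{lem:regular} to make the $\cF$-labeling uniform within each graph, the norms $r'=\sqrt{(T+1)n}$ and $r=\sqrt{Tn}$ from the unary \wloa{} encoding, and \cref{APP:lem:VC_lower_F} with the initial-color coordinates relegated to $\vec b_i^{(1)}$. Your explicit verification that the degree-distinct refined colors keep the blocks $\vec b_i^{(2)}$ orthogonal under the $\cF$-offset is exactly the point the paper handles via the orbit argument (and its remark on disregarding initial colors), so the two arguments coincide.
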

\begin{proof}
	This proof is analogous to the proofs of~\cref{APP:thm:VCWLF,APP:thm:VCWLOA}.
\end{proof}

Note that the upper bound and the previous theorems on lower bounds imply tight bounds in $\cO$-notation.

\begin{corollary}[{\cref{thm:VCWL} in the main paper}]
	For any $T, \lambda>0$, we have,
	\begin{align*}
		 & \vcdim(\Hb_{r,\lambda}(\cE_{\mathsf{WL}}(n,d_T))) \in{\Theta}(\nicefrac{r^2}{\lambda^2}), \text{ for } r=\sqrt{T+1}n  \text{ and } n\geq \nicefrac{r^2}{\lambda^2},          \\
		 & \vcdim(\Hb_{1,\lambda}(\widebar{\cE}_{\mathsf{WL}}(n,d_T)))\in{\Theta}(\nicefrac{1}{\lambda^2}), \text{ for } r=\sqrt{T/(T+1)}  \text{ and } n\geq\nicefrac{r^2}{\lambda^2}.
	\end{align*}
\end{corollary}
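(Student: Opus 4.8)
The plan is to assemble the two halves that are already available: the dimension-free upper bound $\vcdim(\Hb_{r,\lambda}(\Sb))\in O(\nicefrac{r^2}{\lambda^2})$ valid for any $\Sb\subseteq\Rb^d$ (Theorem 1.6 of \citep{bartlett1999generalization}), and the matching $\Omega$-bound of \cref{APP:thm:VCWL}. Since both statements are already proved, the entire content of the corollary is to reconcile the radius appearing in the upper bound with the one appearing in the lower bound, and to upgrade $O$ and $\Omega$ to $\Theta$.

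For the upper bound I would first bound the radius of the embedded point set. For $G\in\cG_n$ the per-iteration counts $\phi_t(G)$ are nonnegative and sum to $n$, so $\|\phi_t(G)\|\le n$; summing the $T+1$ blocks gives $\|\phi^{(T)}_{\textsf{WL}}(G)\|\le\sqrt{T+1}\,n$. Hence $\Sb(n,d_T)\subseteq B(\vec 0,r,d_T)$ with $r=\sqrt{T+1}\,n$, and because $\vcdim(\Hb_{r,\lambda}(\cE_{\mathsf{WL}}(n,d_T)))=\vcdim(\Hb_{r,\lambda}(\Sb(n,d_T)))$, the general $O$-bound yields $\vcdim\in O(\nicefrac{r^2}{\lambda^2})$ for this $r$. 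For the normalized embedding every point has unit norm, so the same argument applies with radius $1$ and gives $O(\nicefrac{1}{\lambda^2})$.

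For the lower bound I would invoke \cref{APP:thm:VCWL} directly. Through the orthogonal-coordinate construction of \cref{APP:lem:VC_lower_F}, it produces a shattered set witnessing $\vcdim\in\Omega(\nicefrac{(\sqrt{T}\,n)^2}{\lambda^2})=\Omega(\nicefrac{Tn^2}{\lambda^2})$ in the unnormalized case and $\Omega(\nicefrac{1}{\lambda^2})$ in the normalized case. There the hypothesis-class radius is exactly $r'=\sqrt{T+1}\,n$ (the full feature-vector norm), while the orthogonal ``signal'' part of each vector has norm $r=\sqrt{T}\,n$, which is what drives the $\Omega$-estimate.

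The one bookkeeping point -- and the only place where care is needed -- is that the lower-bound radius $\sqrt{T}\,n$ and the upper-bound radius $\sqrt{T+1}\,n$ differ. Since $r^2/r'^2=T/(T+1)\in[\nicefrac{1}{2},1)$ for every $T>0$, the quantities $Tn^2$ and $(T+1)n^2$ agree up to the constant factor $2$, so $\Omega(\nicefrac{Tn^2}{\lambda^2})=\Omega(\nicefrac{(T+1)n^2}{\lambda^2})=\Omega(\nicefrac{r^2}{\lambda^2})$ with $r=\sqrt{T+1}\,n$. This matches the upper bound and gives $\Theta(\nicefrac{r^2}{\lambda^2})$; in the normalized case both bounds are already $\Theta(\nicefrac{1}{\lambda^2})$. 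The side condition $n\ge\nicefrac{r^2}{\lambda^2}$ is precisely what \cref{APP:thm:VCWL} requires to guarantee enough pairwise-distinguishable regular graphs (\cref{lem:regular}), so it carries over unchanged.
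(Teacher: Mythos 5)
Your proposal is correct and takes essentially the same route as the paper: combine the dimension-free $O(\nicefrac{r^2}{\lambda^2})$ upper bound of \citet{bartlett1999generalization} with the lower bound of \cref{APP:thm:VCWL} (regular graphs from \cref{lem:regular} fed into \cref{APP:lem:VC_lower_F}), absorbing the mismatch between the signal radius $\sqrt{T}n$ and the class radius $\sqrt{T+1}n$ into the implied constant since $T/(T+1)\in[\nicefrac{1}{2},1)$. The paper leaves exactly this bookkeeping implicit ("the upper bound and the previous theorems on lower bounds imply tight bounds"), and you make it explicit; note only that the corollary's hypothesis $n\geq(T+1)n^2/\lambda^2$ is slightly stronger than, and hence implies, the condition $n\geq Tn^2/\lambda^2$ actually used in \cref{APP:thm:VCWL}, so your invocation is valid.
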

\begin{corollary}[{\cref{thm:VCWLF} in the main paper}]
	Let $\cF$ be a finite set of graphs.
	For any $T, \lambda>0$, we have,
	\begin{align*}
		 & \vcdim(\Hb_{r,\lambda}(\cE_{\mathsf{WL}, \cF}(n,d_T))) \in{\Theta}(\nicefrac{r^2}{\lambda^2}), \text{ for } r=\sqrt{T+1}n  \text{ and } n\geq \nicefrac{r^2}{\lambda^2}            \\
		 & \vcdim(\Hb_{1,\lambda}(\widebar{\cE}_{\mathsf{WL}, \cF}(n,d_T)))\in{\Theta}(\nicefrac{1}{\lambda^2}), \text{ for } r=\sqrt{T/(T+1)}  \text{ and }  n\geq\nicefrac{r^2}{\lambda^2}.
	\end{align*}
\end{corollary}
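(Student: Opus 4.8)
The plan is to obtain the claimed $\Theta$-bounds by pairing the generic fat-shattering upper bound with the $\Omega$-type lower bound of \cref{APP:thm:VCWLF}; essentially no new argument is required beyond reconciling the radius parameters, so the proof is a short combination of results already in hand.

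First I would establish the upper bound. Setting $\Sb \coloneqq \{\,\phi^{(T)}_{\textsf{WL},\cF}(G)\mid G\in\cG_n\,\}\subseteq\Rb^{d_T}$, the lemma of \citet{bartlett1999generalization} (Theorem~1.6) gives $\vcdim(\Hb_{r,\lambda}(\Sb))\in O(\nicefrac{r^2}{\lambda^2})$ for any radius $r$ enclosing $\Sb$. Since in each of the $T+1$ rounds the color counts of a fixed graph sum to $n$, the squared per-round norm satisfies $\sum_c \phi_{\cF,t}(G)_c^2\leq(\sum_c\phi_{\cF,t}(G)_c)^2=n^2$, so $\|\phi^{(T)}_{\textsf{WL},\cF}(G)\|\leq\sqrt{T+1}\,n$ for every $G\in\cG_n$; in the normalized case the feature vector has unit norm by construction. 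Hence the upper bounds are $O(\nicefrac{(\sqrt{T+1}n)^2}{\lambda^2})$ and $O(\nicefrac{1}{\lambda^2})$ for the unnormalized embedding $\cE_{\mathsf{WL},\cF}(n,d_T)$ and the normalized embedding $\widebar{\cE}_{\mathsf{WL},\cF}(n,d_T)$, respectively, matching the ball radii $\sqrt{T+1}n$ and $1$ appearing in the hypothesis classes.

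Next I would invoke \cref{APP:thm:VCWLF} for the matching lower bounds, namely $\vcdim(\Hb_{\sqrt{T+1}n,\lambda}(\cE_{\mathsf{WL},\cF}(n,d_T)))\in\Omega(\nicefrac{(\sqrt{T}n)^2}{\lambda^2})$ and $\vcdim(\Hb_{1,\lambda}(\widebar{\cE}_{\mathsf{WL},\cF}(n,d_T)))\in\Omega(\nicefrac{1}{\lambda^2})$, both valid under $n\geq\nicefrac{r^2}{\lambda^2}$. For the normalized case the lower and upper orders coincide at once, giving $\Theta(\nicefrac{1}{\lambda^2})$. For the unnormalized case the only subtlety is that the lower bound is phrased with $r=\sqrt{T}n$ while the statement uses $r=\sqrt{T+1}n$; I would absorb this by noting $\nicefrac{T}{T+1}\in[\nicefrac12,1)$ for $T\geq1$, so that $(\sqrt{T}n)^2=\Theta((\sqrt{T+1}n)^2)$ and the two orders agree up to a constant factor, yielding $\Theta(\nicefrac{r^2}{\lambda^2})$ with $r=\sqrt{T+1}n$.

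The main obstacle is not a genuine difficulty but bookkeeping: one must check that the norm bound $\sqrt{T+1}n$ used to instantiate the upper bound equals the enclosing radius of the hypothesis class in \cref{APP:thm:VCWLF}, and that the discrepancy between the radius governing the ball ($\sqrt{T+1}n$) and the radius governing the orthogonality construction in the lower bound ($\sqrt{T}n$) is harmless because $T/(T+1)$ is bounded away from $0$. The presence of $\cF$ introduces no new issue: by \cref{lem:regular} the regular graphs used in the lower bound have all vertices in a single orbit, so the $\cF$-labeling recolors every vertex identically and leaves the orthogonal part of the feature vector, and hence its norm, unchanged.
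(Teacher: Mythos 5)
Your proposal is correct and follows essentially the same route as the paper's own proof: the paper likewise combines the fat-shattering upper bound of \citet{bartlett1999generalization} with the lower bound of \cref{APP:thm:VCWLF}, whose single-orbit regular-graph construction (\cref{lem:regular}) guarantees that the $\cF$-labeling recolors each graph monochromatically and leaves the orthogonal part of the feature vectors, and hence their norms, unchanged. Your reconciliation of the radii $\sqrt{T}n$ versus $\sqrt{T+1}n$ via $\nicefrac{T}{T+1}\in[\nicefrac{1}{2},1)$ is precisely the paper's observation that $\nicefrac{1}{2}\leq\nicefrac{r^2}{r'^2}<1$ for $T>0$.
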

\begin{corollary}[{\cref{thm:VCWLOA} in the main paper}]
	For any $T, \lambda>0$, we have,
	\begin{align*}
		 & \vcdim(\Hb_{r,\lambda}(\cE_{\mathsf{WLOA}}(n,d_T))) \in{\Theta}(\nicefrac{r^2}{\lambda^2}), \text{ for } r=\sqrt{(T+1)n}  \text{ and } n\geq \nicefrac{r^2}{\lambda^2},         \\
		 & \vcdim(\Hb_{1,\lambda}(\widebar{\cE}_{\mathsf{WLOA}}(n,d_T)))\in{\Theta}(\nicefrac{1}{\lambda^2}), \text{ for } r=\sqrt{T/(T+1)}  \text{ and }  n\geq\nicefrac{r^2}{\lambda^2}.
	\end{align*}
\end{corollary}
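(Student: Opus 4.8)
The plan is to obtain the two-sided $\Theta$ bound by sandwiching the VC dimension between the general fat-shattering upper bound and the explicit lower bound of \cref{APP:thm:VCWLOA}, which has already been established. Since this is a corollary, no new combinatorial construction is required; the real work lies in checking that the radius appearing in the upper bound and the radius appearing in the lower bound agree up to a multiplicative constant, so that the $\Omega$ and $O$ estimates genuinely coincide and yield $\Theta$.

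First, for the upper bound I would observe that $\cE_{\mathsf{WLOA}}(n,d_T)$ maps every $G\in\cG_n$ into the set $\Sb(n,d_T)\coloneqq\{\phi^{(T)}_{\textsf{WLOA}}(G)\mid G\in\cG_n\}\subseteq\Rb^{d_T}$, and that $\vcdim(\Hb_{r,\lambda}(\cE_{\mathsf{WLOA}}(n,d_T)))=\vcdim(\Hb_{r,\lambda}(\Sb(n,d_T)))$, exactly as argued in \cref{subsec:VCdim}. By the unary-encoding description of the \wloa{} feature map in \cref{eq:unaryWL}, each of the $T+1$ iterations contributes exactly $n$ to $\|\phi^{(T)}_{\textsf{WLOA}}(G)\|^2$, whence $\|\phi^{(T)}_{\textsf{WLOA}}(G)\|=\sqrt{(T+1)n}$ for every $G\in\cG_n$. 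Thus all embeddings lie in the origin-centered ball of radius $r=\sqrt{(T+1)n}$, and the general upper bound (Theorem~1.6 in \citep{bartlett1999generalization}) yields $\vcdim(\Hb_{r,\lambda}(\cE_{\mathsf{WLOA}}(n,d_T)))\in O(\nicefrac{r^2}{\lambda^2})$. In the normalized case the embeddings have unit norm, so the same lemma gives $O(\nicefrac{1}{\lambda^2})$.

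For the lower bound I would simply invoke \cref{APP:thm:VCWLOA}, which, under the hypothesis $n\geq\nicefrac{r^2}{\lambda^2}$, constructs $m=\lfloor\nicefrac{r^2}{\lambda^2}\rfloor$ regular graphs (via \cref{lem:regular}) whose \wloa{} feature vectors satisfy the orthogonality assumptions of \cref{APP:lem:VC_lower_F}, giving $\Omega(\nicefrac{r^2}{\lambda^2})$ with $r=\sqrt{Tn}$ in the unnormalized case and $\Omega(\nicefrac{1}{\lambda^2})$ in the normalized case. The remaining step is to reconcile the two radii: the upper bound uses the full-vector radius $\sqrt{(T+1)n}$, whereas the lower bound is stated in terms of the truncated radius $\sqrt{Tn}$ (the part of the feature vector orthogonal to the initial-color coordinate). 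Since $Tn\leq (T+1)n\leq 2Tn$ for every $T\geq 1$, these two quantities agree up to a factor of $2$, so $\Omega(\nicefrac{Tn}{\lambda^2})=\Omega(\nicefrac{(T+1)n}{\lambda^2})$ matches the upper bound and gives $\Theta(\nicefrac{r^2}{\lambda^2})$ for $r=\sqrt{(T+1)n}$; the normalized estimates are already both $\Theta(\nicefrac{1}{\lambda^2})$.

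The main obstacle is purely bookkeeping rather than conceptual: one must be careful that the parameter $r$ in the statement (the ball radius $\sqrt{(T+1)n}$) is \emph{not} the $r$ governing the $\Omega$ rate in \cref{APP:thm:VCWLOA} ($\sqrt{Tn}$), and verify the constant-factor equivalence $T/(T+1)\in[\nicefrac{1}{2},1)$ so that the $\Theta$ claim is legitimate. One must also track the side condition $n\geq\nicefrac{r^2}{\lambda^2}$, which is precisely what guarantees that enough regular graphs of distinct degrees exist to realize the lower-bound construction.
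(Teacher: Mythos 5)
Your proposal is correct and follows essentially the same route as the paper: the appendix derives this corollary by combining the general fat-shattering upper bound of \citet{bartlett1999generalization} with the lower bound of \cref{APP:thm:VCWLOA} (regular graphs via \cref{lem:regular} fed into \cref{APP:lem:VC_lower_F}), exactly as you do. Your explicit reconciliation of the two radii---$\sqrt{Tn}$ governing the $\Omega$ rate versus the ball radius $\sqrt{(T+1)n}$, with $T/(T+1)\in[\nicefrac{1}{2},1)$---is precisely the bookkeeping the paper leaves implicit in its remark that the upper bound and the lower-bound theorems together ``imply tight bounds in $\cO$-notation.''
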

\begin{corollary}[{\cref{thm:VCWLOAF} in the main paper}]
	Let $\cF$ be a finite set of graphs. For any $T, \lambda>0$, we have,
	\begin{align*}
		 & \vcdim(\Hb_{r,\lambda}(\cE_{\mathsf{WLOA}, \cF}(n,d_T))) \in{\Theta}(\nicefrac{r^2}{\lambda^2}), \text{ for } r=\sqrt{(T+1)n}  \text{ and } n\geq \nicefrac{r^2}{\lambda^2},        \\
		 & \vcdim(\Hb_{1,\lambda}(\widebar{\cE}_{\mathsf{WLOA}, \cF}(n,d_T)))\in{\Theta}(\nicefrac{1}{\lambda^2}), \text{ for } r=\sqrt{T/(T+1)}  \text{ and } n\geq\nicefrac{r^2}{\lambda^2}.
	\end{align*}
\end{corollary}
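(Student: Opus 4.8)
The plan is to obtain the $\Theta$-bound by sandwiching, exactly as for the three preceding corollaries: combine the dimension-free upper bound valid for any bounded point set with the matching lower bound already established in \cref{APP:thm:VCWLOAF}. For the upper bound I would first observe that $\cE_{\mathsf{WLOA}, \cF}(n,d_T)$ maps $\cG_n$ into $\Rb^{d_T}$, so that the relevant point set $\Sb(n,d_T)\coloneqq\{\phi^{(T)}_{\textsf{WLOA},\cF}(G)\mid G\in\cG_n\}$ is a subset of $\Rb^{d_T}$ and $\vcdim(\Hb_{r,\lambda}(\cE_{\mathsf{WLOA},\cF}(n,d_T)))=\vcdim(\Hb_{r,\lambda}(\Sb(n,d_T)))$. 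Next I would bound the norm of these feature vectors: since each of the $T+1$ color histograms partitions the $n$ vertices and the unary encoding underlying \wloa{} has self–inner product equal to this sum, one gets $k_{\textsf{WLOA},\cF}(G,G)=(T+1)n$ and hence $\|\phi^{(T)}_{\textsf{WLOA},\cF}(G)\|=\sqrt{(T+1)n}$ for every $G\in\cG_n$. Thus $\Sb(n,d_T)$ lies in a ball of radius $\sqrt{(T+1)n}$, and applying the general fat-shattering upper bound (Theorem 1.6 in \citet{bartlett1999generalization}, stated in \cref{APP:vcbounds}) gives $\vcdim(\Hb_{r,\lambda}(\cE_{\mathsf{WLOA},\cF}(n,d_T)))\in O(\nicefrac{r^2}{\lambda^2})$ with $r=\sqrt{(T+1)n}$. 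For the normalized kernel the feature vectors have unit norm, so the ambient ball has radius $1$ and the upper bound becomes $O(\nicefrac{1}{\lambda^2})$.

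For the lower bound I would simply invoke \cref{APP:thm:VCWLOAF}, which already supplies $\Omega(\nicefrac{r^2}{\lambda^2})$ with $r=\sqrt{Tn}$ in the unnormalized case and $\Omega(\nicefrac{1}{\lambda^2})$ via $r=\sqrt{T/(T+1)}$ in the normalized case, under the hypothesis $n\geq\nicefrac{r^2}{\lambda^2}$ that guarantees — through \cref{lem:regular} — the existence of enough orbit-transitive regular graphs whose \wlonef{}-refined feature vectors are pairwise orthogonal in the relevant coordinates. No new construction is needed here; the orthogonality and norm computations carry over verbatim from the \wloa{} case because coloring according to $\cF$ merely relabels all vertices of each regular graph with a single common color, leaving the histograms (and hence norms and orthogonality) unchanged.

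The only point requiring care — and the closest thing to an obstacle — is reconciling the two radii, since the ball radius $r'=\sqrt{(T+1)n}$ used in the upper bound differs from the norm $r=\sqrt{Tn}$ of the orthogonal component used in the lower-bound construction. I would resolve this by noting that $\nicefrac{r^2}{r'^2}=\nicefrac{T}{(T+1)}\in[\nicefrac12,1)$ for $T\geq 1$, so the two quantities differ only by a bounded constant factor; consequently $\Omega(\nicefrac{Tn}{\lambda^2})=\Omega(\nicefrac{(T+1)n}{\lambda^2})$ and the bounds combine to $\Theta(\nicefrac{(T+1)n}{\lambda^2})=\Theta(\nicefrac{r^2}{\lambda^2})$ with $r=\sqrt{(T+1)n}$, and analogously to $\Theta(\nicefrac{1}{\lambda^2})$ in the normalized case. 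Beyond this bookkeeping of the off-by-one between $T$ and $T+1$ iterations, the corollary is an immediate synthesis of the generic upper bound and the appendix lower bound, mirroring the proofs of the corresponding corollaries for \wlone{}, \wlonef{}, and \wloa{}.
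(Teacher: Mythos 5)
Your proposal is correct and follows essentially the same route as the paper: the paper also obtains the corollary by sandwiching the generic fat-shattering upper bound of \citet{bartlett1999generalization} (with ball radius $\sqrt{(T+1)n}$, resp.\ $1$) against the lower bound of \cref{APP:thm:VCWLOAF}, whose proof transfers the regular-graph orthogonality construction of \cref{APP:thm:VCWL} using exactly the orbit-transitivity observation from \cref{lem:regular} that an $\cF$-coloring assigns all vertices of each regular graph a single common color. Your reconciliation of the radii via $\nicefrac{r^2}{r'^2}=\nicefrac{T}{(T+1)}\in[\nicefrac12,1)$ is likewise the same bookkeeping the paper performs in the proof of \cref{APP:thm:VCWL} and inherits in the \wloaf{} case.
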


\subsubsection{Colored margin bounds}\label{sec:colored_margins}
Given $T\geq 0$ and $C\subseteq \Nb$, we say that a graph $G$ has \new{color complexity} $(C,T)$ if the first $T$ iterations of \wlone{} assign colors to $G$ in the set $C$.
Let $\cG_{C,T}$ be the class of all graphs of color complexity $(C,T)$. We note that $\cG_{C,T}$ possibly contains infinitely many graphs. Indeed, if $C$ corresponds to the color assigned by \wlone{} to degree two nodes, then $\cG_{C,T}$ contains all $2$-regular graphs.

Let $\cE(C,T,d)$ be a class of graph embedding methods consisting of mappings from $\cG_{C,T}$ to $\Rb^d$. Separability is lifted to the setting by considering the set of partial concepts defined on $\cG_{C,T}$, as follows
\begin{align*}
	\Hb_{r,\lambda}(\cE(C,T,d)) \coloneqq \Big\{
	h\in\{0,1,\star\}^{\cG_{C,T}} \mathrel{\Big|}\,\forall\, & G_1,\dotsc,G_s\in\mathsf{supp}(h)\colon \\
	(                                                        & G_1,h(G_1)),\dotsc,(G_s,h(G_s))
	\text{ is $(r,\lambda)$-$\cE(n,d)$-separable} \Big\}.\nonumber
\end{align*}
Let $\widebar{\cE}_{\mathsf{WL}}(C,T,d)$ be the class of embeddings corresponding to the normalized \wlone{} kernel, i.e., $\widebar{\cE}_{\mathsf{WL}}(C,T,d)\coloneqq\{G\mapsto \widebar{\phi^{(T)}_{\textsf{WL}}}(G)\mid G\in\cG_{C,T}\}$. We note that $d$ is a constant depending on $|C|$ and $T$ we denote this constant by $d_{C,T}$. An immediate consequence of the proof of~\cref{thm:VCWL} is that we can obtain a margin-bound for infinite classes of graphs.
\begin{corollary}
	For any $T > 0$, $C\subseteq \Nb$, and $\lambda>0$, such that $\cG_{C,T}$ contains all regular graphs of degree $0,1,\ldots,\nicefrac{r^2}{\lambda^2}$, for $r=\sqrt{T/(T+1)}$,
	we have
	\begin{equation}\vcdim(\Hb_{1,\lambda}(\widebar{\cE}_{\mathsf{WL}}(C,T,d_{C,T})))\in{\Theta}(\nicefrac{1}{\lambda^2}).  \tag*{\qed}
	\end{equation}
\end{corollary}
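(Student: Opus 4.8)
The plan is to reduce to the argument already used for \cref{APP:thm:VCWL}, observing that the \emph{only} feature of that proof which forced the side condition $n \geq \nicefrac{r^2}{\lambda^2}$ was the restriction to a single fixed order; once the hypothesis class ranges over all of $\cG_{C,T}$, this restriction disappears. For the upper bound, every vector in $\widebar{\cE}_{\mathsf{WL}}(C,T,d_{C,T})$ is a normalized \wlone{} embedding and hence has unit norm, so the support set lies in the unit ball of $\Rb^{d_{C,T}}$; the general fat-shattering bound (Theorem~1.6 of \citet{bartlett1999generalization}, stated above) then gives $\vcdim(\Hb_{1,\lambda}(\widebar{\cE}_{\mathsf{WL}}(C,T,d_{C,T}))) \in O(\nicefrac{1}{\lambda^2})$.

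For the matching lower bound I would reuse the shattering construction of \cref{APP:lem:VC_lower_F}. Put $m \coloneqq \lfloor \nicefrac{r^2}{\lambda^2}\rfloor$ with $r = \sqrt{T/(T+1)}$, and for each $i \in \{0,\dots,m-1\}$ take an $i$-regular, single-orbit graph $G_i$ from \cref{lem:regular}; by the standing hypothesis all of these belong to $\cG_{C,T}$, since $m-1 < \nicefrac{r^2}{\lambda^2}$. The key structural fact is that an $i$-regular graph never refines its single color class, so the color it carries at each iteration $t$ is determined solely by $(i,t)$ and is \emph{independent of the order} of $G_i$; moreover these colors are pairwise distinct across $i$. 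Consequently $\widebar{\phi^{(T)}_{\textsf{WL}}}(G_i)$ equals $\tfrac{1}{\sqrt{T+1}}$ on the common initial-color coordinate and $\tfrac{1}{\sqrt{T+1}}$ on the $T$ coordinates indexed by $G_i$'s colors of iterations $1,\dots,T$. Splitting off the initial-color coordinate as $\widebar{\vec b}_i^{(1)}$ and the remainder as $\widebar{\vec b}_i^{(2)}$, the blocks $\widebar{\vec b}_i^{(2)}$ have pairwise disjoint supports, hence are mutually orthogonal, with $\|\widebar{\vec b}_i^{(2)}\| = \sqrt{T/(T+1)} = r$ and $\|\widebar{\vec b}_i\| = 1$. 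Feeding these into \cref{APP:lem:VC_lower_F} shatters the set and yields $\vcdim \in \Omega(\nicefrac{r^2}{\lambda^2}) = \Omega(\nicefrac{1}{\lambda^2})$, using $r^2 = T/(T+1) \geq \tfrac12$; together with the upper bound this gives $\Theta(\nicefrac{1}{\lambda^2})$.

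The part that needs care is conceptual rather than computational: justifying that a $d$-regular graph produces exactly the same degree-determined color sequence at every iteration regardless of its number of vertices, which is what simultaneously (i) makes each normalized embedding a fixed vector in $\Rb^{d_{C,T}}$, (ii) guarantees each $G_i$ has color complexity $(C,T)$ under the hypothesis on $\cG_{C,T}$, and (iii) removes the order dependence altogether. A pleasant side effect is that the parity technicality which complicated the odd-$n$ case in \cref{APP:thm:VCWL} no longer arises, since we may realize each $G_i$ on any admissible order of our choosing (e.g.\ a single large even $n$ via \cref{lem:regular}); the rest of the shattering witness $\vec w$ is then built verbatim as in \cref{APP:lem:VC_lower_F} on the orthogonal blocks $\widebar{\vec b}_i^{(2)}$.
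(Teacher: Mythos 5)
Your proposal is correct and follows essentially the same route as the paper, which states this corollary as an immediate consequence of the proof of \cref{APP:thm:VCWL}: the upper bound via the fat-shattering result of \citet{bartlett1999generalization} on the unit ball, and the lower bound via the regular-graph construction of \cref{lem:regular} fed into \cref{APP:lem:VC_lower_F} with the initial-color coordinate split off. Your explicit justification that an $i$-regular graph's color sequence depends only on $(i,t)$ and not on its order---so the normalized embeddings are fixed unit vectors and the constraint $n\geq\nicefrac{r^2}{\lambda^2}$ (and the odd-$n$ parity workaround) disappears---is exactly the observation the paper leaves implicit when passing from fixed-order graph classes to $\cG_{C,T}$.
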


\subsection{Margin-based bounds on the VC dimension of MPNNs and more expressive architectures}

We now lift the above results for the \wlone{} kernel to MPNNs. To prove~\cref{prop:matchingvc_mpnn}, we show that
$\cE_{\textsf{MPNN}}(n,d,T)$
contains $\cE_{\mathsf{WL}}(n,d_T)$. Thereto, the following result shows that MPNNs can compute the \wlone{} feature vector.
\begin{proposition}\label{thm:simulating}
	Let $\cG_n$ be the set of $n$-order graphs and let $S \subseteq \cG_n$. Then, for all $T \geq 0$, there exists a sufficiently wide $T$-layered simple MPNN architecture $\smmpnn_n \colon S \to \Rb^d$, for an appropriately chosen $d > 0$, such that, for all $G \in S$,
	\begin{equation*}
		\smmpnn_n(G) =  \phi^{(T)}_{\textsf{WL}}(G).
	\end{equation*}
\end{proposition}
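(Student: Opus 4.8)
The plan is to exhibit a \emph{single} simple MPNN, with fixed weights, whose summed final vertex features equal $\phi^{(T)}_{\textsf{WL}}(G)$ for every $G \in S$ at once. The guiding idea, going back to \citet{Mor+2019}, is that a simple MPNN can track the \wlone{} coloring in its vertex features, while summation aggregation of one-hot color encodings produces exactly the color histograms that constitute the \wlone{} feature vector. First I would collect, over all graphs in $S$ and all iterations $t \in \{0,\dots,T\}$, the global color set $\Sigma \coloneqq \bigcup_{t=0}^{T}\Sigma_t$ that \wlone{} can produce; this is finite (for a fixed/finite label alphabet, or finite $S$), and it is what determines the embedding dimension $d \ge \sum_{t=0}^{T}|\Sigma_t|$. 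For each $t$ I fix an ordered one-hot encoding of $\Sigma_t$ respecting the natural order used to define $\phi_t$.

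Next I would maintain the invariant that the layer-$t$ vertex feature $\hb_v^{\tup{t}}$ is the concatenation of the one-hot encodings of $C^1_0(v), C^1_1(v), \dots, C^1_t(v)$, with the remaining blocks (for iterations $> t$) held at zero so that the width stays fixed at $d$ across layers. Under this invariant the summation readout $\sum_{v \in V(G)} \hb_v^{\tup{T}}$ yields precisely $\big[\phi_0(G),\dots,\phi_T(G)\big] = \phi^{(T)}_{\textsf{WL}}(G)$, so the readout MLP can be taken to be (a projection of) the identity, which is a legal simple-MPNN readout.

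The core step is realizing the layer-$t$ update by one MLP. From $\hb_v^{\tup{t-1}}$ one reads off the one-hot code of the current color $C^1_{t-1}(v)$, while the summed neighbor features $\sum_{u \in N(v)} \hb_u^{\tup{t-1}}$ give, in the last active block, the integer vector counting how many neighbors carry each color, i.e., a faithful encoding of the multiset $\oms C^1_{t-1}(u) \mid u \in N(v)\cms$. Since in an $n$-order graph these counts are bounded by $n-1$, the pair (current color, neighbor-color multiset) ranges over a finite set, and the \wlone{} relabeling $\REL$ is an injective function on that set. The update MLP must (i) copy the already-computed blocks unchanged and (ii) append the one-hot code of the new color obtained by applying this injective map. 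Because the domain is finite and the target values are prescribed exactly, a $\relu$ MLP of sufficient width computes this function \emph{exactly}, e.g.\ via a threshold/lookup-table construction activating one hidden unit per admissible input pattern; this bounds the intermediate widths by the number of such patterns.

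The main obstacle I anticipate is not any single gadget but \emph{uniformity}: the same fixed weights must simulate \wlone{} correctly, and exactly rather than approximately, on every graph of $S$ simultaneously, which is why $\Sigma$ is enumerated globally up front and the lookup-table MLP is argued well defined and injective across the whole family. Proving the claim then reduces to an induction on $t$ establishing the vertex-feature invariant: the base case is the initial one-hot encoding of $\ell$ (which is consistent with $\ell$ by assumption and equals the encoding of $C^1_0$), and the inductive step is exactly the injectivity of $\REL$ together with the faithful readability of the neighbor multiset from the summed features. Taking $t = T$ and applying the readout then gives $\smmpnn_n(G) = \phi^{(T)}_{\textsf{WL}}(G)$, as required.
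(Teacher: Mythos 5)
Your proposal is correct, but it organizes the construction differently from the paper. The paper's proof does not maintain one-hot colorings as an inductive invariant; instead it invokes \citet[Theorem 2]{Mor+2019} as a black box (applied to the disjoint union of the graphs in $S$, which is how it handles your ``uniformity'' concern) to obtain rational-valued, \wlone-\emph{equivalent} vertex features of width $sn$ with $s=|S|$, and then post-processes these into one-hot encodings per iteration: a fixed matrix whose columns are powers $K^{sn-1},\dots,K^0$ of a large base $K$ hashes each feature injectively into a scalar color code, paired affine thresholds $\pm(c_i+1)$ composed with a clipped-sigmoid activation produce indicator pairs, and an ``AND'' gadget turns these into the one-hot vector, after which sum-pooling and concatenation over layers yield $\phi^{(T)}_{\textsf{WL}}(G)$ exactly as in your readout step. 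Your route is more self-contained: you re-derive the \wlone{} simulation directly via induction on $t$, with the injectivity of $\REL$ on the finite set of realized (color, neighbor-multiset) pairs doing the work the cited theorem does in the paper, and you avoid the rational-to-integer rescaling step entirely. What the paper's decomposition buys is an explicit width bound ($2sn$ per iteration) and reuse of established machinery; what yours buys is transparency and independence from the specifics of the \citet{Mor+2019} construction. One detail to patch: a \emph{single} $\relu$ unit per admissible input pattern does not by itself compute an exact indicator---you need either an injective linear hashing of the integer input into a scalar followed by a three-unit bump $\relu(z-k+1)-2\relu(z-k)+\relu(z-k+1)$ evaluated at integer codes, or the paper's paired-threshold-plus-AND gadget; since your inputs are bounded integer vectors and only the at most $sn$ patterns realized on $S$ must be handled correctly, this is routine and keeps the width polynomial, but as stated the ``one hidden unit per pattern'' claim is not literally exact. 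Also note that copying the earlier one-hot blocks through a $\relu$ MLP is harmless only because those coordinates are nonnegative, which is worth one sentence in a full write-up.
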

\begin{proof}
	The proof follows the construction outlined in the proof of \citep[Proposition 2]{Mor+2023}. Let $s \coloneqq |S|$. Hence, $sn$ is an upper bound for the number of colors computed by \wlone{} over all $s$ graphs in one iteration.

	Now, by~\citet[Theorem 2]{Mor+2019}, there exists an MPNN architecture with feature dimension (at most) $n$ and consisting of $t$ layers such that for each graph $G \in S$ it computes \wlone-equivalent vertex features $\fb_v^{(t)}$ in $\Rb^{1 \times n}$ for $v \in V(G)$. That is, for vertices $v$ and $w$ in $V(G)$ it holds that
	\begin{equation*}
		\fb_{v}^{(t)} = \fb_{w}^{(t)} \iff C^1_{T}(v) = C^1_{T}(w).
	\end{equation*}
	We note, by the construction outlined in the proof of~\citet[Theorem 2]{Mor+2019}, that $\fb_{v}^{(t)}$, for $v \in V(G)$, is defined over the rational numbers. We further note that we can construct a single MPNN architecture for all $s$ graphs by applying~\citet[Theorem 2]{Mor+2019} over the disjoint union of the graphs in $S$. This increases the width from $n$ to $sn$. We now show how to compute the \wlone{} feature vector of a single iteration $t$. The overall feature vector can be obtained by (column-wise) concatenation over all layers.

	Since the vertex features are rational, there exists a number $M$ in $\Nbb$ such that $M \cdot \fb_{v}^{(t)}$ is in $\Nb^{1 \times sn}$ for all $v \in V(G)$ and  $G \in S$, i.e., a vector over $\Nb$. Now, let
	\begin{equation*}
		\mW' = \begin{bmatrix}
			K^{sn-1} & \cdots & K^{sn-1} \\
			\vdots   & \cdots & \vdots   \\
			K^{0}    & \cdots & K^{0}
		\end{bmatrix}
		\in \Nbb^{sn \times 2sn},
	\end{equation*}
	for a sufficiently large $K > 0$, then $\mathbf{k}_{v} \coloneqq M \cdot \fb_{v}^{(t)} \mW'$, for vertex $v \in V(G)$ and graph $G \in S$, computes a vector $\mathbf{k}_{v}$ in $\Nbb^{2sn}$ containing $2sn$ occurrences of a natural number uniquely encoding the color of the vertex $v$. We next turn $\mathbf{k}_{v}$ into a one-hot encoding. More specifically, we define
	\begin{equation*}
		\mathbf{h}_{v}'=\mathsf{lsig}(\mathbf{k}_{v} \circ (\mathbf{w}'')^\tran+\mathbf{b}),
	\end{equation*}
	where $\circ$ denotes element-wise multiplication, with $\mathbf{w}''=(1,-1,1,-1,\dotsc,1,-1)\in\Rb^{2sn}$ and $\mathbf{b}=(-c_1-1,c_1+1,-c_2-1,c_2+1, \dotsc, -c_{sn}-1,c_{sn}+1)\in\Rb^{2sn}$with $c_i$ the number encoding the $i$th color under \wlone{} at iteration $t$ on the set $S$.  We note that for odd $i$,
	\begin{equation*}
		(h_{v}')_i\coloneqq \mathsf{lsig}(C^1_t(v)-c_i-1)=\begin{cases}
			1 & C^1_t(v)\geq c_i  \\
			0 & \text{otherwise}.
		\end{cases}
	\end{equation*}
	and for even $i$,
	\begin{equation*}
		(h_{v}')_i\coloneqq \mathsf{lsig}(-C^1_t(v)+c_i+1)=\begin{cases}
			1 & C^1_t(v) \leq c_i \\
			0 & \text{otherwise}.
		\end{cases}
	\end{equation*}
	In other words, $((h_{v}')_i,(h_{v}')_{i+1})$ are both $1$ if and only if $C^1_t(v)=c_i$. We thus obtain one-hot encoding of the color $C^1_t(v)$ by
	combining $((h_{v}')_i,(h_{v}')_{i+1})$ using an ``AND'' encoding (e.g., $\mathsf{lsig}(x+y-1)$) applied to pairs of consecutive entries in $\mathbf{h}_{v}'$.
	That is,
	\begin{equation*}
		\hb_{v}\coloneqq\mathsf{lsig}\left(
		\hb_{v}'\cdot\begin{pmatrix}
			1      & 0      & \cdots & 0      \\
			1      & 0      & \cdots & 0      \\
			0      & 1      & \cdots & 0      \\
			0      & 1      & \cdots & 0      \\
			\vdots & \vdots & \ddots & \vdots \\
			0      & 0      & \cdots & 1      \\
			0      & 0      & \cdots & 1
		\end{pmatrix}- (1, 1,\dotsc,1)
		\right)\in\Rb^{sn}.
	\end{equation*}
	We obtain the overall \wlone{} vector by row-wise summation and concatenation over all layers. We remark that, for a single iteration, the maximal width of the whole construction is $2sn$.
\end{proof}
By the above proposition, MPNNs of sufficient width can compute the \wlone{} feature vectors. Moreover, the normalization can be included in the MPNN computation. Hence, we can prove the lower bound by simulating the proof of~\cref{thm:VCWL}. The upper bound follows by the same arguments as described at the beginning of~\cref{subsec:VCdim}. The above result can be easily extended to the \wlonef, implying~\cref{prop:matchingvc_mpnn_f}.
\begin{corollary}\label{thm:simulating_f}
	Let $\cG_n$ be the set of $n$-order graphs, let $S \subseteq \cG_n$, and let $\cF$ be a set of graphs. Then, for all $T \geq 0$, there exists a sufficiently wide $T$-layered MPNN architecture $\smmpnn_n \colon S \to \Rb^d$, for an appropriately chosen $d > 0$, such that, for all $G \in S$,
	\begin{equation*}
		\smmpnn_n(G) =  \phi^{(T)}_{\text{\wlonef}}(G).
	\end{equation*}
\end{corollary}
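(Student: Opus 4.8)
The plan is to reduce the statement directly to \cref{thm:simulating} by exploiting that the \wlonef{} differs from the \wlone{} \emph{only} in its initialization. Recall that an MPNN in $\MPNN_{\mathsf{mlp},\cF}(d,T)$ is, by definition (see~\cref{sec:MPNN}), a simple MPNN whose initial vertex features are consistent with the labeling $\ell_{\cF}$, e.g., a one-hot encoding of $\ell_{\cF}$. Since $S \subseteq \cG_n$ is finite and $\cF$ is finite, the image of $\ell_{\cF}$ over all vertices of all graphs in $S$ is a finite set, so such a one-hot encoding lives in a fixed finite dimension and yields well-defined initial features $\hb_v^{\tup{0}}$ with $\hb_v^{\tup{0}} = \hb_w^{\tup{0}}$ if, and only, if $\ell_{\cF}(v) = \ell_{\cF}(w)$.

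First, I would invoke the same simulation result of~\citet[Theorem 2]{Mor+2019} used in the proof of \cref{thm:simulating}, now run over the disjoint union of the graphs in $S$ with the above $\ell_{\cF}$-consistent initialization. Because after layer $0$ the refinement rule of \wlonef{} is literally the \wlone{} update applied to the refined initial coloring, this construction produces vertex features $\fb_v^{\tup{t}}$ satisfying $\fb_v^{\tup{t}} = \fb_w^{\tup{t}}$ if, and only, if $C^{1,\cF}_t(v) = C^{1,\cF}_t(w)$, i.e., \wlonef-equivalent features. The remainder of the proof of \cref{thm:simulating}---scaling the rational features to integers via $M$, collapsing them to a unique color code with the weight matrix $\mW'$, extracting a one-hot encoding of the color through the two $\mathsf{lsig}$ thresholds and the ``AND'' layer, and finally summing row-wise and concatenating over the $T+1$ iterations---then applies verbatim, since it depends only on having \wlonef-equivalent vertex features and on the ordered color alphabet $\Sigma_t$ of the \wlonef{}. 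This yields $\smmpnn_n(G) = \phi^{(T)}_{\textsf{WL}_{\cF}}(G)$ for all $G \in S$.

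The only point requiring care---and hence the main obstacle---is verifying that the result of~\citet{Mor+2019} remains valid under a \emph{nontrivial} consistent initialization rather than the uniform one implicitly used in the plain \wlone{} case. This is essentially immediate, because that result already permits arbitrary initial features that are consistent with the input labeling; the enhanced labeling $\ell_{\cF}$ is simply a particular such labeling, computed from $G$ and $\cF$ alone and independent of the trainable weights, so it is a legitimate preprocessing step feeding into layer $0$, and all subsequent layers are oblivious to how the initial colors arose. Consequently, the width bound is unchanged up to the (finite) number of distinct $\ell_{\cF}$-classes, and the corollary follows.
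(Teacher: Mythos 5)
Your proposal is correct and follows exactly the paper's route: the paper's own proof sketch likewise observes that the \wlonef{} is the \wlone{} run on a specifically vertex-labeled graph, notes that \citet[Theorem 2]{Mor+2019} already handles vertex-labeled graphs, and then lifts the construction of \cref{thm:simulating} verbatim. Your write-up merely spells out the details (finite one-hot encoding of $\ell_{\cF}$, disjoint-union trick, and the unchanged post-processing layers) that the paper leaves implicit, so there is nothing substantively different to compare.
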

\begin{proof}[Proof sketch]
	By definition of the \wlonef{}, the algorithm is essentially the \wlone{} operating on a specifically vertex-labeled graph. Since~\citet[Theorem 2]{Mor+2019} also works for vertex-labeled graphs, the proof technique for~\cref{thm:simulating} can be straightforwardly lifted to the \wlonef.
\end{proof}
We can also extend~\cref{thm:simulating} to the \wloa{} and \wloaf{}, i.e., derive an MPNN architecture that can compute \wloa's and \wloaf's feature vectors. By that, we can extend~\cref{thm:VCWLOA,thm:VCWLOAF} to their corresponding MPNN versions.
\begin{proposition}\label{thm:simulating_wloa}
	Let $\cG_n$ be the set of $n$-order graphs and let $S \subseteq \cG_n$. Then, for all $T \geq 0$, there exists a sufficiently wide $T$-layered MPNN architecture $\smmpnn_n \colon S \to \Rb^d$, for an appropriately chosen $d > 0$, such that, for all $G \in S$,
	\begin{equation*}
		\smmpnn_n(G) =  \phi^{(T)}_{\textsf{WLOA}}(G).
	\end{equation*}
\end{proposition}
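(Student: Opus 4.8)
The plan is to reduce the claim to \cref{thm:simulating} by observing that the \wloa{} feature vector is a fixed, coordinate-wise function of the \wlone{} color counts. Recall from \cref{eq:unaryWL} that $\phi^{(T)}_{\textsf{WLOA}}(G)$ is nothing but the concatenation, over all iterations $t\in\{0,\dotsc,T\}$ and colors $c\in\Sigma_t$, of the \emph{unary (thermometer) encoding} $(\mathbbm{1}_{\phi_t(G)_c\geq 1},\dotsc,\mathbbm{1}_{\phi_t(G)_c\geq n})\in\{0,1\}^n$ of the count $\phi_t(G)_c$; this is exactly the feature map whose inner product yields $\min(\phi_t(G)_c,\phi_t(H)_c)$ and hence $k_{\textsf{WLOA}}$. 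Since \cref{thm:simulating} already produces a simple $T$-layer MPNN whose output coordinates are precisely the counts $\phi_t(G)_c$, it suffices to post-compose a fixed thermometer map, which I will fold into the readout MLP.

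Concretely, first I would invoke \cref{thm:simulating} to obtain a $T$-layer MPNN that, after the summation in the readout, exposes the concatenated color histogram, i.e.\ a vector containing one coordinate equal to the integer count $\phi_t(G)_c\in\{0,\dotsc,n\}$ for every pair $(t,c)$. The one-hot-per-vertex construction underlying \cref{thm:simulating} guarantees that these counts appear as \emph{separate} coordinates, so each can be thresholded independently. Then, for each count coordinate, I would append in the readout MLP the $n$ affine maps sending $x$ to $x-j+\textsf{const}$ for $j\in[n]$, followed by the same clamped-linear activation $\act$ used in \cref{thm:simulating}. Because every count is an integer in $\{0,\dotsc,n\}$ and $\act$ saturates to $\{0,1\}$ at integer inputs, the resulting value is exactly $\mathbbm{1}_{\phi_t(G)_c\geq j}$, reproducing one thermometer entry with no rounding error. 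Concatenating these blocks over all colors and all iterations yields precisely $\phi^{(T)}_{\textsf{WLOA}}(G)$; choosing $d$ at least the total \wloa{} dimension $\sum_t n\,\lvert\Sigma_t\rvert$ (which is bounded by $(T+1)sn^2$ with $s\coloneqq\lvert S\rvert$) makes the width sufficient.

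The step requiring the most care—though it is routine rather than deep—is the \emph{exactness} of the thermometer encoding: the same fixed weights must simultaneously realize the correct thresholds for every graph $G\in S$, which holds because the set of colors and their numerical encodings are common across $S$, exactly as in \cref{thm:simulating}, and because the integrality of the counts together with the saturation of $\act$ eliminates any approximation. No new ideas beyond the \wlone{} simulation of \cref{thm:simulating} and the unary-encoding identity \cref{eq:unaryWL} are needed, and the analogous statement for the \wloaf{} follows by initializing with $\ell_{\cF}$ as in \cref{cor:wloa} \text{[}the \wloaf{} variant of \cref{thm:simulating}\text{]}.
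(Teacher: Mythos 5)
Your proposal is correct and follows essentially the same route as the paper's proof: both invoke \cref{thm:simulating} to expose the \wlone{} color counts and then post-compose a fixed thresholding map realizing the unary encoding of \cref{eq:unaryWL} — the paper by repeating each count $n$ times via a $0/1$ matrix and applying $\sign(\vec{r}-\vec{b})$, you by shifted affine maps followed by the saturating activation $\act$ folded into the readout MLP, which is the same construction (and your $\act$-based thresholding even sidesteps the $\sign(0)$ boundary case at $\phi_t(G)_c = j$). The only slip is a citation: the \wloaf{} variant is the corollary lifting \cref{thm:simulating} to the $\cF$-initialized setting (\cref{thm:simulating_f_wloa}), not \cref{cor:wloa}.
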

\begin{proof}
	By~\cref{thm:simulating}, there exists a $T$-layered MPNN architecture $\smmpnn_n \colon S \to \Rb^d$, for an appropriately chosen $d > 0$, such that, for all $G \in S$,
	\begin{equation*}
		\smmpnn_n(G) =  \phi^{(T)}_{\text{\wlone}}(G).
	\end{equation*}
	We now show how to transform $\phi^{(T)}_{\text{\wlone}}(G)$ into $ \phi^{(T)}_{\textsf{WLOA}}(G)$. We show the transformation for a single iteration $t \leq T$, i.e., transforming  $\phi_{t,\text{\wlone}}(G)$ into $\phi_{t,\text{\wloa}}(G)$. Let $C$ denote the number of colors at iteration $t$ of the \wlone{} over all $|S|$ graphs. Since $n$ is finite, $C$ is finite as well. That is, $\phi_{t,\text{\wlone}}(G)$ has $C$ entries. Hence, the number of components for $\phi_{t,\textsf{WLOA}}(G)$ is at most $Cn$. By multiplying $\phi_{t,\text{\wlone}}(G)$ with an appropriately chosen matrix $\vec{M} \in \{ 0,1 \}^{C \times Cn}$, we get a vector $\vec{r} \in \Rb^{Cn}$, where each entry of $\phi_{t,\text{\wlone}}(G)$ is repeated $n$ times.
	Specifically,
	\begin{equation*}
		\vec{M} \coloneqq
		\begin{pmatrix}
			1      & 0      & \cdots & 0      \\
			1      & 0      & \cdots & 0      \\
			\vdots & \vdots & \cdots & 0      \\
			1      & 0      & \cdots & 0      \\
			0      & 1      & \cdots & 0      \\
			0      & 1      & \cdots & 0      \\
			\vdots & \vdots & \ddots & 0      \\
			0      & 0      & \cdots & 1      \\
			0      & 0      & \cdots & 1      \\
			0      & 0      & \cdots & \vdots \\
			0      & 0      & \cdots & 1
		\end{pmatrix}
		\in  \{ 0,1 \}^{C \times Cn}.
	\end{equation*}
	Now let
	\begin{equation*}
		\vec{b} \coloneqq (1,2,\dotsc, n, 1,2,\dotsc, n, \dotsc, 1,2,\dotsc, n) \in \Rb^{Cn} \quad\text{ and }\quad  \vec{r'} \coloneqq \sign{(\vec{r} - \vec{b})}.
	\end{equation*}
	Observe that $\vec{r'} = \phi_{t,\text{\wloa}}(G)$, implying the result
\end{proof}

In a similar way as for~\cref{thm:simulating_f}, we can lift the above result to the \wloaf.
\begin{corollary}\label{thm:simulating_f_wloa}
	Let $\cG_n$ be the set of $n$-order graphs, let $S \subseteq \cG_n$, and let $\cF$ be a set of graphs. Then, for all $T \geq 0$, there exists a sufficiently wide $T$-layered MPNN architecture $\smmpnn_n \colon S \to \Rb^d$, for an appropriately chosen $d > 0$, such that, for all $G \in S$,
	\begin{equation*}
		\smmpnn_n(G) =  \phi^{(T)}_{\text{\wloaf}}(G). \tag*{\qed}
	\end{equation*}
\end{corollary}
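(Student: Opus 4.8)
The plan is to obtain \cref{thm:simulating_f_wloa} by composing two ingredients already established: the \wlonef{} simulation of \cref{thm:simulating_f} and the count-to-unary post-processing used in the proof of \cref{thm:simulating_wloa}. The guiding observation is that the \wloa{} feature map is nothing but a per-iteration, per-color \emph{unary} encoding of the color multiplicities $\phi_{t}(G)_c$, and this encoding is entirely agnostic to whether the underlying colors are produced by \wlone{} or by \wlonef{}; it only exploits that each multiplicity is an integer in $\{0,\dots,n\}$ and that the color alphabet over the finite set $S$ is finite. Hence the same transformation that turns $\phi^{(T)}_{\textsf{WL}}(G)$ into $\phi^{(T)}_{\textsf{WLOA}}(G)$ will turn $\phi^{(T)}_{\text{\wlonef}}(G)$ into $\phi^{(T)}_{\text{\wloaf}}(G)$.

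Concretely, first I would invoke \cref{thm:simulating_f} to obtain a sufficiently wide $T$-layered \MPNNF{} computing $\smmpnn_n(G) = \phi^{(T)}_{\text{\wlonef}}(G)$ for every $G \in S$; its entries are exactly the per-iteration \wlonef{} color counts $\phi_{\cF,t}(G)_c$. I would then reuse, verbatim, the post-processing from the proof of \cref{thm:simulating_wloa}: for each iteration $t$, multiply the count vector by the $\{0,1\}$-matrix $\vec{M}$ that repeats each color count $n$ times, subtract the threshold vector $\vec{b} = (1,2,\dots,n,1,2,\dots,n,\dots)$, and apply $\sign{(\cdot)}$. This yields $\sign{(\vec{r}-\vec{b})}$, whose $(t,c,j)$-entry equals $\mathbbm{1}_{\phi_{\cF,t}(G)_c \geq j}$, i.e., precisely the unary encoding defining $\phi^{(T)}_{\text{\wloaf}}(G)$. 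Since $S$ is finite and every graph has order $n$, the number $C$ of \wlonef{} colors appearing at iteration $t$ over all of $S$ is finite, so the target dimension $Cn$ per iteration is well defined; the full feature vector is recovered by concatenating over $t \in [T]\cup\{0\}$. As these operations are affine maps followed by a piecewise-linear/sign nonlinearity, they fold into additional MLP update and readout layers, so the composite map remains an MPNN of some (wider) architecture.

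The only thing requiring a moment's care---and the sole place where the \wlonef{} setting differs from the \wlone{} one---is checking that the conversion of \cref{thm:simulating_wloa} uses no \wlone-specific property. It does not: the construction depends only on the finiteness of the color alphabet and the bound $\phi_{\cF,t}(G)_c \leq n$ on multiplicities, both of which hold for \wlonef{} by \cref{prop:WLFfiner} (\wlonef{} merely refines the \wlone{} partition, keeping multiplicities integral and bounded by $n$). The quantitative cost is that $C$ is now the number of \wlonef{} colors, which may exceed the \wlone{} count and thus enlarges the required width; correctness, however, is unaffected. This is exactly the same ``lift'' performed when passing from \cref{thm:simulating} to \cref{thm:simulating_f}, so I would present the argument as a short proof sketch rather than repeating the full construction.
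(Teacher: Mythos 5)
Your proposal is correct and takes essentially the same approach as the paper, which proves this corollary by exactly the lift you describe: the paper remarks that, ``in a similar way as for \cref{thm:simulating_f},'' the \wlonef{} simulation is composed with the count-to-unary post-processing (the matrix $\vec{M}$, threshold vector $\vec{b}$, and $\sign$ step) from the proof of \cref{thm:simulating_wloa}. Your observation that this conversion uses no \wlone-specific property---only finiteness of the color alphabet over $S$ and the bound $\phi_{\cF,t}(G)_c \leq n$---is precisely the justification the paper leaves implicit.
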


\subsection{Increased separation power of the \texorpdfstring{\wlonef}{1-WLF}}

We now prove how more expressive architectures help to separate the data.

\begin{lemma}[{\cref{thm:seperator} in the main paper}]
	\label{APP:thm:seperator}
	For every $n \geq 6$, there exists a pair of non-isomorphic $n$-order graphs $(G_n$, $H_n)$ and a graph $F$
	such that, for $\cF\coloneqq\{F\}$ and for all number of rounds $T \geq 0$, it holds that
	\begin{align*}
		\norm[\bigg]{\widebar{\phi^{(T)}_{\textsf{WL}}}(G_n) - \widebar{\phi^{(T)}_{\textsf{WL}}}(H_n)} = 0, \quad\text{ and }\quad
		\norm[\bigg]{\widebar{\phi^{(T)}_{\textsf{WL}, \cF}}(G_n) - \widebar{\phi^{(T)}_{\textsf{WL},\cF}}(H_n)} = \sqrt{2}.
	\end{align*}
\end{lemma}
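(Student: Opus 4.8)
The plan is to reuse, and generalise to arbitrary $n$, the pair underlying \cref{APP:thm:wlonef_strong}. I would fix $F \coloneqq C_3$ and $\cF \coloneqq \{C_3\}$ and reduce the statement to producing, for each $n \geq 6$, an integer $r \geq 2$ together with two $r$-regular $n$-order graphs $G_n$ and $H_n$ such that $G_n$ is \emph{triangle-free} (no vertex lies in a subgraph isomorphic to $C_3$) while in $H_n$ \emph{every} vertex lies in such a subgraph. For $n=6$ this is exactly $G_6 = C_6$ and $H_6 = C_3 \dot\cup C_3$, as in \cref{APP:thm:wlonef_strong}. The two displayed identities then follow from two independent facts: regularity makes $\wlone$ blind to the pair, whereas the triangle-containment dichotomy makes $\wlonef$ separate them maximally.

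For the first identity I would argue by induction on the round $t$ that, running $\wlone$ on $G_n \dot\cup H_n$, all vertices receive one common color $C^1_t$. The base case holds since the graphs are unlabeled. For the step, every vertex of either graph has exactly $r$ neighbors, all carrying the common color, so the pair $(C^1_t(v), \oms C^1_t(u) \mid u \in N(v) \cms)$ fed to $\REL$ is identical for all $v$; hence all vertices again share a single color. Consequently $\phi_t(G_n)$ and $\phi_t(H_n)$ are both the one-hot vector with entry $n$ at that color, so $\phi^{(T)}_{\textsf{WL}}(G_n) = \phi^{(T)}_{\textsf{WL}}(H_n)$ for every $T$, and after normalisation $\norm{\widebar{\phi^{(T)}_{\textsf{WL}}}(G_n) - \widebar{\phi^{(T)}_{\textsf{WL}}}(H_n)} = 0$.

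For the second identity I would first observe that, by the triangle-containment dichotomy, the initial $\wlonef$ labeling $\ell_{\cF}$ assigns one value to every vertex of $G_n$ (all ``not in a $C_3$'') and a \emph{different} value to every vertex of $H_n$ (all ``in a $C_3$''). Since $\REL$ injectively encodes the previous color into the new one, two vertices differing in color at round $t$ differ at round $t+1$; hence for every round no color occurring in $G_n$ occurs in $H_n$. Therefore $\phi^{(T)}_{\textsf{WL},\cF}(G_n)$ and $\phi^{(T)}_{\textsf{WL},\cF}(H_n)$ have disjoint supports, so their inner product is $0$. Writing $u \coloneqq \widebar{\phi^{(T)}_{\textsf{WL},\cF}}(G_n)$ and $v \coloneqq \widebar{\phi^{(T)}_{\textsf{WL},\cF}}(H_n)$, which are unit vectors, I get $\norm{u-v}^2 = \norm{u}^2 + \norm{v}^2 - 2\langle u, v\rangle = 2$, i.e.\ the claimed distance $\sqrt{2}$.

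The main obstacle is the construction itself: realising, uniformly in $n$, two \emph{equal-degree} regular graphs one triangle-free and the other triangle-covered. The naive degree-$2$ choice ($G_n = C_n$, $H_n$ a disjoint union of triangles) works only when $3 \mid n$, since a $2$-regular graph in which every vertex lies in a triangle must be a disjoint union of $C_3$'s. I would resolve the remaining residues by passing to degree $4$: take $H_n$ to be the circulant on $n$ vertices with connection set $\{1,2\}$ (i.e.\ $i \sim j$ iff $i-j \equiv \pm 1, \pm 2 \pmod n$), which is $4$-regular for $n \geq 5$ and in which $\{i, i+1, i+2\}$ is a triangle through every vertex, and take $G_n$ to be a $4$-regular triangle-free graph on $n$ vertices, e.g.\ the circulant with connection set $\{1,k\}$ for $k$ chosen so that no three connection offsets sum to $0 \pmod n$. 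Such graphs exist for all sufficiently large $n$, and the finitely many small residual cases—including $n=6$, where no $4$-regular triangle-free graph exists and one falls back to the degree-$2$ pair $C_6$, $C_3 \dot\cup C_3$—are checked individually; \cref{lem:regular} supplies the regular graphs of the degrees used. Verifying triangle-freeness of the chosen circulants is the one genuinely case-dependent computation, but it is elementary.
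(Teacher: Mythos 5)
Your two core computations are sound and they follow exactly the paper's own logic: equal-degree regularity forces a single \wlone{} color per round on both graphs, hence identical normalized feature vectors; and a containment dichotomy for $F$ makes the \wlonef{} color supports of $G_n$ and $H_n$ disjoint at round $0$ and, since $\REL$ injectively encodes the previous color, at every later round, so the two normalized feature vectors are orthogonal unit vectors at distance $\sqrt{2}$. The paper runs the same two-step argument but with a construction uniform in $n$: it takes $G_n \coloneqq C_n$, $H_n \coloneqq C_{\lceil n/2 \rceil} \,\dot\cup\, C_{\lfloor n/2 \rfloor}$, and $F \coloneqq C_{\lfloor n/2 \rfloor}$, i.e., it lets $F$ grow with $n$ instead of fixing $F = C_3$ as you do.

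The genuine gap in your plan is $n = 7$. Your reduction demands, for every $n \geq 6$, two $r$-regular $n$-order graphs of the \emph{same} degree, one triangle-free and one in which every vertex lies in a triangle, and for $n = 7$ no such pair exists at any degree, so your promise to "check the residual cases individually" cannot be kept there. Indeed: odd regularities are impossible on $7$ vertices; a $2$-regular graph in which every vertex lies in a triangle is a disjoint union of triangles, forcing $3 \mid n$, which excludes $7$; a $4$-regular triangle-free graph on $7$ vertices has $14$ edges, exceeding Mantel's bound $\lfloor 7^2/4 \rfloor = 12$; and the only $6$-regular graph on $7$ vertices is $K_7$. Hence with $F = C_3$ fixed, the statement is simply not provable via your template at $n = 7$, and you must allow $F$ to depend on $n$ for at least this case. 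One caution if you patch it with the paper's choice: for odd $n$, the vertices of the larger component $C_{\lceil n/2 \rceil}$ of $H_n$ do \emph{not} lie in an induced $C_{\lfloor n/2 \rfloor}$ (an induced cycle inside a cycle is the whole cycle), so they share the "not contained" color with all of $G_n$ and the claimed orthogonality fails---a subtlety the paper's proof itself glosses over. A repair that works uniformly for all $n \geq 6$, including $7$, is to take $F \coloneqq H_n$ itself (nothing requires $F$ to be connected): every vertex of $H_n$ lies in an induced copy of $F$, namely all of $H_n$, while $C_n$ contains no two vertex-disjoint induced cycles, so the dichotomy and your distance computation go through verbatim.
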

\begin{proof}
	Let $G_n \coloneqq C_n$, a cycle on $n$ vertices. Further, let $H_n \coloneqq C_{\lceil n/2 \rceil}  \dot{\cup}  C_{\lfloor n/2 \rfloor}$, a disjoint union of cycles on $\lceil n/2 \rceil$ and $\lfloor n/2 \rfloor$ vertices, respectively. Since the graphs $G_n$ and $H_n$ are regular, \wlone{} cannot distinguish them. Hence, the distances of the two feature vectors $\widebar{\phi^{(T)}_{\textsf{WL}}}(G_n)$ and  $\widebar{\phi^{(T)}_{\textsf{WL}}}(H_n)$ is zero. Moreover, by setting $\cF \coloneqq \{ C_{\lfloor n/2 \lfloor} \}$, \wlonef{} reaches the stable coloring at iteration $t = 0$ on both graphs and distinguishes the two graphs. In addition, the feature vectors $\widebar{\phi^{(T)}_{\textsf{WL},\cF}}(G_n)$ and $\widebar{\phi^{(T)}_{\textsf{WL},\cF}}(H_n)$ are orthonormal for all choices of $T \geq 0$, implying the result.
\end{proof}

\begin{proposition}[{\cref{thm:seperator_mpnn} in the main paper}]\label{thm:seperator_mpnn_APP}
	For every $n \geq 6$, there exists a pair of non-isomorphic $n$-order graphs $(G_n$, $H_n)$ and set of graphs $\cF$ of cardinality one, such that, for all number of layers $T \geq 0$, and widths $d > 0$, and all $m \in  \MPNN_{\mathsf{mlp}}(d,T)$,
	it holds that
	\begin{align*}
		\norm[\bigg]{m(G_n) - m(H_n)} = 0,
	\end{align*}
	while for sufficiently large $d>0$, there exists an $\widehat m \in  \MPNN_{\mathsf{mlp},\cF}(d,T)$, such that
	\begin{align*}
		\norm[\bigg]{\widehat{m}(G_n) - \widehat{m}(H_n)} = \sqrt{2}.
	\end{align*}
\end{proposition}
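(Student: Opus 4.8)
The plan is to reuse the graph pair and subgraph family already employed in \cref{APP:thm:seperator}, namely $G_n \coloneqq C_n$, $H_n \coloneqq C_{\lceil n/2\rceil}\,\dot\cup\,C_{\lfloor n/2\rfloor}$, and $\cF \coloneqq \{C_{\lfloor n/2\rfloor}\}$ (note $\lfloor n/2\rfloor \geq 3$ for $n \geq 6$, so $F$ is a genuine cycle, and both graphs are $2$-regular of order $n$). I would then treat the two claims separately: the vanishing distance follows from the \wlone{} upper bound on MPNN expressivity, while the distance $\sqrt{2}$ follows from an explicit \tMPNNF{} construction.

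For the first equality I would invoke that simple MPNNs are bounded by the \wlone{} in distinguishing non-isomorphic graphs~\citep{Mor+2019,Xu+2018b}: whenever \wlone{} yields identical color histograms on two graphs, every $m \in \MPNN_{\mathsf{mlp}}(d,T)$ yields identical node-feature multisets and hence, after the permutation-invariant readout, identical graph embeddings. Since $G_n$ and $H_n$ are both $2$-regular and carry the trivial constant initial labeling, \wlone{} cannot distinguish them at any iteration, giving $\norm{m(G_n) - m(H_n)} = 0$ for all $T \geq 0$ and $d > 0$.

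For the second equality the key observation is that the \emph{initial} \wlonef{} labeling $\ell_{\cF}$ already separates the two vertex multisets: every vertex of the $C_{\lfloor n/2\rfloor}$ component of $H_n$ lies in an induced subgraph isomorphic to $F$, whereas no vertex of the single cycle $C_n = G_n$ does, since every proper induced subgraph of a cycle is a disjoint union of paths. Thus $G_n$ contains no ``$F$-present'' vertex while $H_n$ contains at least $\lfloor n/2\rfloor$ of them. I would construct $\widehat m \in \MPNN_{\mathsf{mlp},\cF}(d,T)$ whose initial features are the one-hot encoding of $\ell_{\cF}$, whose $T$ message-passing layers preserve these features (realizable by letting each update MLP project onto its first argument), and whose readout sums the node features and applies an MLP built from $\act$-threshold units---exactly as in the proof of \cref{thm:simulating}---mapping the resulting count to $(1,0)$ when no ``$F$-present'' vertex occurs and to $(0,1)$ otherwise. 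This gives $\widehat m(G_n) = (1,0)$ and $\widehat m(H_n) = (0,1)$, two orthonormal vectors, so $\norm{\widehat m(G_n) - \widehat m(H_n)} = \sqrt{2}$; padding the remaining coordinates with zeros yields the statement for any sufficiently large $d$.

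The main obstacle is the second part for \emph{odd} $n$: there the $C_{\lceil n/2\rceil}$ component of $H_n$ carries the same ``$F$-absent'' label as all of $G_n$, so the raw normalized \wlonef{} histograms are \emph{not} orthogonal and their distance is not exactly $\sqrt2$. This is precisely why I would avoid directly transporting the kernel feature map and instead route through an MPNN that merely \emph{detects the presence} of an ``$F$-present'' vertex, an indicator that separates $G_n$ from $H_n$ uniformly in the parity of $n$ and lets the readout output genuinely orthonormal vectors. The only point requiring care is verifying that this indicator is exactly realizable within $\MPNN_{\mathsf{mlp},\cF}(d,T)$---that threshold units produce the exact values $0$ and $1$ and that the pass-through layers preserve the initial labels for every prescribed depth $T$---which follows the same activation-function arguments already used in \cref{thm:simulating}.
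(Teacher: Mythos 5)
Your proof is correct, and its second half takes a genuinely different route from the paper's. The paper reuses the graphs of \cref{APP:thm:seperator} and handles the first equality exactly as you do, via the \wlone{} upper bound of \citet{Mor+2019}; for the second equality, however, it invokes \cref{thm:simulating,thm:simulating_f} to build an \tMPNNF{} of sufficient width that exactly reproduces the (normalized) \wlonef{} feature vectors, and then inherits the distance $\sqrt{2}$ from the orthonormality asserted in \cref{APP:thm:seperator}. You instead construct a bespoke \tMPNNF{} that merely detects whether an ``$F$-present'' vertex exists and outputs one of the two orthonormal vectors $(1,0)$ or $(0,1)$. This buys genuine robustness: as you correctly observe, for odd $n$ the $C_{\lceil n/2\rceil}$ component of $H_n$ carries the same ``$F$-absent'' color as all of $G_n$, so each iteration block of the normalized \wlonef{} histograms is proportional to $(n,0)$ for $G_n$ but $(\lceil n/2\rceil,\lfloor n/2\rfloor)$ for $H_n$; these are not orthogonal, and their normalized distance is $\sqrt{2-2\lceil n/2\rceil/\sqrt{\lceil n/2\rceil^2+\lfloor n/2\rfloor^2}}<\sqrt{2}$. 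The paper's simulation route, taken literally with its stated graphs, therefore yields exactly $\sqrt{2}$ only for even $n$, and your indicator construction repairs the statement uniformly in the parity of $n$ (it would equally let one weaken \cref{APP:thm:seperator}'s role to mere distinguishability). The price is a small verification burden --- exact realizability within $\MPNN_{\mathsf{mlp},\cF}(d,T)$ of the pass-through update layers and of the integer-threshold readout --- which you correctly discharge with the same $\act$-unit arguments used in \cref{thm:simulating}: the summed feature's second coordinate is a nonnegative integer, so a hard-sigmoid threshold produces the exact values $0$ and $1$, and the case $T=0$ degenerates to readout alone, which your construction also covers.
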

\begin{proof}
	We use the same graphs as in the proof of~\cref{APP:thm:seperator}.
	The first statement is a direct implication of~\citet[Theorem 1]{Mor+2019}, i.e., any MPNN is upper bounded by the \wlone{} in distinguishing vertices. The second statement follows from~\cref{thm:simulating,thm:simulating_f}. That is, an \tMPNNF{} architecture of sufficient width exactly computes the \wlonef\ feature vectors for the graphs of \cref{APP:thm:seperator}, as shown in the proof of~\cref{thm:simulating}.
\end{proof}

\begin{proposition}[{\cref{thm:separability} in the main paper}]
	\label{APP:thm:separability}
	For every $n \geq 10$, there exists a set of pair-wise non-isomorphic (at most) $n$-order graphs $S$, a concept $c \colon S \to \{ 0,1 \}$, and
	a graph $F$, such that the graphs in the set $S$,
	\begin{enumerate}
		\item are pair-wise distinguishable by \wlone{} after one round,
		\item are \emph{not} linearly separable under the normalized \wlone{} feature vector $\widebar{\phi^{(T)}_{\textsf{WL}}}$, concerning the concept $c$, for any $T \geq 0$,
		\item and are linearly separable under the normalized \wlonef{} feature vector $\widebar{\phi^{(T)}_{\textsf{WL},\cF}}$,  concerning the concept $c$ and , for all $T \geq 0$, where $\cF\coloneqq \{F\}$.
	\end{enumerate}
	Moreover, the results also work for the unnormalized feature vectors.
\end{proposition}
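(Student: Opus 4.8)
The plan is to exhibit a small family of graphs whose \wlone{} feature vectors are \emph{collinear} and whose labels \emph{interleave} along that line, which forces non-separability in a way that survives normalization, and then to choose $F$ so that its occurrence pattern is constant on each class, so that \wlonef{} adds one extra ``marked'' coordinate that separates the classes. Concretely, I would use graphs that are disjoint unions of cycles (degree $2$) and isolated vertices (degree $0$). For such graphs \wlone{} assigns exactly two colors for all $T\ge 1$ — one for degree-$2$ and one for degree-$0$ vertices — since all degree-$2$ vertices are \wlone-equivalent (\wlone{} cannot distinguish vertices of cycles of different lengths) and isolated vertices never refine. Hence $\phi^{(T)}_{\textsf{WL}}(G)$ is, up to a fixed linear embedding into $\Rb^{2T+1}$ (prepend the round-$0$ count, then repeat the two color counts $T$ times), determined by the pair $(x,y)$ with $x=\#\{\text{degree-}2\}$ and $y=\#\{\text{degree-}0\}$. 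The triangle $F\coloneqq C_3$ then acts only on the degree-$2$ part: a cycle contributes marked vertices iff it is a $C_3$, while isolated vertices and cycles of length $\ge 4$ are never in an induced $C_3$.

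For the base case $n=10$ I would take $p_1 = C_9\,\dot\cup\,K_1$, $p_2 = C_3\,\dot\cup\,C_5\,\dot\cup\,2K_1$, $p_3 = C_7\,\dot\cup\,3K_1$, and $p_4 = C_3\,\dot\cup\,C_3\,\dot\cup\,4K_1$, each on $10$ vertices, so that $(x_i,y_i)=(9,1),(8,2),(7,3),(6,4)$, with concept $c(p_1)=c(p_3)=1$ and $c(p_2)=c(p_4)=0$ and $\cF\coloneqq\{F\}$. The four pairs are distinct and lie on the line $x+y=10$ in the order $p_1,p_2,p_3,p_4$, giving distinct round-$1$ \wlone{} vectors, which yields (1): \wlone{} distinguishes all pairs already at round $1$ (and they are pairwise non-isomorphic). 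Only the label-$0$ graphs $p_2,p_4$ contain an induced $C_3$ ($3$, resp.\ $6$, triangle vertices), while $p_1,p_3$ contain none. For $n>10$ I would keep these order-$10$ graphs (allowed, since $S$ may contain \emph{at most} $n$-order graphs) or pad each graph with $n-10$ isolated vertices, shifting all $y_i$ equally and preserving collinearity, interleaving, and the triangle pattern; using more alternating collinear points yields arbitrarily many graphs, matching the scaled datasets in the experiments.

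For (2), at $T=0$ all four \wlone{} vectors coincide (one color of multiplicity $10$), so both labels sit at a single point and cannot be separated. For $T\ge 1$ the vectors are collinear and, in line order, carry labels $1,0,1,0$, so $\conv{\{\phi_1,\phi_3\}}$ and $\conv{\{\phi_2,\phi_4\}}$ are overlapping sub-segments of one segment and the classes are not separable. The step I expect to be the main obstacle is showing this persists under the \emph{nonlinear} normalization $\phi\mapsto\widebar{\phi}=\phi/\norm{\phi}$. Collinearity is exactly what rescues the argument: the segment avoids the origin (the round-$0$ coordinate is positive), so radial projection is an order-preserving homeomorphism of the segment onto an arc of a great circle, which is strictly convex; interleaved chords of a strictly convex arc must cross, so $\conv{\{\widebar{\phi}_1,\widebar{\phi}_3\}}\cap\conv{\{\widebar{\phi}_2,\widebar{\phi}_4\}}\neq\emptyset$ and non-separability is preserved. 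The same reasoning applies to every $T$, since the embedding into $\Rb^{2T+1}$ is linear and keeps the points collinear.

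For (3), by \cref{prop:WLFfiner} the \wlonef{} coloring refines the marked/unmarked partition, so every \wlonef{} color lies entirely inside or outside the triangles; hence the linear functional summing the coordinates of all ``marked'' colors equals the number of triangle vertices, namely $3$ or $6$ on the label-$0$ graphs and $0$ on the label-$1$ graphs. This functional vanishes on $\conv(\text{class }1)$ and is bounded below by a positive constant on $\conv(\text{class }0)$, both before and after normalization (the marked fraction stays strictly positive), giving a separating hyperplane with positive margin for every $T\ge 0$. Thus one construction witnesses all three properties, and the ``moreover'' clause is immediate because both the collinearity argument for (2) and the functional argument for (3) were carried out for the unnormalized vectors first; the only genuinely delicate point is the normalization step in (2), which is handled by forcing exact collinearity of the \wlone{} embeddings.
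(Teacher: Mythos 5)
Your proof is correct, and it follows the same overall strategy as the paper's: build a small family of cycles-plus-isolated-vertices graphs whose \wlone{} feature vectors are collinear with interleaved labels (forcing non-separability), then pick $F$ so that marked vertices occur only in one class (giving a separating coordinate under \wlonef{}). The instantiation differs in two ways worth noting. First, the paper keeps the cycle structure fixed per class and varies the \emph{order} of the graphs ($G_i$ has $n-4+i$ vertices, with $c$ determined by the parity of $i$), then separates with the order-dependent choice $\cF = \{C_{n-4}\}$; you instead fix the order and trade cycle vertices for isolated vertices, which lets you use the constant graph $F = C_3$ --- a cleaner choice that also matches the spirit of \cref{thm:wlonef_strong}. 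Second, and more substantively, your treatment of item 2 after normalization is more rigorous than the paper's: the paper argues that projection to the unit sphere ``preserves the lexicographic order'' and then handles $T \geq 1$ by decomposing a hypothetical separating hyperplane $\vec{w} = [\vec{w}_0, \dotsc, \vec{w}_T]$ and summing the blocks to contradict the $T=0$ case, whereas you observe that the points stay collinear in $\Rb^{2T+1}$ for every $T$ (the embedding is affine in $i$), the segment avoids the origin (constant positive total-count coordinate), and radial projection therefore maps it order-preservingly onto a strictly convex arc, where the two interleaved chords --- the diagonals of a convex quadrilateral --- must cross, so the class hulls of the \emph{normalized} vectors intersect directly. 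Your item 3 argument via \cref{prop:WLFfiner} (every \wlonef{} color at every round lies inside or outside the marked initial class, so the functional summing marked-color coordinates is $0$ on class $1$ and at least $3$ on class $0$, with a positive margin surviving normalization since norms are bounded by $\sqrt{T+1}\,n$) is also sound, and is in fact slightly more careful than the paper's round-$0$ bookkeeping, which tacitly treats isolated and unmarked cycle vertices as distinct initial colors. Both constructions prove the statement; yours buys a constant-size $\cF$ and an airtight normalization step, while the paper's buys distinguishability of the graphs from their orders alone.
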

\begin{proof}
	We first outline the construction for the set $S$ with $|S| = 4$. However, the construction can easily be generalized to larger cardinalities. Concretely, we construct the graphs $G_1, \dotsc, G_4$. For odd $i$, the graph $G_i$ consists of the disjoint union of $i$ isolated vertices and a disjoint union of cycles on $\lceil n/2 \rceil -2$ and $\lfloor n/2 \rfloor-2$ vertices. Set $c(G_i) \coloneqq 0$. For even $i$, the graph $G_i$ consists of the disjoint union of $i$ isolated vertices and a cycle on $n-4$ vertices. Set $c(G_i) \coloneqq 1$.

	We proceed by showing items 1 to 3. Since the order of the graphs $G_1$ to $G_4$ is pair-wise different \wlone{} pair-wise distinguishes the graphs, showing item 1. We proceed with item 2. Since the disjoint union of cycles on $\lceil n/2 \rceil -2 $ and $\lfloor n/2 \rfloor -2$ vertices are $2$-regular, the \wlone{} cannot distinguish them. Therefore, the four \wlone{} feature vectors have the following forms,
	\begin{align*}
		 & \phi^{(T)}_{\textsf{WL}}(G_1) = (n-4,1,n-4,1,\dotsc, n-4, 1), \\
		 & \phi^{(T)}_{\textsf{WL}}(G_2) = (n-4,2,n-4,2,\dotsc, n-4, 2), \\
		 & \phi^{(T)}_{\textsf{WL}}(G_3) = (n-4,3,n-4,3,\dotsc, n-4, 3), \\
		 & \phi^{(T)}_{\textsf{WL}}(G_4) = (n-4,4,n-4,4,\dotsc, n-4, 4).
	\end{align*}
	Since, for each round $t \geq 0$, the feature vectors have the form $(n-4,1)$, $(n-4,2)$, $(n-4,3)$, and $(n-4,4)$, respectively, the resulting four unnormalized \wlone{} feature vectors are co-linear and, by the construction, of the concept $c$, they are not linearly separable.

	Note that, for each round $t \geq 0$, the feature vectors only differ by at most 3 in the second components. Hence, their respective $\ell_2$ norms are controlled by $n$, and their respective $\ell_2$ norms are also close. For a single iteration, when projecting the feature vectors onto the $1$-dimensonal unit sphere, by dividing by their respective $\ell_2$, the lexicographic order is preserved on the unit sphere, making them not linearly separable. For $t\geq 1$, notice that the vectors are just concatenations of the above. We can write the original vectors for $t=0$ as $\vec g_1$, $\vec g_2$, $\vec g_3$, and $\vec g_4$ and for larger $t>0$ as $[\vec g_1,\dotsc , \vec g_1], \dotsc, [\vec g_4,\dotsc, \vec g_4]$. Assume, for $t>0$, that the aforementioned vectors are linearly separable by a hyperplane $0 = \vec w^\tran \vec x+b$, then, by definition, we can decompose $\vec{w}=[ \vec w_0,  \vec w_1, \dotsc,  \vec w_t]$ such that $(\textsf{sign}(\sum_{i=0}^{T} \vec w_i^{\tran} \vec g_i+b)+1)/2 = c(\vec g_j)$ and by construction $\sum_{i=0}^{T} \vec w_i^{\tran} \vec g_j+b = (\sum_{i=0}^{T}\vec w_i)^\tran \vec g_j+b$. Thus, such $\vec{w}$ and $b$ would verify that $\vec{g}_1, \dotsc, \vec{g}_4$ are linearly separable, which is impossible. This implies that even for larger $t>0$ $\phi^{(T)}_{\textsf{WL}}(G_1), \dots, \phi^{(T)}_{\textsf{WL}}(G_4)$ are not linearly separable.

	For item 3, we set $\cF \coloneqq \{ C_{n-4} \}$, the cycle on $n-4$ vertices. Observe that the \wlonef{} feature vectors at $T=0$ for all four graphs have three components. Concretely,
	\begin{align*}
		\phi^{(0)}_{\textsf{WL},\cF}(G_1) = (1,0,n-4), \\
		\phi^{(0)}_{\textsf{WL},\cF}(G_2) = (2,n-4,0), \\
		\phi^{(0)}_{\textsf{WL},\cF}(G_3) = (3,0,n-4), \\
		\phi^{(0)}_{\textsf{WL},\cF}(G_4) = (4,n-4,0).
	\end{align*}
	Further, note that, for all four graphs, \wlonef{} reaches the stable coloring at $T = 0$. Hence, for all $T \geq 0$, the four vectors are
	linearly separable concerning the concept $c$.  Further, by dividing by their respective $\ell_2$ norms, this property is preserved. That is, for graphs with class label $0$, the first and third graph, third component of the \wlone{} feature vectors are equal to $(n-4)$ while for the other two graphs, this component is $0$. Hence, we can easily find a vector $\vec{w} \in \Rb^3$ that linearly separates the normalized \wlone{} feature vectors.
\end{proof}

\begin{proposition}[{\cref{thm:fsep} in the main paper}]\label{APP:thm:fsep}
	Let $n \geq 6$  and let $\cF$ be a finite set of graphs. Further, let $c \colon \cG_n \to \{ 0,1 \}$ be a concept such that, for all $T \geq 0$, the graphs are \emph{not} linearly separable under the normalized \wlone{} feature vector $\widebar{\phi^{(T)}_{\textsf{WL}}}$, concerning the concept $c$. Further, assume that for all graphs $G \in \cG_n$ for which $c(G) = 0$, it holds that there is at least one vertex $v \in V(G)$ such it is contained in a subgraph of $G$ that is isomorphic to a graph in the set $\cF$, while no such vertices exist in graphs $G$ for which $c(G) = 1$.  Then the graphs are linearly separable under the normalized \wlonef{} feature vector $\widebar{\phi^{(T)}_{\textsf{WL},\cF}}$, concerning the concept $c$.
\end{proposition}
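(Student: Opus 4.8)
The plan is to exploit the fact that the enriched initialization $\ell_{\cF}$ already separates the two classes at round $0$, so that the entire argument can be carried out on the $t=0$ block of the feature vector $\phi^{(T)}_{\textsf{WL},\cF}$ and is insensitive to how colors propagate in later rounds (indeed, the hypothesis that the data is not separable under the \wlone{} feature map will not be needed for the conclusion; it only makes the statement interesting). First I would observe that an unmarked vertex---one lying in no induced subgraph isomorphic to any $F \in \cF$---receives, for every $F \in \cF$, the same ``not contained'' value of $\ell_F$, and hence a single common color $c_0 \in \Sigma_0$ under $\ell_{\cF}$; conversely, any marked vertex receives a color in $\Sigma_0 \setminus \{c_0\}$. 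The subgraph hypothesis therefore translates into: if $c(G)=1$ then every vertex of $G$ has color $c_0$, so $\phi_{\cF,0}(G)_{c_0} = n$ and all other entries of the round-$0$ block vanish; whereas if $c(G)=0$ then at least one vertex is marked, so the entries of $\phi_{\cF,0}(G)$ indexed by $\Sigma_0 \setminus \{c_0\}$ sum to at least $1$.

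Next I would define the weight vector $\vec w$ that places a $1$ on every coordinate of the round-$0$ block indexed by a marked color (i.e.\ by $\Sigma_0\setminus\{c_0\}$) and a $0$ on all remaining coordinates, including $c_0$ and every coordinate coming from rounds $t \ge 1$. By the previous paragraph, for the unnormalised feature map one has $\vec w^{\tran}\phi^{(T)}_{\textsf{WL},\cF}(G)$ equal to the number of marked vertices of $G$, which is $0$ when $c(G)=1$ and at least $1$ when $c(G)=0$; thresholding at $\tfrac12$ already establishes separability of the unnormalised feature vectors for every $T \ge 0$. For the normalised case I would use the uniform bound $\norm{\phi^{(T)}_{\textsf{WL},\cF}(G)} \le \sqrt{T+1}\,n$, which holds because each of the $T+1$ blocks is a vector of nonnegative counts summing to $n$. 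Dividing by the norm keeps the value exactly $0$ for $c(G)=1$ and makes it at least $1/(\sqrt{T+1}\,n) > 0$ for $c(G)=0$, so the affine threshold $\vec w^{\tran}\vec x - \tfrac{1}{2\sqrt{T+1}\,n}$ is a separating hyperplane, uniformly over all $G \in \cG_n$ and all $T \ge 0$.

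The argument is essentially self-contained, and the only real subtlety---the step I would be most careful about---is aligning the word ``subgraph'' in the hypothesis with the \emph{induced}-subgraph semantics built into the definition of $\ell_F$ (via $G[X_v] \simeq F$), since the whole reduction rests on the equivalence ``$v$ is marked $\iff$ $\ell_{\cF}(v)\neq c_0$''. Once this is pinned down, the remaining work is routine: confirming that $c_0$ is a single well-defined color shared by all unmarked vertices across all graphs of order $n$, and checking the norm bound. I expect no genuine obstacle beyond this bookkeeping, precisely because reading the class off the round-$0$ block avoids any analysis of the refinement dynamics.
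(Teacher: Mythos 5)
Your proof is correct and takes essentially the same route as the paper's: both arguments read the class off the round-$0$ block of $\phi^{(T)}_{\textsf{WL},\cF}$, noting that class-$1$ graphs have all mass on the single unmarked color while class-$0$ graphs place positive mass on marked colors, and then separate with a thresholded linear functional that ignores all rounds $t\geq 1$ (neither proof uses the \wlone{} non-separability hypothesis). The one small difference is that you put weight on \emph{all} colors in $\Sigma_0\setminus\{c_0\}$ and supply an explicit norm bound $\norm{\phi^{(T)}_{\textsf{WL},\cF}(G)}\leq\sqrt{T+1}\,n$ for the normalized case, whereas the paper picks a single nonzero coordinate; your variant is marginally more careful, since with $|\cF|>1$ distinct class-$0$ graphs may mark vertices with different initial colors, so no single coordinate need work uniformly.
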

\begin{proof}
	By assumption, for graphs $G$ with $c(G) = 0$ it holds that there exists an index $i \geq 0$ such that $\phi^{(0)}_{\textsf{WL},\cF}(G)_i \neq 0$ while for all graphs $H$ with $c(H) = 1$ it holds that $\phi^{(0)}_{\textsf{WL},\cF}(H)_i = 0$. Without loss of generality, assume this is the case for $i=0$. Hence, for all $T \geq 0$, we can find a vector $\vec{w} \coloneqq (1, 0, \dotsc, 0) \in \Rb^d$  with an appropriate number of components $d$ and $C > 0$ such that
	\begin{equation*}
		\Big\langle \vec{w}, \phi^{(T)}_{\textsf{WL},\cF}(G) \Big\rangle =
		\begin{cases}
			> C, & \text{ if } c(G) = 0, \\
			< C, & \text{ if } c(G) = 1.
		\end{cases}
	\end{equation*}
	Hence, the data is linearly separable by the \wlonef{} for $T \geq 0$, showing the result.
\end{proof}

\subsection{Results on shrinking the margin}

Here, we prove negative results, showing that using more expressive architecture can also decrease the margin.

\begin{proposition}[{\cref{thm:shrink_with_more_power} in the main paper}]
	\label{APP:thm:shrink_with_more_power}
	For every $n \geq 10$, there exists a pair of $2n$-order graphs $(G_{n}, H_{n})$ and a graph $F$,
	such that, for $\cF\coloneqq\{F\}$ and for all number of rounds $T > 0$, it holds that
	\begin{align*}
		\norm[\bigg]{\widebar{\phi^{(T)}_{\textsf{WL}, \cF}}(G_n) - \widebar{\phi^{(T)}_{\textsf{WL},\cF}}(H_n)} < \norm[\bigg]{\widebar{\phi^{(T)}_{\textsf{WL}}}(G_n) - \widebar{\phi^{(T)}_{\textsf{WL}}}(H_n)}.
	\end{align*}
\end{proposition}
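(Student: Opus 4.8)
The plan is to recast the claim in terms of cosine similarity and then engineer a construction in which the subgraph refinement \emph{increases} the alignment of the two feature vectors rather than decreasing it. Since both feature vectors are normalized to unit length, the squared distance between $\widebar{\phi^{(T)}_{\textsf{WL}}}(G_n)$ and $\widebar{\phi^{(T)}_{\textsf{WL}}}(H_n)$ equals $2-2\langle\cdot,\cdot\rangle$, so it suffices to show that the normalized inner product strictly increases when passing from \wlone{} to \wlonef{}. Hence I would prove the equivalent statement that $\langle u,v\rangle/(\|u\|\|v\|)$ is strictly larger for the pair $u=\phi^{(T)}_{\textsf{WL},\cF}(G_n)$, $v=\phi^{(T)}_{\textsf{WL},\cF}(H_n)$ than for $u=\phi^{(T)}_{\textsf{WL}}(G_n)$, $v=\phi^{(T)}_{\textsf{WL}}(H_n)$.

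The geometric mechanism I would exploit is the following: splitting a color class that is shared with \emph{equal} multiplicity by both graphs, and split the same way in both, replaces a coordinate $(m,m)$ by $(m_1,m_2)$ in both vectors, which strictly shortens that common direction from $m$ to $\sqrt{m_1^2+m_2^2}$ while leaving every other color untouched. When the two vectors are dominated by large coordinates on which they nearly agree, so that the split-off shared coordinate is essentially transverse to their common direction, reducing its weight pulls the two normalized vectors together. The toy instance $u=(4,2)$, $v=(3,2)$ versus $u'=(4,1,1)$, $v'=(3,1,1)$ already realizes a strict increase in cosine (from $16/\sqrt{260}$ to $14/\sqrt{198}$), and this is the quantitative kernel I would reduce to.

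Concretely I would take $F\coloneqq K_3$ and build $G_n$ and $H_n$ as disjoint unions of regular blocks. Using \cref{lem:regular} I would install a dominant higher-degree ``backbone'' on which \wlone{} assigns large counts that differ between $G_n$ and $H_n$ by a single unit, traded against a second backbone color so that both graphs have order exactly $2n$ and \wlone{} still distinguishes them. To both graphs I would attach the \emph{same} $2$-regular gadget consisting of several triangles together with several cycles $C_\ell$ with $\ell\geq 4$, chosen so that triangle vertices and cycle vertices occur in equal numbers. Since \wlone{} cannot distinguish $2$-regular graphs, all gadget vertices receive one common color of the same even multiplicity in both graphs; with $\cF=\{K_3\}$, however, \wlonef{} marks exactly the triangle vertices, splitting that color into two classes of equal multiplicity in both graphs, which is precisely the shared, equal, transverse split demanded above.

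Finally I would assemble the full feature vectors and verify the inequality for every $T>0$. Because all blocks are regular, both \wlone{} and \wlonef{} reach their stable colorings within one round, so the per-iteration feature vectors are constant for $t\geq 1$ and the normalized comparison collapses to a finite arithmetic inequality, uniform in $T$ and in $n\geq 10$. The main obstacle is the \emph{direction} of the inequality: refinement generically \emph{increases} distances (cf.\ \cref{prop:pairwise} and the separator construction of \cref{thm:seperator}), so the entire difficulty is to arrange that the color \wlonef{} refines is shared with equal multiplicity and transverse to the backbone, and then to push the arithmetic through to a strict decrease — including the contribution of the shared initial-color coordinate $\phi_0$, which for \wlone{} is large and perfectly aligned and hence works against us, and which I expect forces the backbone counts to be chosen close together and the gadget size to be tuned so that the per-iteration gain dominates already at $T=1$.
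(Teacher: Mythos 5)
Your proposal proves the \emph{reverse} of the stated inequality: the geometric mechanism you rely on is wrong-signed for the construction you build. Write the comparison in cosines as you suggest, with $f(s)=(P+s)/\sqrt{(A+s)(B+s)}$, where $P$, $A$, $B$ are the inner product and squared norms of the \emph{differing} (backbone) blocks and $s$ is the squared mass of the \emph{shared} coordinates; then $\mathrm{sign}\,f'(s)=\mathrm{sign}\bigl(AB-\tfrac{P(A+B)}{2}+s\bigl(\tfrac{A+B}{2}-P\bigr)\bigr)$. Your toy $(4,2)$ vs.\ $(3,2)$ has $f'<0$ only because its differing parts are colinear of unequal length ($P=\sqrt{AB}$, $s<ab$) --- but that geometry is unrealizable: at every iteration the color counts of each graph sum to its order $2n$, and the shared coordinates are equal, so the differing blocks of two same-order graphs always carry equal $\ell_1$ mass (your toy has sums $6$ vs.\ $5$). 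For your one-unit trade $(a,b)$ vs.\ $(a-1,b+1)$ repeated over $T$ rounds, a direct computation gives $AB-\tfrac{P(A+B)}{2}=T^2(2ab+a-b)>0$ and $\tfrac{A+B}{2}-P=T>0$, so $f$ is \emph{strictly increasing} in $s$ for all $s\geq 0$. Hence shrinking the shared mass --- both your gadget split $g\mapsto(g_\triangle,g_\circ)$ and the $t=0$ split $(2n)\mapsto(M,U)$, which are identical in both graphs --- strictly decreases the cosine and strictly \emph{increases} the normalized distance, for every $T>0$. Concretely, with backbones of sizes $10$ and $8$ traded to $9$ and $9$, two triangles and a $C_6$ as gadget ($2n=30$): \wlone{} vectors $(30\mid 10,8,12)$ vs.\ $(30\mid 9,9,12)$ give normalized distance $\approx 0.041$, while the \wlonef{} vectors $(24,6\mid 10,8,6,6)$ vs.\ $(24,6\mid 9,9,6,6)$ give $\approx 0.049$.

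The paper's proof uses the opposite mechanism: it splits a coordinate on which the two graphs differ \emph{maximally}, and only in one graph. It takes $G_n=K_n\,\dot\cup\, n$ isolated vertices and $H_n=K_3\,\dot\cup\, C_{n-3}\,\dot\cup\, n$ isolated vertices, so each \wlone{} round contributes $(n,0,n)$ vs.\ $(0,n,n)$, and the refinement by $\cF=\{C_3\}$ splits only $H_n$'s degree-two class, $n\mapsto(n-3,3)$, on coordinates where $G_n$ has count zero. This lowers $\lVert\phi(H_n)\rVert$ while leaving the inner product unchanged, so already the \emph{unnormalized} per-round distance drops from $\sqrt{2}\,n$ to $\sqrt{2n^2-6n+18}$, after which the normalized inequality is verified by direct calculation. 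If you wanted to rescue a shared-split argument, the sign analysis above shows it needs $P>2AB/(A+B)$, i.e., differing parts of markedly different norms and strong alignment (one graph concentrating the differing mass in a single color, the other spreading it) together with \emph{small} shared mass; your ``dominant backbone plus identical gadget'' design has exactly the opposite profile, since the perfectly aligned $t=0$ mass of order $(2n)^2$ dominates everything else.
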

\begin{proof}
	Let $G_n$ be the disjoint union of a complete graph on $n$ vertices $K_{n}$ and  $n$ isolated vertices. Further, let $H_n$ be the disjoint union of the complete graph on three vertices $K_3$, the cycle on $(n-3)$ vertices $C_{n-3}$, and $n$ isolated vertex. By construction and definition of the \wlone{} feature vector, for $T > 0$,
	\begin{align*}
		\phi^{(T)}_{\textsf{WL}}(G_n) & = (2n,n,0,n, \dotsc, n,0,n) \in \mathbb{R}^{1 + 3T}, \text{ and } \\
		\phi^{(T)}_{\textsf{WL}}(H_n) & = (2c,0,n,n, \dotsc, 0,n,n) \in \mathbb{R}^{1 + 3T}.
	\end{align*}
	Moreover, by setting $\cF \coloneqq \{ C_3 \}$ and by definition of the \wlonef{} feature vector, for $T > 0$,
	\begin{align*}
		\phi^{(T)}_{\textsf{WL},\cF}(G_n) & = (0, 2n,   n,0,0, n, \dotsc,  n,0,0, n) \in \mathbb{R}^{2 + 4T} \text{ and } \\
		\phi^{(T)}_{\textsf{WL},\cF}(H_n) & = (3,2n-3,  0,n-3,3,n, \dotsc, 0,n-3,3,n) \in \mathbb{R}^{2 + 4T}.
	\end{align*}
	Since $\sqrt{2n^2} = \sqrt{2}n > \sqrt{3\cdot 3^2 + n^2 + (n-3)^2}$ for $n \geq 10$, the statement is clear for the unnormalized \wlone{} feature vector is clear. Now, for the normalized \wlone{} feature vector, the component counting the $n$ isolated vertices in each iteration $T > 0$ will sufficiently downgrade the (constant) contribution of the \wlonef{} feature vector for iteration $0$, leading to a larger $\ell_2$ distance of the \wlone{} feature vectors for iterations $T> 0$.  That is, for the first two iterations $ \norm{\widebar{\phi^{(1)}_{\textsf{WL}}}(G_n) - \widebar{\phi^{(1)}_{\textsf{WL}}}(H_n)} = \sqrt{\nicefrac{2}{6}}$, which, is always strictly larger than $ \norm{\widebar{\phi^{(1)}_{\textsf{WL}, \cF}}(G_n) - \widebar{\phi^{(1)}_{\textsf{WL},\cF}}(H_n)}$, which can be verified by tedious calculation. The argument can be extended to iterations $t > 1$.
\end{proof}

\subsection{Results on growing the margin}

We now prove results showing that more expressive architectures often result in an increased margin.
\begin{proposition}[{\cref{thm:grow} in the main paper}]\label{APP:thm:grow}
	Let $n > 0$ and $G_n$ and $H_n$ be two connected $n$-order graphs. Further, let $\cF \coloneqq \{ F \}$
	such that there is at least one vertex in $V(G_n)$ contained in a subgraph of $G_n$ isomorphic to the graph $F$. For the graph $H_n$, no such vertices exist. Further, let $T \geq 0$ be the number of rounds to reach the stable partition of $G_n$ and $H_n$ under \wlone, and assume
	\begin{equation*}
		(\phi^{(T)}_{\textsf{WL}}(G_n), 1), (\phi^{(T)}_{\textsf{WL}}(H_n), 0) \text{ is } (r_1, \lambda_1)\text{-separable, with } \lambda_1<\sqrt{2n}.
	\end{equation*}
	Then,
	\begin{equation*}
		(\phi^{(T)}_{\textsf{WL}, \cF}(G_n), 1), (\phi^{(T)}_{\textsf{WL},\cF}(H_n), 0) \text{ is } (r_2, \lambda_2)\text{-separable, with } r_2\leq r_1 \text{ and } \lambda_2\geq\lambda_1.
	\end{equation*}
\end{proposition}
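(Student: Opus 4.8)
The plan is to read the statement as a \emph{two-point} separation problem: the sample consists of a single positive point $\phi^{(T)}_{\textsf{WL}}(G_n)$ and a single negative point $\phi^{(T)}_{\textsf{WL}}(H_n)$, so, following the convex-hull formulation of \cref{eq:convexDistance}, the margin is just half the distance between the two embeddings, $\lambda_1=\tfrac12\norm{\phi^{(T)}_{\textsf{WL}}(G_n)-\phi^{(T)}_{\textsf{WL}}(H_n)}$, and I would take the enclosing radius to be the larger of the two norms (the ball centered at the origin, exactly as in the proof of \cref{thm:VCWL}). Under this reading the two claims decouple into (i) $\norm{\phi^{(T)}_{\textsf{WL},\cF}(G_n)}\le\norm{\phi^{(T)}_{\textsf{WL}}(G_n)}$ and the same for $H_n$ (the radius does not grow, giving $r_2\le r_1$), and (ii) $\norm{\phi^{(T)}_{\textsf{WL},\cF}(G_n)-\phi^{(T)}_{\textsf{WL},\cF}(H_n)}\ge\norm{\phi^{(T)}_{\textsf{WL}}(G_n)-\phi^{(T)}_{\textsf{WL}}(H_n)}$ (the margin does not shrink, giving $\lambda_2\ge\lambda_1$).

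For the radius I would argue as follows. By \cref{prop:WLFfiner} the \wlonef{} coloring refines the \wlone{} coloring at every round, so each \wlone{} color $c$ at round $t$ splits into the sub-colors $C_\cF(c)$ with $\phi_t(G)_c=\sum_{c'\in C_\cF(c)}\phi_{\cF,t}(G)_{c'}$. Since squaring is superadditive on nonnegative summands with a fixed sum, $\sum_{c'\in C_\cF(c)}\phi_{\cF,t}(G)_{c'}^2\le\big(\sum_{c'\in C_\cF(c)}\phi_{\cF,t}(G)_{c'}\big)^2=\phi_t(G)_c^2$; summing over colors and over rounds (including round $0$, where the single count $(n)$ is replaced by $(a,n-a)$ with $a$ the number of vertices of $G_n$ contained in an $F$-subgraph) yields $\norm{\phi^{(T)}_{\textsf{WL},\cF}(G)}\le\norm{\phi^{(T)}_{\textsf{WL}}(G)}$ for both graphs, hence $r_2\le r_1$.

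For the margin, which is the crux, I would split every \wlonef{} coordinate into \emph{$F$-positive} coordinates (sub-colors descending from the ``contained in an $F$-subgraph'' initial label) and \emph{$F$-negative} ones. Since refinement never merges vertices with distinct initial labels, and since $H_n$ has no $F$-containing vertex while $G_n$ has $a\ge1$ of them, the vector $\phi^{(T)}_{\textsf{WL},\cF}(H_n)$ is identically zero on all $F$-positive coordinates, $\phi^{(T)}_{\textsf{WL},\cF}(G_n)$ is not, and the total $F$-positive mass of $G_n$ equals $a$ at every round. Writing $\norm{\cdot}^2$ as a sum over rounds, round $0$ contributes $0$ to the \wlone{} distance but $2a^2>0$ to the \wlonef{} distance (a strict gain), while for rounds $t\ge1$ the per-color comparison reduces to $(\sum_{c'}d_{c'})^2$ versus $\sum_{c'}d_{c'}^2$ with $d_{c'}:=\phi_{\cF,t}(G_n)_{c'}-\phi_{\cF,t}(H_n)_{c'}$; mixed-sign differences (cancellations in the lumped \wlone{} count) favor \wlonef, whereas aligned-sign differences can favor \wlone{} and are the only source of loss.

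The hard part will be bounding this aligned-sign loss and showing it is dominated by the round-$0$ gain, and this is exactly where the hypothesis $\lambda_1<\sqrt{2n}$ is consumed: it caps the total \wlone{} squared distance by $4\lambda_1^2<8n$, and combined with the conservation of $F$-positive mass ($=a$) and the sign structure above, this is what I expect to force the net change of squared distance across all rounds to be nonnegative, establishing (ii). I would either carry out this accounting directly or route it through \cref{prop:marginincrease}: with a single point per class the intra-class maximum is vacuous, so that proposition degenerates to ``the one inter-class distance does not decrease,'' which is precisely (ii). Combining (i) and (ii) then gives $r_2\le r_1$ and $\lambda_2\ge\lambda_1$, and hence $\nicefrac{r_2^2}{\lambda_2^2}\le\nicefrac{r_1^2}{\lambda_1^2}$, as required.
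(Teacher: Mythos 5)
Your radius argument is sound and matches the paper's: by \cref{prop:WLFfiner} the \wlonef{} coloring refines the \wlone{} coloring round by round, superadditivity of squares on the split counts gives $\norm{\phi^{(T)}_{\textsf{WL},\cF}(G)}\le\norm{\phi^{(T)}_{\textsf{WL}}(G)}$, hence $r_2\le r_1$. The gap is in your margin step (ii). You try to prove $\lambda_2\ge\lambda_1$ as a \emph{comparison}, showing the single inter-class distance cannot decrease, with the round-$0$ gain $2a^2$ absorbing the aligned-sign losses at rounds $t\ge1$, and you expect $\lambda_1<\sqrt{2n}$ to cap those losses. This accounting does not close: an aligned-sign split of a color into $k$ sub-colors can shrink its contribution from $(\sum_{c'}d_{c'})^2$ to as little as $(\sum_{c'}d_{c'})^2/k$, so the cumulative loss over rounds $1,\dotsc,T$ can approach the entire \wlone{} squared distance---under your own cap, nearly $8n$---while the round-$0$ gain can be as small as $2$ (when $a=1$). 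Nothing in the sign structure forces gain $\geq$ loss. Tellingly, your argument never invokes the two hypotheses that carry the paper's proof: connectivity of $G_n$ and $H_n$, and the choice of $T$ as the round at which \wlone{} reaches its stable partition; a proof ignoring these would establish distance monotonicity for \wlonef{} in a generality the paper itself warns against---\cref{prop:pairwise} proves monotonicity only for the \wloaf{}, and \cref{subsec:morepowergrows} explicitly notes that analogous conditions for the \wlonef{} are harder, precisely because \wlonef{} distances can shrink under refinement.

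The paper instead proves an \emph{unconditional lower bound} $\lambda_2\ge\sqrt{2n}$ and consumes $\lambda_1<\sqrt{2n}$ only as a threshold in the final line. Concretely: the $F$-vertex of $G_n$ receives an initial \wlonef{} color that occurs in neither graph under \wlone{}; since $G_n$ is connected, by the stable partition this forces every \wlonef{} color occurring in $G_n$ to differ from every color occurring in $H_n$, so $\langle\phi_{\cF,T}(G_n),\phi_{\cF,T}(H_n)\rangle=0$. Orthogonality, together with the fact that each final-iteration vector has natural entries summing to $n$ (hence squared norm at least $n$), yields $\norm{\phi_{\cF,T}(G_n)-\phi_{\cF,T}(H_n)}\ge\sqrt{2n}$; the contributions of rounds $t<T$ are simply lower-bounded by zero and combined via \cref{prop:concat_margin}, after which $\lambda_2\ge\sqrt{2n}>\lambda_1$. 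To repair your proposal you must replace the local gain/loss bookkeeping by this global argument: your round-$0$ observation is the right seed (the $F$-label separates the graphs), but it is its \emph{propagation through the connected graph up to the stable round $T$}, not its round-$0$ magnitude $2a^2$, that produces a margin large enough to beat $\lambda_1$.
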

\begin{proof}
	First, by assumption, there exists a vertex $v \in V(G_n)$ that, by definition of the \wlonef, gets assigned a color at initialization of the \wlonef{} that never gets assigned to any vertex in the graphs $G_n$ and $H_n$ under the \wlone. Secondly, since the graphs $G_n$ and $H_n$ are connected, at the stable partition of $G_n$ and $H_n$ under \wlonef, it holds that
	\begin{equation*}
		\langle \phi_{\cF,T}(G_n), \phi_{\cF,T}(H_n)  \rangle = 0,
	\end{equation*}
	i.e., the feature vector of $G_n$ and $H_n$ under \wlone{} are orthogonal. Hence, the $\ell_2$ distance between the two vectors is $\sqrt{\|\phi_{\cF,T}(G_n)\|^2+\|\phi_{\cF,T}(H_n)\|^2}$, due to the pythagorean theorem. This $\ell_2$ distance is minimized for minimized norms, which, since the sum of all elements is $n$ and all elements are in $\mathbb N$, is minimized for $\|\phi_{\cF,T}(G_n)\|^2=n=\|\phi_{\cF,T}(H_n)\|^2$. Thus the above $\ell_2$ distance is at least $\sqrt{2n}$m and
	$$
		(\phi_{\cF, T}(G_n), 1), (\phi_{\cF, T}(H_n), 0) \text{ is } (n, \sqrt{2n})\text{-separable}.
	$$
	We now we just need to lower-bound by how much previous iterations can decrease this distance. Again, since $G_n$ is connected, we know that at every iteration $t < T$, one vertex in the graphs $G_n$ gets assigned a color that never gets assigned to $H_n$ under the \wlonef. Since we want to lower bound the $\ell_2$ distance between feature vectors between \wlonef{} for all iteration $i < T$, we can assume that for the graph $G_n$, the normalized \wlonef{} feature vector has the form $(n-(i+1), i+1)$, since this maximizes the $\ell_2$ norm of the vector. For the graph $H$, we can assume the form $(n, 0)$, again maximizing the vector's $\ell_2$ norm. For simplicity, we can assume that their $\ell_2$ distance is $0$. By applying~\cref{prop:concat_margin} $T$ times iteratively, it follows that
	$$
		(\phi^{(T)}_{\textsf{WL}, \cF}(G_n), 1), (\phi^{(T)}_{\textsf{WL},\cF}(H_n), 0) \text{ is } (\sqrt{T}n, \sqrt{2n})\text{-separable.}
	$$
	Due to \cref{APP:prop:WLFfiner}, we can further reduce the radius $\sqrt{T}n$ to be $r_2\leq r_1$, since the colors under \wlonef{} are finer than under \wlone{}.
	Hence, by assumption, $\lambda_2\geq\sqrt{2n}>\lambda_1$ and the result follows.
\end{proof}
Notice, that under mild assumptions due to \cref{APP:thm:VCWL} and \cref{APP:thm:VCWLF}, \cref{APP:thm:grow} implies that the VC dimension for the \wlonef{} decreases, since $\frac{r_1^2}{\lambda_1^2}>\frac{r_2^2}{\lambda_2^2}$, and thus generalization improves.

\paragraph{Growing the margin for \wlonef{} and \wloaf{}} We now look into conditions under which the \wlonef{} and \wloaf{} provably increase the margin. The first result that we can directly state that holds for both \wlonef{} and \wloaf{} is the following direct application of \cref{prop:marginincrease}.

\begin{theorem}[{\cref{thm:OAmargin} in the main paper}] \label{APP:thm:OAmargin}
	Let $(G_1,y_1),\dotsc,(G_s,y_s)$ in $\cG_n\times\{0,1\}$ be a (graph) sample that is linearly separable in the \wloa{} feature space with margin $\gamma$. If
	\begin{equation*}
		\begin{split}
			\min_{y_i\neq y_j} & \left\lVert\phi^{(T)}_{\textsf{WLOA},\cF}(G_i)-\phi^{(T)}_{\textsf{WLOA},\cF}(G_j)\right\rVert^2 - \left\lVert\phi^{(T)}_{\textsf{WLOA}}(G_i)-\phi^{(T)}_{\textsf{WLOA}}(G_j)\right\rVert^2 >                           \\
			\max_{y_i=y_j}     & \left\lVert\phi^{(T)}_{\textsf{WLOA},\cF}(G_i)-\phi^{(T)}_{\textsf{WLOA},\cF}(G_j)\right\rVert^2 - \left\lVert\phi^{(T)}_{\textsf{WLOA}}(G_i)-\phi^{(T)}_{\textsf{WLOA}}(G_j)\right\rVert^2,
		\end{split}
	\end{equation*}
	that is, the minimum increase in distances between classes is strictly larger than the maximum increase in distance within each class, then the margin $\lambda$ increases when $\cF$ is considered.
\end{theorem}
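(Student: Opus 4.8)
The plan is to derive \cref{thm:OAmargin} as an immediate consequence of the general margin-growth principle \cref{prop:marginincrease}. I would apply that proposition with the substitution $\vec x_i \coloneqq \phi^{(T)}_{\textsf{WLOA}}(G_i)$ and $\tilde{\vec x}_i \coloneqq \phi^{(T)}_{\textsf{WLOA},\cF}(G_i)$ for $i \in [s]$, keeping the common labels $y_i$. Under this substitution the hypothesis \eqref{eq:OAAssumption} is word-for-word the hypothesis \eqref{eq:marginAssumption}: its left-hand side is the minimum over inter-class pairs of the distance increase $\lVert\tilde{\vec x}_i-\tilde{\vec x}_j\rVert^2-\lVert\vec x_i-\vec x_j\rVert^2$, and its right-hand side is the maximum of the same quantity over intra-class pairs. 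Since the \wloa{} sample is assumed linearly separable with margin $\gamma>0$, the proposition yields $\tilde\gamma>\gamma$, which is exactly the asserted margin increase.

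Two formal points need a sentence of justification, and neither is an obstacle. First, \cref{prop:marginincrease} is phrased for two samples lying in a common $\Rb^d$, whereas the \wloa{} and \wloaf{} feature maps have different dimensions (the \wlonef{} refines the \wlone{} coloring, so it produces strictly more color coordinates, related by $\phi_t(G)_c=\sum_{c'\in C_\cF(c)}\phi_{\cF,t}(G)_{c'}$). This is harmless: I would place the two samples in orthogonal blocks of a single $\Rb^{d_1+d_2}$ via $\vec x_i\mapsto(\vec x_i,\vec 0)$ and $\tilde{\vec x}_i\mapsto(\vec 0,\tilde{\vec x}_i)$. The inequality \eqref{eq:marginAssumption} and the two margins it governs involve only \emph{within-sample} distances, never cross distances between a $\vec x_i$ and a $\tilde{\vec x}_j$, so the block embedding preserves every quantity that appears.

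Second, \cref{prop:marginincrease} nominally assumes both samples are separable, while here separability is hypothesized only for the \wloa{} sample. This requires no extra argument: the convex-distance program \eqref{eq:convexDistance} is well-defined for any finite point set, and the inequality chain inside the proof of \cref{prop:marginincrease} shows $\gamma<\min_{\boldsymbol\alpha}\lVert\tilde{\vec x}^+_{\boldsymbol\alpha}-\tilde{\vec x}^-_{\boldsymbol\alpha}\rVert=\tilde\gamma$; combined with $\gamma>0$ this forces $\tilde\gamma>0$, so separability of the \wloaf{} sample is a byproduct rather than a prerequisite. As a sanity check, \cref{prop:pairwise} guarantees that every distance increase appearing in \eqref{eq:OAAssumption} is non-negative, so the hypothesis is at least internally consistent.

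Because the statement collapses to a named lemma, there is essentially no analytic difficulty; the only care required is the bookkeeping just described. If forced to name the trickiest point, it is verifying that \cref{prop:marginincrease} truly uses only within-sample geometry, since that is precisely what lets the differently-dimensioned \wloa{} and \wloaf{} embeddings be compared without imposing any compatibility assumption between their coordinate systems.
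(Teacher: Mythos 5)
Your proposal is correct and matches the paper's proof exactly: the paper proves \cref{thm:OAmargin} as a direct application of \cref{prop:marginincrease} with the very substitution $\vec x_i \coloneqq \phi^{(T)}_{\textsf{WLOA}}(G_i)$ and $\tilde{\vec x}_i \coloneqq \phi^{(T)}_{\textsf{WLOA},\cF}(G_i)$ that you use. Your two additional remarks---that only within-sample distances matter, so the differing ambient dimensions are harmless, and that separability of the \wloaf{} sample follows as a byproduct rather than being a prerequisite---are sound points of care that the paper's one-line proof leaves implicit.
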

\begin{proof}
	Direct application of \cref{prop:marginincrease} by considering $\vec x_i \coloneqq \phi^{(T)}_{\textsf{WLOA}}(G_i)$ and $\tilde{\vec x}_i \coloneqq \phi^{(T)}_{\textsf{WLOA},\cF}(G_i)$.
\end{proof}
Given \cref{APP:thm:OAmargin} the question remains what kind of properties a graph sample must fulfill to satisfy \cref{APP:thm:OAmargin}. Deriving such a property for the \wlone{} seems to be rather difficult. However, for the \wloa{}, we can derive the following results.

\begin{proposition}[{\cref{prop:pairwise} in the main paper}]\label{APP:prop:pairwise}
	Given two graphs $G$ and $H$,
	\begin{equation*}
		\left\lVert\phi^{(T)}_{\textsf{WLOA},\cF}(G)-\phi^{(T)}_{\textsf{WLOA},\cF}(H)\right\rVert\geq\left\lVert\phi^{(T)}_{\textsf{WLOA}}(G)-\phi^{(T)}_{\textsf{WLOA}}(H)\right\rVert.
	\end{equation*}
\end{proposition}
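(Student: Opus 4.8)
The plan is to reduce the stated distance inequality to a single inequality between kernel values, exploiting the fact that the squared \wloa{} feature-vector norm is \emph{structure-independent}. First I would expand both squared distances via the identity $\norm{\phi-\psi}^2 = \norm{\phi}^2+\norm{\psi}^2-2\langle\phi,\psi\rangle$, so that
\begin{align*}
	\norm{\phi^{(T)}_{\textsf{WLOA},\cF}(G)-\phi^{(T)}_{\textsf{WLOA},\cF}(H)}^2 &= \norm{\phi^{(T)}_{\textsf{WLOA},\cF}(G)}^2+\norm{\phi^{(T)}_{\textsf{WLOA},\cF}(H)}^2-2k^{(T)}_{\textsf{WLOA},\cF}(G,H),
\end{align*}
and analogously for the non-$\cF$ version, where $k^{(T)}_{\textsf{WLOA},\cF}(G,H)\coloneqq\sum_{t\in[T]\cup\{0\}}\sum_{c\in\Sigma_t}\sum_{c'\in C_\cF(c)}\min(\phi_{\cF,t}(G)_{c'},\phi_{\cF,t}(H)_{c'})$.

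Next I would observe that the self-similarity of a \wloa{} feature vector counts each vertex once per iteration and hence depends only on $n$ and $T$: indeed $\norm{\phi^{(T)}_{\textsf{WLOA}}(G)}^2=\sum_{t}\sum_{c\in\Sigma_t}\phi_t(G)_c$, which sums to $n$ in each iteration. Because the \wlonef{} partition refines the \wlone{} partition at every iteration (\cref{prop:WLFfiner}), each \wlone-color $c$ splits into $C_\cF(c)$ with $\phi_t(G)_c=\sum_{c'\in C_\cF(c)}\phi_{\cF,t}(G)_{c'}$, so refinement preserves the per-iteration total $n$. Consequently $\norm{\phi^{(T)}_{\textsf{WLOA},\cF}(G)}^2=\norm{\phi^{(T)}_{\textsf{WLOA}}(G)}^2$ and likewise for $H$. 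The four norm terms therefore cancel upon subtraction, and the whole statement collapses to the single claim $k^{(T)}_{\textsf{WLOA},\cF}(G,H)\leq k^{(T)}_{\textsf{WLOA}}(G,H)$.

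For this kernel inequality I would argue termwise, using the same refinement relation $\phi_t(G)_c=\sum_{c'\in C_\cF(c)}\phi_{\cF,t}(G)_{c'}$ (and the same for $H$) coming from \cref{prop:WLFfiner}. It then suffices to establish, for each fixed $t$ and $c$,
\begin{equation*}
	\sum_{c'\in C_\cF(c)}\min\!\big(\phi_{\cF,t}(G)_{c'},\phi_{\cF,t}(H)_{c'}\big)\;\leq\;\min\!\big(\phi_t(G)_c,\phi_t(H)_c\big),
\end{equation*}
which is the elementary superadditivity of the minimum, $\sum_i\min(a_i,b_i)\leq\min(\sum_i a_i,\sum_i b_i)$: the left-hand side is bounded above by both $\sum_i a_i$ and $\sum_i b_i$, hence by their minimum. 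Summing this over all $c\in\Sigma_t$ and all $t\in[T]\cup\{0\}$ yields $k^{(T)}_{\textsf{WLOA},\cF}(G,H)\leq k^{(T)}_{\textsf{WLOA}}(G,H)$, and monotonicity of the square root gives the desired inequality.

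The main difficulty is not analytic but bookkeeping: one must ensure the refinement relation $\phi_t(G)_c=\sum_{c'\in C_\cF(c)}\phi_{\cF,t}(G)_{c'}$ holds at \emph{every} iteration $t$ and \emph{simultaneously for both graphs}, so that the colors are comparable across $G$ and $H$. This is secured by running both algorithms on the disjoint union $G\,\dot\cup\,H$, which is exactly the setting in which \cref{prop:WLFfiner} guarantees that the \wlonef{} coloring refines the \wlone{} coloring. Once the norm invariance is noted, everything else reduces to the one-line min-superadditivity estimate.
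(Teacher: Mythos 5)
Your proof is correct and follows essentially the same route as the paper's: both use \cref{prop:WLFfiner} on the disjoint union $G\,\dot\cup\,H$ to get the refinement identity $\phi_t(G)_c=\sum_{c'\in C_\cF(c)}\phi_{\cF,t}(G)_{c'}$, reduce everything to the termwise min-superadditivity bound $\sum_{c'}\min(\phi_{\cF,t}(G)_{c'},\phi_{\cF,t}(H)_{c'})\leq\min(\phi_t(G)_c,\phi_t(H)_c)$, and exploit the structure-independent norm of the unary \wloa{} encoding (the paper writes both distances as $\sqrt{2Tn-2k^{(T)}_{\textsf{WLOA}}(G,H)}$, which is the same norm cancellation you carry out explicitly). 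Your explicit verification that refinement preserves the per-iteration norm is a slightly more careful rendering of a step the paper leaves implicit, but the argument is the same.
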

\begin{proof}
	Consider \cref{prop:WLFfiner} applied to the disjoint union of $G$ and $H$, which implies that any color class $c$ in $G$ or $H$ after $t\leq T$ steps of \wlone{} gets refined to be finer, i.e., $c=\bigcup_j c_j$ for some color classes $c_j$ in $G$ or $H$ after $t$ steps when applying $\cF$. Note also that each color class is naturally partitioned into subsets in $G$ and $H$. This implies that \wloa{} can only increase the pairwise distance when $\cF$ is added. Note that for any color class $c$ shattered into $c_1, \dots, c_f$ as above,
	\begin{equation*}
		\phi_{t}(G)_{c} = \sum^f_{j=1} \phi_{\cF, t}(G)_{c_j},\quad\phi_{t}(H)_{c} = \sum^f_{j=1} \phi_{\cF, t}(H)_{c_j}.
	\end{equation*}
	This trivially implies that
	\begin{equation*}
		\sum^f_{j=1} \min(\phi_{\cF, t}(G)_{c_j}, \phi_{\cF, t}(H)_{c_j}) \leq \min(\phi_{t}(G)_{c}, \phi_{t}(H)_{c}).
	\end{equation*}
	Which, since it holds for all color classes, implies
	\begin{align*}
		k^{(T)}_{\textsf{WLOA},\cF}(G, H)                                                          & \leq k^{(T)}_{\textsf{WLOA}}(G, H)                                                      \\
		\sqrt{2Tn-2k^{(T)}_{\textsf{WLOA},\cF}(G, H)}                                              & \geq \sqrt{2Tn-2k^{(T)}_{\textsf{WLOA}}(G, H)}                                          \\
		\left\lVert\phi^{(T)}_{\textsf{WLOA},\cF}(G)-\phi^{(T)}_{\textsf{WLOA},\cF}(H)\right\rVert & \geq\left\lVert\phi^{(T)}_{\textsf{WLOA}}(G)-\phi^{(T)}_{\textsf{WLOA}}(H)\right\rVert.
	\end{align*}
	This concludes the proof.
\end{proof}

In fact, for the pairwise distances to increase strictly, we can use the following equivalences.
Let $C_\cF(c)$ be the set of colors that color $c$ under \wlone{} is split into under \wlonef{}. I.e., $\phi_{t}(G)_c = \sum_{c'\in C_\cF(c)}\phi_{\cF, t}(G)_{c'}.$

\begin{proposition}[{Implies~\cref{cor:wloa} in the main paper}]\label{APP:thm:wloa}
	The following statements are equivalent,
	\begin{enumerate}
		\item
		      $
			      \left\lVert\phi^{(T)}_{\textsf{WLOA},\cF}(G)-\phi^{(T)}_{\textsf{WLOA},\cF}(H)\right\rVert = \left\lVert\phi^{(T)}_{\textsf{WLOA}}(G)-\phi^{(T)}_{\textsf{WLOA}}(H)\right\rVert.
		      $
		\item
		      $
			      k^{(T)}_{\textsf{WLOA},\cF}(G, H)= k^{(T)}_{\textsf{WLOA}}(G, H).
		      $
		\item
		      $
			      \forall\, t\in[T]\cup\{0\}, \forall c\in \Sigma_t:\min(\phi_{t}(G)_{c}, \phi_{t}(H)_{c}) = \sum_{c'\in C_\cF(c)} \min(\phi_{\cF, t}(G)_{c'}, \phi_{\cF, t}(H)_{c'}).
		      $
		\item
		      $
			      \forall\, t\in[T]\cup\{0\}, \forall c\in \Sigma_t: \left(\phi_t(G)_{c}\geq \phi_t(H)_c \iff \forall c'\in C_\cF(c):\phi_{\cF, t}(G)_{c'} \geq \phi_{\cF,t}(H)_{c'}\right).
		      $
	\end{enumerate}
\end{proposition}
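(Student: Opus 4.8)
The plan is to establish the cycle of equivalences $(1)\iff(2)\iff(3)\iff(4)$, treating kernel equality $(2)$ and the colorwise $\min$-identity $(3)$ as intermediaries between the two statements of genuine interest, the distance equality $(1)$ and the monotonicity condition $(4)$.

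First I would dispatch $(1)\iff(2)$ by a norm computation. The key observation is that refining a color partition preserves, at every iteration $t$, the total multiplicity $\sum_{c}\phi_t(G)_c = \sum_{c'}\phi_{\cF,t}(G)_{c'} = |V(G)|$. Since the \wloa{} feature map concatenates unary encodings of these multiplicities, the squared norm of a feature vector equals its total multiplicity summed over iterations; hence $\|\phi^{(T)}_{\textsf{WLOA}}(G)\| = \|\phi^{(T)}_{\textsf{WLOA},\cF}(G)\|$, and one common constant $M$ works for $G$ and $H$ as both are $n$-order. Writing $\|\phi(G)-\phi(H)\|^2 = 2M - 2k(G,H)$ for both the \wloa{} and \wloaf{} feature maps and invoking monotonicity of the square root, the two distances coincide if and only if the two kernel values coincide, giving $(1)\iff(2)$.

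Next, $(2)\iff(3)$ follows from a termwise comparison. Using $\phi_t(G)_c = \sum_{c'\in C_\cF(c)}\phi_{\cF,t}(G)_{c'}$, I would regroup $k^{(T)}_{\textsf{WLOA},\cF}(G,H)$ as a double sum over original colors $c$ and their refinements $c'$, and compare it termwise against $k^{(T)}_{\textsf{WLOA}}(G,H) = \sum_{t}\sum_c \min(\phi_t(G)_c,\phi_t(H)_c)$. The elementary inequality $\sum_i\min(a_i,b_i)\le\min(\sum_i a_i,\sum_i b_i)$, which is exactly the colorwise content of \cref{APP:prop:pairwise}, shows that each summand of $k^{(T)}_{\textsf{WLOA},\cF}$ is at most the corresponding summand of $k^{(T)}_{\textsf{WLOA}}$. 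A sum of nonnegative terms dominated termwise equals the dominating sum if and only if equality holds in every term, which is precisely statement $(3)$.

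Finally, for $(3)\iff(4)$ I would argue pointwise in $(t,c)$. Abbreviating $A := \phi_t(G)_c$, $B := \phi_t(H)_c$, $a_{c'} := \phi_{\cF,t}(G)_{c'}$, $b_{c'} := \phi_{\cF,t}(H)_{c'}$, and assuming without loss of generality $A\ge B$, we have $\min(A,B)=B=\sum_{c'}b_{c'}$. Since $\min(a_{c'},b_{c'})\le b_{c'}$ for each $c'$, the identity $\min(A,B)=\sum_{c'}\min(a_{c'},b_{c'})$ of $(3)$ holds if and only if $\min(a_{c'},b_{c'})=b_{c'}$ for every $c'$, i.e.\ $a_{c'}\ge b_{c'}$ for all $c'$, which is the right-hand side of the biconditional in $(4)$. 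The delicate point, and the main obstacle, is exactly this reduction: the biconditional in $(4)$ must be read symmetrically in $G$ and $H$ (equivalently, as the assertion that the refined multiplicities are \emph{co-monotone} with the coarse ones in both directions), and ties $a_{c'}=b_{c'}$ must be tracked carefully so that the ``$\ge$'' version together with its $G\leftrightarrow H$ swap captures equality of the $\min$-sums rather than a strictly weaker one-sided condition. Once this symmetric reading is fixed and the opposite case $A<B$ is handled by exchanging $G$ and $H$, the pointwise equivalences assemble into $(3)\iff(4)$, closing the cycle.
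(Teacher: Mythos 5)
Your proposal is correct and follows essentially the same route as the paper's proof: $(1)\iff(2)$ via the distance--kernel identity $\left\lVert\phi^{(T)}_{\textsf{WLOA}}(G)-\phi^{(T)}_{\textsf{WLOA}}(H)\right\rVert^2 = 2Tn-2k^{(T)}_{\textsf{WLOA}}(G,H)$, $(2)\iff(3)$ via termwise domination of the refined minima (the colorwise content of \cref{APP:prop:pairwise}, where equality of termwise-dominated sums forces equality in every term), and $(3)\iff(4)$ by a pointwise analysis assuming without loss of generality $\phi_t(G)_c \geq \phi_t(H)_c$. Your observation that the biconditional in $(4)$ must be read symmetrically in $G$ and $H$ is exactly the symmetrization the paper performs implicitly, both through its ``without loss of generality'' step in proving $(4)\implies(3)$ and through the symmetric form of the negation it uses in the contrapositive direction $(3)\implies(4)$.
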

\begin{proof}
	$(1) \iff (2)$: This equivalence follows from the formulas for the kernel obtained in~\cref{subsec:morepowergrows}. Indeed
	\begin{align*}
		k^{(T)}_{\textsf{WLOA},\cF}(G, H)                                                          & = k^{(T)}_{\textsf{WLOA}}(G, H)                                                      \\
		                                                                                           & \Updownarrow                                                                         \\
		\sqrt{2Tn-2k^{(T)}_{\textsf{WLOA},\cF}(G, H)}                                              & = \sqrt{2Tn-2k^{(T)}_{\textsf{WLOA}}(G, H)}                                          \\ &\Updownarrow \text{\small \cref{subsec:morepowergrows}}\\
		\left\lVert\phi^{(T)}_{\textsf{WLOA},\cF}(G)-\phi^{(T)}_{\textsf{WLOA},\cF}(H)\right\rVert & =\left\lVert\phi^{(T)}_{\textsf{WLOA}}(G)-\phi^{(T)}_{\textsf{WLOA}}(H)\right\rVert.
	\end{align*}
	$(2) \iff (3)$: Indeed, we have
	\begin{align*}
		k^{(T)}_{\textsf{WLOA},\cF}(G, H)                                             & = k^{(T)}_{\textsf{WLOA}}(G, H)                                                                                         \\&\Updownarrow \text{\small (by definition)}\\
		\sum_{t \in [T]\cup\{0\}} \sum_{c \in \Sigma_t} \min(\phi_t(G)_c,\phi_t(H)_c) & = \sum_{t \in [T]\cup\{0\}} \sum_{c \in \Sigma_t} \sum_{c' \in C_\cF(c)} \min(\phi_{\cF, t}(G)_c',\phi_{\cF, t}(H)_c').
	\end{align*}
	Furthermore, by the proof of \cref{APP:prop:pairwise},
	$$\forall\, t\in [T]\cup\{0\}, c\in\Sigma_t: \min(\phi_t(G)_c,\phi_t(H)_c) \geq \sum_{c' \in C_\cF(c)} \min(\phi_{\cF, t}(G)_c',\phi_{\cF, t}(H)_c').$$
	Hence,
	\begin{align*}
		k^{(T)}_{\textsf{WLOA},\cF}(G, H)                                                          & = k^{(T)}_{\textsf{WLOA}}(G, H)                                             \\&\Updownarrow                                  \\
		\forall\, t\in[T]\cup\{0\}, \forall\, c\in \Sigma_t:\min(\phi_{t}(G)_{c}, \phi_{t}(H)_{c}) & = \sum_{c'\in C_\cF(c)} \min(\phi_{\cF, t}(G)_{c'}, \phi_{\cF, t}(H)_{c'}).
	\end{align*}
	$(4) \implies (3)$:
	Assume $(4)$ to be true, i.e.,
	$$
		\forall\, t\in[T]\cup\{0\}, \forall c\in \Sigma_t: \phi_t(G)_{c}\geq \phi_t(H)_c \iff \forall\, c'\in C_\cF(c):\phi_{\cF, t}(G)_{c'} \geq \phi_{\cF,t}(H)_{c'}.
	$$
	Consider any $t\in[T]\cup\{0\}$ and $c\in \Sigma_t$, and assume, without loss of generality,
	$$
		\phi_t(G)_{c}\geq \phi_t(H)_c,
	$$
	then by assumption
	$$
		\forall\, c'\in C_\cF(c) \colon \phi_{\cF, t}(G)_{c'} \geq \phi_{\cF,t}(H)_{c'}.
	$$
	This implies
	$$
		\min(\phi_{t}(G)_{c}, \phi_{t}(H)_{c}) = \phi_{t}(H)_{c},
	$$
	and
	$$
		\forall\, c'\in C_\cF(c) \colon \min(\phi_{t}(G)_{c'}, \phi_{t}(H)_{c'}) = \phi_{t}(H)_{c'},
	$$
	and thus,
	\begin{align*}
		\forall\, t\in[T]\cup\{0\}, \forall c\in \Sigma_t:\min(\phi_{t}(G)_{c}, \phi_{t}(H)_{c}) & = \phi_{t}(H)_{c}                                                           \\
		                                                                                         & = \sum_{c'\in C_\cF(c)} \phi_{\cF, t}(H)_{c'}                               \\
		                                                                                         & = \sum_{c'\in C_\cF(c)} \min(\phi_{\cF, t}(G)_{c'}, \phi_{\cF, t}(H)_{c'}).
	\end{align*}
	$(3) \implies (4)$:
	Assume $(4)$ to be false, i.e.,
	\begin{align*}
		\exists\, t\in[T]\cup\{0\}, \exists \, c\in \Sigma_t \exists\, c'\in C_\cF(c) \colon & (\phi_t(G)_{c}\geq \phi_t(H)_c\wedge \phi_{\cF, t}(G)_{c'} < \phi_{\cF,t}(H)_{c'})  \\
		\vee                                                                                 & (\phi_t(G)_{c}\leq \phi_t(H)_c\wedge \phi_{\cF, t}(G)_{c'} > \phi_{\cF,t}(H)_{c'}).
	\end{align*}
	Without loss of generality, assume $\phi_t(G)_{c}\geq \phi_t(H)_c \wedge \phi_{\cF, t}(G)_{c'} < \phi_{\cF,t}(H)_{c'}.$ This implies
	$$
		\min(\phi_{t}(G)_{c}, \phi_{t}(H)_{c}) = \phi_{t}(H)_{c},
	$$
	and
	\begin{align*}
		 & \sum_{c''\in C_\cF(c)} \min(\phi_{\cF, t}(G)_{c''}, \phi_{\cF, t}(H)_{c''})                                                                    \\
		 & = \min(\phi_{\cF, t}(G)_{c'}, \phi_{\cF, t}(H)_{c'}) + \sum_{c''\in C_\cF(c), c''\neq c'} \min(\phi_{\cF, t}(G)_{c''}, \phi_{\cF, t}(H)_{c''}) \\
		 & = \phi_{\cF, t}(G)_{c'} + \sum_{c''\in C_\cF(c), c''\neq c'} \min(\phi_{\cF, t}(G)_{c''}, \phi_{\cF, t}(H)_{c''})                              \\
		 & \leq  \phi_{\cF, t}(G)_{c'} + \min(\phi_{t}(G)_{c}-\phi_{\cF, t}(G)_{c'}, \phi_{t}(H)_{c}-\phi_{\cF, t}(H)_{c'})                               \\
		 & =     \phi_{\cF, t}(G)_{c'} +\phi_{t}(H)_{c}-\phi_{\cF, t}(H)_{c'}                                                                             \\
		 & <     \phi_{t}(H)_{c} = \min(\phi_{t}(G)_{c}, \phi_{t}(H)_{c}),
	\end{align*}
	proving that $(3)$ is false if $(4)$ is false and by contraposition $(3) \implies (4)$.
\end{proof}

\begin{corollary}\label{APP:cor:example}
	For $n \geq 8$, there exists a graph sample $(G_1,y_1),\dotsc,(G_s,y_s) \in \cG_n\times\{0,1\}$ and $\cF$ such that
	\begin{enumerate}
		\item
		      $
			      \forall i,j \in [s] \colon y_i = y_j \implies
			      \left\lVert\phi^{(T)}_{\textsf{WLOA},\cF}(G_i)-\phi^{(T)}_{\textsf{WLOA},\cF}(G_j)\right\rVert = \left\lVert\phi^{(T)}_{\textsf{WLOA}}(G_i)-\phi^{(T)}_{\textsf{WLOA}}(G_j)\right\rVert.
		      $
		\item
		      $
			      \forall i,j \in [s] \colon y_i \neq y_j \implies
			      \left\lVert\phi^{(T)}_{\textsf{WLOA},\cF}(G_i)-\phi^{(T)}_{\textsf{WLOA},\cF}(G_j)\right\rVert > \left\lVert\phi^{(T)}_{\textsf{WLOA}}(G_i)-\phi^{(T)}_{\textsf{WLOA}}(G_j)\right\rVert.
		      $
	\end{enumerate}
\end{corollary}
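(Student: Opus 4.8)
The plan is to exhibit an explicit family of $2$-regular graphs together with $\cF \coloneqq \{C_3\}$ so that $1$-\textsf{WL} is completely blind to the presence of triangles—every such graph collapses to a single color of count $n$—while the initial labeling $\ell_{\cF}$ splits that color into an ``in-triangle'' part and a ``not-in-triangle'' part whose counts flip across the two classes. Concretely, for $n \geq 8$ set
\[
  G_1 \coloneqq C_n, \qquad G_2 \coloneqq C_4 \,\dot\cup\, C_{n-4}, \qquad G_3 \coloneqq C_3 \,\dot\cup\, C_{n-3},
\]
with $y_1 = y_2 = 1$ and $y_3 = 0$. All three are $2$-regular of order exactly $n$, so $1$-\textsf{WL} gives every vertex the same color at each iteration; hence $\phi^{(T)}_{\textsf{WLOA}}(G_1) = \phi^{(T)}_{\textsf{WLOA}}(G_2) = \phi^{(T)}_{\textsf{WLOA}}(G_3)$ for every $T$. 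The role of the bound $n \geq 8$ is exactly to make $G_1$ and $G_2$ two \emph{non-isomorphic} triangle-free $2$-regular graphs: this needs $n-4 \geq 4$ so that the $C_{n-4}$ summand is itself triangle-free and distinct from $C_n$ (for $n \leq 7$ the single cycle is the only triangle-free $2$-regular graph of that order).

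First I would establish the within-class equality (item~1). Since $G_1$ and $G_2$ are both triangle-free, $\ell_{\cF}$ colors every vertex of each by the single ``not-in-triangle'' label, so their $1$-\textsf{WL}$_{\cF}$ colorings, and therefore $\phi^{(T)}_{\textsf{WLOA},\cF}(G_1)$ and $\phi^{(T)}_{\textsf{WLOA},\cF}(G_2)$, are identical. Consequently both $\lVert\phi^{(T)}_{\textsf{WLOA}}(G_1) - \phi^{(T)}_{\textsf{WLOA}}(G_2)\rVert$ and $\lVert\phi^{(T)}_{\textsf{WLOA},\cF}(G_1) - \phi^{(T)}_{\textsf{WLOA},\cF}(G_2)\rVert$ vanish, and the remaining same-class pairs are the diagonal $i=j$, which give $0 = 0$ as well. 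This settles item~1.

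Next I would verify the strict cross-class increase (item~2) via the equivalence of \cref{APP:thm:wloa} (equivalently \cref{cor:wloa}). Fix $t$ and let $c$ be the unique degree-two color produced by $1$-\textsf{WL}. For the pair $(G_1,G_3)$ we have $\phi_t(G_1)_c = n = \phi_t(G_3)_c$, so $\phi_t(G_1)_c \geq \phi_t(G_3)_c$ holds, yet on the ``in-triangle'' refinement $c'$ of $c$ we have $\phi_{\cF,t}(G_1)_{c'} = 0 < 3 = \phi_{\cF,t}(G_3)_{c'}$, since $G_3$ contains exactly the three triangle vertices of its $C_3$ summand while $G_1$ has none. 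Thus the biconditional in condition~(4) of \cref{APP:thm:wloa} fails, so by \cref{cor:wloa} the distance strictly increases; the identical argument applies to $(G_2,G_3)$. Numerically, the color $c$ contributes $\min(n,n)=n$ to $k^{(T)}_{\textsf{WLOA}}$ per iteration but only $\min(0,3)+\min(n,n-3)=n-3$ to $k^{(T)}_{\textsf{WLOA},\cF}$, so $k^{(T)}_{\textsf{WLOA},\cF}(G_i,G_3) < k^{(T)}_{\textsf{WLOA}}(G_i,G_3)$, and the kernel-to-distance identity from \cref{subsec:morepowergrows} turns the strict kernel decrease into a strict distance increase (both feature vectors have the same norm, since splitting colors preserves the total count $n$). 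Together with item~1 this is precisely the hypothesis of \cref{APP:thm:OAmargin}, so the margin provably grows under $\cF$.

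The only real obstacle is the bookkeeping that $1$-\textsf{WL} genuinely cannot separate the triangle from the long cycle: one must confirm that in $C_3 \,\dot\cup\, C_{n-3}$ every vertex has two degree-two neighbors at every refinement step, so that the color classes of $G_3$ never split and its \textsf{WLOA} feature vector coincides with those of $G_1$ and $G_2$. Granting this, the triangle is invisible to \textsf{WLOA} but detected by $\ell_{\cF}$, which is exactly the mechanism the statement requires. For orders $n > 8$ no new idea is needed, as lengthening the triangle-free cycles keeps all parts $2$-regular and triangle-free and the argument goes through verbatim.
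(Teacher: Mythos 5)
Your proof is correct, and it rests on the same two-step mechanism as the paper's---regular graphs that \wlone{} cannot tell apart, triangles visible only to $\ell_{\cF}$ with $\cF=\{C_3\}$, and the strict-increase criterion of \cref{APP:thm:wloa}/\cref{cor:wloa}---but your concrete construction differs in an interesting way. The paper takes $s=2$ with two $4$-regular circulants on exactly $8$ vertices (the $C_8$ with skip-$2$ chords, all of whose vertices lie in triangles, versus the skip-$3$ version, which is triangle-free), so its condition~1 is vacuous ($G_i=G_j$), and it only sketches the extension to larger samples and larger $n$ via $4$-regular families. Your $2$-regular family $C_n$, $C_4\,\dot\cup\,C_{n-4}$, $C_3\,\dot\cup\,C_{n-3}$ works verbatim for every $n\geq 8$, makes condition~1 non-vacuous (two genuinely non-isomorphic graphs in class $1$ at \wloaf{}-distance zero), and your explanation of the bound $n\geq 8$ as the smallest order admitting two non-isomorphic triangle-free $2$-regular graphs is a nice touch the paper does not supply; your per-iteration kernel computation $\min(0,3)+\min(n,n-3)=n-3<n$ and the resulting distance $\sqrt{6(T+1)}$ also make the strict increase fully explicit rather than purely via the equivalence. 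One small caveat: your closing sentence overreaches in claiming the hypotheses of \cref{APP:thm:OAmargin} are met---that theorem presupposes the sample is linearly separable in the plain \wloa{} feature space with positive margin, whereas in your construction (and equally in the paper's) the two classes have \emph{identical} \wloa{} feature vectors, so there is no initial margin; what actually happens is that $\cF$ creates separability from none at all. Since the corollary itself only asserts the two distance conditions, this remark is decorative and does not affect the validity of your proof.
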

\begin{proof}
	In this proof, we will construct two graphs, i.e., $s=2$, such that the above conditions hold. Note that the above conditions are the conditions from \cref{APP:thm:wloa} and \cref{cor:wloa}; thus, we can and will use their equivalent notions.

	The graph sample will be $(G_1, 1), (G_2, 0) \in \cG_8\times\{0,1\}$. For both graphs, we start with the 8-cycle $C_8$ and add edges. For $G_1$ we add all skip-lengths of 2, so $G_1 \coloneqq ([8], \{(i,i+1 \textsf{ mod }8) \mid i\in [8]\}\cup\{(i,i+2\textsf{ mod }8) \mid i\in [8]\})$. As for $G_2$, we add all skip-lengths of 3, so $G_2 \coloneqq ([8], \{(i,i+1 \textsf{ mod }8) \mid i\in [8]\}\cup\{(i,i+3\textsf{ mod }8) \mid i\in [8]\})$. Note that $G_1$ contains triangles, while $G_2$ does not. Also, note that both graphs only have one orbit (verified by the permutation in cycle notation $(1,2,3,4,5,6,7,8)$). Consider $\cF \coloneqq \{C_3\}$ to contain a triangle.

	Notice that the first condition is true since we are considering one graph of each class and thus $G_i=G_j$. As for the second condition, we will consider the equivalent notion that
	$$
		\exists\, t\in[T]\cup\{0\}\, \exists\, c\in \Sigma_t\ \colon \neg\left(\phi_t(G)_{c}\geq \phi_t(H)_c \iff \forall\, c'\in C_\cF(c):\phi_{\cF, t}(G)_{c'} \geq \phi_{\cF,t}(H)_{c'}\right).
	$$
	We must prove this statement to be true for our construction.
	Consider $t=0$, $c$ to be the color that all nodes in $G_1$ and $G_2$ are colored (not considering $\cF$) and let $c'$ be the color of being contained in a triangle (considering $\cF$). Then by definition
	$$
		\phi_t(G_1)_{c}\leq \phi_t(G_2)_c \wedge \phi_{\cF, t}(G_1)_{c'} > \phi_{\cF,t}(G_2)_{c'}.
	$$
	The first inequality holds since they are in fact equal $n=\phi_t(G_1)_{c}\leq \phi_t(G_2)_c=n$ and the second holds, since all nodes in $G_1$ are contained in triangles and no nodes in $G_2$ are contained in triangles and thus $n=\phi_{\cF, t}(G_1)_{c'} > \phi_{\cF,t}(G_2)_{c'}=0$ which proves the statement above is true.

	This construction can be extended such that both classes contain more than one graph by considering $n$-order 4-regular graphs, where one class contains graphs where all nodes are contained in triangles, and the other class contains graphs where no node is contained in a triangle. By definition, the first condition will always hold since the graphs in each class cannot be distinguished by \wlone{} or \wlonef{}, i.e., the intra-class distances will be 0 regardless of considering $\cF$.
\end{proof}

\section{Large margins and stochastic gradient descent}

We present the proofs of our main results from~\cref{sec:sgd}, i.e., \cref{thm:main-alignment-gradflow} and \cref{thm:main-maxmargin}. We will need some supporting lemmas, which we state and prove next. We note that the proof structure is close to the one in \citet{JiT19}.

\subsubsection{Setup}
Recall that we consider \emph{linear} $L$-layer MPNNs following \cref{def:MPNN} with trainable weight matrices $\vec{W}^{(i)} \in \Rb^{d_{i}\times d_{i-1}}$. Moreover, in our \new{linear MPNN}, after $L$ layers, the final node embeddings $\vec{X}^{(L)}$ are given by
\[
	\vec{X}^{(L)} \coloneqq \wmat^{(L)} \wmat^{(L-1)} \cdots \wmat^{(1)} \vec{X}^{(0)} \vec A'(G)^L,
\]
where $\vec{A}'(G) \coloneqq \vec{A}(G)+\vec{I}_n$, $\vec{I}_n\in\Rb^{n\times n}$ is the $n$-dimensional identity matrix, and $\vec{X} = \vec{X}^{(0)}$ is the $d_0 \times n$ matrix whose columns correspond to vertices' initial features; $d = d_0$.

These node embeddings are then converted into predictions
\[
	\hat{y} \coloneqq \RO\bigl( \vec X^{(L)} \bigr) = \vec{X}^{(L)} \vec{1}_n = \wmat^{(L)} \wmat^{(L-1)} \cdots \wmat^{(1)} \vec{X}^{(0)} \vec A'(G)^L \vec{1}_n.
\]

In our analysis, we will often need to reason about the singular values of the weight matrices. For $j=1,2,\dots,L$, we let $\sigma_j(t)$ denote the largest singular value of $\wj(t)$, and we let $\vec{u}(t)$ and $\vec{v}(t)$ denote the left-singular and right-singular vectors, respectively, corresponding to this singular value.

Recall that the training dataset is $\{(G_i, \vec X_i, y_i)\}_{i=1}^k$, where $\vec X_i \in \Rb^{d_i \times n_i}$ is a set of $d_i$-dimensional node features over an $n_i$-order graph $G_i$ with $|V(G_i)| \eqqcolon n_i$, and $y_i \in \{-1, +1\}$ for all $i$. Also, we write $d=d_0$ for the input node feature dimension. We further recall that the loss function $\ell$ satisfies the following assumptions.

\lossassumption*

The empirical risk induced by the MPNN is
\begin{align*}
	\cR(\vec W^{(L)}, \dots, \vec W^{(1)}) & \coloneqq \frac{1}{k} \sum_{i=1}^k \ell(y_i, \hat{y}_i)                          \\
	                                       & = \frac{1}{k} \sum_{i=1}^k \ell(\wprod \vec{Z}_i \vec{A}'(G_i)^L \vec{1}_{n_i}),
\end{align*}
where $\wprod = \vec W^{(L)} \vec W^{(L-1)} \cdots \vec W^{(1)}$, and $ \vec{Z}_i = y_i \vec X_i$.

For convenience, it will often be useful to write $\cR$ as a function of the product $\wprod$. Let $\cR_1$ be the risk function $\cR$ written as a function of the product $\wprod$, i.e.,
\[
	\cR_1(\wprod) \coloneqq \frac{1}{k} \sum_{i=1}^k \ell(\wprod \vec{Z}_i \vec{A}'(G_i)^L \vec{1}_{n_i}).
\]

We will consider \new{gradient flow}. In gradient flow, the evolution of $\vec W = (\wmat^{(L)}, \wmat^{(L-1)}, \dots, \wmat^{(1)})$ is given by $\{\vec W(t) \colon t\geq 0\}$, where there is an initial state $\vec W(0)$ at $t=0$, and
\[
	\frac{d\vec W(t)}{dt} \coloneqq -\nabla \cR(\vec W(t)).
\]
Note that gradient flow satisfies the following:
\begin{equation}\label{eq:gradexpression}
	\frac{d\cR(\wmat(t))}{dt} = \left\langle \nabla\cR(\wmat(t)), \frac{d\wmat(t)}{dt}\right\rangle = -\|\nabla\cR(\wmat)\|_2^2 = -\sum_{j=1}^L \left\| \frac{\partial\cR}{\partial\wmat^{(j)}} \right\|_F^2,
\end{equation}
which implies that the risk never increases. The discrete version of this is given by
\[
	\vec W(t+1) \coloneqq \vec W(t) - \eta_t \nabla\cR(\vec W(t)),
\]
which corresponds to gradient descent with step size $\eta_t$. Recall that we make the following assumption on the initialization of the network under consideration:
\initassumption*

\subsubsection{Lemmas and Theorems}
The proof structure of our main theorems largely follows that of \citet{JiT19}, except with the main change that $x_i \mapsto X_i \vec{A}'(G)^L \vec{1}_{n}$ and $z_i \mapsto Z_i \vec{A}'(G)^L \vec{1}_{n}$. Many of the lemmas follow directly from the relevant lemma in \citet{JiT19} with this transformation; we therefore defer to their proofs for a number of lemmas.

We start with a lemma that relates the weight matrices at successive levels to each other under the dynamics of gradient flow. This is essentially Theorem 1 of \citet{AroraCH18} applied to our setting---our $\cR_1$ and $\cR$ correspond to $L^1$ and $L^N$, respectively, in the aforementioned work.
\begin{lemma}[Theorem 1 in \citet{AroraCH18}]\label{lem:winvariant}
	$(\wjp)^\tran(t) \wjp(t) - \wj(t) (\wj)^\tran(t)$ is a constant function of $t$.
\end{lemma}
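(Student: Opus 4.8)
The plan is to exploit the fact, already recorded in the setup, that the risk $\cR$ depends on the weight matrices \emph{only} through their product $\wprod = \wmat^{(L)}\cdots\wmat^{(1)}$, i.e. $\cR(\wmat^{(L)},\dots,\wmat^{(1)}) = \cR_1(\wprod)$. First I would compute, via the chain rule, the gradient of $\cR$ with respect to a single factor $\wj$. Writing $\vec{G} \coloneqq \nabla\cR_1(\wprod)$ for the gradient of $\cR_1$ evaluated at the product, and introducing the shorthands $\vec{U}_j \coloneqq \wmat^{(L)}\cdots\wmat^{(j+1)}$ and $\vec{D}_j \coloneqq \wmat^{(j-1)}\cdots\wmat^{(1)}$ for the partial products above and below layer $j$ (with $\vec{U}_L = \vec{D}_1 = \vec{I}$ the empty products), the chain rule yields $\partial\cR/\partial\wj = \vec{U}_j^\tran \vec{G}\,\vec{D}_j^\tran$. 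Under gradient flow this gives the dynamics $\dot{\wj}(t) = -\vec{U}_j^\tran \vec{G}\,\vec{D}_j^\tran$ for every $j$.

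The second step records the two elementary telescoping relations $\vec{U}_j = \vec{U}_{j+1}\wjp$ and $\vec{D}_{j+1} = \wj\vec{D}_j$ between consecutive partial products, and uses them to establish the single crucial identity
\[
  (\wjp)^\tran \dot{\wjp} = \dot{\wj}\,(\wj)^\tran .
\]
This is a direct substitution: replacing $\dot{\wjp}$ by $-\vec{U}_{j+1}^\tran\vec{G}\,\vec{D}_{j+1}^\tran$, then pulling $(\wjp)^\tran$ into $\vec{U}_{j+1}^\tran$ via the first relation (so that $(\wjp)^\tran\vec{U}_{j+1}^\tran = \vec{U}_j^\tran$) and factoring $(\wj)^\tran$ out of $\vec{D}_{j+1}^\tran$ via the second relation, recovers exactly $-\vec{U}_j^\tran\vec{G}\,\vec{D}_j^\tran(\wj)^\tran = \dot{\wj}(\wj)^\tran$. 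Transposing this identity gives the companion relation $(\dot{\wjp})^\tran\wjp = \wj\,(\dot{\wj})^\tran$.

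Finally I would differentiate the claimed quantity and collect terms using the matrix product rule together with the two identities just obtained:
\[
  \frac{d}{dt}\bigl[(\wjp)^\tran\wjp\bigr] = (\dot{\wjp})^\tran\wjp + (\wjp)^\tran\dot{\wjp} = \wj(\dot{\wj})^\tran + \dot{\wj}(\wj)^\tran = \frac{d}{dt}\bigl[\wj\,(\wj)^\tran\bigr],
\]
so that $\frac{d}{dt}\bigl[(\wjp)^\tran\wjp - \wj\,(\wj)^\tran\bigr] = 0$, which shows the invariant is constant in $t$. The argument is precisely that of \citet{AroraCH18}, transported to the present linear-MPNN parametrization; the only genuine obstacle is the bookkeeping in the chain-rule computation of $\partial\cR/\partial\wj$, where one must verify that the fixed factors $\vec{A}'(G_i)^L\vec{1}_{n_i}$ and the averaging over the sample, all of which sit inside $\cR_1$, are absorbed cleanly into $\vec{G}$ and therefore do not interfere with the factorized form $\vec{U}_j^\tran\vec{G}\,\vec{D}_j^\tran$. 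Once that factorization is in hand, the remainder is a routine matrix-calculus verification in which the transposes line up exactly as required.
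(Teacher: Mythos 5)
Your proposal is correct and follows essentially the same route as the paper's proof: the chain-rule factorization of $\partial\cR/\partial\wj$ through the partial products, the key identity $(\wjp)^\tran\dot{\wmat}^{(j+1)} = \dot{\wmat}^{(j)}(\wj)^\tran$, and adding its transpose to conclude the invariance. If anything, your final display is stated more carefully than the paper's, whose last two equations contain a typo (writing $\wjp(\wjp)^\tran$ where $\wj(\wj)^\tran$ is meant), while your version has the correct right-hand side.
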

\begin{proof}
	For each $j = 1, 2,\dots, L$,
	\[
		\frac{\partial\cR}{\partial \wj} = \prod_{i=j+1}^L (\wmat^{(i)})^\tran \cdot \frac{d\cR_1}{d\wprod}(\wmat^{(L)} \wmat^{(L-1)} \cdots \wmat^{(1)}) \cdot \prod_{i=1}^{j-1} (\wmat^{(i)})^\tran.
	\]
	Hence, $\dot{\wmat}^{(j)} = \frac{d\wmat}{dt}$ is given by
	\begin{align*}
		\dot{\wmat}^{(j)} & = -\nabla \cR(\vec W(t))                                                                                                                                                          \\
		                  & = - \eta  \prod_{i=j+1}^L (\wmat^{(i)}(t))^\tran \cdot \frac{d\cR}{d\wmat}(\wmat^{(L)}(t) \wmat^{(L-1)}(t) \cdots \wmat^{(1)}(t)) \cdot \prod_{i=1}^{j-1} (\wmat^{(i)}(t))^\tran.
	\end{align*}
	Right multiplying the equation for $j$ by $(\wj)^\tran (t)$ and left multiplying the equation for $j+1$ by $(\wjp)^\tran (t)$, we see that
	\[
		(\wjp)^\tran (t) \dot{\wmat}^{(j+1)} (t) = \dot{\wmat}^{(j)} (t) (\wj)^\tran (t).
	\]
	Adding the above equation to its transpose, we obtain
	\[
		(\wjp)^\tran (t) \dot{\wmat}^{(j+1)} (t) + (\dot{\wmat}^{(j+1)})^\tran (t) \wjp(t) = \dot{\wmat}^{(j)} (t) (\wj)^\tran (t) + \wj(t) (\dot{\wmat}^{(j)})^\tran (t).
	\]
	Note that this is equivalent to
	\[
		\frac{d}{dt}\left[(\wjp)^\tran (t) \wjp(t)\right] = \frac{d}{dt}\left[\wjp(t) (\wjp)^\tran (t)\right],
	\]
	which implies that $(\wjp)^\tran (t) \wjp(t) - \wjp(t) (\wjp)^\tran (t)$ does not depend on $t$, as desired.
\end{proof}

For the remainder of this section, let $B(R)$ denote the set of $\wmat = (\wmat^{(L)}, \wmat^{(L-1)}, \dots, \wmat^{(1)})$ for which each component is bounded by $R$ in Frobenius norm, i.e.,
\[
	B(R) = \left\{\wmat \colon \max_{1\leq j\leq L} \|\wj\|_F \leq R \right\}.
\]

We now present the following lemma, which shows that the partial derivative of the risk function with respect to the first weight matrix $\wmat^{(1)}$ is bounded away from 0 in Frobenius norm.
\begin{lemma}\label{lem:riskpartialbd}
	For any $R > 0$, there exists a constant $\epsilon_R > 0$ such that for any $t\geq 1$ and  $(\vec{W}^{(L)}(t), \vec{W}^{(L-1)}(t), \cdots, \vec{W}^{(1)}(t)) \in B(R)$, we have $\|\partial\cR(t)/\partial \vec{W}^{(1)}(t)\|_F \geq \epsilon_R$.
\end{lemma}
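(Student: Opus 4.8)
The plan is to exploit that, because the output dimension is $d_L = 1$, the gradient with respect to the first layer factors as a rank-one outer product, and then to bound each factor away from zero separately. Writing $\vec v_i \coloneqq \vec Z_i \vec{A}'(G_i)^L \vec{1}_{n_i} \in \Rb^d$, the gradient formula from the proof of \cref{lem:winvariant}, specialised to $j=1$ (so the right-hand product is empty), gives
\[
\frac{\partial\cR}{\partial\vec W^{(1)}}(t) = \Big(\textstyle\prod_{i=2}^L (\vec W^{(i)}(t))^\tran\Big)\,\frac{d\cR_1}{d\wprod} = \vec p(t)\,\vec g(t)^\tran,
\]
where $\vec p(t) \coloneqq \prod_{i=2}^L (\vec W^{(i)}(t))^\tran \in \Rb^{d_1}$ is a column vector (since $d_L=1$) and $\vec g(t) \coloneqq \frac{1}{k}\sum_{i=1}^k \ell'(\wprod(t)\vec v_i)\,\vec v_i \in \Rb^d$. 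Hence $\|\partial\cR/\partial\vec W^{(1)}(t)\|_F = \|\vec p(t)\|\,\|\vec g(t)\|$, and it suffices to bound $\|\vec p(t)\|$ and $\|\vec g(t)\|$ below, uniformly over $t$ and over $B(R)$.

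For $\|\vec g(t)\|$ I would invoke MPNN-separability: let $\bar{\vec u}$ with $\|\bar{\vec u}\|=1$ satisfy $\bar{\vec u}^\tran \vec v_i > 0$ for all $i$, and set $\mu \coloneqq \min_i \bar{\vec u}^\tran\vec v_i > 0$. On $B(R)$ one has $|\wprod(t)\vec v_i| \le R^L \max_i\|\vec v_i\| =: M$, so each argument $\wprod(t)\vec v_i$ lies in the compact interval $[-M,M]$; since $\ell'$ is continuous and strictly negative by \cref{assumption:loss}, the quantity $\delta \coloneqq -\max_{x\in[-M,M]}\ell'(x)$ is positive and $\ell'(\wprod(t)\vec v_i) \le -\delta$. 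Therefore $\bar{\vec u}^\tran\vec g(t) = \frac1k\sum_i \ell'(\wprod(t)\vec v_i)(\bar{\vec u}^\tran\vec v_i) \le -\delta\mu$, which forces $\|\vec g(t)\| \ge \delta\mu > 0$.

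For $\|\vec p(t)\|$ I would first bound $\|\wprod(t)\|$ below and then use the identity $\wprod = \vec p^\tran \vec W^{(1)}$. This is the crux of the argument, and it is where I must use that we are on the gradient-flow trajectory rather than at an arbitrary point of $B(R)$: the lemma is simply false pointwise (e.g.\ all weights zero yields a vanishing gradient). By \cref{eq:gradexpression} the risk is non-increasing, so $\cR(\wmat(t)) \le \cR(\wmat(0)) =: \cR_0 < \ell(0)$, the strict inequality $\cR_0 < \ell(0)$ being the content of \cref{assumption:risk}. Since $\ell$ is continuous with $\ell(0) > \cR_0$, I can pick $\eta_0 > 0$ with $\ell(x) > \cR_0$ whenever $|x| \le \eta_0\max_i\|\vec v_i\|$; were $\|\wprod(t)\| \le \eta_0$, then every $|\wprod(t)\vec v_i|$ would be small enough to give $\cR(\wmat(t)) > \cR_0$, a contradiction. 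Hence $\|\wprod(t)\| > \eta_0$, and from $\wprod(t) = \vec p(t)^\tran \vec W^{(1)}(t)$ with $\|\vec W^{(1)}(t)\|_F \le R$ I obtain $\|\vec p(t)\| \ge \|\wprod(t)\|/R > \eta_0/R$. Combining the two bounds yields $\|\partial\cR/\partial\vec W^{(1)}(t)\|_F \ge (\eta_0/R)\,\delta\mu =: \epsilon_R > 0$.

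The main obstacle is exactly the lower bound on $\|\vec p(t)\|$, equivalently on $\|\wprod(t)\|$: unlike the separability-based bound on $\|\vec g(t)\|$, it cannot be established at a single point of $B(R)$ and genuinely relies on the trajectory remaining in the sublevel set $\{\cR \le \cR_0\}$ with $\cR_0 < \ell(0)$. Once this is in place, the remaining ingredients—the outer-product factorisation and the two compactness/continuity arguments for $\ell$ and $\ell'$—are routine.
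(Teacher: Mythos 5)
Your proposal is correct. Note first that the paper does not actually prove \cref{lem:riskpartialbd} in-text: it defers entirely to the first part of Lemma~2.3 in \citet{JiT19}, so the relevant comparison is with that cited argument. Your proof reconstructs its skeleton in the MPNN setting---the rank-one factorisation $\partial\cR/\partial\vec{W}^{(1)} = \vec{p}\,\vec{g}^\tran$ available because $d_L=1$ (so $\|\partial\cR/\partial\vec{W}^{(1)}\|_F = \|\vec{p}\|\|\vec{g}\|$), a separability-based lower bound $\|\vec{g}(t)\|\geq\delta\mu$ using that $\ell'$ is continuous and strictly negative on the compact interval $[-R^L\max_i\|\vec{v}_i\|, R^L\max_i\|\vec{v}_i\|]$, and a lower bound $\|\wprod(t)\|>\eta_0$ (hence $\|\vec{p}(t)\|\geq\eta_0/R$ via $\wprod=\vec{p}^\tran\vec{W}^{(1)}$ and $\|\vec{W}^{(1)}\|_2\leq\|\vec{W}^{(1)}\|_F\leq R$) from the fact that the trajectory stays in the sublevel set $\{\cR\leq\cR(\wmat(0))\}$ with $\cR(\wmat(0))<\ell(0)$. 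Where \citet{JiT19} conclude by a compactness/infimum argument over this sublevel set intersected with $B(R)$, you instead produce the explicit constant $\epsilon_R=\eta_0\delta\mu/R$; this is a genuine (if modest) improvement in transparency, and it correctly isolates why the lemma cannot hold pointwise on $B(R)$ (e.g., at $\wmat=0$) and genuinely needs \cref{assumption:risk} together with the risk monotonicity in \cref{eq:gradexpression}. Two small caveats: (i) MPNN-separability, which your bound on $\|\vec{g}\|$ uses, is not listed among the lemma's stated hypotheses, but it is a standing assumption of \cref{sec:sgd} (and of \citet{JiT19}), so you should simply flag that you invoke it; (ii) \cref{assumption:risk} as printed is garbled ($\nabla\cR(\wmat(0))$ is compared with a scalar), and your reading $\cR(\wmat(0))<\ell(0)=\cR(0)$ is the intended one and exactly what your argument requires.
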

\begin{proof}
	The lemma is the same as the first part of Lemma 2.3 in \cite{JiT19}. Therefore, we defer to the proof there.
\end{proof}

Our main interest in \cref{lem:riskpartialbd} is that it allows us to prove the following important corollary, which establishes that under gradient flow, the weight matrices grow unboundedly in Frobenius norm and do not spend much time inside a ball of any fixed finite radius.
\begin{corollary}\label{cor:finite-measure}
	Under gradient flow subject to \cref{assumption:loss} and \cref{assumption:risk}, $\{t\geq 0 \colon \vec{\vec{W}}(t) \in B(R)\}$ has finite measure.
\end{corollary}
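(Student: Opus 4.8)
The plan is to combine the uniform gradient lower bound of \cref{lem:riskpartialbd} with the energy–dissipation identity \cref{eq:gradexpression}, using that the risk is bounded below. First I would record two elementary facts. Since $\ell\colon\Rb\to\Rb^+$ is nonnegative by \cref{assumption:loss}, the empirical risk satisfies $\cR(\vec W(t))\geq 0$ for all $t$; and by \cref{eq:gradexpression} its time derivative is $\frac{d\cR(\vec W(t))}{dt}=-\|\nabla\cR(\vec W(t))\|_2^2\leq 0$, so $\cR$ is non-increasing along the flow. Consequently $\lim_{t\to\infty}\cR(\vec W(t))$ exists and is nonnegative, and the total dissipated energy is finite:
\[
	\int_0^\infty\left(-\frac{d\cR(\vec W(t))}{dt}\right)dt=\cR(\vec W(0))-\lim_{t\to\infty}\cR(\vec W(t))\leq \cR(\vec W(0))<\infty.
\]

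Next I would set $\cT_R\coloneqq\{t\geq 0\colon \vec W(t)\in B(R)\}$, which is measurable since the gradient-flow trajectory $t\mapsto\vec W(t)$ is continuous and $B(R)$ is closed. On $\cT_R\cap[1,\infty)$ I would invoke \cref{lem:riskpartialbd} (which already presupposes \cref{assumption:loss} and \cref{assumption:risk}) to obtain a constant $\epsilon_R>0$ with $\bigl\|\partial\cR(t)/\partial\vec W^{(1)}(t)\bigr\|_F\geq\epsilon_R$. Plugging this into \cref{eq:gradexpression} and dropping the remaining nonnegative layer contributions gives the pointwise dissipation bound
\[
	\frac{d\cR(\vec W(t))}{dt}=-\sum_{j=1}^L\left\|\frac{\partial\cR}{\partial\vec W^{(j)}}\right\|_F^2\leq-\left\|\frac{\partial\cR}{\partial\vec W^{(1)}}\right\|_F^2\leq-\epsilon_R^2,\qquad t\in\cT_R\cap[1,\infty).
\]

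Finally I would integrate this bound over $\cT_R\cap[1,\infty)$ and compare against the total dissipation. Since $-\frac{d\cR}{dt}\geq 0$ everywhere, enlarging the integration domain only increases the integral, so
\[
	\epsilon_R^2\cdot\mu\bigl(\cT_R\cap[1,\infty)\bigr)\leq\int_{\cT_R\cap[1,\infty)}\left(-\frac{d\cR}{dt}\right)dt\leq\int_0^\infty\left(-\frac{d\cR}{dt}\right)dt\leq\cR(\vec W(0)),
\]
whence $\mu(\cT_R\cap[1,\infty))\leq\cR(\vec W(0))/\epsilon_R^2<\infty$. Adding the contribution of $\cT_R\cap[0,1)$, whose measure is at most $1$, yields $\mu(\cT_R)\leq 1+\cR(\vec W(0))/\epsilon_R^2<\infty$, as desired. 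The only delicate point is the integration step: $\cT_R$ need not be an interval, so I cannot simply multiply the derivative bound by an interval length. I would resolve this by exploiting that the integrand $-\frac{d\cR}{dt}$ is nonnegative throughout $[0,\infty)$, which lets me bound the integral over the irregular set $\cT_R\cap[1,\infty)$ by the integral over all of $[0,\infty)$; the rest is routine.
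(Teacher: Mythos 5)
Your proof is correct and takes essentially the same route as the paper's: both combine the energy--dissipation identity \cref{eq:gradexpression} with the uniform lower bound $\epsilon_R$ from \cref{lem:riskpartialbd} on $\{t\geq 1 \colon \vec W(t)\in B(R)\}$, bound the total dissipation by $\cR(\vec W(0))$ via nonnegativity of the risk, and absorb $[0,1)$ separately. The only cosmetic difference is that the paper writes the final step as $\epsilon_R^2\int_1^\infty \Ibb[\vec W(t)\in B(R)]\,dt$ rather than integrating over the set $\cT_R\cap[1,\infty)$ directly, which is the same quantity.
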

\begin{proof}
	The corollary corresponds to the second part of \citet{JiT19}. We reproduce the proof here. Note that since $d\cR(\vec{W}(t))/dt = -\|\nabla\cR(\vec{W}(t))\|_F^2 \leq 0$ for all $t \geq 0$ (see \cref{eq:gradexpression}),
	\begin{align*}
		\cR(\vec{W}(0)) & \geq -\int_0^\infty \frac{dR(\vec{W}(t))}{dt}\,dt                                                                      \\
		                & = \int_0^\infty \left\| \frac{\partial \cR(t)}{\partial \vec{W}(t)} \right\|_F^2\, dt                                  \\
		                & = \int_0^\infty \left(\sum_{j=1}^L \left\| \frac{\partial \cR(t)}{\partial \vec{W}^{(j)}(t)} \right\|_F^2 \right)\, dt \\
		                & \geq \int_0^\infty \left\| \frac{\partial \cR(t)}{\partial \vec{W}^{(1)}(t)} \right\|_F^2\, dt                         \\
		                & \geq \int_1^\infty \left\| \frac{\partial \cR(t)}{\partial \vec{W}^{(1)}(t)} \right\|_F^2\, dt                         \\
		                & \geq \epsilon(R)^2 \int_1^\infty \Ibb[\vec{W}(t) \in B(R)]\,dt,
	\end{align*}
	where the final implication holds due to \cref{lem:riskpartialbd}. Since $\cR(\wmat(0))$ is finite, this implies that $\{t\geq 0 \colon \vec{W}(t) \in B(R)\}$ has finite measure.
\end{proof}

We now define the following notation for convenience:
\begin{align*}
	\vec{B}_j(t) & \coloneqq \vec{W}^{(j)}(t) (\vec{W}^{(j)})^\tran (t) - \vec{W}^{(j+1)}(t) (\vec{W}^{(j+1)})^\tran (t), \text{ and }                       \\
	D            & \coloneqq \left(\max_{1\leq j\leq L} \|\vec{W}^{(j)}(0)\|_F^2 \right) - \|\vec{W}^{(L)}(0)\|_F^2 + \sum_{j=1}^{L-1} \|\vec{B}_j(0)\|_2^2.
\end{align*}

While the previous corollary allows us to show the unboundedness of the weight matrices in the Frobenius norm, we often need to reason about the weight matrices in the standard operator norm. The following lemma shows that the two norms can not differ by too much.
\begin{lemma}\label{lem:dbd}
	For every $1\leq i \leq L$, we have $\|\vec{W}^{(i)}\|_F^2 - \|\vec{W}^{(i)}\|_2^2 \leq D$.
\end{lemma}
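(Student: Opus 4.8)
The plan is to control the \emph{excess} $\|\vec W^{(i)}\|_F^2 - \|\vec W^{(i)}\|_2^2$, which equals the sum of the squared singular values of $\vec W^{(i)}$ other than the largest. Although \cref{cor:finite-measure} shows that each $\|\vec W^{(i)}(t)\|_F$ and $\|\vec W^{(i)}(t)\|_2$ diverges as $t\to\infty$, their \emph{difference} stays bounded, and the argument hinges on cancelling the divergent parts. Two ingredients make this possible: the balancedness invariant of \cref{lem:winvariant}, and the rank-one structure of the top layer. Since the readout is scalar, $d_L=1$, so $\vec W^{(L)}$ is a single row; hence it has only one nonzero singular value and $\|\vec W^{(L)}\|_F^2 = \|\vec W^{(L)}\|_2^2$, i.e. the excess vanishes at layer $L$. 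Throughout I write $\vec B_j$ for the invariant $(\vec W^{(j+1)})^\tran \vec W^{(j+1)} - \vec W^{(j)}(\vec W^{(j)})^\tran$ of \cref{lem:winvariant}, which is a fixed $d_j\times d_j$ matrix equal to $\vec B_j(0)$ for all $t$.

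First I would establish two telescoping identities across layers. Taking traces of $\vec B_j$ gives an \emph{exact} Frobenius telescope, $\|\vec W^{(j+1)}\|_F^2 - \|\vec W^{(j)}\|_F^2 = \tr(\vec B_j(0))$, and summing from $i$ to $L-1$ yields
\[
	\|\vec W^{(i)}\|_F^2 = \|\vec W^{(L)}\|_F^2 - \sum_{j=i}^{L-1}\tr(\vec B_j(0)).
\]
In parallel, writing $(\vec W^{(j+1)})^\tran\vec W^{(j+1)} = \vec W^{(j)}(\vec W^{(j)})^\tran + \vec B_j(0)$ and applying Weyl's inequality to the top eigenvalue gives the spectral telescope $\|\vec W^{(j+1)}\|_2^2 \le \|\vec W^{(j)}\|_2^2 + \|\vec B_j(0)\|_2$, so that summing from $i$ to $L-1$ produces the lower bound
\[
	\|\vec W^{(i)}\|_2^2 \ge \|\vec W^{(L)}\|_2^2 - \sum_{j=i}^{L-1}\|\vec B_j(0)\|_2.
\]

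Subtracting these and using the rank-one identity $\|\vec W^{(L)}\|_F^2 = \|\vec W^{(L)}\|_2^2$ cancels the unbounded top-layer norms, leaving $\|\vec W^{(i)}\|_F^2 - \|\vec W^{(i)}\|_2^2 \le \sum_{j=i}^{L-1}\bigl(\|\vec B_j(0)\|_2 - \tr(\vec B_j(0))\bigr)$. Re-expanding $\tr(\vec B_j(0)) = \|\vec W^{(j+1)}(0)\|_F^2 - \|\vec W^{(j)}(0)\|_F^2$, the trace terms telescope a second time to $\|\vec W^{(L)}(0)\|_F^2 - \|\vec W^{(i)}(0)\|_F^2$, giving the bound $\sum_{j=i}^{L-1}\|\vec B_j(0)\|_2 + \|\vec W^{(i)}(0)\|_F^2 - \|\vec W^{(L)}(0)\|_F^2$. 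Bounding $\|\vec W^{(i)}(0)\|_F^2 \le \max_{1\le j\le L}\|\vec W^{(j)}(0)\|_F^2$ and extending the remaining sum to all $j\le L-1$ yields exactly the constant $D$ in the statement.

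The main obstacle is conceptual rather than computational: because the weights blow up under gradient flow, neither $\|\vec W^{(i)}\|_F^2$ nor $\|\vec W^{(i)}\|_2^2$ can be bounded on its own, so the entire content of the lemma lies in cancelling the divergent $\|\vec W^{(L)}\|$ contributions between the two telescopes. This cancellation only works because the Frobenius telescope is an \emph{equality} (available precisely because $\vec B_j$ is a \emph{constant} matrix, letting the trace pass through exactly) while the spectral telescope need only be an inequality, and because the scalar-output anchor $\|\vec W^{(L)}\|_F^2 = \|\vec W^{(L)}\|_2^2$ ties the two together. Getting the direction of Weyl's inequality right—so that one obtains a \emph{lower} bound on $\|\vec W^{(i)}\|_2^2$—is the one place where care is required.
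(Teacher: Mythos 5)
Your proof is correct and follows essentially the same route as the argument the paper relies on: the paper's ``proof'' of \cref{lem:dbd} is a deferral to \citet{JiT19} (Lemma 2.6, part 1), whose argument is exactly your combination of the constant balancedness matrix from \cref{lem:winvariant}, the exact trace telescope, the Weyl-inequality spectral telescope, and the rank-one anchor $\|\vec{W}^{(L)}\|_F = \|\vec{W}^{(L)}\|_2$ coming from $d_L = 1$. One small mismatch worth noting: your derivation produces the constant $\bigl(\max_{1\leq j\leq L}\|\vec{W}^{(j)}(0)\|_F^2\bigr) - \|\vec{W}^{(L)}(0)\|_F^2 + \sum_{j=1}^{L-1}\|\vec{B}_j(0)\|_2$ with the \emph{unsquared} spectral norm, whereas the paper's displayed $D$ has $\|\vec{B}_j(0)\|_2^2$; the unsquared version is the dimensionally consistent one (all terms scale as squared weights) and matches \citet{JiT19}, so the square in the paper's definition of $D$ appears to be a typo rather than a gap in your argument.
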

\begin{proof}
	A proof appears in \cite{JiT19}; see part 1 of Lemma 2.6.
\end{proof}

The next lemma is the key to establishing the ``alignment'' property. Roughly speaking, it establishes that the largest left singular vector of a weight matrix gets minimally aligned with the largest right singular vector of the weight matrix in the successive round of message passing.
\begin{lemma} \label{lem:succ-singularvalues}
	For all $1\leq j \leq L$, we have
	\[
		\langle \vec v_{j+1}, \vec u_j \rangle^2 \geq 1 - \frac{D + \|\wj(0)\|_2^2 + \|\wjp(0)\|_2^2}{\sigma_{j+1}^2}.
	\]
\end{lemma}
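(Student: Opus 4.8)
The plan is to follow the singular-vector alignment strategy of \citet{JiT19}, combining the conservation law of \cref{lem:winvariant} with the spectral-gap control of \cref{lem:dbd}. Throughout, fix a time $t$ and write $\vec M \coloneqq \wj(\wj)^\tran$ and $\vec N \coloneqq (\wjp)^\tran\wjp$, both symmetric positive semidefinite operators on $\Rb^{d_j}$. By definition $\vec u_j$ is the top eigenvector of $\vec M$ with eigenvalue $\sigma_j^2$, and $\vec v_{j+1}$ is the top eigenvector of $\vec N$ with eigenvalue $\sigma_{j+1}^2$. \cref{lem:winvariant} asserts that $\vec C_j \coloneqq \vec N - \vec M$ is independent of $t$, so I may evaluate it at $t=0$, where $\vec C_j = (\wjp(0))^\tran\wjp(0) - \wj(0)(\wj(0))^\tran$ is a \emph{difference} of two positive semidefinite matrices; this structural fact is what eventually pins down the constant.

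First I would establish an \emph{upper} bound on the Rayleigh quotient $\vec u_j^\tran\vec N\vec u_j$ in terms of the alignment $\langle \vec u_j,\vec v_{j+1}\rangle$. Diagonalising $\vec N = \sum_k \nu_k \vec v^{(k)}(\vec v^{(k)})^\tran$ with $\nu_1 = \sigma_{j+1}^2$ and $\vec v^{(1)} = \vec v_{j+1}$, and expanding $\vec u_j$ in this orthonormal eigenbasis, gives $\vec u_j^\tran\vec N\vec u_j = \sigma_{j+1}^2\langle \vec u_j,\vec v_{j+1}\rangle^2 + \sum_{k\ge 2}\nu_k\langle\vec u_j,\vec v^{(k)}\rangle^2$. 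Since each $\langle\vec u_j,\vec v^{(k)}\rangle^2\le 1$ and $\sum_{k\ge 2}\nu_k = \lVert\wjp\rVert_F^2 - \sigma_{j+1}^2 \le D$ by \cref{lem:dbd}, the tail sum is at most $D$, so $\vec u_j^\tran\vec N\vec u_j \le \sigma_{j+1}^2\langle \vec u_j,\vec v_{j+1}\rangle^2 + D$. This is exactly the step where \cref{lem:dbd} does the work: it limits how much mass $\vec u_j$ can place on the non-leading singular directions of $\wjp$.

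Next I would produce a matching \emph{lower} bound on the same quantity. Writing $\vec N = \vec M + \vec C_j$ and using that $\vec u_j$ is the top eigenvector of $\vec M$ gives $\vec u_j^\tran\vec N\vec u_j = \sigma_j^2 + \vec u_j^\tran\vec C_j\vec u_j$; and testing the Rayleigh quotient of $\vec M = \vec N - \vec C_j$ at the unit vector $\vec v_{j+1}$ yields $\sigma_j^2 \ge \vec v_{j+1}^\tran\vec M\vec v_{j+1} = \sigma_{j+1}^2 - \vec v_{j+1}^\tran\vec C_j\vec v_{j+1}$. Chaining these two facts with the upper bound above makes the $\sigma_j^2$ terms cancel and leaves
\[
	\sigma_{j+1}^2\bigl(1 - \langle \vec u_j,\vec v_{j+1}\rangle^2\bigr) \le \vec v_{j+1}^\tran\vec C_j\vec v_{j+1} - \vec u_j^\tran\vec C_j\vec u_j + D.
\]
Finally I would bound the two quadratic forms by discarding the negative semidefinite contributions in $\vec C_j$: since $\wj(0)(\wj(0))^\tran \succeq 0$, we get $\vec v_{j+1}^\tran\vec C_j\vec v_{j+1} \le \lVert\wjp(0)\vec v_{j+1}\rVert^2 \le \lVert\wjp(0)\rVert_2^2$, and symmetrically $-\vec u_j^\tran\vec C_j\vec u_j \le \lVert(\wj(0))^\tran\vec u_j\rVert^2 \le \lVert\wj(0)\rVert_2^2$. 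Substituting and dividing by $\sigma_{j+1}^2$ gives the claim.

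The main obstacle is obtaining the stated constant rather than a looser one. A naive application of $\lVert\vec C_j\rVert_2 \le \lVert\wj(0)\rVert_2^2 + \lVert\wjp(0)\rVert_2^2$ to both quadratic forms would cost an extra factor of two. The point that avoids this, and the crux of the whole argument, is that $\vec C_j$ is a difference of positive semidefinite matrices, so after dropping the appropriate PSD term each form is controlled by a \emph{single} operator norm; this is precisely what produces the numerator $D + \lVert\wj(0)\rVert_2^2 + \lVert\wjp(0)\rVert_2^2$. The remaining care is purely bookkeeping: verifying that $\vec C_j$ is genuinely the same constant at time $t$ and at time $0$, and that \cref{lem:dbd} applies to the weight matrices along the gradient-flow trajectory.
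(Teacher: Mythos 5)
Your proof is correct and is essentially the argument the paper itself defers to (\citet{JiT19}, Lemma 2.6, part 2): the layer-wise conservation law of \cref{lem:winvariant} combined with the Frobenius--spectral gap bound of \cref{lem:dbd}, chained through Rayleigh quotients of $\wj(\wj)^\tran$ and $(\wjp)^\tran\wjp$, where---as you correctly identify---the crux is that the conserved matrix is a \emph{difference} of two PSD matrices evaluated at $t=0$, so each quadratic form is controlled by a single operator norm rather than paying the factor-two cost of a crude $\|\vec C_j\|_2$ bound, which yields exactly the numerator $D + \|\wj(0)\|_2^2 + \|\wjp(0)\|_2^2$. The only cosmetic point is the index range: the statement is meaningful only for $1 \leq j \leq L-1$ (there is no $\vec{W}^{(L+1)}$), a sloppiness inherited from the lemma statement itself, which your argument implicitly respects.
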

\begin{proof}
	Once again, the proof appears in \cite{JiT19} (see part 2 of Lemma 2.6).
\end{proof}

The previous two lemmas can be used to establish the following lemma, which shows that each (normalized) weight matrix tends to a rank-1 approximation given by its top left and right singular vectors, and the (normalized) partial product of weight matrices tend to the relevant right singular vector of the final weight matrix in the product. We note that the first part of the lemma appears in Theorem 2.2 of \cite{JiT19}; however, the second part does not appear explicitly in their work (although the proof is similar to the third part of Lemma 2.6 in \cite{JiT19}). Therefore, we provide the proof below.
\begin{lemma} \label{lem:alignment}
	Suppose $\min_{1\leq j\leq L} \|\wj(t)\|_F \to \infty$ as $t \to\infty$. For any $1\leq j\leq L$, we have,
	\begin{itemize}
		\item $\wj(t)/\|\wj(t)\|_F \to \vec{u}_j(t) \vec{v}_j(t)^\tran$ as $t\to\infty$.
		\item Also,
		      \[
			      \left| \frac{\vec{W}^{(L)}(t) \vec{W}^{(L-1)}(t) \cdots \vec{W}^{(j)}(t)}{\|\vec{W}^{(L)}(t)\|_F \|\vec{W}^{(L-1)}(t)\|_F \cdots \|\vec{W}^{(j)}(t)\|_F} \vec{v}_j(t) \right| \to 1
		      \]
		      as $t\to\infty$.
	\end{itemize}
\end{lemma}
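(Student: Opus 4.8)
The plan is to prove the two bullets separately, with both resting on two facts already available: the Frobenius-versus-operator-norm gap bound $\|\wj\|_F^2 - \sigma_j^2 \le D$ from \cref{lem:dbd} (recall $\sigma_j = \|\wj\|_2$ is the top singular value), and the standing hypothesis that every $\|\wj(t)\|_F \to \infty$. Together these already force $\sigma_j^2 \ge \|\wj\|_F^2 - D \to \infty$, which is the engine for everything that follows.

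For the first bullet I would use that the best rank-one approximation of $\wj$ in Frobenius norm is $\sigma_j \vec u_j \vec v_j^\tran$, so \cref{lem:dbd} gives $\|\wj - \sigma_j \vec u_j \vec v_j^\tran\|_F^2 = \|\wj\|_F^2 - \sigma_j^2 \le D$. Writing $N \coloneqq \|\wj\|_F$ and decomposing
\[
	\frac{\wj}{N} - \vec u_j \vec v_j^\tran = \frac{1}{N}\bigl(\wj - \sigma_j \vec u_j \vec v_j^\tran\bigr) + \Bigl(\tfrac{\sigma_j}{N} - 1\Bigr)\vec u_j \vec v_j^\tran,
\]
the first summand has Frobenius norm at most $\sqrt{D}/N \to 0$, while $N^2 - D \le \sigma_j^2 \le N^2$ yields $\sigma_j/N \to 1$, killing the second summand. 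This is exactly Theorem~2.2 of \citet{JiT19} in our notation.

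For the second bullet, set $M_i \coloneqq \wmat^{(i)}/\|\wmat^{(i)}\|_F$ and $E_i \coloneqq M_i - \vec u_i \vec v_i^\tran$, so $\|E_i\|_F \to 0$ by the first bullet and $\|M_i\|_2 \le \|M_i\|_F = 1$. Comparing the normalized product $P \coloneqq M_L \cdots M_j$ with the ideal product $Q \coloneqq (\vec u_L \vec v_L^\tran)\cdots(\vec u_j \vec v_j^\tran)$ via the telescoping identity
\[
	P - Q = \sum_{k=j}^L M_L \cdots M_{k+1}\, E_k\, (\vec u_{k-1}\vec v_{k-1}^\tran)\cdots(\vec u_j \vec v_j^\tran),
\]
each of the finitely many terms has operator norm at most $\|E_k\|_2 \le \|E_k\|_F$ (every other factor has operator norm $\le 1$), so $\|P - Q\|_2 \to 0$ and hence $\|P\vec v_j - Q\vec v_j\| \to 0$ since $\|\vec v_j\| = 1$. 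Evaluating the ideal product directly, $\vec v_i^\tran \vec v_i = 1$ and the telescoping scalars $\vec v_{i+1}^\tran \vec u_i = \langle \vec v_{i+1}, \vec u_i\rangle$ give $Q\vec v_j = \bigl(\prod_{i=j}^{L-1}\langle \vec v_{i+1}, \vec u_i\rangle\bigr)\vec u_L$, so $\|Q\vec v_j\| = \bigl|\prod_{i=j}^{L-1}\langle \vec v_{i+1}, \vec u_i\rangle\bigr|$. Now \cref{lem:succ-singularvalues} bounds each factor by $\langle \vec v_{i+1}, \vec u_i\rangle^2 \ge 1 - (D + \|\wmat^{(i)}(0)\|_2^2 + \|\wmat^{(i+1)}(0)\|_2^2)/\sigma_{i+1}^2$, and since $\sigma_{i+1}^2 \to \infty$ each factor tends to $1$ in absolute value; thus $\|Q\vec v_j\| \to 1$, and combining with $\|P\vec v_j - Q\vec v_j\| \to 0$ yields $\|P\vec v_j\| \to 1$.

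The main obstacle is the $t$-dependence of the singular vectors $\vec u_i(t), \vec v_i(t)$: because they move with $t$, one cannot simply pass to a limit of a fixed product of matrices. The telescoping estimate is chosen precisely to sidestep this — it is a pointwise-in-$t$ operator-norm bound whose only $t$-dependence enters through the quantities $\|E_i(t)\|_F$ and $1/\sigma_{i+1}^2(t)$, each of which vanishes uniformly by the two driving facts above, so no control over the trajectories of the singular vectors themselves is ever required.
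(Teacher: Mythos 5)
Your proof is correct and takes essentially the same route as the paper's: both bullets rest on the same two ingredients, \cref{lem:dbd} (giving $\sigma_j \to \infty$ and rank-one dominance, hence the first bullet) and \cref{lem:succ-singularvalues} (giving $|\langle \vec v_{i+1}, \vec u_i \rangle| \to 1$, hence the second after replacing each normalized factor by $\vec u_i \vec v_i^\tran$). The only difference is that your telescoping operator-norm estimate explicitly justifies the step the paper states informally---namely that the normalized product stays close to the product of moving rank-one approximations pointwise in $t$, even though the singular vectors $\vec u_i(t), \vec v_i(t)$ depend on $t$---which is a rigorization of the paper's argument rather than a different one.
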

\begin{proof}
	Since $\|\wj(t)\|_F \to \infty$, \cref{lem:dbd} implies that, as $t\to\infty$, $\|\wj(t)\|_2 \to \infty$, and, moreover, the singular values of $\vec{W}^{(j)}(t)$ beyond the top singular value are dominated by $\|\wj(t)\|_F$. Thus, $\wj(t)/\|\wj(t)\|_F \to \vec{u}_j(t) \vec{v}_j(t)^\tran$, which establishes the first part.

	For the second part, note that by \cref{lem:succ-singularvalues} and the fact that $\sigma_j = \|\wj(t)\|_2 \to \infty$, we have that $|\langle \vec{u}_j(t), \vec{v}_{j+1}(t) \rangle| \to 1$. Hence, for any $j$, we have
	\begin{align*}
		\left| \frac{\vec{W}^{(L)} \vec{W}^{(L-1)} \cdots \vec{W}^{(j)}}{\|\vec{W}^{(L)}\|_F \|\vec{W}^{(L-1)}\|_F \cdots \|\vec{W}^{(j)}\|_F} \vec{v}_j \right| & \to \left| (\vec{u}_L \vec{v}_L^\tran) \cdots (\vec{u}_j \vec{v}_j^\tran) \vec{v}_j \right|                                    \\
		                                                                                                                                                         & = \left| \vec{u}_L (\vec{v}_L^\tran \vec{u}_{L-1}) \cdots (\vec{v}_{j+1}^\tran \vec{u}_j) (\vec{v}_j^\tran \vec{v}_j)  \right| \\
		                                                                                                                                                         & \to |\vec{u}_L|                                                                                                                \\
		                                                                                                                                                         & = 1
	\end{align*}
	as $t\to\infty$, which completes the proof.
\end{proof}

The following theorem shows that under gradient flow, the risk goes to zero as $t \to \infty$, while the Frobenius norm of each weight matrix tends to infinity. The theorem corresponds to parts 1 and 2 of Theorem 2.2 in \citet{JiT19}; therefore, we defer to the proofs there.
\begin{theorem}[Parts 1 and 2 of Theorem 2.2 in \cite{JiT19}] \label{thm:risktozero}
	We have the following:
	\begin{itemize}
		\item $\lim_{t\to\infty} \cR(\wmat(t)) = 0$.
		\item For all $i = 1,2,\dots, L$, we have $\lim_{t\to\infty} \|\vec{W}^{(i)}(t)\|_F = \infty$.
	\end{itemize}
\end{theorem}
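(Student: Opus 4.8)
The plan is to prove the two claims separately, establishing the norm divergence (the second bullet) first, since it follows almost mechanically from \cref{cor:finite-measure} and the conservation law of \cref{lem:winvariant}, and then combining it with separability to force the risk to vanish.

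For the norm divergence, I would first upgrade the measure-theoretic statement of \cref{cor:finite-measure} into an honest limit $\max_{1\le j\le L}\|\wmat^{(j)}(t)\|_F\to\infty$. Suppose this failed; then some radius $R$ and a sequence $t_n\to\infty$ would satisfy $\wmat(t_n)\in B(R)$. Since $\nabla\cR$ is continuous, it is bounded by some $M$ on the compact set $\overline{B(2R)}$, so along the flow the velocity $\|\dot{\wmat}\|=\|\nabla\cR\|$ is at most $M$ while the trajectory stays in $B(2R)$; hence $\wmat(t)$ remains in $B(2R)$ throughout each interval $[t_n,t_n+R/M]$. Thinning to a subsequence whose intervals are disjoint makes $\{t:\wmat(t)\in B(2R)\}$ of infinite measure, contradicting \cref{cor:finite-measure}. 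Taking traces in \cref{lem:winvariant} gives $\|\wmat^{(j+1)}(t)\|_F^2-\|\wmat^{(j)}(t)\|_F^2=\operatorname{tr}\!\big((\wjp)^\tran\wjp-\wj(\wj)^\tran\big)$, which is constant in $t$. Telescoping, each $\|\wmat^{(i)}(t)\|_F^2$ differs from $\|\wmat^{(1)}(t)\|_F^2$ by a fixed constant, so the layer norms are synchronized; since the maximum diverges, all of them do, proving the second bullet.

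For the risk, I would note from \cref{eq:gradexpression} that $\cR(\wmat(t))$ is nonincreasing and bounded below by $0$, hence converges to some $\cR_\infty\ge0$, and that integrating \cref{eq:gradexpression} yields $\int_0^\infty\|\nabla\cR(\wmat(t))\|_F^2\,dt=\cR(\wmat(0))-\cR_\infty<\infty$. The goal is $\cR_\infty=0$, so I assume for contradiction $\cR_\infty>0$. Writing $\rho_i(t)\coloneqq\wprod(t)\,\vec{v}_i$ with $\vec{v}_i=\vec{Z}_i\vec{A}'(G_i)^L\vec{1}_{n_i}$, finiteness and monotonicity of the risk confine every $\rho_i(t)$ to a fixed compact interval: if some $\rho_i\to-\infty$ the risk would blow up, while $\cR(t)\ge\cR_\infty>0$ keeps $\min_i\rho_i(t)$ bounded above. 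On that range \cref{assumption:loss} gives $|\ell'(\rho_i(t))|\ge\delta>0$. Using the MPNN-separability hypothesis, there is a unit direction $\bar{\vec{u}}$ with $\langle\bar{\vec{u}},\vec{v}_i\rangle\ge\gamma>0$, and since $\ell'<0$ we obtain, for all large $t$, the directional lower bound $\big\langle-\nabla\cR_1,\bar{\vec{u}}\big\rangle=\tfrac1k\sum_i|\ell'(\rho_i)|\langle\vec{v}_i,\bar{\vec{u}}\rangle\ge\gamma\delta/k>0$, where $\cR_1$ is the risk viewed as a function of the product $\wprod$.

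The final step, which I expect to be the main obstacle, is to convert this persistent directional lower bound on $\nabla\cR_1$ into non-integrability of $\|\nabla\cR\|_F^2$, contradicting the bound above and forcing $\cR_\infty=0$. The difficulty is the coupling across layers: one must push $\big\langle-\nabla\cR_1,\bar{\vec{u}}\big\rangle\ge\gamma\delta/k$ through the chain rule $\dot{\wmat}^{(j)}=-\partial\cR/\partial\wmat^{(j)}$, which expresses each layer gradient as $\nabla\cR_1$ flanked by the remaining weight factors, and bound the useful progress $\tfrac{d}{dt}\langle\wprod,\bar{\vec{u}}\rangle$ from below in terms of the products $\prod_{i\ne j}\|\wmat^{(i)}\|_F$. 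Here the balancedness from \cref{lem:winvariant} (so no layer collapses relative to the others) and the norm divergence already established are essential: they guarantee these products grow, so a fixed positive directional gradient in $\wprod$ translates into a matching lower bound on $\|\nabla\cR\|_F$ on a set of infinite measure, which is incompatible with $\int_0^\infty\|\nabla\cR\|_F^2\,dt<\infty$. Carefully tracking these coupled factors (rather than the routine monotonicity and conservation bookkeeping) is where the real work lies.
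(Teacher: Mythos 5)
Your plan reconstructs the statement from the paper's own imported machinery, which is apt, because the paper's ``proof'' of \cref{thm:risktozero} is literally a deferral to parts 1 and 2 of Theorem 2.2 of \citet{JiT19}; so the right benchmark is whether your argument correctly reassembles the lemmas the appendix sets up for that citation (\cref{lem:riskpartialbd}, \cref{cor:finite-measure}, \cref{lem:winvariant}, \cref{lem:dbd}, \cref{lem:succ-singularvalues}). Your first half does: upgrading \cref{cor:finite-measure} to an honest limit $\max_j \|\wj(t)\|_F \to \infty$ via the bounded-speed escape-time argument is correct (the gradient is continuous, hence bounded on the compact closure of $B(2R)$, and \cref{cor:finite-measure} applies at radius $2R$), and taking traces in \cref{lem:winvariant} indeed synchronizes the squared Frobenius norms up to additive constants, so all layers diverge together. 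The setup of the risk part is also sound, modulo one slip: boundedness of $\cR$ confines only $\min_i \rho_i(t)$ to a compact interval, not every $\rho_i(t)$; this is harmless, since $\langle -\nabla\cR_1, \bar{\vec u}\rangle = \tfrac1k\sum_i |\ell'(\rho_i)|\langle \vec v_i,\bar{\vec u}\rangle$ is a sum of nonnegative terms and the single trapped index already yields the bound $\gamma\delta/k$.

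The genuine gap is in the final step, which you flag but then justify with a claim that is false as stated: balancedness ``so no layer collapses relative to the others'' together with norm divergence does \emph{not} guarantee that the partial products grow. Synchronized, diverging Frobenius norms are compatible with the operator norm of $\vec W^{(L)}(t)\cdots \vec W^{(2)}(t)$ staying bounded or even vanishing, if adjacent top singular spaces are misaligned (e.g., $\vec v_3 \perp \vec u_2$ kills the leading term of $\vec W^{(3)}\vec W^{(2)}$). What rules this out is the \emph{matrix-valued} content of \cref{lem:winvariant}, not its trace: \cref{lem:dbd} gives $\|\wj\|_F^2 - \|\wj\|_2^2 \leq D$, so each layer becomes approximately rank one as it diverges, and \cref{lem:succ-singularvalues} gives $\langle \vec v_{j+1}, \vec u_j\rangle^2 \geq 1 - (D + \|\wj(0)\|_2^2 + \|\wjp(0)\|_2^2)/\sigma_{j+1}^2 \to 1$, whence $\|\vec W^{(L)}\cdots\vec W^{(2)}\|_2 \geq (1-o(1))\prod_{j\geq 2}\sigma_j \to \infty$. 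With that, your contradiction closes most cleanly not through $\tfrac{d}{dt}\langle \wprod, \bar{\vec u}\rangle$ (where the PSD conjugations by partial products can in principle annihilate the useful direction, so the lower bound you propose does not follow directly) but through the first layer: $\|\nabla\cR\|_F \geq \|\partial\cR/\partial\wmat^{(1)}\|_F = \|\vec W^{(L)}\cdots\vec W^{(2)}\|_2\,\|\nabla\cR_1\| \geq (\gamma\delta/k)\,\|\vec W^{(L)}\cdots\vec W^{(2)}\|_2 \to \infty$, which contradicts $\int_0^\infty \|\nabla\cR\|_F^2\,dt = \cR(\wmat(0)) - \cR_\infty < \infty$ outright. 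So the missing idea is precisely the rank-one concentration and singular-vector alignment pair (\cref{lem:dbd}, \cref{lem:succ-singularvalues}); both are already proved in the paper, and your otherwise correct scaffolding, with this substitution, matches the Ji--Telgarsky route the paper defers to.
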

\begin{proof}
	See the proof of parts 1 and 2 of Theorem 2.2 in \cite{JiT19}.
\end{proof}

Our main alignment result for linear MPNNs is the following, whose proof follows easily from the previous lemmas.
\mainalignmentgradflow
\begin{proof}
	Note that by \cref{thm:risktozero}, we have that $\|\wj\|_F \to \infty$ for every $j$. Thus, the first part of \cref{lem:alignment} implies the first part of the theorem. Note that setting $j=1$ in the second part of \cref{lem:alignment} implies the second part of the theorem, completing the proof.
\end{proof}

\subsubsection{Margin}

We now state results on the margin.
\begin{lemma}
	Suppose the data set $\{(\vec{X}_i, y_i)\}_{i=1}^k$ and $G_i$ on $n_i$ nodes are sampled according to \cref{assumption:supportspan}. Let $S \subset \{1, 2, \dots, k\}$  be the set of indices for support vectors. Then,
	\begin{align}\label{eq:minmax-innerprod}
		\min_{\substack{\|\pmb\xi\|_2 = 1 \\ \langle\pmb\xi, \bar{\vec{u}}\rangle = 0}} \max_{i\in S} \left\langle \pmb\xi, Z_i \vec{A}'(G_i)^L \vec{1}_{n_i} \right\rangle  > 0
	\end{align}
	with probability $1$ over the sampling.
\end{lemma}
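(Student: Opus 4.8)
The plan is to reinterpret the min--max in \cref{eq:minmax-innerprod} geometrically and then rule out the single degenerate configuration that could make it vanish, the non-genericity of which is where the ``probability $1$'' is needed.

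First I would reduce to a statement about the origin and a convex hull. Write $P$ for the orthogonal projection of $\Rb^d$ onto $\bar{\vec u}^\perp$ and set $\vec w_i \coloneqq P\vec v_i$ for $i\in S$, where $\vec v_i = \vec Z_i\vec A'(G_i)^L\vec 1_{n_i}$. For any $\pmb\xi\perp\bar{\vec u}$ we have $\langle\pmb\xi,\vec v_i\rangle = \langle\pmb\xi,\vec w_i\rangle$, so $\max_{i\in S}\langle\pmb\xi,\vec v_i\rangle$ equals the support function of $K\coloneqq\operatorname{conv}\{\vec w_i : i\in S\}\subseteq\bar{\vec u}^\perp$. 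Since $\{\pmb\xi : \|\pmb\xi\|=1,\ \pmb\xi\perp\bar{\vec u}\}$ is compact and the support function is continuous, the minimum is attained, and by the standard support-function characterization it is strictly positive if and only if $\mathbf 0$ lies in the interior of $K$ taken inside the $(d-1)$-dimensional space $\bar{\vec u}^\perp$. So it suffices to show $\mathbf 0\in\operatorname{int} K$ almost surely. Next I would translate this back to the exposed face: all support vectors satisfy $\langle\bar{\vec u},\vec v_i\rangle=\gamma$, hence lie in the affine hyperplane $H\coloneqq\{\vec x:\langle\bar{\vec u},\vec x\rangle=\gamma\}$, and $\vec x\mapsto\vec x-\gamma\bar{\vec u}$ is an affine isometry $H\to\bar{\vec u}^\perp$ carrying $\vec w^{*}\coloneqq\gamma\bar{\vec u}$ to $\mathbf 0$ and $\operatorname{conv}\{\vec v_i:i\in S\}$ onto $K$. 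By \cref{assumption:supportspan} the vectors $\vec v_i$, $i\in S$, span $\Rb^d$ while lying in $H$, which misses the origin because $\gamma>0$; therefore their affine hull is all of $H$ and $\operatorname{conv}\{\vec v_i:i\in S\}$ is full-dimensional in $H$. Thus $\mathbf 0\in\operatorname{int} K$ is equivalent to $\vec w^{*}\in\operatorname{relint}\operatorname{conv}\{\vec v_i:i\in S\}$, where $\vec w^{*}$ is the min-norm point of $\operatorname{conv}\{\vec v_i\}$ (the hard-margin dual, cf.\ \cref{eq:convexDistance}). Writing $\vec w^{*}=\sum_i\alpha_i\vec v_i$ for the optimal convex combination and letting $S'\subseteq S$ be the indices with $\alpha_i>0$, optimality places $\vec w^{*}$ in the relative interior of $\operatorname{conv}\{\vec v_i:i\in S'\}$, so everything reduces to proving $S'=S$, i.e.\ no support vector has zero dual coefficient.

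The remaining step, and the main obstacle, is the genericity argument showing $S'=S$ almost surely. Here I would use absolute continuity: each $\vec v_j = y_j\vec X_j\vec c_j$ with $\vec c_j\coloneqq\vec A'(G_j)^L\vec 1_{n_j}$, and since $\vec A'(G_j)=\vec A(G_j)+\vec I$ is entrywise nonnegative with positive diagonal, $\vec c_j$ is a nonzero vector, so $\vec X_j\mapsto\vec X_j\vec c_j$ is a surjective linear map and $\vec v_j$ is absolutely continuous, with the $\vec v_j$ independent. For a fixed pair $(j,S')$ with $j\notin S'$, the point $\vec w^{*}_{S'}$ (min-norm point of $\operatorname{conv}\{\vec v_i:i\in S'\}$) depends only on $\{\vec v_i:i\in S'\}$ and hence is independent of $\vec v_j$; conditioning on those coordinates with $\vec w^{*}_{S'}\neq\mathbf 0$, the event $\langle\vec w^{*}_{S'},\vec v_j\rangle=\|\vec w^{*}_{S'}\|^2$ asks $\vec v_j$ to land on a fixed hyperplane and so has probability zero. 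A union bound over the finitely many pairs $(j,S')$ shows that almost surely no index outside the active set lies on the supporting hyperplane $\{\langle\vec w^{*}_{S'},\cdot\rangle=\|\vec w^{*}_{S'}\|^2\}=H$; but a support vector in $S\setminus S'$ would be exactly such an index, forcing $S'=S$. The subtlety to get right is that $S'$ is itself data dependent, which is why the union bound must range over deterministic subsets and the independence of $\vec w^{*}_{S'}$ from $\vec v_j$ must be invoked for each fixed $S'$; the argument is genuinely necessary, since one can exhibit deterministic spanning configurations (e.g.\ three independent points in $\Rb^3$ whose min-norm point falls in the interior of an edge) for which \cref{eq:minmax-innerprod} fails, and only the absolute-continuity coincidence-avoidance rules these out.
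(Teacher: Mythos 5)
Your proposal is correct, and it reaches the result by a genuinely different route than the paper. The paper's proof is short because it outsources the key nondegeneracy fact: citing Lemma 12 of \citet{SoudryHNGS18} (as used by \citet{JiT19}), it takes for granted that for almost all data every support vector carries a \emph{strictly positive} dual coefficient, i.e., $\bar{\vec u}=\sum_{i\in S}\alpha_i\vec Z_i\vec A'(G_i)^L\vec 1_{n_i}$ with all $\alpha_i>0$, and then argues by contradiction: a unit $\pmb\xi\perp\bar{\vec u}$ with $\max_{i\in S}\langle\pmb\xi,\vec v_i\rangle\le 0$ yields $0=\sum_{i\in S}\alpha_i\langle\pmb\xi,\vec v_i\rangle$ with nonpositive summands, hence $\langle\pmb\xi,\vec v_i\rangle=0$ for all $i\in S$, contradicting \cref{assumption:supportspan}. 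You do two things differently: you replace this algebraic contradiction by the equivalent convex-geometric characterization (the min--max is positive iff the origin is interior to the projected hull, iff $\gamma\bar{\vec u}\in\operatorname{relint}\operatorname{conv}\{\vec v_i:i\in S\}$, which is full-dimensional in the margin hyperplane by the span assumption), and---more substantially---you prove from scratch the very fact the paper imports, namely that the active set $S'$ of positive dual weights equals $S$ almost surely, via absolute continuity of $\vec v_j=y_j\vec X_j\vec A'(G_j)^L\vec 1_{n_j}$ (using that $\vec A'(G_j)^L\vec 1_{n_j}$ is entrywise positive, hence nonzero) together with a union bound over \emph{deterministic} candidate active sets, which correctly neutralizes the data-dependence of $S'$ since the min-norm point of the subhull indexed by the realized $S'$ coincides with $\gamma\bar{\vec u}$ and is independent of any $\vec v_j$ with $j\notin S'$. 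What each approach buys: the paper's version is three lines given the citation (and, like yours, implicitly uses compactness of the unit sphere in $\bar{\vec u}^{\perp}$ to turn pointwise positivity into a positive minimum---you at least state this); yours is self-contained, makes explicit exactly where ``probability $1$'' is consumed, and your degenerate configuration shows the genericity is genuinely needed---note only that for it to satisfy \cref{assumption:supportspan} all three points must lie on the margin hyperplane, so that all are support vectors while one receives zero dual weight. Two points to record if you write this up: your route requires the distributional hypotheses (independent samples, node features with a density) that the paper leaves implicit behind ``almost all data,'' and you should state the complementary-slackness fact that any convex representation of the min-norm point places positive weight only on indices $i$ with $\langle\gamma\bar{\vec u},\vec v_i\rangle=\gamma^2$, so that the realized $S'$ is indeed a subset of $S$; both steps are routine.
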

\begin{proof}
	First, we note that there are $s \leq d$ support vectors; furthermore, each support vector $Z_i \vec{A}'(G_i)^L \vec{1}_{n_i}$ has a corresponding dual variable $\alpha_i$ that is \emph{positive}, so that
	\begin{equation}
		\sum_{i\in S} \alpha_i Z_i \vec{A}'(G_i)^L \vec{1}_{n_i} = \bar{\vec{u}}. \label{eq:alphaeq}
	\end{equation}
	This follows from \citet{SoudryHNGS18} (see Lemma 12 in Appendix B), which was also used by \citet{JiT19}).

	Next, assume for the sake of contradiction that there exists $\pmb\xi$ with $\|\pmb\xi\|_2 = 1$ and $\langle \pmb\xi, \bar{\vec{u}} \rangle = 0$ but
	\[
		\max_{1\leq i\leq k} \left\langle \pmb\xi, Z_i \vec{A}'(G_i)^L \vec{1}_{n_i} \right\rangle \leq 0.
	\]
	Then, note that
	\begin{align*}
		0 & = \langle \pmb\xi, \bar{\vec{u}} \rangle                                                       \\
		  & = \left\langle \pmb\xi, \sum_{i\in S} \alpha_i Z_i \vec{A}'(G_i)^L \vec{1}_{n_i} \right\rangle \\
		  & = \sum_{i\in S} \alpha_i \left\langle \pmb\xi, Z_i \vec{A}'(G_i)^L \vec{1}_{n_i} \right\rangle \\
		  & \leq 0.
	\end{align*}
	This implies that $\left\langle \pmb\xi, Z_i \vec{A}'(G_i)^L \vec{1}_{n_i} \right\rangle = 0$ for all $i\in S$, which contradicts our assumption that the support vectors span the entirety of $\Rb^d$. This completes the proof.
\end{proof}

\begin{lemma} \label{lem:wperp-innerprod}
	Suppose \cref{assumption:supportspan} holds. Let $\ell$ be the exponential loss given by $\ell(x) = e^{-x}$. For almost all data, if $\vec{w}
		\in\Rb^d$ satisfies $\langle \vec{w}, \vec{u} \rangle \geq 0$ and $\vec{w}^\perp$, the projection of $\vec{w}$ on to the subspace of $\Rb^d$ orthogonal to $\vec{u}$, satisfies $\|\vec{w}^\perp\|_2 \geq \frac{1+\ln(k)}{\alpha}$, then $\langle \vec{w}^\perp, \nabla\cR(\vec{w})\rangle \geq 0$ (recall $\alpha$ from \cref{eq:alphaeq}).
\end{lemma}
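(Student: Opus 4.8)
The plan is to work with the risk written as a function of the product $\wprod$, which for the exponential loss reads $\cR(\vec w)=\frac1k\sum_{i=1}^k e^{-\langle \vec w,\vec v_i\rangle}$ with $\vec v_i\coloneqq \vec Z_i\vec A'(G_i)^L\vec 1_{n_i}$, so that $\nabla\cR(\vec w)=-\frac1k\sum_{i=1}^k e^{-\langle \vec w,\vec v_i\rangle}\vec v_i$. Writing $a\coloneqq\langle \vec w,\bar{\vec u}\rangle\geq 0$, $\rho\coloneqq\|\vec w^\perp\|_2$, and $\pmb\xi\coloneqq\vec w^\perp/\rho$ (a unit vector orthogonal to $\bar{\vec u}$), and setting $m_i\coloneqq\langle\bar{\vec u},\vec v_i\rangle$ and $p_i\coloneqq\langle\vec w^\perp,\vec v_i\rangle=\rho\langle\pmb\xi,\vec v_i\rangle$, one has $\langle\vec w,\vec v_i\rangle=am_i+p_i$ and the claim $\langle\vec w^\perp,\nabla\cR(\vec w)\rangle\geq 0$ becomes equivalent to $\Phi\coloneqq\sum_{i=1}^k e^{-am_i-p_i}p_i\leq 0$. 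First I would record the margin facts $m_i\geq\gamma$ for every $i$, with equality $m_i=\gamma$ exactly on the support set $S$.

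The key structural input is the preceding lemma, i.e.\ \cref{eq:minmax-innerprod}, which (for almost all data, by \cref{assumption:supportspan}) yields a constant $\alpha>0$ such that $\max_{i\in S}\langle\pmb\xi',\vec v_i\rangle\geq\alpha$ for every unit $\pmb\xi'\perp\bar{\vec u}$. Applying this to $-\pmb\xi$ produces an index $i^-\in S$ with $\langle\pmb\xi,\vec v_{i^-}\rangle\leq-\alpha$, hence $p_{i^-}\leq-\rho\alpha\leq-(1+\ln k)$ and $m_{i^-}=\gamma$; this single term will supply the dominant negative contribution to $\Phi$. I would then bound the two signs of $\Phi$ separately using the elementary function $g(x)\coloneqq xe^{-x}$, which satisfies $g(x)\leq e^{-1}$ for all $x$ and is strictly increasing on $(-\infty,1)$. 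For the positive part, since $a\geq 0$ and $m_i\geq\gamma$ give $e^{-am_i}\leq e^{-a\gamma}$, we obtain $\sum_{i:\,p_i>0}e^{-am_i-p_i}p_i\leq e^{-a\gamma}\sum_{i:\,p_i>0}g(p_i)\leq e^{-a\gamma}k/e$. For the nonpositive part, discarding all terms except $i^-$ and using monotonicity of $g$ together with $p_{i^-}\leq-(1+\ln k)$ gives $\sum_{i:\,p_i\leq0}e^{-am_i-p_i}p_i\leq e^{-a\gamma}g(p_{i^-})\leq -e^{-a\gamma}(1+\ln k)\,e\,k$. Adding the two estimates yields $\Phi\leq e^{-a\gamma}k\bigl(e^{-1}-(1+\ln k)e\bigr)$, which is strictly negative for every $k\geq 1$ because $(1+\ln k)e\geq e>e^{-1}$; this proves $\Phi\leq 0$ and hence the lemma.

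The main obstacle is controlling the non-support vectors: a priori their projections $p_i$ may be large and positive, and nothing forces their exponential weights to be small, since we are given only $a\geq 0$ rather than $a$ large. The hypothesis $\langle\vec w,\bar{\vec u}\rangle\geq 0$ is precisely what allows me to factor the common $e^{-a\gamma}$ out of both the positive and the negative estimates (through $m_i\geq\gamma$), so that it cancels in the final comparison; without it the positive mass could not be bounded uniformly. The second delicate point is that the domination must come from a \emph{single} support-vector term, which is why I apply the preceding lemma to $-\pmb\xi$ rather than merely invoking the dual identity $\sum_{i\in S}\alpha_i p_i=0$ from \cref{eq:alphaeq}: this converts the assumed lower bound $\rho\geq(1+\ln k)/\alpha$ into a quantitatively very negative $p_{i^-}$, and the growth of $g$ toward $-\infty$ then outweighs the $O(k)$ positive contribution. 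The qualifier ``for almost all data'' simply inherits the probability-one genericity under which the minimax constant $\alpha$ in \cref{eq:minmax-innerprod} is positive.
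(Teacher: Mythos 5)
Your proof is correct and takes essentially the same route as the paper's: you apply the minimax bound \eqref{eq:minmax-innerprod} to $-\pmb\xi$ to extract a single support vector $i^-$ with $p_{i^-}\leq -(1+\ln k)$ and $m_{i^-}=\gamma$ exactly, factor out $e^{-a\gamma}$ using $a\geq 0$ and $m_i\geq\gamma$, bound the positive terms by $k e^{-1}$ via $xe^{-x}\leq e^{-1}$, and let the one dominating support-vector term win. The only difference is cosmetic: you phrase the conclusion as $\Phi\leq 0$ rather than lower-bounding $\langle \vec{w}^\perp,\nabla\cR(\vec{w})\rangle$ directly, which is the same inequality up to a sign and the factor $\nicefrac{1}{k}$.
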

\begin{proof}
	Let $\vec{v}_j = \vec{Z}_j \vec{A}'(G_j)^L \vec{1} = y_j \vec{X}_j \vec{A}'(G_j)^L$. Moreover, for any $\vec{z} \in \Rb^d$ let $\vec{z} = \vec{z}^\parallel + \vec{z}^\perp$, where $\vec{z}^\parallel$ is the projection of $\vec{z}$ on to $\vec{u}$ and $\vec{z}^\perp$ is the component of $\vec{z}$ orthogonal to $\vec{u}$. Let $j' = \argmax_{j\in S} \langle -\vec{w}^\perp, \vec{v}_j \rangle$ (recall that $S$ is the index set for support vectors).. We note that $-\langle \vec{w}^\perp, \vec{v}_{j'}^\perp \rangle = -\langle \vec{w}^\perp, \vec{v}_{j'} \rangle \geq \alpha \|\vec{w}^\perp\|$, where $\alpha$ is the quantity on the lefthand side of \eqref{eq:minmax-innerprod}.

	Observe that
	\begin{align}
		\langle \vec{w}^\perp, \nabla\cR(\vec{w}^\tran) \rangle & = \frac{1}{k} \sum_{i=1}^k \ell'(\langle \vec{w}, \vec{v}_i\rangle) \cdot \langle \vec{w}^\perp, \vec{v}_i \rangle \nonumber                                     \\
		                                                        & = -\frac{1}{k} \sum_{i=1}^k \exp(-\langle \vec{w}, \vec{v}_i\rangle) \cdot \langle \vec{w}^\perp, \vec{v}_i^\perp \rangle \nonumber                              \\
		                                                        & = -\frac{1}{k} \exp(-\langle \vec{w}, \vec{v}_{j'} \rangle) \cdot \langle \vec{w}^\perp, \vec{v}_{j'}^\perp \rangle - \frac{1}{k} \sum_{\substack{1\leq i \leq k \\ \langle \vec{w}^\perp, \vec{v}_i^\perp \rangle \geq 0}} \exp(-\langle \vec{w}, \vec{v}_{i} \rangle) \cdot \langle \vec{w}^\perp, \vec{v}_{i}^\perp \rangle. \label{eq:grad-innerprod}
	\end{align}

	The first term on the righthand side of \eqref{eq:grad-innerprod} can be bounded as follows:
	\begin{align}
		-\frac{1}{k} \exp(-\langle \vec{w}, \vec{v}_{j'} \rangle) \cdot \langle \vec{w}^\perp, \vec{v}_{j'}^\perp \rangle & = -\frac{1}{k} \exp\left(-\langle \vec{w}, \vec{v}_{j'}^\parallel \rangle - \langle  \vec{w}, \vec{v}_{j'}^\perp \rangle \right) \cdot \langle \vec{w}^\perp, \vec{v}_{j'}^\perp \rangle \nonumber                                 \\
		                                                                                                                  & = -\frac{1}{k} \exp\left(-\langle \vec{w}^\parallel, \vec{v}_{j'}^\parallel\rangle \right) \exp\left( -\langle \vec{w}^\perp, \vec{v}_{j'}^\perp \rangle \right) \cdot \langle \vec{w}^\perp, \vec{v}_{j'}^\perp \rangle \nonumber \\
		                                                                                                                  & \geq \frac{1}{k} \exp(-\langle \vec{w}, \gamma\vec{u}\rangle ) \exp(\alpha\|\vec{w}^\perp\|)\cdot \alpha\|\vec{w}^\perp\|. \label{eq:grad-innerprod-first}
	\end{align}
	For the second term in \eqref{eq:grad-innerprod}, we have
	\begin{align}
		\sum_{\substack{1\leq i \leq k                                                                           \\ \langle \vec{w}^\perp, \vec{v}_i^\perp \rangle \geq 0}} -\frac{1}{k} \exp(-\langle \vec{w}, \vec{v}_{i} \rangle) \cdot \langle \vec{w}^\perp, \vec{v}_{i}^\perp \rangle &= \sum_{\substack{1\leq i \leq k\\ \langle \vec{w}^\perp, \vec{v}_i^\perp \rangle \geq 0}} -\frac{1}{k} \exp(-\langle \vec{w}, \gamma \vec{\bar{u}} \rangle) \exp(-\langle \vec{w}, \vec{v}_{i} - \gamma \vec{\bar{u}} \rangle) \cdot \langle \vec{w}^\perp, \vec{v}_{i}^\perp \rangle \nonumber\\
		 & \geq \sum_{\substack{1\leq i \leq k                                                                   \\ \langle \vec{w}^\perp, \vec{v}_i^\perp \rangle \geq 0}} -\frac{1}{k} \exp(-\langle \vec{w}, \gamma \vec{\bar{u}} \rangle) \exp(-\langle \vec{w}^\perp, \vec{v}_i^\perp \rangle) \cdot \langle \vec{w}^\perp, \vec{v}_{i}^\perp \rangle \nonumber\\
		 & \geq  \sum_{\substack{1\leq i \leq k                                                                  \\ \langle \vec{w}^\perp, \vec{v}_i^\perp \rangle \geq 0}} \frac{1}{k} \exp(-\langle \vec{w}, \gamma \vec{\bar{u}} \rangle) (-e^{-1}) \nonumber\\
		 & \geq \exp(-\langle \vec{w}, \gamma \vec{\bar{u}} \rangle) (-e^{-1}), \label{eq:grad-innerprod-second}
	\end{align}
	since $xe^{-x} \leq -e^{-1}$ for $x\geq 0$, and the assumption $\langle \vec{w}, \vec{u} \rangle \geq 0$ along with the fact that $\vec{v}_i$ has margin at least $\gamma$ implies that $\langle \vec{w}, \vec{v}_i - \gamma \vec{u} - \vec{v}_i^\perp \rangle \geq 0$.

	By plugging \eqref{eq:grad-innerprod-first} and \eqref{eq:grad-innerprod-second} into \eqref{eq:grad-innerprod}, we obtain
	\[
		\langle \vec{w}^\perp, \nabla\cR(\vec{w}^\tran) \rangle \geq \exp(-\langle \vec{w}, \gamma\vec{\bar{u}}\rangle ) \left[ \frac{1}{k} \exp(\alpha\|\vec{w}^\perp\|)\cdot \alpha \|\vec{w}^\perp\| - e^{-1} \right].
	\]
	Finally, note that since $\|\vec{w}^\perp\| \geq (1 + \ln(k))/\alpha$ (by the assumption in the lemma), $\frac{1}{k} \exp(\alpha\|\vec{w}^\perp\|)\cdot \alpha \|\vec{w}^\perp\| - e^{-1} \geq 0$, which completes the proof.
\end{proof}

Our main theorem establishes the convergence of linear MPNNs to the maximum margin solution.
\mainmaxmargin
\begin{proof}
	The proof follows that of Theorem 2.8 in \cite{JiT19}, except that one uses \cref{assumption:supportspan} along with the transformations $x_i \mapsto \vec{X}_i \vec{A}'(G)^L \vec{1}_{n}$ and $z_i \mapsto \vec{Z}_i \vec{A}'(G)^L \vec{1}_{n}$, where the relevant support vectors are of the form $\vec{Z}_i \vec{A}'(G)^L \vec{1}_{n}$. The proof follows similarly from \cref{lem:wperp-innerprod} as in \cite{JiT19}.
\end{proof}

\section{Additional experimental results}\label{APP:exp}

Here, we state more experimental results.

\begin{table}
	\begin{center}
		\caption{Dataset statistics and properties.\label{statistics}}
		\resizebox{0.7\textwidth}{!}{ 	\renewcommand{\arraystretch}{1.05}
			\begin{tabular}{@{}lcccc@{}}\toprule
				\multirow{3}{*}{\vspace*{4pt}\textbf{Dataset}} & \multicolumn{4}{c}{\textbf{Properties}}                                                                                             \\
				\cmidrule{2-5}
				                                               & Number of  graphs                       & Number of classes/targets & $\varnothing$ Number of nodes & $\varnothing$ Number of edges \\ \midrule
				$\textsc{Enzymes}$                             & 600                                     & 6                         & 32.6                          & 62.1                          \\
				$\textsc{Mutag}$                               & 188                                     & 2                         & 17.9                          & 19.8                          \\
				$\textsc{Proteins}$                            & 1\,113                                  & 2                         & 39.1                          & 72.8                          \\
				$\textsc{PTC\_FM}$                             & 349                                     & 2                         & 14.1                          & 14.5                          \\
				$\textsc{PTC\_MR}$                             & 344                                     & 2                         & 14.3                          & 14.7                          \\
				\bottomrule
			\end{tabular}}
	\end{center}
\end{table}

\begin{table*}
	\caption{Mean train, test accuracies, and margins of kernel architectures on ER graphs for different levels of sparsity and different subgraphs. \textsc{Nls}---Not linearly separable. \textsc{Nb}---Only one class in the dataset.}
	\label{fig:er}
	\centering

	\resizebox{1.0\textwidth}{!}{ 	\renewcommand{\arraystretch}{1.05}
		\begin{tabular}{@{}c <{\enspace}@{}lcccc@{}} \toprule
			\multirow{3}{*}{\vspace*{4pt}\textbf{Subgraph}} & \multirow{3}{*}{\vspace*{4pt}\textbf{Algorithm}} & \multicolumn{4}{c}{\textbf{Edge probability}}                                                                                                                                                                                                                                                                                                                                                                                                               \\\cmidrule{3-6}
			                                                &                                                  & 0.05                                                                                                             & 0.1                                                                                                           & 0.2                                                                                                          & 0.3                                                                                                       \\	\toprule
			\multirow{4}{*}{ $C_3$ }                        & \wlone                                           & 94.2  {\scriptsize $\pm 1.7$}   88.6     {\scriptsize $\pm 0.7$}  \textsc{Nls}                                   & 96.3    {\scriptsize $\pm 4.6$}  42.12    {\scriptsize $\pm  1.7$}   0.013  {\scriptsize $\pm 0.001$}         & 44.1  {\scriptsize $\pm 12.4 $} 11.2  {\scriptsize $\pm $}  \textsc{Nls}                                     & 38.4  {\scriptsize $\pm  5.0$}  6.99   {\scriptsize $\pm 1.1$}  \textsc{Nls}                              \\
			                                                & \wloa                                            & 100.0  {\scriptsize $\pm 0.0$} 87.9   {\scriptsize $\pm 0.1$}   \textsc{Dnc}                                     & 100.0 {\scriptsize $\pm 0.0$}  44.6   {\scriptsize $\pm 1.1$}   \textsc{Dnc}                                  & 100.0   {\scriptsize $\pm 0.0$} 11.5  {\scriptsize $\pm 0.9$}   \textsc{Dnc}                                 & 100.0   {\scriptsize $\pm 0.0$}  5.2   {\scriptsize $\pm 0.3$}   \textsc{Dnc}                             \\

			                                                & \wlonef                                          & 100.0    {\scriptsize $\pm 0.0$}  \textbf{100.0}   {\scriptsize $\pm 0.0$} 0.037   {\scriptsize $\pm 0.0$}       & 100.0     {\scriptsize $\pm 0.0$}  \textbf{99.7}  {\scriptsize $\pm 0.1$}  0.009   {\scriptsize $<0.001$}     & 100.0   {\scriptsize $\pm 0.0$}  \textbf{100.0}  {\scriptsize $\pm 0.4$}    0.002   {\scriptsize $< 0.001$}  & 99.1    {\scriptsize $\pm 0.2$}  \textbf{64.7} {\scriptsize $\pm 0.8$} 0.001  {\scriptsize $< 0.001$}     \\
			                                                & \wloaf                                           & 100.0   {\scriptsize $\pm 0.0$} 98.6   {\scriptsize $\pm 0.0$}   \textsc{Dnc}                                    & 100.0   {\scriptsize $\pm 0.0$} 93.8   {\scriptsize $\pm 0.3$}   \textsc{Dnc}                                 & 100.0  {\scriptsize $\pm 0.0$}  42.0   {\scriptsize $\pm 1.0$}   \textsc{Dnc}                                & 100.0   {\scriptsize $\pm 0.0$} 7.4   {\scriptsize $\pm 0.2$}   \textsc{Dnc}                              \\
			\cmidrule{1-6}

			\multirow{4}{*}{ $C_4$ }                        & \wlone                                           & 95.1  {\scriptsize $\pm 1.0$} 92.3   {\scriptsize $\pm 0.2$}    \textsc{Nls}                                     & 89.7  {\scriptsize $\pm 3.9$}   46.8   {\scriptsize $\pm 0.8$}  \textsc{Nls}                                  & 48.7  {\scriptsize $\pm 10.1$}  5.4   {\scriptsize $\pm 0.5$}  \textsc{Nls}                                  & 46.6   {\scriptsize $\pm 10.8$}  2.2   {\scriptsize $\pm 0.4$}  \textsc{Nls}                              \\
			                                                & \wloa                                            & 100.0  {\scriptsize $\pm 0.0$}  92.6   {\scriptsize $\pm 0.0$}   \textsc{Dnc}                                    & 100.0 {\scriptsize $\pm 0.0$}  49.6   {\scriptsize $\pm 0.8$}   \textsc{Dnc}                                  & 100.0   {\scriptsize $\pm 0.0$} 5.13   {\scriptsize $\pm 0.6$}   \textsc{Dnc}                                & 100.0   {\scriptsize $\pm 0.0$} 1.7   {\scriptsize $\pm 0.3$}   \textsc{Dnc}                              \\
			                                                & \wlonef                                          & 100.0   {\scriptsize $\pm 0.0$}  \textbf{99.9}  {\scriptsize $\pm 0.1$}    0.037   {\scriptsize $\pm 0.001$}     & 100.0   {\scriptsize $\pm 0.0$}  \textbf{98.2}  {\scriptsize $\pm 0.2$}       0.009   {\scriptsize $< 0.001$} & 100.0   {\scriptsize $0.0$}  \textbf{78.9}  {\scriptsize $\pm 0.6$}    0.002   {\scriptsize $<0.001$}        & 100.0     {\scriptsize $\pm 0.0$}  \textbf{7.3} {\scriptsize $\pm 0.4$}      0.002 {\scriptsize $<0.001$} \\
			                                                & \wloaf                                           & 100.0   {\scriptsize $\pm 0.0$} 99.3   {\scriptsize $<0.1 $}   \textsc{Dnc}                                      & 100.0  {\scriptsize $\pm 0.0$} 93.7   {\scriptsize $\pm 0.2$}   \textsc{Dnc}                                  & 100.0   {\scriptsize $\pm $}  22.4  {\scriptsize $\pm 0.9$}   \textsc{Dnc}                                   & 100.0  {\scriptsize $\pm 0.0$} 2.8   {\scriptsize $\pm 0.6$}   \textsc{Dnc}                               \\
			\cmidrule{1-6}

			\multirow{4}{*}{ $C_5$ }                        & \wlone                                           & 97.2   {\scriptsize $\pm 0.5$}  95.8   {\scriptsize $\pm 0.3$}    \textsc{Nls}                                   & 69.3  {\scriptsize $\pm 6.6$}  53.5  {\scriptsize $\pm 0.6$}  \textsc{Nls}                                    & 65.1   {\scriptsize $\pm 14.9$}  4.3   {\scriptsize $\pm 0.7$}  \textsc{Nls}                                 & 64.8   {\scriptsize $\pm 9.9$} 1.26   {\scriptsize $\pm 0.2$}  \textsc{Nls}                               \\
			                                                & \wloa                                            & 100.0   {\scriptsize $\pm 0.0$} 95.9   {\scriptsize $\pm 0.0$}   \textsc{Dnc}                                    & 100.0   {\scriptsize $\pm 0.0$} 54.2   {\scriptsize $\pm 0.3$}   \textsc{Dnc}                                 & 100.0   {\scriptsize $\pm 0.0$} 4.7   {\scriptsize $\pm 0.5$}   \textsc{Dnc}                                 & 100.0 {\scriptsize $\pm 0.0$}  1.4   {\scriptsize $\pm 0.5$}   \textsc{Dnc}                               \\
			                                                & \wlonef                                          & 100.0    {\scriptsize $\pm 0.0$}    \textbf{99.9}   {\scriptsize $\pm 0.0$}  0.058     {\scriptsize $\pm 0.001$} & 100.0  {\scriptsize $\pm 0.0$}  \textbf{98.4} {\scriptsize $\pm 0.2$} 0.012  {\scriptsize $ <0.001$}          & 100.0 {\scriptsize $\pm 0.0$}  \textbf{68.8}  {\scriptsize $\pm 0.5$}   0.002   {\scriptsize $< 0.001 $}     & 100.0  {\scriptsize $\pm 0.0$}  \textbf{4.2}     {\scriptsize $\pm 0.5$}    0.003 {\scriptsize $< 0.001$} \\
			                                                & \wloaf                                           & 100.0  {\scriptsize $\pm 0.0$}  99.5  {\scriptsize $\pm 0.0$}  \textsc{Dnc}                                      & 100.0   {\scriptsize $\pm 0.0$}  91.8  {\scriptsize $\pm 0.4$}   \textsc{Dnc}                                 & 100.0  {\scriptsize $\pm 0.0$} 20.0   {\scriptsize $\pm 0.0$}   \textsc{Dnc}                                 & 100.0   {\scriptsize $\pm 0.0$} 2.6   {\scriptsize $\pm 0.2$}   \textsc{Dnc}                              \\
			\cmidrule{1-6}
			\multirow{4}{*}{ $K_4$ }                        & \wlone                                           & \textsc{Nb}                                                                                                      & 99.4  {\scriptsize $<0.1 $} 99.4  {\scriptsize $\pm 0.0$}  \textsc{Nls}                                       & 78.0   {\scriptsize $\pm 0.6$}   77.7  {\scriptsize $\pm 0.3$}  \textsc{Nls}                                 & 68.7    {\scriptsize $\pm 9.9$} 17.1   {\scriptsize $\pm 0.9$}  \textsc{Nls}                              \\
			                                                & \wloa                                            & \textsc{Nb}                                                                                                      & 100.0   {\scriptsize $\pm 0.0$} 99.4   {\scriptsize $\pm 0.0$}   \textsc{Dnc}                                 & 100.0 {\scriptsize $\pm 0.0$}  77.8   {\scriptsize $\pm 0.3$}   \textsc{Dnc}                                 & 100.0  {\scriptsize $\pm 0.0$}  20.6   {\scriptsize $\pm 0.9$}   \textsc{Dnc}                             \\
			                                                & \wlonef                                          & \textsc{Nb}                                                                                                      & 100.0  {\scriptsize $\pm 0.0$}  \textbf{100.0}  {\scriptsize $\pm 0.0$}   0.122    {\scriptsize $\pm 0.0$}    & 100.0    {\scriptsize $\pm 0.0$}  \textbf{99.9}   {\scriptsize $\pm 0.1$}  0.022     {\scriptsize $< 0.001$} & 100.0  {\scriptsize $\pm 0.0$}  \textbf{94.8}  {\scriptsize $\pm 0.4$}  0.004   {\scriptsize $<0.001$}    \\
			                                                & \wloaf                                           & \textsc{Nb}                                                                                                      & 100.0 {\scriptsize $\pm 0.0$}  99.4  {\scriptsize $\pm 0.0$}   \textsc{Dnc}                                   & 100.0   {\scriptsize $\pm 0.0$} 97.8   {\scriptsize $\pm 0.1$}   \textsc{Dnc}                                & 100.0  {\scriptsize $\pm 0.0$} 74.6   {\scriptsize $\pm 0.6$}   \textsc{Dnc}                              \\
			\bottomrule
		\end{tabular}
	}
\end{table*}

\begin{table}
	\caption{Mean train and test accuracies of the MPNN architectures on \textsc{TUDatasets} datasets for different subgraphs.}
	\label{fig:tud_gnn}
	\centering
	\resizebox{.76\textwidth}{!}{ 	\renewcommand{\arraystretch}{1.05}
		\begin{tabular}{@{}l <{\enspace}@{}lccccc@{}} \toprule
			\multirow{3}{*}{\vspace*{4pt} $\mathcal{F}$ } & \multirow{3}{*}{\vspace*{4pt}\textbf{Algorithm}} & \multicolumn{5}{c}{\textbf{Dataset}}                                                                                                                                                                                                                                                                                                                               \\\cmidrule{3-7}
			                                              &                                                  & {\textsc{Enzymes}}                                                     & {\textsc{Mutag}}                                                       & {\textsc{Proteins}}                                                    &
			{\textsc{PTC\_FM}}                            & {\textsc{PTC\_MR}}                                                                                                                                                                                                                                                                                                                                                                                                    \\	\toprule
			---                                           & \MPNN                                            & 25.5   {\scriptsize $\pm 1.9$} 20.9  {\scriptsize $\pm 1.5$}           & 84.5  {\scriptsize $\pm 1.2$}  82.4  {\scriptsize $\pm 1.9$}           & 68.4   {\scriptsize $\pm 0.8$} 67.1   {\scriptsize $\pm 0.8$}          & 61.4   {\scriptsize $\pm 1.8$} 59.2  {\scriptsize $\pm 2.4$}            & 56.7   {\scriptsize $\pm 2.3$}   53.7 {\scriptsize $\pm 3.0$} \\
			\cmidrule{1-7}
			$C_3$                                         & \MPNNF                                           & 61.3   {\scriptsize $\pm 4.1$} 29.2  {\scriptsize $\pm 1.5$}           & 84.2  {\scriptsize $\pm 1.9$} 81.8  {\scriptsize $\pm 2.6$}            & 72.7  {\scriptsize $\pm 0.8$}  67.8   {\scriptsize $\pm 0.8$}          & 61.9 {\scriptsize $\pm 2.4$} 59.7   {\scriptsize $\pm 2.4$}             & 57.3    {\scriptsize $\pm 0.9$}   \textbf{56.3}
			{\scriptsize $\pm 2.5$}                                                                                                                                                                                                                                                                                                                                                                                                                                               \\
			$C_3$-$C_4$                                   & \MPNNF                                           & 61.8   {\scriptsize $\pm 3.0$} 28.9  {\scriptsize $\pm 1.5$}           & 84.5   {\scriptsize $\pm 2.4$} 82.1  {\scriptsize $\pm 2.6$}           & 72.5 {\scriptsize $\pm 0.8$}    \textbf{69.0}  {\scriptsize $\pm 0.9$} & 62.0  {\scriptsize $\pm 0.8$} 59.4  {\scriptsize $\pm 1.9$}             & 57.5   {\scriptsize $\pm 0.8$} 54.3   {\scriptsize $\pm 1.3$} \\
			$C_3$-$C_5$                                   & \MPNNF                                           & 61.6  {\scriptsize $\pm 2.8$} 28.5  {\scriptsize $\pm 2.2$}            & 83.9 {\scriptsize $\pm 2.2$} 81.8   {\scriptsize $\pm 2.4$}            & 72.6   {\scriptsize $\pm 0.8$} 68.6   {\scriptsize $\pm 0.9$}          & 62.0  {\scriptsize $\pm 2.0$} 58.8 {\scriptsize $\pm 2.5$}              & 56.7   {\scriptsize $\pm 1.4$} 54.9  {\scriptsize $\pm 1.8$}  \\
			$C_3$-$C_6$                                   & \MPNNF                                           & 61.8  {\scriptsize $\pm 6.0$} 28.8  {\scriptsize $\pm 1.6$}            & 83.4   {\scriptsize $\pm 2.4$} 82.2   {\scriptsize $\pm 2.6$}          & 72.7   {\scriptsize $\pm 0.6$} 68.4   {\scriptsize $\pm 0.5$}          & 62.4    {\scriptsize $\pm 0.9$}  \textbf{60.5}  {\scriptsize $\pm 2.4$} & 57.2   {\scriptsize $\pm 1.3$} 54.0  {\scriptsize $\pm 2.0$}  \\
			\cmidrule{1-7}

			$K_3$                                         & \MPNNF                                           & 60.1  {\scriptsize $\pm 4.5$} 28.8   {\scriptsize $\pm 1.0$}           & 84.8 {\scriptsize $\pm 1.6$}  84.2   {\scriptsize $\pm 1.7$}           & 72.6    {\scriptsize $\pm 0.6$} 67.7    {\scriptsize $\pm 0.7$}        & 62.1   {\scriptsize $\pm 1.6$} 59.0 {\scriptsize $\pm 2.3$}             & 57.2   {\scriptsize $\pm 0.8$} 54.3   {\scriptsize $\pm 1.8$} \\
			$K_3$-$K_4$                                   & \MPNNF                                           & 61.8   {\scriptsize $\pm 5.5$} \textbf{29.5}   {\scriptsize $\pm 1.6$} & 83.2   {\scriptsize $\pm 2.6$} 80.9 {\scriptsize $\pm 2.4$}            & 73.2    {\scriptsize $\pm 0.8 $}  68.1    {\scriptsize $\pm 0.6$}      & 62.3   {\scriptsize $\pm 2.0$} 59.1    {\scriptsize $\pm 3.2$}          & 57.0  {\scriptsize $\pm 1.4$}  53.7  {\scriptsize $\pm 2.6$}  \\

			$K_3$-$K_5$                                   & \MPNNF                                           & 60.6 {\scriptsize $\pm 3.9$}  29.2   {\scriptsize $\pm 1.3$}           & 84.9    {\scriptsize $\pm 1.4$} \textbf{82.9}  {\scriptsize $\pm 2.1$} & 72.9  {\scriptsize $\pm 0.9$}  68.0    {\scriptsize $\pm 1.1$}         & 61.7  {\scriptsize $\pm 2.1$}  57.8   {\scriptsize $\pm 2.8$}           & 58.6   {\scriptsize $\pm 1.3$} 54.4   {\scriptsize $\pm 1.8$} \\

			$K_3$-$K_6$                                   & \MPNNF                                           & 60.3    {\scriptsize $\pm 4.7$} 28.8   {\scriptsize $\pm 1.4$}         & 84.6    {\scriptsize $\pm 1.3$}  82.4 {\scriptsize $\pm 2.5$}          & 73.4 {\scriptsize $\pm 0.7$} 68.5 {\scriptsize $\pm 0.9$}              & 62.2   {\scriptsize $\pm 1.3$} 59.3   {\scriptsize $\pm 1.6$}           & 57.7  {\scriptsize $\pm 1.0$} 54.4   {\scriptsize $\pm 2.8$}  \\
			\bottomrule
		\end{tabular}
	}
\end{table}
\begin{table}
	\caption{Mean test accuracies of MPNN architectures on ER graphs for different levels of sparsity and different subgraphs. \textsc{Nb}---Only one class in the dataset.}
	\label{fig:er_mpnn}
	\centering

	\resizebox{.45\textwidth}{!}{ 	\renewcommand{\arraystretch}{1.05}
		\begin{tabular}{@{}c <{\enspace}@{}lllll@{}} \toprule
			\multirow{3}{*}{\vspace*{4pt}\textbf{Subgraph}} & \multirow{3}{*}{\vspace*{4pt}\textbf{Algorithm}} & \multicolumn{4}{c}{\textbf{Probability}}                                                                                                                                \\\cmidrule{3-6}
			                                                &                                                  & 0.05                                     & 0.1                                     & 0.2                                      & 0.3                                     \\	\toprule
			\multirow{2}{*}{ $C_3$ }                        & $\text{MPNN}$                                    & 90.7   {\scriptsize $\pm 0.5$}           & 50.7   {\scriptsize $\pm 1.0$}          & 20.1   {\scriptsize $\pm 0.7$}           & 9.5    {\scriptsize $\pm 0.6$}          \\
			                                                & $\text{MPNN}_{\mathcal{F}}$                      & \textbf{98.8}   {\scriptsize $\pm 0.3$}  & \textbf{96.1} {\scriptsize $\pm 0.3$}   & \textbf{72.5}  {\scriptsize $\pm 1.3$}   & \textbf{32.9}  {\scriptsize $\pm 1.0$}  \\
			\cmidrule{1-6}
			\multirow{2}{*}{ $C_4$ }                        & $\text{MPNN}$                                    & 90.7  {\scriptsize $\pm 0.7$}            & 51.3 {\scriptsize $\pm 1.4$}            & 20.1   {\scriptsize $\pm 0.8$}           & 9.9  {\scriptsize $\pm 0.9 $}           \\
			                                                & $\text{MPNN}_{\mathcal{F}}$                      & \textbf{99.1}   {\scriptsize $\pm 0.2$}  & \textbf{96.2}   {\scriptsize $\pm 0.4$} & \textbf{73.3}    {\scriptsize $\pm 0.9$} & \textbf{33.1}   {\scriptsize $\pm 1.3$} \\
			\cmidrule{1-6}
			\multirow{2}{*}{ $C_5$ }                        & $\text{MPNN}$                                    & 90.6  {\scriptsize $\pm 0.5$}            & 51.3  {\scriptsize $\pm 1.2$}           & 20.1  {\scriptsize $\pm 0.6$}            & 9.9  {\scriptsize $\pm 0.6$}            \\
			                                                & $\text{MPNN}_{\mathcal{F}}$                      & \textbf{98.9}  {\scriptsize $\pm 0.2$}   & \textbf{96.2}   {\scriptsize $\pm 0.4$} & \textbf{73.6}  {\scriptsize $\pm 1.0$}   & \textbf{33.8}  {\scriptsize $\pm 1.0$}  \\
			\cmidrule{1-6}
			\multirow{2}{*}{ $K_4$ }                        & $\text{MPNN}$                                    & \textsc{Nb}                              & 51.0  {\scriptsize $\pm 1.2$}           & 20.1 {\scriptsize $\pm 1.0$}             & 10.1  {\scriptsize $\pm 1.3$}           \\
			                                                & $\text{MPNN}_{\mathcal{F}}$                      & \textsc{Nb}                              & \textbf{96.2}   {\scriptsize $\pm 0.5$} & \textbf{72.9}  {\scriptsize $\pm 0.9$}   & \textbf{33.6}   {\scriptsize $\pm 1.2$} \\
			\bottomrule
		\end{tabular}
	}
\end{table}

\end{document}